\newcommand{\R}{\mathbb{R}} 
\newcommand{\Q}{\mathbb{Q}} 
\newcommand{\N}{\mathbb{N}} 
\newcommand{\E}{\mathbb{E}} 
\newcommand{\Prob}{\mathbb{P}} 
\newcommand{\Indicator}{\mathds{1}} 
\newcommand{\Cov}{\operatorname{Cov}} 
\newcommand{\Corr}{\operatorname{Corr}} 
\newcommand{\A}{\mathcal{A}} 
\newcommand{\B}{\mathcal{B}} 
\newcommand{\F}{\mathcal{F}} 
\newcommand{\vecv}{{\bm{v}}} 
\newcommand{\vecu}{{\bm{u}}} 
\newcommand{\vecw}{{\bm{w}}} 
\newcommand{\vecx}{{\bm{x}}} 
\newcommand{\vecy}{{\bm{y}}} 
\newcommand{\veczero}{{\bm{0}}} 
\newcommand{\norm}[1]{{\left\lVert #1 \right\lVert}} 
\DeclareMathOperator{\err}{err} 
\newcommand{\ELL}{\mathcal{L}} 
\newcommand{\NormsFamily}{\mathcal{N}} 
\newcommand{\MatrixFamily}{\mathcal{M}} 
\theoremstyle{plain}
\newtheorem{theorem}{Theorem}[section]
\newtheorem{corollary}[theorem]{Corollary}
\newtheorem{lemma}[theorem]{Lemma}
\newtheorem{remark}[theorem]{Remark}
\theoremstyle{definition}
\newtheorem{definition}[theorem]{Definition}
\theoremstyle{remark}
\newtheorem{example}[theorem]{Example}
\numberwithin{figure}{chapter}
\numberwithin{table}{chapter} 
\begin{document}

\chapter{An Introduction to Statistical Machine Learning}

In this chapter we introduce the fundamental notions of statistical machine learning. No prior knowledge of statistical machine learning is assumed in this section. We start by giving an informal discussion with some examples and then discussing the theory on a more formal level.

\section{An Informal Introduction}

In the classification problem of statistical machine learning, we start with a dataset (where the points come from some sample space), together with a label (or class) for each point (where there are a finite number of possible labels). We suppose that the points in the dataset are independently and identically distributed. We have a new data point, called the query, from the same distribution as the data set, and which is also assumed to be independent of the points in the data set. However, we do not have the label for the query. We would like to predict the label for the query based on the dataset.

For instance, suppose we would like to predict if a person has a predisposition for heart disease based on their genome. We have a dataset of the genome of people with their genomic sequence and whether or not they have heart disease. We now have a new patient, for which we have the genome but do not know if they have heart disease. We would like to predict, based on their genomic sequence, if they have heart disease, with the only information available to us being the dataset and the person's genomic sequence.

Let $X$ be the dataset and $Y$ be a set of classes. A \emph{classifier} $f: X \to Y$ is a function that attempts to predict a class $y$ for a data point $x$. The \emph{accuracy} of the classifier $f$ is the probability that we will predict the correct label for the query, and the \emph{misclassification error} of $f$ is the probability that we will predict a wrong label. Given the query point, we would like to predict its label. We would like to find a classifier $f$ whose accuracy is as high as possible (or equivalently, whose error is as small as possible). The \emph{Bayes error} is the infimum of the errors of all possible classifiers for a distribution $\mu$ on $X \times Y$. We can show that the Bayes error is attained by the \emph{Bayes classifier}, however constructing the Bayes classifier requires knowledge of the underlying distribution $\mu$, which we normally do not have, we only have a set of labelled data points.

The process of constructing a classifier $f$ is called \emph{learning}. A \emph{learning rule} is a family of functions that takes a set of labelled data points and outputs a classifier, which we can then use to classify query points. A learning rule is said to be \emph{consistent} for a distribution $\mu$ if the expected value of the error converges to the Bayes error in probability for $\mu$ as the number of labelled data points goes to infinity. A learning rule is \emph{universally consistent} if it is consistent for every distribution $\mu$ on $X \times Y$. Common learning rules include those based on $k$-nearest neighbour, support vector machine (SVM), and random forest. When applying a learning rule and then using it to classify points, we often refer to the combination of the learning rule and classifier together as simply a classifier.

For any classifier, to test its accuracy we take the dataset and split it into two disjoint subsets, the \emph{training set} and the \emph{testing set}. The training set is used in constructing $f$, and from this we predict the labels for points in the testing set. We then compare the predicted labels to the correct labels in the testing set and compute the accuracy of our prediction.

\section{Theory of Statistical Machine Learning}

We now introduce the fundamental notions of the theory of statistical machine learning. Let $\Omega$ be a nonempty set called the \emph{domain}, $\{1, 2, \dots, q\}$ (with $q \geq 2$) be a finite set of \emph{labels} (or \emph{classes}), and $\mu$ be a probability measure on $\Omega \times \{1, 2, \dots, q\}$. We often assume without loss of generality that there are only two classes (the \emph{binary classification problem}), for this section we will consider the case of $q \geq 2$ classes, but afterwards we will focus on the $q = 2$ case.

A \emph{classifier} is a Borel measurable function $f: \Omega \to \{1, 2, \dots, q\}$, that maps points in the domain $\Omega$ to classes in $\{1, 2, \dots, q\}$. We define the \emph{misclassification error} of a classifier as the probability that the label predicted by our classifier is different than the true label,
\begin{equation}
\err_\mu(f) = \mu\left( \left\{ (x, y) \in \Omega \times \{1, 2, \dots, q\} \;:\; f(x) \neq y \right\} \right) .
\end{equation}

The \emph{Bayes error} is the infimum of the misclassification error over all possible classifiers for the probability measure $\mu$ on $\Omega \times \{1, 2, \dots, q\}$,
\begin{equation}
\ell^*(\mu) = \inf_f \err_\mu(f) .
\end{equation}

We see that since the misclassification error must be in $[0, 1]$ (since any probability must be in $[0, 1]$), and the set of classifiers is nonempty (since we can simply take the classifier that maps every point in $\Omega$ to zero), it follows that the infimum exists in $[0, 1]$ and so the Bayes error is always well defined.

Suppose we have a set of $n$ independent and identically distributed random ordered pairs $D_n = (X_1, Y_1), (X_2, Y_2), \dots, (X_n, Y_n)$, modelling the data. A \emph{learning rule} $\ELL = {\left(\ELL_n\right)}_{n=1}^\infty$ is a family of functions that maps each possible labelled sample to a classifier,
\begin{equation}
\label{eq:LearningRule}
\ELL_n: {\left( \Omega \times \{1, 2, \dots, q\} \right)}^n \to \{ f: \Omega \to \{1, 2, \dots, q\} \;|\; f \text{ is Borel} \} .
\end{equation}

Common learning rules include those based on $k$-nearest neighbour ($k$-NN), Support Vector Machine (SVM), and Random Forest. In applications when classifying datasets (so we classify points immediately when learning) it is common to simply refer to these as ``classifiers", for now we will continue to make the distinction between learning rules and classifiers. A learning rule can also be thought of as a sequence of classifiers constructed based on the labelled sample of points.

Let $D_n = (X_1, Y_1), (X_2, Y_2), \dots, (X_n, Y_n)$ be an iid labelled sample with distribution $\mu$, and $(X, Y)$ be the query and the label, which is independent from the sample and also has the same distribution $\mu$. We let $\ell^*(\mu)$ be the Bayes error for the distribution $\mu$. Given the labelled sample, the error probability is the conditional probability
\begin{equation}
\label{eq:ErrorProbability}
L_n = \Prob(\ELL_n(X, D_n) \neq Y | D_n) .
\end{equation}

A learning rule $\ELL$ is said to be \emph{consistent} (or \emph{weakly consistent}) for the distribution $\mu$ if the misclassification error of the learning rule ${(\ELL_n)}_{n=1}^\infty$ converges in probability to the Bayes error, that is, as $n \to \infty$,
\begin{equation}
\label{eq:ConsistentLearningRuleProbConvergence}
\lim_{n \to \infty}\Prob(\left|L_n - \ell^*(\mu)\right| > \epsilon) = 0
\end{equation}
or equivalently, that
\begin{equation}
\label{eq:ConsistentLearningRule}
\Prob\left(\ELL_n(X, D_n) \neq Y\right) \to \ell^*(\mu) .
\end{equation}

We say that $\ELL$ is \emph{strongly consistent} if with probability one we have a sequence of labelled samples $D_1, D_2, \dots$ such that the misclassification error approaches the Bayes error as the sample size $n$ approaches infinity (so the above convergence in probability is replaced by almost sure convergence). That is, we have
\begin{equation}
\label{eq:StrongConsistency}
\Prob\left(\lim_{n \to \infty} \Prob(\ELL_n(X, D_n) \neq Y | D_n) = \ell^*(\mu)\right) = \Prob\left(\lim_{n \to \infty} L_n = \ell^*(\mu)\right) = 1 .
\end{equation}

\begin{figure}
\centering
\includegraphics[scale=1]{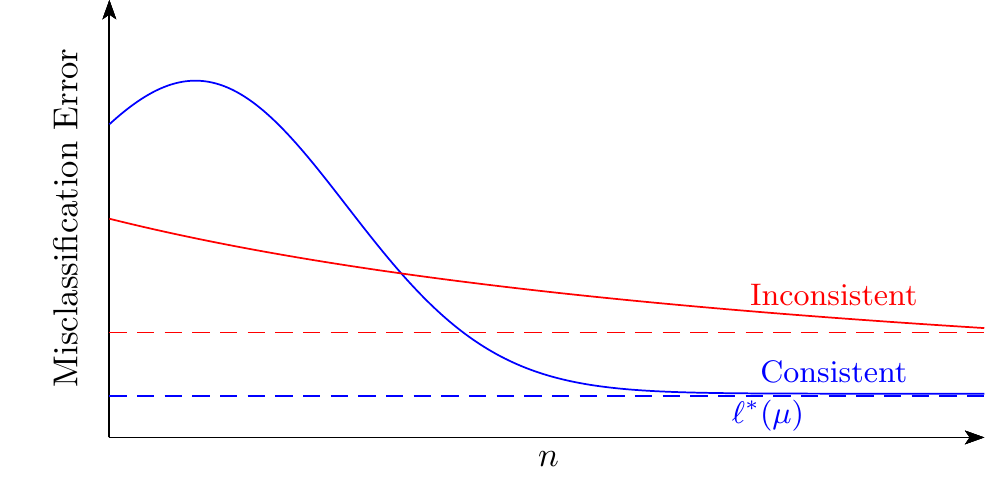}
\caption{Illustration of the misclassification error of a consistent and an inconsistent learning rule. We see that the misclassification error of the consistent learning rule approaches the Bayes error $\ell^*(\mu)$ as the samples size $n$ approaches infinity, while the inconsistent rule (which in this case starts off better for small $n$) performs more poorly as $n$ increases and does not converge to the Bayes error.}
\label{fig:ConsistencyIllustration}
\end{figure}

If the learning rule is consistent for every distribution on $\Omega \times \{1, 2, \dots, q\}$, we say that it is \emph{universally consistent}. We define \emph{strong universal consistency} in the same way, that a learning rule is strongly consistent for every distribution on $\Omega \times \{1, 2, \dots, q\}$.\footnote{Not every learning rule used in applications is universally consistent, for instance, random forests are not universally consistent (\cite{RandomForestConsistency}, Proposition 8) but have a very good classification accuracy on many datasets and are commonly used in applications.}

A learning rule whose misclassification error is monotone decreasing at each step $n$ is called a \emph{smart} learning rule. A simple example of a learning rule that is ``smart" by this definition is to select the classifier that selects a particular fixed label always (completely ignoring the labelled sample), then the misclassification error is constant regardless of the sample size (this is not a learning rule we would call ``smart" in the usual sense of the word). Such a learning rule is obviously not universally consistent. A universally consistent learning rule is not necessarily smart, the misclassification error can temporarily increase for some $n$ before decreasing again towards the Bayes error. There are no known examples of a universally consistent smart learning rule, it has been conjectured that no such learning rules exist.

\subsection{The Regression Function and the Bayes Classifier}

In this section, we assume we have a binary classification problem, that is, the set of classes is $\{0, 1\}$. We let $\mu$ be a probability measure on $\Omega \times \{0, 1\}$. We then define two new measures $\nu, \lambda$ on $\Omega$ by (for any Borel set $A \subseteq \Omega$):
\begin{align}
\nu(A) &= \mu(A \times \{1\}) \\
\lambda(A) &= \mu(A \times \{0, 1\}) = \mu(A \times \{0\}) + \mu(A \times \{1\})
\end{align}

We observe that for any Borel set $A \subseteq \Omega$, $\nu(A) \leq \lambda(A)$, and so $\nu$ is absolutely continuous with respect to $\lambda$. Hence by the Radon-Nikodym derivative theorem, the Radon-Nikodym derivative of $\nu$ with respect to $\lambda$ exists, which we call the regression function $\eta$ (that is, for any Borel set $A$, $\int_A \eta \text{d} \lambda = \nu(A)$).\cite{Billingsley,pbook} By the Radon-Nikodym derivative theorem, $\eta$ is integrable with respect to $\lambda$ and is Borel measurable. Equivalently, we can also write $\eta$ as the conditional probability $\eta(x) = \Prob(Y = 1 | X = x)$.

With the regression function, we are now able to define a classifier called the \emph{Bayes classifier}. The Bayes classifier $g^*$ is defined as:
\begin{equation}
g^*(x) = \begin{dcases}
1 & \text{if } \eta(x) \geq \frac12 \\
0 & \text{otherwise}
\end{dcases}
\end{equation}

We now show that the Bayes classifier is optimal, that is, it has the highest accuracy of any classifier on our dataset. This is a standard result, the proof below is based on the one found in \cite{pbook} (Theorem 2.1) and \cite{Duan} (Theorem 1.1.2).
\begin{theorem}[Bayes Optimality Theorem]
\label{theorem:BayesClassifierIsOptimal}
For any classifier $g: \Omega \to \{0, 1\}$ and any probability distribution $\mu$ on $\Omega \times \{0, 1\}$, we have the inequality
\begin{equation}
\mu(\{(x, y) \;:\; g^*(x) \neq y\}) \leq \mu(\{(x, y) \;:\; g(x) \neq y\}) .
\end{equation}

Equivalently, we can write this in terms of random variables,
\begin{equation}
\mu(g^*(X) \neq Y) \leq \mu(g(X) \neq Y) .
\end{equation}

From this, we see that the expected error of the Bayes classifier is the infimum of the misclassification errors of any classifier (for the distribution $\mu$ on $\Omega \times \{0, 1\}$). Hence the Bayes classifier achieves the Bayes error, and any classifier has a misclassification error which is at least that of the Bayes classifier.
\end{theorem}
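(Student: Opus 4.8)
The plan is to reduce the asserted global inequality to a pointwise statement by conditioning on the value of the query $X$, showing that $g^*$ minimizes the \emph{conditional} misclassification error for (almost) every $x \in \Omega$, and then integrating over the marginal distribution $\lambda$ of $X$ on $\Omega$.

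First I would fix an arbitrary classifier $g$ and a point $x \in \Omega$ and compute the conditional accuracy. Since $g(x)$ is determined by $x$ and $\eta(x) = \Prob(Y = 1 \mid X = x)$, one gets
\[
\Prob(g(X) = Y \mid X = x) = \Indicator_{\{g(x) = 1\}}\,\eta(x) + \Indicator_{\{g(x) = 0\}}\,(1 - \eta(x)),
\]
and hence $\Prob(g(X) \neq Y \mid X = x) = 1 - \Indicator_{\{g(x)=1\}}\eta(x) - \Indicator_{\{g(x)=0\}}(1-\eta(x))$. Writing the same expression for $g^*$, subtracting, and using $\Indicator_{\{g(x)=0\}} = 1 - \Indicator_{\{g(x)=1\}}$ (likewise for $g^*$), the difference simplifies to the clean identity
\[
\Prob(g(X) \neq Y \mid X = x) - \Prob(g^*(X) \neq Y \mid X = x) = \bigl(2\eta(x) - 1\bigr)\bigl(\Indicator_{\{g^*(x) = 1\}} - \Indicator_{\{g(x) = 1\}}\bigr).
\]

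Next I would verify that the right-hand side is nonnegative for every $x$, arguing by cases on the definition of $g^*$: if $\eta(x) \geq \tfrac12$ then $g^*(x) = 1$, so $\Indicator_{\{g^*(x)=1\}} - \Indicator_{\{g(x)=1\}} \geq 0$ while $2\eta(x) - 1 \geq 0$; if $\eta(x) < \tfrac12$ then $g^*(x) = 0$, so $\Indicator_{\{g^*(x)=1\}} - \Indicator_{\{g(x)=1\}} \leq 0$ while $2\eta(x) - 1 < 0$; in both cases the product is $\geq 0$. This yields the pointwise optimality $\Prob(g^*(X) \neq Y \mid X = x) \leq \Prob(g(X) \neq Y \mid X = x)$. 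Integrating this inequality over $\Omega$ against $\lambda$ (the marginal law of $X$, which is precisely $\lambda(\cdot) = \mu(\cdot \times \{0,1\})$), via $\Prob(g(X) \neq Y) = \int_\Omega \Prob(g(X) \neq Y \mid X = x)\,\mathrm{d}\lambda(x)$, gives the stated inequality $\mu(g^*(X) \neq Y) \leq \mu(g(X) \neq Y)$. The final claim is then immediate: $g^*$ is itself a classifier, so $\ell^*(\mu) \leq \err_\mu(g^*)$ trivially, while the inequality just proved gives $\err_\mu(g^*) \leq \err_\mu(g)$ for every $g$, hence $\err_\mu(g^*) \leq \ell^*(\mu)$; therefore $\err_\mu(g^*) = \ell^*(\mu)$.

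I expect the main obstacle to be the measure-theoretic bookkeeping rather than the algebra: the regression function $\eta$ is only defined up to a $\lambda$-null set, so the pointwise statements must be read as holding $\lambda$-almost everywhere, and one must justify the conditional-probability computation together with the total-probability formula $\Prob(\cdot) = \int_\Omega \Prob(\cdot \mid X = x)\,\mathrm{d}\lambda(x)$ used in the final step. None of this is deep, but it is where care is needed; the displayed identity above carries all the genuine content and is elementary once the setup is in place.
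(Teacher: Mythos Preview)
Your proposal is correct and follows essentially the same approach as the paper: reduce to the pointwise inequality on conditional error, verify it by cases on whether $\eta(x) \geq \tfrac12$, then integrate. The only cosmetic difference is that you package the difference as the product $(2\eta(x)-1)(\Indicator_{\{g^*(x)=1\}} - \Indicator_{\{g(x)=1\}})$, whereas the paper writes out the three cases $0$, $2\eta(x)-1$, $1-2\eta(x)$ explicitly; these are of course the same computation.
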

\begin{proof}
It suffices for us to show that for all $x \in \Omega$,
\begin{equation}
\label{eq:BayesOptimalityTheoremProof1}
\mu(g^*(X) \neq Y | X = x) \leq \mu(g(X) \neq Y | X = x) .
\end{equation}

For any classifier $g: \Omega \to \{0, 1\}$, the following holds:
\begin{align*}
\mu(g(X) \neq Y | X = x) &= 1 - \left( \mu(Y = 1, g(x) = 1 | X = x) + \mu(Y = 0, g(x) = 0 | X = x) \right) \\
&= \begin{dcases}
1 - \mu(Y = 1 | X = x) & \text{if } g(x) = 1 \\
1 - \mu(Y = 0 | X = x) & \text{if } g(x) = 0
\end{dcases} \\
&= 1 - {\eta(x)}^{g(x)} {\left(1 - \eta(x)\right)}^{1 - g(x)}
\end{align*}

We see that the above equality holds with $g^*$ as well, so we have:
\begin{align*}
&\phantom{{}={}}\mu(g(X) \neq Y | X = x) - \mu(g^*(X) \neq Y | X = x) \\
&= 1 - {\eta(x)}^{g(x)} {\left(1 - \eta(x)\right)}^{1 - g(x)} - \left(1 - {\eta(x)}^{g^*(x)} {\left(1 - \eta(x)\right)}^{1 - g^*(x)}\right) \\
&= {\eta(x)}^{g^*(x)} {\left(1 - \eta(x)\right)}^{1 - g^*(x)} - {\eta(x)}^{g(x)} {\left(1 - \eta(x)\right)}^{1 - g(x)} \\
&= \begin{dcases}
0 & \text{if } g(x) = g^*(x) \\
2 \eta(x) - 1 & \text{if } g^*(x) = 1 \text{ and } g(x) = 0 \\
1 - 2 \eta(x) & \text{if } g^*(x) = 0 \text{ and } g(x) = 1
\end{dcases}
\end{align*}

Since $g^*(x) = 1$ if and only if $\eta(x) \geq 1/2$, we have:
\begin{itemize}
\item $2 \eta(x) - 1 > 0$ when $g^*(x) = 1$.
\item $1 - 2 \eta(x) \geq 0$ when $g^*(x) = 0$.
\end{itemize}

We therefore have
\begin{equation*}
\mu(g(X) \neq Y | X = x) - \mu(g^*(X) \neq Y | X = x) \geq 0.
\end{equation*}

Hence we have that equation \eqref{eq:BayesOptimalityTheoremProof1} holds, and so the theorem is proven.
\end{proof}

In order to construct the regression function $\eta$, we need to know the underlying distribution $\mu$, which we do not have access to. This means we cannot compute the regression function $\eta$ directly and simply use the Bayes classifier. The regression function is still a powerful theoretical notion which is very useful in proving various inequalities. We will often consider various estimates to the regression function, some of which can be constructed empirically from the data set.

\subsection{An Example}

We now illustrate a classical and simple example of a learning rule and classifier, and show it is consistent for a distribution but is inconsistent for another distribution. Suppose we have the distribution $\mu$ on $[0, 1] \times \{0, 1\}$ that takes $(0, 0)$ with probability $1/2$ (that is, a point mass at zero, with label zero), and otherwise (with probability $1/2$) is uniformly distributed on $(0, 1]$ with label $1$. That is, there is a point mass at 0 with label 0, and otherwise it is uniformly distributed on the rest of the interval with label 1. We see that the regression function $\eta: [0, 1] \to [0, 1]$ is
\begin{equation}
\eta(x) = \begin{dcases}
0 & \text{if } x = 0 \\
1 & \text{otherwise} .
\end{dcases}
\end{equation}
Hence, the Bayes classifier classifies the point 0 as having label 0, and any other point as label 1. The Bayes error is zero, since the label is a deterministic function of the point.

A simple learning rule is the \emph{nearest neighbour} learning rule ($1$-NN). In $1$-NN, for a query $X$ we assign the label of the nearest point in the dataset to $X$ (we use the usual metric $d(x, y) = |x - y|$ here). For our distribution $\mu$, a point is misclassified if it is at the point $x = 0$ and is assigned label 1 or is not at zero ($x \neq 0$) but is assigned label 0. Suppose we have an iid labelled sample of $n$ points $(X_1, Y_1), (X_2, Y_2), \dots, (X_n, Y_n)$. There is a $1/2$ probability of the query being at 0 and having label 0. In this case, the query will be misclassified if and only if none of the points in the sample are at zero (that is, all of the points are not at zero and have label 1). The probability of this occurring for a sample of $n$ points is $2^{-n}$, which goes to zero as $n \to \infty$. In the other case, with $1/2$ probability, the query is nonzero and has label 1. The only way that the query will be misclassified in this case is if either there are no points in the sample with label 1 or the nearest point with label 1 is further from the query than zero. We see that the probability of either of these occurring goes to zero as $n$ approaches infinity. Hence we find that the point is classified correctly with probability approaching one as $n \to \infty$ (equivalently, the error goes to zero as $n \to \infty$), and so $1$-NN is consistent for this distribution. This is an example of learning a \emph{deterministic concept} (that is, the Bayes error is zero), for which 1-NN is always consistent (this is an immediate consequence of Theorem 5.4 in \cite{pbook}).

Now, suppose we take the distribution $\nu$ on $[0, 1] \times \{0, 1\}$ that is uniform on $[0, 1]$ such that the label 0 occurs with probability $1/3$ and label 1 occurs with probability $2/3$, with the label being independent of the point in $[0, 1]$ (this is an example of a ``probabilistic" or ``fuzzy" concept, as opposed to a deterministic concept). We notice that the Bayes error is $1/3$ and is attained by predicting the label 1 always. We now find the expected error of the 1-NN classifier. Given a query $X$, there is a $1/3$ chance of the label being zero and a $2/3$ chance of the label being one. The nearest neighbour $X_{(1)}$ also has a $1/3$ probability of being label zero and a $2/3$ probability of being label one, independent of the label of $X$. In the 1-NN classifier, we assign the label of $X_{(1)}$ to the query $X$. Hence the probability that we will misclassify $X$ (for any sample size $n \geq 1$) is $(1/3) (2/3) + (2/3) (1/3) = 4 / 9$, which is greater than $1 / 3$. Hence the 1-NN learning rule is not consistent for the distribution $\nu$. This implies that the 1-NN learning rule is not universally consistent, even though it is consistent for the distribution $\mu$ above.

\chapter{The $k$-Nearest Neighbour Classifier}

In this chapter we discuss one of the most important learning rules for classifying points, the $k$-nearest neighbour classifier ($k$-NN). We first start by briefly discussing $k$-NN with an example, we then give a precise mathematical formulation of $k$-NN and we present the proof that it is universally consistent (provided that $k \to \infty$ and $k / n \to 0$ as $n \to \infty$).

\section{The $k$-Nearest Neighbour Classifier}

Suppose we have a set of points in a metric space $\Omega$, with each point assigned a label $0$ or $1$. Let $(X_1, Y_1), (X_2, Y_2), \dots, (X_n, Y_n)$ be a labelled sample and let $(X, Y)$ be the query. In the $k$-nearest neighbour classifier, we predict the label of the query based on which class is more common among the $k$ closest points to $X$ in the labelled sample. We illustrate an example of this in Figure \ref{fig:KnnImage}.

\begin{figure}
\centering
\includegraphics{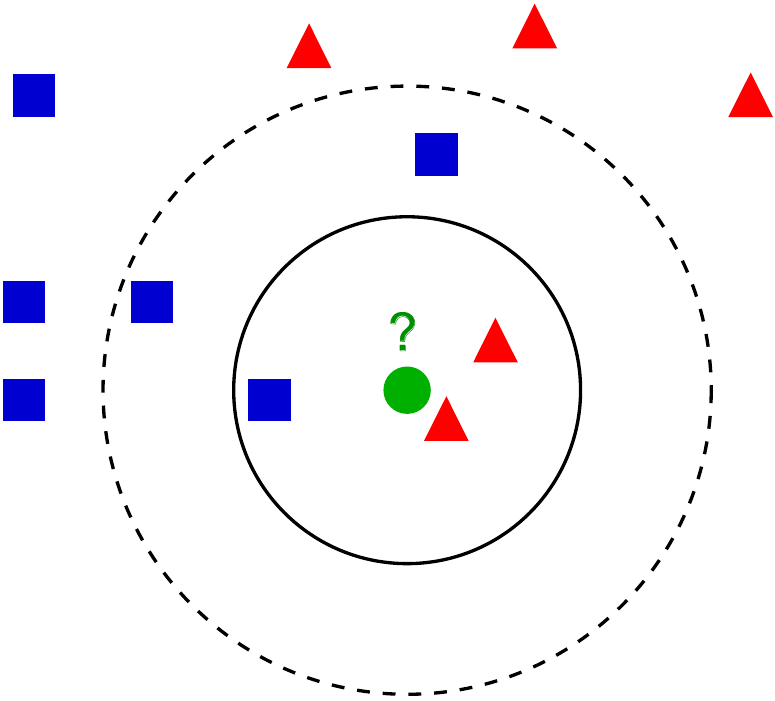}
\caption{An example of $k$-NN. We classify the query as a triangle for $k = 3$, and as a square for $k=5$. Image from \cite{KnnImage}.}
\label{fig:KnnImage}
\end{figure}

We have selected $k$ to be odd in our example to avoid the case of ties. There are two possible cases where ties can occur in our algorithm: it is possible to have multiple classes occurring equally frequently among the $k$-nearest neighbours of the query, and it is possible to have distances ties with multiple points at the same distance from the query. Many authors discuss consistency for distributions with a density to avoid the case of distance ties, however, we will prove universal consistency here and will not make such assumptions. Various methods for breaking ties are discussed in the literature. One common way to break ties is by random selection, so that if a voting tie occurs we pick randomly from the most common labels, and if a distance tie occurs we pick a random point at that distance. For our purposes for the binary classification problem, we will break voting ties (with the same number of points in each class, for a given $k$) by simply selecting the label 1. We break distance ties by generating independent random variables $U_1, U_2, \dots, U_n$ from the uniform distribution on $[0, 1]$, if there is a distance tie between two points $X_i$ and $X_j$, we select $X_i$ if $U_i > U_j$ and $X_j$ if $U_j > U_i$ (we can ignore ties between $U_i$ and $U_j$, since the probability that $U_i = U_j$ is zero). Example pseudocode of $k$-NN is shown in Algorithm \ref{algo:Knn}.

\begin{algorithm}
\caption{$k$-NN pseudocode}
\label{algo:Knn}
\begin{algorithmic}
\REQUIRE $k \in \N$, $X$ is the domain, $Y$ is the response (must be a finite set $\{1, 2, ..., p\}$), $a \in X$, $(x_1, y_1)$, ..., $(x_{n}, y_{n}) \in X \times Y$
\STATE \COMMENT{Calculate distances from input point to all the data points}
\FOR{$i=1$ \TO $n$}
\STATE $d_{i} \leftarrow d(a, x_{i})$
\ENDFOR
\STATE \COMMENT{Find response for the $k$ nearest neighbours of the input point}
\FOR{$i=1$ \TO $k$}
\STATE $m \leftarrow \underset{m}{\operatorname{arg\,min}} \{ d_{m} $ such that $1 \leq m \leq n$ not previously selected$ \}$
\STATE $a_{i} \leftarrow y_{m}$
\ENDFOR
\STATE \COMMENT{Find number of times each response occurs among the $k$ nearest neighbours}
\FOR{$i=1$ \TO $p$}
\STATE $v_{i} \leftarrow$ number of times $i$ occurs in $\{a_1, a_2, ..., a_{p}\}$
\ENDFOR
\STATE $r \leftarrow \{ y_{i} | 1 \leq i \leq p$ such that $v_{i}$ is maximal among $v_1, v_2, ..., v_{p} \}$ \COMMENT{Find the most common response among the $k$ nearest neighbours, if multiple responses are the most common, pick a fixed one}
\RETURN $r$ \COMMENT{Return most common response (or if a tie occurs, one of the most common responses)}
\end{algorithmic}
\end{algorithm}

We would like to establish that the $k$-NN classifier is universally consistent with the data points being independent and identically distributed. There are at least two known ways to do this, the first is the original proof by Stone which uses Stone's theorem, which we state and prove below, and another is the alternative proof that uses the Lebesgue-Besicovitch differentiation theorem, which was originally done in \cite{Devroye} and further discussed in \cite{KnnInf}.

\section{Stone's Theorem}

The original way in which $k$-NN was shown to be universally consistent was \emph{Stone's theorem}, named after Charles Stone.\cite{Stone} This was the first time any learning rule was shown to be universally consistent. We show that any classifier of a particular form that satisfies certain conditions is universally consistent, and then show that the $k$-NN classifier satisfies these conditions. We prove a slightly stronger version of the original Stone's theorem (the slight strengthening will be used later to assist in the proof of some results). Stone's theorem is the foundation for the results we will prove later on, that is why we discuss the proof (of Stone's theorem and the universal consistency of $k$-NN) in detail (following the approach in \cite{pbook} and \cite{Duan}).

Let $\Omega$ be the domain, with $\mu$ being a probability measure and $\eta$ be the regression function on $\Omega \times \{0, 1\}$. Let $(X_1, Y_1), \dots, (X_n, Y_n)$ be a labelled sample. We define real-valued \emph{weights} $W_{ni}(X, (X_1, Y_1), \dots, (X_n, Y_n), U_1, \dots, U_n, V)$ that are functions of the query $X$, the labelled sample, the tiebreakers $U_1, \dots, U_n$, and possibly a random variable $V$ that is independent of all the other random variables, such that they are nonnegative and sum to one,
\begin{equation}
\label{eq:Weights}
\sum_{i=1}^n W_{ni}(x) = 1 .
\end{equation}

We then define the estimate $\eta_n$ to the true regression function $\eta$ as the sum of the positive entries multiplied by their weights,
\begin{equation}
\label{eq:RegressionEstimate}
\eta_n(x) = \sum_{i=1}^n Y_i W_{ni}(x) .
\end{equation}

We now define a classifier $\ELL_n$ as follows:
\begin{equation}
\label{eq:WeightedClassifier}
\ELL_n(x) = \begin{dcases}
1 & \mbox{if } \eta_n(x) \geq 1/2 \\
0 & \mbox{otherwise} \\
\end{dcases}
\end{equation}

Given a query $X$, we define $X_{(1)}, \dots, X_{(n)}$ to be the points $X_1, \dots, X_n$ in order of increasing distance from $X$ (in the case of a tie between distances, we generate independent uniform random variables $U_1, U_2, \dots, U_n$ on $[0, 1]$ and we take the point $X_i$ such that the corresponding $U_i$ is larger). In the $k$ nearest neighbour ($k$-NN) learning rule, we take the weights $W_{ni}(X)$ to be $1/k$ if $X_i \in \{X_{(1)}, \dots, X_{(k)}\}$ and 0 otherwise.

We now prove a couple of inequalities, which will be useful for us.
\begin{lemma}
\label{lemma:StonesTheoremTechnicalInequalities}
For all $a, b, c \in \R$, we have the inequalities
\begin{enumerate}
\item ${(a + b)}^2 \leq 2 (a^2 + b^2)$.
\item ${(a + b + c)}^2 \leq 3(a^2 + b^2 + c^2)$.
\end{enumerate}
\end{lemma}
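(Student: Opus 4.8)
The plan is to reduce each inequality to the nonnegativity of a sum of perfect squares, which is the standard trick for bounds of this type. For part (1), I would expand the left-hand side as $(a+b)^2 = a^2 + 2ab + b^2$ and subtract it from the right-hand side, obtaining
\[
2(a^2+b^2) - (a+b)^2 = a^2 - 2ab + b^2 = (a-b)^2 \geq 0,
\]
which gives the claim at once.

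For part (2), I would proceed in exactly the same way: expand $(a+b+c)^2 = a^2 + b^2 + c^2 + 2ab + 2bc + 2ca$ and compute
\[
3(a^2+b^2+c^2) - (a+b+c)^2 = 2a^2 + 2b^2 + 2c^2 - 2ab - 2bc - 2ca = (a-b)^2 + (b-c)^2 + (c-a)^2 \geq 0.
\]
This finishes the lemma. Both steps are routine algebra; the only thing one must be slightly careful about is regrouping the cross terms $-2ab - 2bc - 2ca$ correctly into the three squared differences $(a-b)^2 + (b-c)^2 + (c-a)^2$, but there is no genuine obstacle here.

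For completeness I would note the conceptual reason behind both statements: they are the $n=2$ and $n=3$ cases of the inequality $(x_1 + \cdots + x_n)^2 \leq n(x_1^2 + \cdots + x_n^2)$, which is an instance of the Cauchy--Schwarz inequality applied to the vectors $(a,b)$ (resp. $(a,b,c)$) and $(1,1)$ (resp. $(1,1,1)$), or equivalently of Jensen's inequality for the convex function $t \mapsto t^2$, since $\left(\tfrac{a+b}{2}\right)^2 \leq \tfrac{a^2+b^2}{2}$ and $\left(\tfrac{a+b+c}{3}\right)^2 \leq \tfrac{a^2+b^2+c^2}{3}$. I would nonetheless present the direct sum-of-squares argument, as it is fully self-contained and invokes no external results, which keeps the exposition at the elementary level appropriate for a technical lemma feeding into the proof of Stone's theorem.
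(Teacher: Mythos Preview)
Your proof is correct. The paper actually states this lemma without proof, treating it as elementary; your sum-of-squares argument (and the remark connecting it to Cauchy--Schwarz/Jensen) is the standard justification and would be entirely appropriate here.
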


\begin{lemma}
\label{lemma:RegressionFunctionMean}
The expected value of the difference of the value of the regression function at $X_i$ and $Y_i$ is zero,
\begin{equation}
\E\left[\eta(X_i) - Y_i\right] = 0 .
\end{equation}
\end{lemma}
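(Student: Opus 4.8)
The plan is to reduce the claim to the defining property of the regression function $\eta$ via the law of total expectation. Since each pair $(X_i, Y_i)$ has the same distribution $\mu$ as the query pair $(X, Y)$, it suffices to show $\E[\eta(X) - Y] = 0$. Both quantities are integrable: $Y$ takes values in $\{0, 1\}$ and is therefore bounded, while $\eta$ is $\lambda$-integrable by the Radon--Nikodym theorem (as observed when $\eta$ was introduced), where $\lambda$ is the marginal law of $X$ on $\Omega$; hence $\eta(X)$ is integrable and the difference of expectations is well defined.

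The key step is to verify that $\eta(X) = \E[Y \mid X]$ almost surely, working directly from the characterization $\int_A \eta \, \mathrm{d}\lambda = \nu(A)$ for all Borel $A \subseteq \Omega$. For such an $A$, because $Y \in \{0, 1\}$ we have $Y \, \Indicator_A(X) = \Indicator_{A \times \{1\}}(X, Y)$, so $\E[Y \, \Indicator_A(X)] = \mu(A \times \{1\}) = \nu(A) = \int_A \eta \, \mathrm{d}\lambda = \E[\eta(X) \, \Indicator_A(X)]$. Thus $\E\big[(\eta(X) - Y)\,\Indicator_A(X)\big] = 0$ for every Borel $A$, which says precisely that $\eta(X)$ is a version of $\E[Y \mid X]$. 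Taking $A = \Omega$ already gives $\E[\eta(X) - Y] = 0$, so in fact no appeal to conditional expectation is strictly needed: the single choice $A = \Omega$ suffices once the displayed chain of equalities is in hand, and the more general statement is recorded only because $\eta(X) = \E[Y \mid X]$ is itself useful downstream.

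There is no substantial obstacle here; the lemma is essentially a restatement of the defining property of $\eta$. The only point deserving care is to justify the identity $\E[Y \mid X] = \eta(X)$ (or directly the equality $\E[Y\,\Indicator_A(X)] = \nu(A)$) from the Radon--Nikodym definition of $\eta$ rather than from the informal ``equivalently'' characterization $\eta(x) = \Prob(Y = 1 \mid X = x)$, together with the observation that $\eta(X)$ and $Y$ are both integrable so that $\E[\eta(X) - Y]$ legitimately splits as $\E[\eta(X)] - \E[Y]$.
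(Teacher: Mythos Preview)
Your proof is correct and follows essentially the same approach as the paper: both reduce to the identity $\eta(X_i) = \E[Y_i \mid X_i]$ and then apply the tower property. The paper simply invokes the equivalent characterization $\eta(x) = \Prob(Y_i = 1 \mid X_i = x)$ stated earlier, whereas you take the extra care to derive $\eta(X) = \E[Y \mid X]$ directly from the Radon--Nikodym defining property and to record the integrability needed to split the expectation; this makes your argument slightly more self-contained but not substantively different.
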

\begin{proof}
We see that (since $Y_i$ is nonzero if and only if $Y_i = 1$)
\begin{align*}
\E\left[\eta(X_i) - Y_i\right] &= \E\left[\eta(X_i)\right] - \E\left[Y_i\right] \\
&= \E\left[\Prob(Y_i = 1 | X_i)\right] - \E\left[Y_i\right] \\
&= \E\left[\E[Y_i | X_i]\right] - \E\left[Y_i\right] \\
&= \E\left[Y_i\right] - \E\left[Y_i\right] \\
&= 0 .
\end{align*}
\end{proof}

We now have a lemma that lets us bound the difference of the expected error and the Bayes error. This is a standard result, the proof below is based on \cite{pbook} (Theorem 6.5) and \cite{Duan} (Theorem 2.2.5), with more details explained.
\begin{lemma}
\label{lemma:WeightedClassifierErrorInequality}
If a classifier $\ELL_n$ is defined as in equation \eqref{eq:WeightedClassifier}, then the error probability satisfies the inequalities
\begin{equation}
\label{eq:WeightedClassifierErrorInequality1}
\err(\ELL_n) - \ell^* \leq 2 \E\left[\left|\eta(X) - \eta_n(X)\right|\right]
\end{equation}
and
\begin{equation}
\label{eq:WeightedClassifierErrorInequality2}
\err(\ELL_n) - \ell^* \leq 2 \sqrt{\E\left[{\left(\eta(X) - \eta_n(X)\right)}^2\right]} .
\end{equation}
\end{lemma}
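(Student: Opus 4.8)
The plan is to first establish the pointwise inequality
\[
\err(\ELL_n) - \ell^* \le 2\,\E\bigl[\,\bigl|\eta(X) - \eta_n(X)\bigr|\,\bigr]
\]
by conditioning on the query $X = x$ (and on the labelled sample), and then to deduce the second inequality from the first by a single application of the Cauchy--Schwarz (or Jensen) inequality. So the real work is all in \eqref{eq:WeightedClassifierErrorInequality1}.

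For the pointwise bound, I would fix $x \in \Omega$ and a realization of the sample $D_n$, so that $\ELL_n(x)$ and $\eta_n(x)$ are determined. Recall from the proof of Theorem~\ref{theorem:BayesClassifierIsOptimal} that for any classifier $g$,
\[
\mu\bigl(g(X) \neq Y \mid X = x\bigr) - \mu\bigl(g^*(X) \neq Y \mid X = x\bigr)
= \bigl|2\eta(x) - 1\bigr|\,\Indicator_{\{g(x) \neq g^*(x)\}} .
\]
Applying this with $g = \ELL_n$ (and taking an outer expectation over $D_n$, which is legitimate since $g^*$ does not depend on the sample), we get
\[
\err(\ELL_n) - \ell^* = \E\Bigl[\,\bigl|2\eta(X) - 1\bigr|\,\Indicator_{\{\ELL_n(X)\,\neq\, g^*(X)\}}\,\Bigr].
\]
The key observation is then that on the event $\{\ELL_n(x) \neq g^*(x)\}$, the estimate $\eta_n(x)$ and the true value $\eta(x)$ lie on opposite sides of $1/2$ (one is $\ge 1/2$, the other is $< 1/2$, up to the tie-breaking convention in \eqref{eq:WeightedClassifier}), so $1/2$ lies between them and hence
\[
\bigl|\eta(x) - \tfrac12\bigr| \le \bigl|\eta(x) - \eta_n(x)\bigr|,
\qquad\text{i.e.}\qquad
\bigl|2\eta(x) - 1\bigr| \le 2\,\bigl|\eta(x) - \eta_n(x)\bigr|.
\]
Substituting this bound inside the expectation and dropping the indicator (the integrand is nonnegative) yields \eqref{eq:WeightedClassifierErrorInequality1}.

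For \eqref{eq:WeightedClassifierErrorInequality2}, apply Cauchy--Schwarz (equivalently, Jensen's inequality for the concave function $t \mapsto \sqrt t$) to the right-hand side of \eqref{eq:WeightedClassifierErrorInequality1}:
\[
\E\bigl[\,|\eta(X) - \eta_n(X)|\,\bigr] \le \sqrt{\E\bigl[(\eta(X) - \eta_n(X))^2\bigr]},
\]
and multiply through by $2$.

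The main obstacle is the careful bookkeeping around the tie-breaking convention: because $\ELL_n(x) = 1$ when $\eta_n(x) \ge 1/2$ (and $g^*(x) = 1$ when $\eta(x) \ge 1/2$), one must check the boundary cases $\eta_n(x) = 1/2$ or $\eta(x) = 1/2$ to be sure the ``$1/2$ lies between $\eta(x)$ and $\eta_n(x)$'' claim still gives $|2\eta(x)-1| \le 2|\eta(x)-\eta_n(x)|$ — it does, since when $\eta(x) = 1/2$ the left side is $0$, and otherwise the strict separation argument applies. A secondary point is justifying that the identity from Theorem~\ref{theorem:BayesClassifierIsOptimal} may be integrated against the distribution of the (sample-dependent) random classifier $\ELL_n$; this is just Fubini/tower property, since conditionally on $D_n$ the classifier $\ELL_n(\cdot)$ is a fixed Borel function and $(X,Y)$ is independent of $D_n$.
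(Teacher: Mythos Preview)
Your proposal is correct and follows essentially the same approach as the paper: both invoke the pointwise identity from Theorem~\ref{theorem:BayesClassifierIsOptimal}, observe that on $\{\ELL_n(x)\neq g^*(x)\}$ one has $|\eta(x)-1/2|\le|\eta(x)-\eta_n(x)|$, integrate and drop the indicator for \eqref{eq:WeightedClassifierErrorInequality1}, and then apply Jensen for \eqref{eq:WeightedClassifierErrorInequality2}. Your added remarks on the tie-breaking boundary case and the Fubini/tower justification are a bit more explicit than the paper's version but do not change the argument.
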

\begin{proof}
From our proof of Theorem \ref{theorem:BayesClassifierIsOptimal}, we have (where $\ELL^*$ is the Bayes classifier):
\begin{align*}
&\phantom{{}={}}\Prob(\ELL_n(X) \neq Y | X = x) - \Prob(\ELL^*(X) \neq Y | X = x) \\
&= \begin{dcases}
0 & \text{if } \ELL_n(x) = \ELL^*(x) \\
2 \eta(x) - 1 & \text{if } \ELL^*(x) = 1 \text{ and } \ELL_n(x) = 0 \\
1 - 2 \eta(x) & \text{if } \ELL^*(x) = 0 \text{ and } \ELL_n(x) = 1
\end{dcases} \\
&= \left| 2 \eta(x) - 1 \right| \Indicator_{\{\ELL_n(x) \neq \ELL^*(x)\}} \;\parbox{9cm}{ since the above values are always nonnegative by the definition of $\ELL^*$}
\end{align*}

Hence we have
\begin{align*}
\Prob(\ELL_n(x) \neq Y) - \ell_\mu^* &= \Prob(\ELL_n(X) \neq Y) - \Prob(\ELL^*(X) \neq Y) \\
&= \E\left[\Prob(\ELL_n(X) \neq Y | X = x)\right] - \E\left[\Prob(\ELL^*(X) \neq Y | X = x)\right] \\
&= \E\left[\Prob(\ELL_n(X) \neq Y | X = x) - \Prob(\ELL^*(X) \neq Y | X = x)\right] \\
&= \int_{\Omega} \left| 2\eta(x) - 1 \right| \Indicator_{\{\ELL_n(x) \neq \ELL^*(x)\}} \text{d} \mu(x \times \{0, 1\}) \\
&= 2 \int_{\Omega} \left| \eta(x) - 1/2 \right| \Indicator_{\{\ELL_n(x) \neq \ELL^*(x)\}} \text{d} \mu(x \times \{0, 1\}) .
\end{align*}

We see that the function we are integrating can only be nonzero when $\ELL_n(\omega) \neq \ELL^*(\omega)$. If $\ELL_n(x) = 1$ and $\ELL^*(x) = 0$, then $\eta(x) < 1/2$ and $\eta_n(x) \geq 1/2$. Similarly, we find that if $\ELL_n(x) = 0$ and $\ELL^*(x) = 1$ then $\eta(x) \geq 1/2$ and $\eta_n(x) < 1/2$. In both cases we have the inequality
\begin{equation*}
\left| \eta(x) - 1/2 \right| \leq \left| \eta(x) - \eta_n(x) \right| .
\end{equation*}

Combining the above results, we find
\begin{align*}
\Prob(\ELL_n(x) \neq Y) - \ell_\mu^* &= 2 \int_{\Omega} \left| \eta(x) - 1/2 \right| \Indicator_{\{\ELL_n(x) \neq \ELL^*(x)\}} \text{d} \mu(x \times \{0, 1\}) \\
&\leq 2 \int_{\Omega} \left| \eta(x) - \eta_n(x) \right| \Indicator_{\{\ELL_n(x) \neq \ELL^*(x)\}} \text{d} \mu(x \times \{0, 1\}) \\
&\leq 2 \int_{\Omega} \left| \eta(x) - \eta_n(x) \right| \text{d} \mu(x \times \{0, 1\}) \\
&= 2 \E\left[\left|\eta(X) - \eta_n(X)\right|\right] .
\end{align*}

We have now proven the first inequality \eqref{eq:WeightedClassifierErrorInequality1}. The second inequality \eqref{eq:WeightedClassifierErrorInequality2} follows by applying Jensen's inequality, so we find
\begin{align*}
\Prob(\ELL_n(x) \neq Y) - \ell_\mu^* &\leq 2 \E\left[\left|\eta(X) - \eta_n(X)\right|\right] \\
&\leq 2 \sqrt{\E\left[{\left(\eta(X) - \eta_n(X)\right)}^2\right]} .
\end{align*}
\end{proof}

A core result is \emph{Stone's Theorem}, which gives sufficient conditions for $\ELL_n$ to be universally consistent. We state a slightly strengthened version of Stone's theorem below, the only difference from the original version is that the original does not include the $\epsilon_n$ sequence in the first condition and we only require bounded functions in the first condition. The proof of this result is based on \cite{pbook} (Theorem 6.3) and \cite{Duan} (Theorem 2.2.2), with more details added.

\begin{theorem}[Stone's Theorem]
\label{theorem:StonesTheorem}
Suppose a learning rule $(\ELL_n)_{n=1}^\infty$ is defined as in \eqref{eq:WeightedClassifier}, with the domain $\Omega$ being $\R^d$. Then if the following conditions hold (for any probability distribution of $(X, Y), (X_1, Y_1), \dots, (X_n, Y_n)$ on $\R^d \times \{0, 1\}$, with the points being iid), $(\ELL_n)_{n=1}^\infty$ is universally consistent.

\begin{enumerate}
\item There exists a constant $c \in \R$ and a sequence ${\left(\epsilon_n\right)}_{n=1}^\infty$ that goes to zero, $\epsilon_n \to 0$ as $n \to \infty$, such that for every measurable nonnegative function $f: \R^d \to \R$ bounded above by one, for all $n \geq 1$,
\begin{equation}
\label{eq:StonesTheoremCondition1}
\E \left[ \sum_{i=1}^n W_{ni} (X) f(X_i) \right] \leq c \E \left[ f(X) \right] + \epsilon_n .
\end{equation}

\item There exists a norm $\lVert \cdot \lVert$ such that for all $a > 0$, as $n \to \infty$,
\begin{equation}
\label{eq:StonesTheoremCondition2}
\E \left[ \sum_{i=1}^n W_{ni} (X) \mathds{1}_{\{\lVert X_i - X \lVert > a\}} \right] \to 0 .
\end{equation}

\item As $n \to \infty$, 
\begin{equation}
\label{eq:StonesTheoremCondition3}
\E \left[ \max_{1 \leq i \leq n} W_{ni} (X) \right] \to 0 .
\end{equation}
\end{enumerate}
\end{theorem}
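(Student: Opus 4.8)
The plan is to use Lemma~\ref{lemma:WeightedClassifierErrorInequality}, which reduces everything to showing that $\E\left[{\left(\eta(X) - \eta_n(X)\right)}^2\right] \to 0$ as $n \to \infty$, since then $\err(\ELL_n) - \ell^* \to 0$ and the learning rule is universally consistent. So the whole argument is an $L^2$ convergence statement for the regression estimate $\eta_n(x) = \sum_{i=1}^n Y_i W_{ni}(x)$.

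First I would split the error into a ``bias'' term and a ``variance'' term by inserting the quantity $\tilde\eta_n(X) = \sum_{i=1}^n \eta(X_i) W_{ni}(X)$: using Lemma~\ref{lemma:StonesTheoremTechnicalInequalities}(1),
\begin{equation*}
\E\left[{\left(\eta(X) - \eta_n(X)\right)}^2\right] \leq 2\,\E\left[{\left(\eta_n(X) - \tilde\eta_n(X)\right)}^2\right] + 2\,\E\left[{\left(\tilde\eta_n(X) - \eta(X)\right)}^2\right].
\end{equation*}
The variance term $\E[(\eta_n(X) - \tilde\eta_n(X))^2] = \E[(\sum_i (Y_i - \eta(X_i)) W_{ni}(X))^2]$ is handled by conditioning on $X, X_1,\dots,X_n$ (and the tiebreakers): the cross terms vanish because $\E[(Y_i - \eta(X_i))(Y_j - \eta(X_j)) \mid \cdot] = 0$ for $i \neq j$ by Lemma~\ref{lemma:RegressionFunctionMean} applied conditionally (the $Y_i$ are conditionally independent given the $X_i$), so we are left with $\E[\sum_i (Y_i-\eta(X_i))^2 W_{ni}(X)^2] \le \E[\sum_i W_{ni}(X)^2] \le \E[\max_i W_{ni}(X) \sum_i W_{ni}(X)] = \E[\max_i W_{ni}(X)]$, which goes to zero by condition~3.

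The bias term $\E[(\sum_i (\eta(X_i) - \eta(X)) W_{ni}(X))^2] \le \E[\sum_i (\eta(X_i)-\eta(X))^2 W_{ni}(X)]$ (by Jensen/convexity, since the weights sum to one) is the harder part, and I expect it to be the main obstacle. The standard route is to approximate $\eta \in L^2(\mu_\Omega)$ by a continuous compactly supported function $\eta^\star$ with $\E[(\eta(X)-\eta^\star(X))^2] < \delta$, then split $\eta(X_i)-\eta(X)$ into the three differences $\eta(X_i)-\eta^\star(X_i)$, $\eta^\star(X_i)-\eta^\star(X)$, $\eta^\star(X)-\eta(X)$ and apply Lemma~\ref{lemma:StonesTheoremTechnicalInequalities}(2). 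The first piece is controlled by condition~1 (with $f = (\eta-\eta^\star)^2$, suitably truncated to be bounded by one — this is exactly where the strengthened hypothesis with the $\epsilon_n$ term and merely bounded $f$ gets used), giving a bound $\le c\delta + \epsilon_n$; the third piece is just $\delta$; and the middle piece uses uniform continuity of $\eta^\star$ together with condition~2 to show $\eta^\star(X_i)$ is close to $\eta^\star(X)$ whenever $\|X_i-X\|$ is small, while the contribution from $\|X_i - X\| > a$ is bounded by $2\|\eta^\star\|_\infty^2 \,\E[\sum_i W_{ni}(X)\mathds{1}_{\{\|X_i-X\|>a\}}] \to 0$. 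Letting $n\to\infty$ and then $\delta \to 0$, $a \to 0$ finishes it. The delicate bookkeeping is making the truncation in condition~1 harmless (one clamps $(\eta-\eta^\star)^2$ at $1$ and absorbs the error, using that $\eta$ is $[0,1]$-valued and $\eta^\star$ can be taken $[0,1]$-valued too, so the clamped and unclamped integrals differ negligibly) and keeping track that all constants are independent of $n$.
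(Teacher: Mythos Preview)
Your proposal is correct and follows essentially the same route as the paper: the same bias/variance split via $\tilde\eta_n = \sum_i \eta(X_i)W_{ni}$, the same three-term decomposition of the bias via a uniformly continuous $[0,1]$-valued approximant $\eta^\star$, and the same conditioning argument for the variance. The only remark is that your worry about truncation is unnecessary once you take $\eta^\star$ to be $[0,1]$-valued (as the paper does), since then $(\eta-\eta^\star)^2 \le 1$ outright and condition~1 applies directly with no clamping.
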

\begin{proof}
For our proof, we show that
\begin{equation}
\E\left[{(\eta(X) - \eta_n(X))}^2\right] \to 0 \text{ as } n \to \infty .
\end{equation}

By Lemma \ref{lemma:WeightedClassifierErrorInequality}, this implies that $\err{\ELL_n} - \ell^* \to 0$ as $n \to \infty$, and hence that $(\ELL_n)_{n=1}^\infty$ is universally consistent.

\begin{itemize}
\item We define another approximation $\hat{\eta}_n$ of the regression function $\eta$ by
\begin{equation}
\label{eq:RegressionFunctionApproximationStones}
\hat{\eta}_n(x) = \sum_{i=1}^n \eta(X_i) W_{ni}(x) .
\end{equation}

We now see that (by Lemma \ref{lemma:StonesTheoremTechnicalInequalities}):
\begin{align}
\label{eq:RegressionFunctionApproximationStonesInequality}
\E\left[ {(\eta(X) - \eta_n(X)}^2 \right] &\leq \E\left[ {(\eta(X) - \hat{\eta}_n(X) + \hat{\eta}_n(X) -  \eta_n(X))}^2 \right] \nonumber \\
&\leq 2 \E\left[ {(\eta(X) - \hat{\eta}_n(X))}^2 \right] + 2 \E\left[ {(\hat{\eta}_n(X) - \eta_n(X))}^2 \right]
\end{align}
We now show that both terms go to zero as $n \to \infty$, by showing that both terms (in \eqref{eq:RegressionFunctionApproximationStonesInequality}) can be made arbitrarily small.

\item We show that for any $\epsilon > 0$, for sufficiently large $n$, $\E\left[ {(\eta(X) - \hat{\eta}_n(X))}^2 \right] < (3c + 12)\epsilon$, and hence $\E\left[ {(\eta(X) - \hat{\eta}_n(X))}^2 \right] \to 0$ as $n \to \infty$.

We first bound this expression by:
\begin{align*}
\E\left[ {(\eta(X) - \hat{\eta}_n(X))}^2 \right] &\leq \E\left[ {\left( \eta(X) - \sum_{i=1}^n \eta(X_i) W_{ni}(X) \right)}^2 \right]\\
&= \E\left[ {\left( \sum_{i=1}^n W_{ni}(X) \eta(X) - \sum_{i=1}^n \eta(X_i) W_{ni}(X) \right)}^2 \right] \\
&= \E\left[ {\left( \sum_{i=1}^n W_{ni}(X) \left( \eta(X) - \eta(X_i) \right) \right)}^2 \right] \\
&\leq \E\left[ \sum_{i=1}^n W_{ni}(X) {(\eta(X) - \eta(X_i))}^2\right] \text{ by Jensen's inequality}
\end{align*}

We observe that any bounded measurable function is square integrable and so is in $L^2(\mu)$, and that continuous functions with bounded support are dense in $L^2(\mu)$ and are uniformly continuous.\cite{Stein} Since $\eta$ is bounded between zero and one and is measurable, this means there exists a uniformly continuous function $\eta^*$ such that $\eta^*(x) \in [0, 1]$ for all $x \in \R^d$ and
\begin{equation}
\E\left[{\left(\eta(X) - \eta^*(X)\right)}^2\right] < \epsilon .
\end{equation}

We then find that (by applying Lemma \ref{lemma:StonesTheoremTechnicalInequalities}):
\begin{align}
&\phantom{{}={}}\E\left[\sum_{i=1}^n W_{ni}(X) {(\eta(X) - \eta(X_i))}^2\right] \nonumber\\
&= \E\left[\sum_{i=1}^n W_{ni}(X) {(\eta(X) - \eta^*(X) + \eta^*(X) - \eta^*(X_i) + \eta^*(X_i) - \eta(X_i))}^2\right] \nonumber\\
&\leq 3 \E\left[\sum_{i=1}^n W_{ni}(X) {(\eta(X) - \eta^*(X))}^2\right] + \nonumber\\&\phantom{{}={}} 3 \E\left[\sum_{i=1}^n W_{ni}(X) {(\eta^*(X) - \eta^*(X_i))}^2\right] + \nonumber\\&\phantom{{}={}} 3\E\left[\sum_{i=1}^n W_{ni}(X) {(\eta^*(X_i) - \eta(X_i))}^2\right] \nonumber
\end{align}

For the first term, we see that
\begin{align*}
\E\left[\sum_{i=1}^n W_{ni}(X) {(\eta(X) - \eta^*(X))}^2\right] &= \E\left[{(\eta(X) - \eta^*(X))}^2 \sum_{i=1}^n W_{ni}(X) \right] \\
&= \E\left[{(\eta(X) - \eta^*(X))}^2 \right] \\
&< \epsilon .
\end{align*}

For the second term $\E\left[\sum_{i=1}^n W_{ni}(X) {(\eta^*(X) - \eta^*(X_i))}^2\right]$, we notice that since $\eta^*$ is uniformly continuous, there exists an $a > 0$ such that if $\norm{X - X_i} \leq a$, then $\left| \eta^*(X) - \eta^*(X_i) \right| < \sqrt{\epsilon}$. First apply this fact (by splitting the expectation into two disjoint sets, the part with $\norm{X - X_i} \leq a$ and the part with $\norm{X - X_i} > a$) and the linearity of integration. We then apply the fact that ${(\eta^*(X) - \eta^*(X_i))}^2 \leq 1$ always, and then use the second condition of the theorem to create a bound. We find that
\begin{align*}
&\phantom{{}={}}\E\left[\sum_{i=1}^n W_{ni}(X) {(\eta^*(X) - \eta^*(X_i))}^2\right] \\
&= \E\left[\sum_{i=1}^n W_{ni}(X) {(\eta^*(X) - \eta^*(X_i))}^2 (\Indicator_{\{\norm{X_i - X} \leq a\}} + \Indicator_{\{\norm{X_i - X} > a\}})\right] \\
&= \E\left[\sum_{i=1}^n W_{ni}(X) {(\eta^*(X) - \eta^*(X_i))}^2 \Indicator_{\{\norm{X_i - X} \leq a\}}\right] \\
&\phantom{{}={}}+ \E\left[\sum_{i=1}^n W_{ni}(X) {(\eta^*(X) - \eta^*(X_i))}^2 \Indicator_{\{\norm{X_i - X} > a\}}\right] \\
&\leq \E\left[\sum_{i=1}^n W_{ni}(X) {\sqrt{\epsilon}}^2 \Indicator_{\{\norm{X_i - X} \leq a\}}\right] + \E\left[\sum_{i=1}^n W_{ni}(X) \Indicator_{\{\norm{X_i - X} > a\}}\right] \\
&< \epsilon \E\left[\sum_{i=1}^n W_{ni}(X) \Indicator_{\{\norm{X_i - X} \leq a\}}\right] + \epsilon \\
&\leq 2 \epsilon .
\end{align*}

For the third term, we see that ${(\eta^*(X_i) - \eta(X_i))}^2$ is bounded above by $1$ and is a measurable function of $X_i$ (since both $\eta$ and $\eta^*$ are bounded), and so by the first assumption, we have
\begin{equation*}
\E\left[\sum_{i=1}^n W_{ni}(X) {(\eta^*(X_i) - \eta(X_i))}^2\right] < c \epsilon + \epsilon_n .
\end{equation*}

We see that $\epsilon_n \to 0$ as $n \to \infty$, so we require $n$ to be sufficiently large such that $\epsilon_n < \epsilon$. We then find that
\begin{equation*}
\E\left[\sum_{i=1}^n W_{ni}(X) {(\eta^*(X_i) - \eta(X_i))}^2\right] < (c + 1) \epsilon .
\end{equation*}

Combining the results for these three terms, we find that for all sufficiently large $n$,
\begin{align*}
\E\left[\sum_{i=1}^n W_{ni}(X) {(\eta(X) - \eta(X_i))}^2\right] &< 3 \epsilon + 3 (2 \epsilon) + 3 (c + 1) \epsilon \\
&= (3c + 12) \epsilon .
\end{align*}

Since $(3c + 12) \epsilon$ can be made arbitrarily small by taking $\epsilon$ to be sufficiently small, it follows that $\E\left[ \sum_{i=1}^n W_{ni}(X) {(\eta(X) - \eta(X_i)}^2\right] \to 0$ as $n \to \infty$.

\item We now show that $\E\left[ {(\hat{\eta}_n(X) - \eta_n(X))}^2 \right] \to 0$ as $n \to \infty$. We directly substitute in the definition of $\eta_n$ and $\hat{\eta}_n$ into the expression and simplify. We obtain:
\begin{align*}
\E\left[ {\left( \hat{\eta}_n(X) - \eta_n(X) \right)}^2 \right] &= \E\left[ {\left( \sum_{i=1}^n \eta(X_i) W_{ni}(X) - \sum_{i=1}^n Y_i W_{ni}(X) \right)}^2 \right] \\
&= \E\left[ {\left( \sum_{i=1}^n W_{ni}(X) (\eta(X_i) - Y_i) \right)}^2 \right] \\
&= \E\left[ \sum_{i=1}^n \sum_{j=1}^n W_{ni}(X) W_{nj}(X) (\eta(X_i) - Y_i) (\eta(X_j) - Y_j) \right] \\
&= \sum_{i=1}^n \sum_{j=1}^n \E\left[ W_{ni}(X) W_{nj}(X) (\eta(X_i) - Y_i) (\eta(X_j) - Y_j) \right] \\
\end{align*}

If $i \neq j$, we first apply the law of total expectation (in which we condition on $X, X_1, X_2, \dots, X_n$ in the inner expectation), after which we notice that $W_{ni}(X)$, $W_{nj}(X)$, $(\eta(X_i) - Y_i)$, and $(\eta(X_j) - Y_j)$ are all conditionally independent with respect to $X, X_1, X_2, \dots, X_n$,\footnote{This holds since $W_{ni}(X)$ and $W_{nj}(X)$ are assumed to be functions of $X, X_1, X_2, \dots, X_n$ only. If this condition is violated, the theorem fails, see counterexample \ref{example:StonesTheoremLabelCounterexample}.} which means that we can split the inner expectation, so we find:
\begin{align*}
&\E\left[ W_{ni}(X) W_{nj}(X) (\eta(X_i) - Y_i) (\eta(X_j) - Y_j) \right] \\
&= \E\left[ \E\left[ W_{ni}(X) W_{nj}(X) (\eta(X_i) - Y_i) (\eta(X_j) - Y_j) \middle| X, X_1, X_2, \dots, X_n \right] \right] \\
&= \E\left[ \substack{ \E\left[ W_{ni}(X) \middle| X, X_1, X_2, \dots, X_n \right] \times \E\left[  W_{nj}(X)  \middle| X, X_1, X_2, \dots, X_n \right] \times \\ \E\left[ (\eta(X_i) - Y_i)  \middle| X, X_1, X_2, \dots, X_n \right] \times \E\left[ (\eta(X_j) - Y_j) \middle| X, X_1, X_2, \dots, X_n \right] } \right]
\end{align*}

We then notice that (since $Y_i$ takes on values zero and one only, so we can replace the expected value of $Y_i$ with the probability that $Y_i = 1$):
\begin{align*}
&\E\left[ \eta(X_i) - Y_i \middle| X, X_1, X_2, \dots, X_n \right] \\
&=\E\left[ \eta(X_i) \middle| X, X_1, X_2, \dots, X_n \right] - \E\left[ Y_i \middle| X, X_1, X_2, \dots, X_n \right] \\
&=\E\left[ \Prob(Y_i = 1 | X_i) \middle| X, X_1, X_2, \dots, X_n \right] - \Prob\left( Y_i = 1 \middle| X, X_1, X_2, \dots, X_n \right) \\
&=\Prob\left( Y_i = 1 \middle| X, X_1, X_2, \dots, X_n \right) - \Prob\left( Y_i = 1 \middle| X, X_1, X_2, \dots, X_n \right) \\
&= 0
\end{align*}

This implies that the expected value $\E\left[ W_{ni}(X) W_{nj}(X) (\eta(X_i) - Y_i) (\eta(X_j) - Y_j) \right]$ (with $i \neq j$) is zero, since one of the factors in the expectation is zero (namely $\E\left[ \eta(X_i) - Y_i \middle| X, X_1, X_2, \dots, X_n \right] = 0$) and all of the factors are finite. This means that the cross terms are all zero. Hence we have that the expectation is equal to the terms with $i = j$,
\begin{align*}
\E\left[ {\left( \hat{\eta}_n(X) - \eta_n(X) \right)}^2 \right] &= \sum_{i=1}^n \E\left[ {W_{ni}(X)}^2 {(\eta(X_i) - Y_i)}^2 \right] \\
&\leq \sum_{i=1}^n \E\left[ {W_{ni}(X)}^2 \right] \text{ since } {(\eta(X_i) - Y_i)}^2 \leq 1 \\
&\leq \sum_{i=1}^n \E\left[ W_{ni}(X) \max_{1 \leq i \leq n} W_{ni}(X) \right] \\
&= \E\left[ \max_{1 \leq i \leq n} W_{ni}(X) \sum_{i=1}^n W_{ni}(X) \right] \\
&= \E\left[ \max_{1 \leq i \leq n} W_{ni}(X) \right] \text{ since } \sum_{i=1}^n W_{ni}(X) = 1 \text{ always}\\
&\to 0 \text{ as } n \to \infty \text{ by the third condition.}
\end{align*}

\end{itemize}
\end{proof}

It can be shown that the $k$-NN learning rule on $\R^d$ (with any norm on $\R^d$) satisfies these conditions and so is universally consistent. This is what we will do in the next section.

\section{Universal Consistency of $k$-NN}

In this section, we prove that $k$-NN on the normed space $(\R^d, \norm{\cdot})$ is universally consistent. This is a known result, we explain the proof in detail as we will consider various extensions of this result later on. For the Euclidean norm, the result was first proven by Stone in \cite{Stone}. A nice version of the proof for the Euclidean norm was presented in the book \cite{pbook}, the result for arbitrary norms appears to have been known to the authors of the book but was not proven. The full proof for arbitrary norms is done in \cite{Duan}.

We observe that the weight function for $k$-NN is:
\begin{equation}
\label{eq:KnnWeights}
W_{ni}(X) = \begin{dcases}
\frac1k & \text{if } X_i \text{ is a } k \text{-nearest neighbour of } X \\
0 & \text{otherwise}
\end{dcases}
\end{equation}

We notice that the weights are all nonnegative and sum to one. We then classify points using this weight function with equations \eqref{eq:RegressionEstimate} and \eqref{eq:WeightedClassifier}.

An \emph{inframetric space with a $C$-inframetric inequality} $(\Omega, \rho)$ is a nonempty set $\Omega$ together with a function $\rho: \Omega \times \Omega \to \R^+$ that satisfies:
\begin{enumerate}
\item $\rho(x, y) \geq 0$ and $\rho(x, y) = 0$ if and only if $x = y$ for all $x, y \in \Omega$.
\item $\rho(x, y) = \rho(y, x)$ for all $x, y \in \Omega$.
\item $\rho(x, z) \leq C \cdot \max\{\rho(x, y), \rho(y, z)\}$.
\end{enumerate}

We easily see that any $C$-inframetric space satisfies a $2C$-weakened triangle inequality, that for all $x, y, z \in \Omega$, $\rho(x, z) \leq 2C (\rho(x, y) + \rho(y, z))$. As for metric spaces, we can define the notion of an open ball, open set, dense subset, separability, Borel $\sigma$-algebra, etc. for inframetric spaces. This is done for a more general family of symmetric kernels in \cite{Assouad}. First the open ball $B_r(y, \rho)$ is defined as the set $\{x \in \Omega \;|\; \rho(x, y) < r \}$ with the closed ball and sphere defined similarly (\cite{Assouad}, part 1.1). Open sets, the notion of separability, and the Borel $\sigma$-algebra are then defined. The theory of measures is developed on such spaces.

\begin{definition}
The \emph{support} of a measure $\mu$ is the set of points such that any open ball around any such point has nonzero measure, that is,
\begin{equation}
\operatorname{Support}(\mu) = \{ \mu \;:\; \forall r > 0,\; \mu(B_r(x)) > 0 \} .
\end{equation}
\end{definition}

It can be easily shown that the support of a measure is always closed. We now prove a standard result about the support of a measure on an inframetric spaces (a sketch of the proof for $(\R^d, \norm{\cdot})$ can be found in \cite{pbook} (Appendix 1, Lemma A.1), which works for any metric space). The result for spaces with a symmetric kernel satisfying a $C$-relaxed triangle inequality is proven in \cite{Assouad}, Proposition 2.6.2.

\begin{lemma}
\label{lemma:SeparableSpaceSupport}
The complement of the support has $\mu$-measure zero in any separable $C$-inframetric space.
\end{lemma}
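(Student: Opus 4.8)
The plan is to show that the complement $U := \Omega \setminus \operatorname{Support}(\mu)$ can be covered by \emph{countably many} open balls of $\mu$-measure zero, so that $\mu(U)=0$ follows from countable subadditivity of $\mu$. Since the support is closed, $U$ is open; and by the very definition of the support, every $x \in U$ admits some radius $r_x > 0$ with $\mu(B_{r_x}(x,\rho)) = 0$. Naively one would like to say that $U$, being a union of the open balls $B_{r_x}(x,\rho)$, is a union of measure-zero sets — but an arbitrary union of null sets need not be null, so the separability hypothesis must be used to extract a countable subcover.

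The one point that needs care is that, unlike in a metric space, the $C$-inframetric inequality only yields $\rho(x,z) \le C\max\{\rho(x,y),\rho(y,z)\}$, so recentring a small ball at a nearby dense point costs a factor of $C$ in the radius. I would handle this as follows. Fix a countable dense set $D \subseteq \Omega$. For each $x \in U$, pick a positive rational $\delta_x$ with $C\delta_x \le r_x$, and then pick $d_x \in D$ with $\rho(x,d_x) < \delta_x$. If $z \in B_{\delta_x}(d_x,\rho)$, then $\rho(x,z) \le C\max\{\rho(x,d_x),\rho(d_x,z)\} < C\delta_x \le r_x$, so $B_{\delta_x}(d_x,\rho) \subseteq B_{r_x}(x,\rho)$ and hence $\mu(B_{\delta_x}(d_x,\rho)) = 0$; moreover $x \in B_{\delta_x}(d_x,\rho)$ trivially.

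Consequently $U \subseteq \bigcup_{x \in U} B_{\delta_x}(d_x,\rho)$, and every ball on the right has centre in the countable set $D$ and radius in $\Q \cap (0,\infty)$, so this union is in fact a countable union of open (hence Borel-measurable) sets, each of $\mu$-measure zero. By countable subadditivity, $\mu(U) = 0$, as claimed. The main obstacle — really the only nontrivial step — is the recentring argument: one must absorb the inframetric constant $C$ by first shrinking the radius and only then passing to a dense centre, and one must ensure the resulting family of balls is genuinely countable, which is exactly why the radii are taken rational and the centres in $D$. The rest is the familiar ``separable $\Rightarrow$ Lindelöf'' covering argument, combined with the previously noted fact that the support is closed so that $U$ is a legitimate open measurable set.
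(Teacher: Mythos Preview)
Your proposal is correct and follows essentially the same approach as the paper: recentre each null ball at a point of a countable dense set with a rational radius, obtaining a countable family of null balls covering the complement of the support, and conclude by countable subadditivity. The only cosmetic difference is that you invoke the $C$-inframetric inequality $\rho(x,z)\le C\max\{\rho(x,d_x),\rho(d_x,z)\}$ directly (costing a factor $C$), whereas the paper passes through the derived $2C$-weakened triangle inequality (costing a factor $4C$); this affects only the constants and not the structure of the argument.
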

\begin{proof}
We let $A$ be the support of $\mu$ and $T$ be a countable dense subset of $\Omega$. By definition,
\begin{equation}
A^\mathcal{C} = \{x \in \Omega \;:\; \exists r > 0, \; \mu(B_r(x)) = 0 \} .
\end{equation}

We let $x \in A^\mathcal{C}$, and $r > 0$ be a radius around $x$ such that $\mu(B_r(x)) = 0$. Without loss of generality we assume that $r$ is rational. We see that there exists $y \in T$ such that $\rho(x, y) < \frac{r}{4 C}$, and for any $z \in B_{r / (4 C)}(y)$,
\begin{align*}
\rho(x, z) &\leq 2C(\rho(x, y) + \rho(y, z)) \\
&< 2C \left( \frac{r}{4C} + \frac{r}{4C} \right) \\
&= r .
\end{align*}

This means that $z \in B_r(X)$, and hence $\mu\left( B_{r / (2 C)} (y) \right) = 0$.

By the above argument, we see that for all $x \in A^\mathcal{C}$, there exists $y_x \in T$ and rational $r_x > 0$ such that $x \in B_{r_x}(y_x)$ and $\mu(B_{r_x}(y_x)) = 0$. We define a family of such open balls
\begin{equation}
\B = \left\{ B_r(y) \;:\; r > 0,\; r \in \Q,\; y \in T,\; \mu(B_r(y)) = 0 \right\} .
\end{equation}

It is clear that $\B$ is countable (since the countable union of countable sets is countable) and that every $x \in \A^\mathcal{C}$ is in $\B$, since it is in at least one of the open balls, and hence $A^\mathcal{C} \subseteq \cup_{B \in \B} B$. We then find (using subadditivity)
\begin{align*}
\mu(A^\mathcal{C}) &\leq \mu\left( \bigcup_{B \in \B} B \right) \\
&\leq \sum_{B \in \B} \mu(B) \\
&= \sum_{B \in \B} 0 \\
&= 0 .
\end{align*}
\end{proof}

\begin{lemma}
\label{lemma:ConvergenceOfSupremumOfTail}
Let ${(A_n)}_{n=1}^\infty$ be a sequence of random variables that converges almost surely to zero as $n$ approaches infinity. We then have that the supremum of the tail also converges almost surely to zero, $\sup_{m \geq n} A_m \to 0$ with probability one as $n \to \infty$.
\end{lemma}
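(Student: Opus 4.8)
The plan is to argue pointwise on the almost-sure event where $A_n\to 0$, exploiting that the tail suprema form a monotone sequence. First I would define $B_n=\sup_{m\ge n}A_m$. This is a countable supremum of measurable functions, hence a random variable (a priori with values in $[-\infty,+\infty]$), and the sequence $(B_n)_{n=1}^\infty$ is nonincreasing in $n$, since $B_{n+1}$ is a supremum over a subset of the index set defining $B_n$. Consequently $B_n$ converges pointwise to $\inf_n B_n=\inf_n\sup_{m\ge n}A_m$, which is by definition $\limsup_{m\to\infty}A_m$.

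Next let $E=\{\omega:A_n(\omega)\to 0\}$, so $\Prob(E)=1$ by hypothesis. For $\omega\in E$ we have $\limsup_{m\to\infty}A_m(\omega)=0$, hence $B_n(\omega)\to 0$ by the previous paragraph. More explicitly, given $\epsilon>0$ pick $N$ with $|A_m(\omega)|<\epsilon$ for all $m\ge N$; then for every $n\ge N$ one has $B_n(\omega)\le B_N(\omega)\le\epsilon$, while the nonincreasing sequence $B_n(\omega)$ has limit $\limsup_m A_m(\omega)=0$ and therefore stays $\ge 0$, so $0\le B_n(\omega)\le\epsilon$ for all $n\ge N$. As $\epsilon>0$ was arbitrary, $B_n(\omega)\to 0$. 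Since this holds for every $\omega\in E$ and $\Prob(E)=1$, we conclude $\sup_{m\ge n}A_m\to 0$ almost surely.

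I do not expect a genuine obstacle: the content is just the definition of almost-sure convergence together with the identity $\sup_{m\ge n}A_m\downarrow\limsup_m A_m$. The only point warranting a word of care is that $B_n$ can equal $+\infty$ for small $n$ if the $A_m$ are not initially bounded; this is harmless, since on $E$ the tail $\{A_m:m\ge N(\omega)\}$ is bounded once $N(\omega)$ is large, so $B_n(\omega)$ is finite for all large $n$, which is all that convergence of $(B_n(\omega))_{n=1}^\infty$ requires.
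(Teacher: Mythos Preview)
Your proof is correct and follows essentially the same approach as the paper's: argue pointwise on the almost-sure event where $A_n\to 0$, using that the tail suprema are nonincreasing to conclude they converge to zero there. Your version is slightly more careful than the paper's, in that you explicitly address measurability of $B_n$ and the possibility that $B_n=+\infty$ for small $n$, and you identify the limit of $B_n$ as $\limsup_m A_m$; the paper simply separates out the deterministic statement first and then applies it on the full-measure set.
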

\begin{proof}
We first prove this for deterministic sequences. Let ${(a_n)}_{n=1}^\infty$ be a deterministic sequence that converges to zero. For any $\epsilon > 0$, there exists $N \geq 1$ such that for all $n \geq N$, $\left| a_n \right| < \epsilon / 2$. This means that $\sup_{m \geq N} a_m \leq \epsilon / 2 < \epsilon$. We observe the sequence ${\left(\sup_{m \geq N} a_m \right)}_{n=1}^\infty$ is monotone decreasing. Hence we have that for all $n \geq N$, $\sup_{m \geq n} a_m \leq \sup_{m \geq N} a_m < \epsilon$. It follows that for all $\epsilon > 0$, there exists $N \geq 1$ such that for all $n \geq N$, $\sup_{m \geq n} a_m < \epsilon$, and hence $\sup_{m \geq n} a_m \to 0$ as $n \to \infty$.

If $A_n$ converges almost surely to zero, we have that $A_n(\omega) \to 0$ as $n \to \infty$ for all $\omega \in \Omega_0$, for some subset $\Omega_0$ of the probability space with $\Prob(\Omega_0) = 1$. Since the result holds for deterministic sequences, we have that for all points $\omega \in \Omega_0$, $\sup_{m \geq n} A_m(\omega) \to 0$ as $n \to \infty$, and hence $\sup_{m \geq n} A_m$ converges to zero almost surely as $n \to \infty$.
\end{proof}

This following result is a generalization of a result originally proved by Cover and Hart in \cite{CoverHart}, see also \cite{pbook} (Lemma 5.1). Our new result extends the result to inframetric spaces.
\begin{lemma}
\label{lemma:ProbKPointIsFarGoesToZero}
Suppose we have a separable inframetric space $(\Upsilon, \rho)$ with probability measure $\Prob$. Given iid points $X, X_1, X_2, \dots, X_n$, let $X_{(1, \rho)}, X_{(2, \rho)}, \dots, X_{(n, \rho)}$ be the points in increasing distance from $X$ with respect to the metric $\rho$ and let $a > 0$ be a constant. Then as $n \to \infty$, for any sequence ${(k_n)}_{n=1}^\infty$ such that $\frac{k_n}{n} \to 0$,
\begin{equation}
\Prob( \rho(X_{\left(k_n\right)}, X) > a ) \to 0 .
\end{equation}
\end{lemma}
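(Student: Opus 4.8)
The plan is to reduce the statement to a covering argument: I want to show that for any fixed $a > 0$, with high probability at least $k_n$ of the sample points $X_1, \dots, X_n$ land within distance $a$ of the query $X$, which forces the $k_n$-th nearest neighbour $X_{(k_n)}$ to satisfy $\rho(X_{(k_n)}, X) \leq a$. The key observation is that, by Lemma~\ref{lemma:SeparableSpaceSupport}, we may assume $X$ lies in the support of the distribution of $X_1$ (the complement has measure zero and contributes nothing to the probability), so every ball $B_{a}(X)$ has strictly positive measure. More precisely, I would first pass to the ball of radius $a/(4C)$ or a similar shrunk radius so that, via the $2C$-relaxed triangle inequality, any two sample points in such a small ball around a common center are within $a$ of each other; this lets me transfer "positive measure of a small ball around $X$" into a statement about a fixed countable collection of small balls with positive measure, using separability.

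The main steps, in order: (1) Fix $a > 0$ and $\epsilon > 0$. Using separability and Lemma~\ref{lemma:SeparableSpaceSupport}, choose a countable dense set $T$ and cover the support by balls $B_{r}(y)$ with $y \in T$, $r$ small relative to $a$, and $\mu$-positive measure; since $X$ lies in the support almost surely, with probability $\geq 1 - \epsilon/2$ it falls in one such ball $B_{r}(y_0)$, which has some positive mass $p_0 > 0$. (2) Condition on the event $\{X \in B_r(y_0)\}$. For each other sample point $X_i$, independently of $X$, the event $\{X_i \in B_r(y_0)\}$ has probability $p_0$, and by the relaxed triangle inequality $X_i \in B_r(y_0)$ together with $X \in B_r(y_0)$ implies $\rho(X_i, X) \leq 2C(r + r) \leq a$ provided $r \leq a/(4C)$. (3) Hence the number $N_n$ of sample points within distance $a$ of $X$ stochastically dominates a $\mathrm{Binomial}(n-1, p_0)$ random variable (conditionally on $X$'s ball), so $\Prob(N_n < k_n) \leq \Prob(\mathrm{Bin}(n-1, p_0) < k_n)$. (4) Since $k_n / n \to 0$ while $p_0$ is a fixed positive constant, $k_n = o(np_0)$, so by Chebyshev's inequality (or a Chernoff bound) $\Prob(\mathrm{Bin}(n-1, p_0) < k_n) \to 0$. (5) On the event $\{N_n \geq k_n\}$ we have $\rho(X_{(k_n)}, X) \leq a$, so $\Prob(\rho(X_{(k_n)}, X) > a) \leq \epsilon/2 + o(1)$, and letting $n \to \infty$ then $\epsilon \to 0$ finishes it. There is a minor wrinkle about distance ties and the tiebreaker variables $U_i$, but ties only permute points at equal distance and never move a point across the threshold $a$, so they are harmless here.

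The hard part will be step (2)–(3): making the conditioning on $\{X \in B_r(y_0)\}$ rigorous while preserving the independence needed to invoke a binomial tail bound. One has to be careful that $y_0$ is itself random (it depends on which cover ball $X$ landed in), so the cleanest route is to fix a single ball $B = B_r(y)$ first, work conditionally on $\{X \in B\}$ where $p_0 = \mu(B) > 0$ is deterministic, obtain the bound $\Prob(N_n < k_n \mid X \in B) \to 0$, and then note that since the cover is countable and $\mu(\text{support}) = 1$, we can choose a \emph{finite} subcollection of these balls whose union carries mass $\geq 1 - \epsilon/2$, take $p_0$ to be the minimum of their (finitely many, positive) masses, and union-bound over that finite subcollection. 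Everything else — the relaxed triangle inequality, the binomial domination, the Chebyshev estimate using $k_n/n \to 0$ — is routine once the geometry and the independence are set up correctly.
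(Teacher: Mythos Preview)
Your argument is correct, but it follows a genuinely different route from the paper's proof. The paper fixes a deterministic point $x$ in the support, observes that $\rho(X_{(k_n)}(x),x)\geq a$ is equivalent to $\frac{1}{n}\sum_{i=1}^n \Indicator_{\{X_i\in B_a(x)\}} < k_n/n$, and then applies the strong law of large numbers together with $k_n/n\to 0$ to get $\rho(X_{(k_n)}(x),x)\to 0$ almost surely for each such $x$. To pass from this pointwise statement to the random query $X$, the paper replaces the sequence by its decreasing envelope $\sup_{m\geq n}\rho(X_{(k_m,m)}(x),x)$ (Lemma~\ref{lemma:ConvergenceOfSupremumOfTail}) and invokes monotone convergence, after conditioning on $X\in\operatorname{Support}(\Prob)$ via Lemma~\ref{lemma:SeparableSpaceSupport}.

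Your approach trades the SLLN/monotone-convergence machinery for an explicit covering: a finite family of small balls of fixed positive mass capturing most of the distribution of $X$, followed by a binomial tail bound (Chebyshev) uniform over that finite family. This is more elementary in that it avoids almost-sure convergence and the $\sup$-envelope trick entirely, and it makes the role of separability very transparent. The paper's argument, on the other hand, is shorter once the auxiliary lemmas are in place and yields the intermediate almost-sure statement for each fixed $x$ in the support, which is of some independent interest. Both proofs use Lemma~\ref{lemma:SeparableSpaceSupport} in the same way, and your handling of the inframetric triangle inequality (shrinking the ball radius to $a/(4C)$) is exactly the right adaptation. The minor point about ties is correctly dismissed, and your care in fixing the finite subcollection \emph{before} taking $p_0$ as the minimum mass is precisely what is needed to make the conditioning and union bound rigorous.
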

\begin{proof}
We first notice that for all $x \in \operatorname{Support}(\Prob)$ and $\epsilon > 0$,
\begin{align*}
\rho(X_{(k_n)}(x), x) \geq \epsilon &\Leftrightarrow \sum_{i=1}^n \Indicator_{\{X_i \in B_\epsilon(x)\}} < k_n \\
&\Leftrightarrow \frac1n \sum_{i=1}^n \Indicator_{\{X_i \in B_\epsilon(x)\}} < \frac{k_n}{n} .
\end{align*}

We then notice that $k_n / n \to 0$ as $n \to \infty$ by assumption and $\frac1n \sum_{i=1}^n \Indicator_{\{X_i \in B_\epsilon(x)\}} \to \Prob(B_\epsilon(x))$ almost surely as $n \to \infty$ by the strong law of large numbers, and by assumption $\mu(B_\epsilon(x)) > 0$ since $x$ is in the support of $\mu$. It follows that $\rho(X_{(x)} - x) \to 0$ as $n \to \infty$ with probability one.

We define $X_{(k, n)}(x)$ to be the $k^\text{th}$ order statistic of $x$ from the sample $X_1, X_2, \dots, X_n$ of size $n$ in the $\rho$ distance (this notation makes the sample size clear when we discuss the order statistics). By the above argument we have that for any $x$ in the support of $\Prob$, we have $\rho(X_{(k_n, n)}(x) - x) \to 0$ as $n \to \infty$ with probability one. From Lemma \ref{lemma:ConvergenceOfSupremumOfTail} we have that $\sup_{m \geq n} \rho(X_{(k_m, m)}(x) - x) \to 0$ as $n \to \infty$ with probability one as well.

We then notice that for the random variable $X$, by Lemma \ref{lemma:SeparableSpaceSupport},
\begin{align*}
&\phantom{{}={}} \Prob\left(\sup_{m \geq n} \rho(X_{(k_m, m)}(X), X) > \epsilon\right) \\
&= \Prob(X \in \operatorname{Support}(\Prob)) \Prob\left(\sup_{m \geq n} \rho(X_{(k_m, m)}(X), X) > \epsilon \middle| X \in \operatorname{Support}(\Prob) \right) +\\&\phantom{{}={}} \Prob(X \not\in \operatorname{Support}(\Prob)) \Prob\left(\sup_{m \geq n} \rho(X_{(k_m, m)}(X), X) > \epsilon \middle| X \not\in \operatorname{Support}(\Prob) \right) \\
&= \Prob\left(\sup_{m \geq n} \rho(X_{(k_m, m)}(X), X) > \epsilon \middle| X \in \operatorname{Support}(\Prob) \right) .
\end{align*}

Since the sequence $\sup_{m \geq n} \rho(X_{(k_m, m)}(x), x)$ is nonnegative, monotone decreasing, and converges to zero almost everywhere if $X\in \operatorname{Support}(\Prob)$, by the Monotone Convergence Theorem (applied to the expectations of the indicator functions of the events $\sup_{m \geq n} \rho(X_{(k_m, m)}(X), X) > \epsilon$) we find that the conditional probability $\Prob\left(\sup_{m \geq n} \rho(X_{(k_m, m)}(X), X) > \epsilon \middle| X \in \operatorname{Support}(\Prob) \right) \to 0$ as $n \to \infty$. Since $0 \leq \rho(X_{(k_n, n)}(X), X) \leq \sup_{m \geq n} \rho(X_{(k_m, m)}(X), X)$, we have that as $n \to \infty$,
\begin{equation*}
\Prob\left(\rho(X_{(k_n, n)}(X), X) > \epsilon\right) \leq \Prob\left(\sup_{m \geq n} \rho(X_{(k_m, m)}(X), X) > \epsilon \middle| X \in \operatorname{Support}(\Prob) \right) \to 0 .
\end{equation*}
\end{proof}

We recall that the \emph{weights} $W_{ni}(X)$ (with $1 \leq i \leq n$) are functions of $X,\allowbreak X_1,\allowbreak X_2,\allowbreak \dots,\allowbreak X_n$ that are nonnegative and sum to one, and from equation \eqref{eq:RegressionEstimate} if the sum of the weights $W_{ni}(X)$ multiplied by the corresponding $Y_i$ is at least $1/2$, we assign label one, otherwise we assign label zero. For $k$-NN, we recall that the weights are $1/k$ for the $k$-nearest points to the query, and are zero otherwise. The following result follows easily from Lemma \ref{lemma:ProbKPointIsFarGoesToZero} and is proven in \cite{pbook} (inside the proof of Theorem 6.4).

\begin{lemma}
\label{lemma:StonesTheoremSecondConditionNorm}
If we let $W_{ni}(X)$ be the weights in the $k$-NN learning rule for the normed space $(\R^d, \norm{\cdot})$, then
\begin{equation}
\E\left[ \sum_{i=1}^n W_{ni}(X) \Indicator_{\{\norm{X_i - X} > a \}} \right] \to 0 \text{ as } n \to \infty .
\end{equation}
\end{lemma}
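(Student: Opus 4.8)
The plan is to reduce the expectation to the probability that the $k$-th nearest neighbour of $X$ falls outside the ball of radius $a$ about $X$, and then to invoke Lemma \ref{lemma:ProbKPointIsFarGoesToZero}.

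First I would unpack the $k$-NN weights from \eqref{eq:KnnWeights}: the quantity $\sum_{i=1}^n W_{ni}(X)\Indicator_{\{\norm{X_i - X} > a\}}$ contributes $1/k$ for each of the $k$ nearest neighbours of $X$ whose distance to $X$ exceeds $a$. If the $k$-th nearest neighbour $X_{(k)}$ satisfies $\norm{X_{(k)} - X} \leq a$, then all of $X_{(1)}, \dots, X_{(k)}$ lie within distance $a$ of $X$ and the sum is $0$; otherwise the sum is at most $k \cdot (1/k) = 1$. Hence, pointwise,
\[
\sum_{i=1}^n W_{ni}(X)\Indicator_{\{\norm{X_i - X} > a\}} \leq \Indicator_{\{\norm{X_{(k)} - X} > a\}},
\]
and taking expectations gives $\E\left[\sum_{i=1}^n W_{ni}(X)\Indicator_{\{\norm{X_i - X} > a\}}\right] \leq \Prob\left(\norm{X_{(k)} - X} > a\right)$.

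Next I would note that $(\R^d, \norm{\cdot})$ is a separable metric space, hence a separable inframetric space (the triangle inequality yields the $C$-inframetric inequality with $C = 2$), so Lemma \ref{lemma:ProbKPointIsFarGoesToZero} applies with $\rho = \norm{\cdot}$ and $k_n = k$. Under the standing hypothesis $k/n \to 0$ as $n \to \infty$, that lemma gives $\Prob\left(\norm{X_{(k)} - X} > a\right) \to 0$, and combining with the bound above yields the claim.

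I do not anticipate a genuine obstacle here; the only point that calls for a little care is the handling of distance ties. Since the event $\{\norm{X_{(k)} - X} > a\}$ depends only on the multiset of distances $\{\norm{X_i - X} : 1 \leq i \leq n\}$ and not on how ties among them are resolved, the tie-breaking randomisation $U_1, \dots, U_n$ plays no role in the bound, and one also uses implicitly that $n \geq k$ for all large $n$ so that $X_{(k)}$ is well defined.
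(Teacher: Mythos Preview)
Your proposal is correct and follows essentially the same approach as the paper: bound the weighted sum above by $\Indicator_{\{\norm{X_{(k)}-X}>a\}}$, take expectations, and invoke Lemma~\ref{lemma:ProbKPointIsFarGoesToZero}. Your explicit justification that $(\R^d,\norm{\cdot})$ is a separable inframetric space (with $C=2$) and your remark on tie-breaking are small additions, but the core argument is identical.
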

\begin{proof}
We see that $\sum_{i=1}^n W_{ni}(X) \Indicator_{\{\norm{X_i - X} > a \}}$ is bounded above by one, is nonnegative, and is nonzero if and only if the $k^{\text{th}}$ nearest point to $X$ has a distance of at most $a$. By Lemma \ref{lemma:ProbKPointIsFarGoesToZero}, the probability of this goes to zero as $n$ goes to infinity, and hence the expected value $\E\left[ \sum_{i=1}^n W_{ni}(X) \Indicator_{\{\norm{X_i - X} > a \}} \right] \to 0$ as $n \to \infty$.
\end{proof}

\begin{lemma}
\label{lemma:UnitSphereHasFiniteCovering}
For any norm $\norm{\cdot}$ on $\R^d$ and radius $\delta > 0$, the unit sphere $S_1(\veczero, \norm{\cdot})$ can be covered by $c$ balls of radius $\delta$ each in the $\norm{\cdot}$ norm, that is,
\begin{equation}
S_1(\veczero, \norm{\cdot}) \subseteq \bigcup_{i=1}^c B_\delta(\vecx_i, \norm{\cdot}) .
\end{equation}
\end{lemma}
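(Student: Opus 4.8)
The plan is to prove this by the compactness of the unit sphere. First I would note that all norms on $\R^d$ are equivalent, so the map $\vecx \mapsto \norm{\vecx}$ is continuous for the standard topology on $\R^d$; hence $S_1(\veczero, \norm{\cdot}) = \{\vecx \in \R^d : \norm{\vecx} = 1\}$ is closed, being the preimage of the closed set $\{1\}$, and it is bounded, being contained in a Euclidean ball. By the Heine--Borel theorem it is therefore compact.

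Next, observe that the family $\mathcal{U} = \{ B_\delta(\vecx, \norm{\cdot}) : \vecx \in S_1(\veczero, \norm{\cdot}) \}$ is an open cover of $S_1(\veczero, \norm{\cdot})$, since every $\vecx$ on the sphere lies in $B_\delta(\vecx, \norm{\cdot})$. By compactness this cover admits a finite subcover $B_\delta(\vecx_1, \norm{\cdot}), \dots, B_\delta(\vecx_c, \norm{\cdot})$, and this is exactly the claimed covering of the sphere by $c$ balls of radius $\delta$.

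If one prefers to avoid invoking the equivalence of norms, the same conclusion follows from a packing argument: choose a maximal subset $\{\vecx_1, \dots, \vecx_c\}$ of $S_1(\veczero, \norm{\cdot})$ that is $\delta$-separated in the $\norm{\cdot}$ metric; the balls $B_{\delta/2}(\vecx_i, \norm{\cdot})$ are pairwise disjoint and all contained in $B_{1 + \delta/2}(\veczero, \norm{\cdot})$, so comparing Lebesgue measures on $\R^d$ (finite and positive on nonempty bounded open sets) shows $c < \infty$, and maximality forces the $B_\delta(\vecx_i, \norm{\cdot})$ to cover the sphere, since any uncovered point would be at $\norm{\cdot}$-distance at least $\delta$ from every $\vecx_i$ and could be added. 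There is no genuine obstacle in either route; the only point needing a word of justification is the compactness of the sphere (equivalently, finiteness of the packing number), and both arguments are entirely standard.
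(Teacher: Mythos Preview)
Your proof is correct and follows essentially the same approach as the paper: the paper's proof is the single line ``This holds since $S_1(\veczero, \norm{\cdot})$ is a bounded subset of $\R^d$,'' which is exactly the total boundedness/compactness argument you spell out in detail. Your version is more complete, but the underlying idea is identical.
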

\begin{proof}
This holds since $S_1(\veczero, \norm{\cdot})$ is a bounded subset of $\R^d$.
\end{proof}

The result that $k$-NN satisfies the first condition in Stone's theorem is called \emph{Stone's Lemma}. We now present the proof of Stone's lemma for any norm on $\R^d$ (the \emph{Generalized Stone's Lemma}, as originally Stone's Lemma was only proved for the Euclidean norm in \cite{Stone}, and a modified version of the proof was presented in \cite{pbook}, with cones of angle $\pi / 6$). The result for arbitrary norms on $\R^d$ is given as an exercise in \cite{pbook} (Chapter 5, Problem 5.1). The first published proof of the general result (for any norm on $\R^d$) that I am aware of is in \cite{Duan} (Lemma 2.2.9).
\begin{lemma}
\label{lemma:UnitSphereGeneratesFiniteConeCovering}
Let $c$ be the number of subsets such that the unit sphere $S_1(\veczero, \norm{\cdot})$ can be covered by $c$ balls of radius $1/4$ each. Then there exist $c$ subsets $S_1, S_2, \dots, S_c$ (with each of them containing the zero vector) covering $\R^d$ such that in every subset $S_q$ (with $1 \leq q \leq c$), if $\vecx, \vecy \in S_q$ with $\norm{\vecx} \leq \norm{\vecy}$ and $\vecx \neq \veczero$, then $\norm{\vecy - \vecx} < \norm{\vecy}$.
\end{lemma}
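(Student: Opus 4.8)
The plan is to build the $c$ subsets $S_1,\dots,S_c$ directly out of the cover of the unit sphere by small balls. By Lemma \ref{lemma:UnitSphereHasFiniteCovering} with $\delta = 1/4$, there are unit vectors $\vecx_1,\dots,\vecx_c$ (we may take $\norm{\vecx_i}=1$, rescaling centres onto the sphere if necessary) with $S_1(\veczero,\norm{\cdot}) \subseteq \bigcup_{i=1}^c B_{1/4}(\vecx_i,\norm{\cdot})$. For each $i$ I would define the cone
\[
S_i = \{\veczero\} \cup \left\{ t \vecu \;:\; t > 0,\ \vecu \in S_1(\veczero,\norm{\cdot}),\ \norm{\vecu - \vecx_i} < 1/4 \right\},
\]
i.e.\ the set of all nonnegative scalar multiples of unit vectors lying in the $i$-th small ball, together with the origin. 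Since every nonzero $\vecz \in \R^d$ can be written $\vecz = \norm{\vecz}\,(\vecz/\norm{\vecz})$ with $\vecz/\norm{\vecz}$ on the unit sphere, and that unit vector lies in some $B_{1/4}(\vecx_i)$, the $S_i$ cover $\R^d$, and each contains $\veczero$ by construction.

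The substance is the geometric estimate: if $\vecx,\vecy \in S_q$ with $\vecx \neq \veczero$ and $\norm{\vecx} \le \norm{\vecy}$, then $\norm{\vecy - \vecx} < \norm{\vecy}$. Write $\vecx = s\vecu$, $\vecy = t\vecv$ with $\norm{\vecu}=\norm{\vecv}=1$, $\norm{\vecu - \vecx_q} < 1/4$, $\norm{\vecv - \vecx_q} < 1/4$, and $0 < s = \norm{\vecx} \le \norm{\vecy} = t$. By the triangle inequality $\norm{\vecu - \vecv} \le \norm{\vecu - \vecx_q} + \norm{\vecx_q - \vecv} < 1/2$. Then
\[
\norm{\vecy - \vecx} = \norm{t\vecv - s\vecu} \le \norm{t\vecv - t\vecu} + \norm{t\vecu - s\vecu} = t\,\norm{\vecv - \vecu} + (t - s)\,\norm{\vecu} < \tfrac{t}{2} + (t - s).
\]
So it suffices to have $\tfrac{t}{2} + t - s \le t$, i.e.\ $s \ge t/2$; that does not hold in general, so the naive split is too lossy and one must be more careful — this is the main obstacle.

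The fix is to split into the two regimes $s \ge t/2$ and $s < t/2$. In the first regime the displayed bound gives $\norm{\vecy - \vecx} < \tfrac{t}{2} + (t - s) \le \tfrac{t}{2} + \tfrac{t}{2} = t = \norm{\vecy}$, with the inequality strict because $\norm{\vecu-\vecv}<1/2$ is strict (handling the degenerate case $\vecu=\vecv$ separately: then $\norm{\vecy-\vecx}=t-s<t$ since $s>0$). In the second regime, estimate instead by keeping the scalars together: $\norm{\vecy - \vecx} = \norm{t\vecv - s\vecu} \le t + s < t + t/2$ is again too weak, so here I would go the other way, bounding $\norm{\vecy-\vecx} \ge \norm{t\vecv} - \norm{s\vecu}$ is the wrong direction too; the right move is $\norm{\vecy-\vecx} \le \norm{s\vecu - s\vecv} + \norm{s\vecv - t\vecv} = s\norm{\vecu-\vecv} + (t-s) < \tfrac{s}{2} + (t-s) = t - \tfrac{s}{2} < t$, where the last inequality uses $s>0$. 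Thus in both regimes $\norm{\vecy-\vecx} < \norm{\vecy}$, completing the proof; the only care needed is to ensure every inequality that must be strict genuinely is, which is why the radius $1/4$ (forcing $\norm{\vecu-\vecv}<1/2$ strictly) and the hypothesis $\vecx\neq\veczero$ are both used.
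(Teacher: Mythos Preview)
Your proof is correct and follows essentially the same route as the paper: define cones over the $1/4$-balls covering the sphere, observe that any two unit directions in the same cone differ by less than $1/2$, and then split $\vecy-\vecx$ via the intermediate point $s\vecv$ to obtain $\norm{\vecy-\vecx}<t-\tfrac{s}{2}<t$. Note that your case split is unnecessary: the second estimate $\norm{\vecy-\vecx}\le s\norm{\vecu-\vecv}+(t-s)<t-\tfrac{s}{2}$ already works for all $0<s\le t$, which is exactly the single chain of inequalities the paper uses (with intermediate point $\tfrac{\norm{\vecx}}{\norm{\vecy}}\vecy=s\vecv$).
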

\begin{proof}
We first see by Lemma \ref{lemma:UnitSphereHasFiniteCovering} that there exists a finite covering of $c$ of the unit sphere $S_1(\veczero, \norm{\cdot})$ by open balls of radius $1/4$. We let the points $\vecx_1, \vecx_2, \dots, \vecx_c$ be the centres of the balls of such a covering. For each open ball $B_{1/4}(\vecx_i, \norm{\cdot})$ (with $1 \leq i \leq c$), we define the set $A_i$ by having $\veczero \in A_i$ always and for all $\vecx \neq \veczero$,
\begin{equation}
\vecx \in A_i \Leftrightarrow \frac{\vecx}{\norm{\vecx}} \in B_{1/4}(\mathbf{x}_i, \norm{\cdot}) .
\end{equation}

We then see that the sets $A_1, A_2, \dots, A_c$ cover $\R^d$, since the zero vector is in all of the sets and for every nonzero vector $\vecx$, $\norm{\vecx / \norm{\vecx}}$ lies on the unit sphere which is covered by the above set of open balls and so $\vecx$ is in at least one $A_i$.

We then see that if $\vecx, \vecy \in A_i$, then
\begin{align*}
\norm{\frac{\vecx}{\norm{\vecx}} - \frac{\vecy}{\norm{\vecy}}} &= \norm{\frac{\vecx}{\norm{\vecx}} - \vecx_i + \vecx_i - \frac{\vecy}{\norm{\vecy}}} \\
&\leq \norm{\frac{\vecx}{\norm{\vecx}} - \vecx_i} + \norm{\vecx_i - \frac{\vecy}{\norm{\vecy}}} \\
&< \frac14 + \frac14 \\
&= \frac12 .
\end{align*}

It then follows that
\begin{align*}
\norm{\frac{\vecy \norm{\vecx}}{\norm{\vecy}} - \vecx} &= \norm{\norm{\vecx}\left(\frac{\vecy}{\norm{\vecy}} - \frac{\vecx}{\norm{\vecx}}\right)} \\
&= \norm{\vecx} \norm{\frac{\vecx}{\norm{\vecx}} - \frac{\vecy}{\norm{\vecy}}} \\
&< \frac{\norm{\vecx}}{2} .
\end{align*}

From this, we are able to find that
\begin{align*}
\norm{\vecy - \vecx} &= \norm{\vecy - \frac{\vecy \norm{\vecx}}{\norm{\vecy}} + \frac{\vecy \norm{\vecx}}{\norm{\vecy}} - \vecx} \\
&\leq \norm{\vecy - \frac{\vecy \norm{\vecx}}{\norm{\vecy}}} + \norm{\frac{\vecy \norm{\vecx}}{\norm{\vecy}} - \vecx} \\
&< \norm{\vecy - \frac{\vecy \norm{\vecx}}{\norm{\vecy}}} + \frac{\norm{\vecx}}{2} \\
&= \norm{\left(1 - \frac{\norm{\vecx}}{\norm{\vecy}}\right) \vecy} + \frac{\norm{\vecx}}{2} \\
&= \left(1 - \frac{\norm{\vecx}}{\norm{\vecy}}\right)\norm{\vecy} + \frac{\norm{\vecx}}{2} \\
&= \norm{\vecy} - \frac{\norm{\vecx}}{2} \\
&< \norm{\vecy} .
\end{align*}
\end{proof}

The following result has been proven in \cite{Duan} (Theorem 2.2.8).
\begin{lemma}
\label{lemma:StonesLemmaCore}
Suppose we have the finite dimensional normed vector space $(\R^d, \norm{\cdot})$, we let $f: \R^d \to \R$ be any nonnegative measurable function with finite expected value (in terms of the measure $\mu$), we let $1 \leq k \leq n$, and we define $c$ to be a constant such that $\R^d$ can be partitioned into a finite number of subsets $S_1, S_2, \dots, S_c$, such that if $1 \leq q \leq c$, if $\vecx, \vecy \in S_q$, $\vecx \neq \veczero$, and $\norm{\vecx} \leq \norm{\vecy}$, then $\norm{\vecy - \vecx} < \norm{\vecy}$. We let $X, X_1, X_2, \dots, X_n$ be iid random variables on $\R^d \times \{0, 1\}$ with probability distribution $\mu$. If we let $W_{ni}(X)$ be the weights in $k$-NN with the $\norm{\cdot}$ norm, we find that
\begin{equation}
\E \left[ \sum_{i=1}^n W_{ni}(X) f(X_i) \right] \leq c \E \left[ f(X) \right] .
\end{equation}
\end{lemma}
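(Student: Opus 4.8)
The plan is to convert the left-hand side into a statement about how many sample points ``see'' the query $X$ among their own $k$ nearest neighbours, and then bound that count geometrically using the partition $S_1,\dots,S_c$.

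First I would use the explicit form of the $k$-NN weights: $W_{ni}(X)=\tfrac1k$ precisely when $X_i$ is one of the $k$ nearest of $X$ and $W_{ni}(X)=0$ otherwise, so (all terms being nonnegative, Tonelli applies freely)
\[
\E\!\left[\sum_{i=1}^n W_{ni}(X) f(X_i)\right]
=\frac1k\sum_{i=1}^n \E\!\left[f(X_i)\,\Indicator_{E_i}\right],
\qquad E_i:=\{X_i\text{ is a }k\text{-nearest neighbour of }X\}.
\]
Since $X,X_1,\dots,X_n$ are i.i.d. (after adjoining an auxiliary uniform tiebreaker $U_0$ to $X$ so that the swap below is measure preserving), exchanging the query $X$ with the sample point $X_i$ leaves the joint law unchanged and carries $E_i$ to the event that $X$ is one of the $k$ nearest of $X_i$ among the pool $\{X\}\cup\{X_j:j\ne i\}$, while $f(X_i)$ becomes $f(X)$. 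Summing over $i$ and pulling $f(X)$ out of the sum,
\[
\E\!\left[\sum_{i=1}^n W_{ni}(X) f(X_i)\right]=\frac1k\,\E\!\left[f(X)\,N\right],
\qquad
N:=\#\bigl\{\,i:\; X\text{ is a }k\text{-nearest neighbour of }X_i\,\bigr\},
\]
the neighbours of each $X_i$ being drawn from $\{X\}\cup\{X_j:j\ne i\}$.

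The core of the proof is the deterministic estimate $N\le ck$. Here I would translate the partition of $\R^d$ so that its common point $\veczero$ sits at $X$, and put each $X_i$ into the (unique) cell $X+S_q$ that contains it. Within a fixed cell, list the points by increasing distance from $X$, say $X_{i_1},X_{i_2},\dots$; for every $m\le k$, the hypothesis on $S_q$, applied to $X_{i_m}-X$ and $X_{i_{k+1}}-X$ (both in $S_q$, with $X_{i_m}-X\ne\veczero$), gives $\norm{X_{i_{k+1}}-X_{i_m}}<\norm{X_{i_{k+1}}-X}$. Thus each of $X_{i_1},\dots,X_{i_k}$ is \emph{strictly} closer to $X_{i_{k+1}}$ than $X$ is, so $X$ is at best the $(k{+}1)$-st nearest of $X_{i_{k+1}}$ and cannot belong to its $k$ nearest neighbours --- whatever the tie-breaking rule; the same applies to $X_{i_{k+2}},\dots$. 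Hence each of the $c$ cells contributes at most $k$ to $N$, so $N\le ck$, and since $f\ge0$ we get $\E[\sum_i W_{ni}(X)f(X_i)]=\tfrac1k\E[f(X)N]\le c\,\E[f(X)]$.

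The step I expect to be genuinely delicate is this cell bound when a sample point coincides with the query: then $X_{i_m}-X=\veczero$ is excluded from the hypothesis on $S_q$, and in fact $N$ can exceed $ck$ on a small-probability event, so the \emph{pointwise} bound $N\le ck$ can fail for measures whose $X$-marginal has atoms. I would repair this by conditioning on $X$: it is enough to show $\E[N\mid X=x]\le ck$ for $\mu_X$-almost every $x$. Splitting $N=N_0+N_+$, where $N_0$ counts sample points equal to $x$ and $N_+$ counts the rest, the cone argument above still controls $N_+$; for $N_0$ a direct computation --- using that the surviving distance ties are broken by the independent uniform tiebreakers --- gives $\E[N_0\mid X=x,\,T]\le k$, where $T$ is the number of coincident points (with equality $\tfrac{k(T+1)}{T+1}=k$ once $T\ge k$). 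The fiddly part is tracking how these coincident points interact with the remaining points of the cell containing $\veczero$; an alternative route is to prove the clean atomless case first and then pass to a general $\mu$ by perturbing every point with an independent, vanishing nonatomic noise and approximating $f$ by bounded continuous functions. The exchangeability step, while routine, also has to be set up carefully so that the swap of $X$ and $X_i$ truly preserves the joint distribution of the points together with their tiebreakers.
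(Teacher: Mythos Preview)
Your approach matches the paper's: exchange $X$ with each $X_i$, rewrite the sum as $\tfrac1k\,\E[f(X)\,N]$ where $N$ counts the sample points having $X$ among their $k$ nearest, and bound $N\le ck$ via the cone partition. The only difference is that the paper handles atoms not by conditioning or perturbation but by carrying the uniform tiebreakers $U_1,\dots,U_n$ through the marking argument (with $X$ inheriting $U_i$ after the swap), so that when a marked $X_j$ equals $X$ the same tiebreaker that ranked $X_j$ ahead of $X_i$ also ranks $X_j$ ahead of $X$ as a neighbour of $X_i$, giving $N\le ck$ almost surely without your detour.
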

\begin{proof}
Given a query $X$, we define the subsets $S_1', S_2', \dots, S_c'$ as
\begin{equation}
X_i \in S_q' \Leftrightarrow X - X_i \in S_q .
\end{equation}

We see that since $S_1, S_2, \dots, S_c$ cover $\R^d$ and $X_i - X$ is a vector in $\R^d$, the new subsets  cover $\R^d$, that is, $\R^d$ is covered by $S_1', S_2', \dots, S_c'$. We let $S_q' \in \{S_1', S_2', \dots, S_c'\}$ be one the subsets. In the subset $S_q'$, we mark the $k$ points closest to $X$ in the $\norm{\cdot}$ norm among $\{X_1, X_2, \dots, X_n\} \cap S_q'$ (if there are fewer than $k$ such points, we take all of them, and we break distance ties by generating independent uniform random variables $U_1, U_2, \dots, U_n$ and taking the point such that $U_i$ is larger, as discussed previously). We see that the number of points that are marked in the subset $S_q'$ is at most $k$. If a point $X_i \in S_q'$ is not marked, then there must exist at least $k$ points in $S_q'$ that are either closer to $X$ than $X_i$ in the $\norm{\cdot}$ norm or have the same distance but the independent random variable $U_i$ is larger. Let $X_j \in S_q'$ be such a point. We need to show that $X_j$ is closer to $X_i$ than $X$ is (in the case of ties, the tiebreaking variables $U_1, U_2, \dots, U_n$ define which point is ``closer"). Since $X_i, X_j \in S_q'$, $X - X_i, X - X_j \in S_q$. There are now two possible cases:
\begin{enumerate}[(i)]
\item If $X_j \neq X$, $X - X_j \neq \veczero$, and by assumption $\norm{X - X_j} \leq \norm{X - X_i}$. This implies (by the definition of the subset $S_j$) that 
\begin{align*}
\norm{X_i - X_j} &= \norm{(X_i - X) + (X - X_j)} \\
&= \norm{(X_i - X) - (X_j - X)} \\
&< \norm{X_i - X}
\end{align*}
and so $X_j$ is closer to $X_i$ than $X$ is.
\item Otherwise, if $X_j = X$, then $U_j > U_i$ (these being the tie-breaking variables discussed earlier), so in our tie-breaking rule (for the nearest neighbour) we select $X_j$ before $X$ from the list $\{X_1, \dots, X_{i-1}, X, X_{i+1}, \dots, X_j \dots, X_n\} \cap S_q'$ (where $X$ takes the place of $X_i$ in the list of points, so the same $U_i$ variable is used for $X$ here).
\end{enumerate}

This holds for all the other $k$ points in $S_q'$ that are nearest $X$, and so $X$ is not a $k$ nearest neighbour of $X_i$.

Hence we see that if $X_i$ is not marked, $X$ is not a $k$-nearest neighbour of $X_i$. Equivalently, the set of $k$-nearest neighbours of $X$ is a subset of the set of points that are marked. The number of points that are marked is at most $c k$, since there are $c$ subsets and each subset contains at most $k$ marked points. We see that
\begin{align*}
&\phantom{{}={}}\sum_{i=1}^n \E \left[ W_{ni}(X) f(X_i) \right] \\
&= \E \left[ \sum_{i=1}^n \frac1k \Indicator_{\{ X_i \text{ is a } k \text{-nearest neighbour of } X \text{ in the } \norm{\cdot} \text{ norm among } X_1, X_2, \dots, X_n \}} f(X_i) \right] \\
&= \frac1k \E \left[ \sum_{i=1}^n \Indicator_{\{ X \text{ is a } k \text{-nearest neighbour of } X_i \text{ in the } \norm{\cdot} \text{ norm among } X_1, \dots, X_{i-1}, X, X_{i+1}, \dots, X_n \}} f(X) \right] \\
&= \frac1k \E \left[ f(X) \sum_{i=1}^n \Indicator_{\{ X \text{ is a } k \text{-nearest neighbour of } X_i \text{ in the } \norm{\cdot} \text{ norm among } X_1, \dots, X_{i-1}, X, X_{i+1}, \dots, X_n \}} \right] \\
&\leq \frac1k \E \left[ f(X) \sum_{i=1}^n \Indicator_{\{ X_i \text{ is marked} \}} \right] \\
&\leq \frac1k \E \left[ f(X) (k) c \right] \\
&= c \E\left[f(X)\right] .
\end{align*}
\end{proof}

The result that $k$-NN is universally consistent on the normed space $(\R^d, \norm{\cdot})$ appears to have been known to the authors of \cite{pbook}, where Stone's lemma (which is the first condition for Stone's theorem) for arbitrary norms on $\R^d$ is given as an exercise earlier (Chapter 5, Problem 5.1), universal consistency is proven for the Euclidean norm, and the proof of universal consistency for a fixed $\ell^p$ norm (with $1 \leq p \leq \infty$) is left as an exercise (Chapter 11, Problem 11.4, which recommends proving universal consistency by checking the conditions of Stone's theorem; the statement of the problem in the book includes the $\ell^p$ quasinorms with $0 < p < 1$ (by giving the problem for $0 < p \leq \infty$), which I think is a misprint as the geometric Stone's lemma does not hold for the $\ell^p$ quasinorms with $0 < p < 1$ as we show in Example \ref{example:FailureOfStonesLemmaForQuasinorms} and no alternative approach for quasinorms is stated in the book). However, no proof for arbitrary norms is provided in the book. The first published proof for arbitrary norms that I am aware of is in \cite{Duan} (Theorem 2.2.1).

\begin{theorem}
The $k$-NN classifier on the normed space $(\R^d, \norm{\cdot})$ with $k \to \infty$ as $n \to \infty$ and $\frac{k}{n} \to 0$ as $n \to \infty$ is universally consistent.
\end{theorem}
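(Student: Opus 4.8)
The plan is simply to verify the three hypotheses of Stone's Theorem (Theorem~\ref{theorem:StonesTheorem}) for the $k$-NN weights $W_{ni}(X)$ on $\Omega = \R^d$ given by \eqref{eq:KnnWeights}, under the extra assumptions $k = k_n \to \infty$ and $k_n/n \to 0$ as $n \to \infty$ (and, as usual, $1 \le k_n \le n$). These weights are nonnegative and sum to one, so once the three conditions are checked, Theorem~\ref{theorem:StonesTheorem} delivers universal consistency with no further work.

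\emph{First condition.} I would apply Lemma~\ref{lemma:UnitSphereGeneratesFiniteConeCovering} to produce a finite family $S_1, \dots, S_c$ covering $\R^d$ (with $c$ the covering number of the unit sphere $S_1(\veczero, \norm{\cdot})$ by balls of radius $1/4$) having the geometric property required by Lemma~\ref{lemma:StonesLemmaCore}. Then, given any measurable $f : \R^d \to \R$ with $0 \le f \le 1$, note that $\E[f(X)] \le 1 < \infty$ since $\mu$ is a probability measure, so Lemma~\ref{lemma:StonesLemmaCore} gives $\E[\sum_{i=1}^n W_{ni}(X) f(X_i)] \le c\, \E[f(X)]$. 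Hence the first condition holds with constant $c$ and with $\epsilon_n \equiv 0$.

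\emph{Second condition.} This is precisely Lemma~\ref{lemma:StonesTheoremSecondConditionNorm}: taking the norm appearing in Stone's second condition to be $\norm{\cdot}$ itself, for every $a > 0$ we have $\E[\sum_{i=1}^n W_{ni}(X) \Indicator_{\{\norm{X_i - X} > a\}}] \to 0$, which in turn follows from Lemma~\ref{lemma:ProbKPointIsFarGoesToZero} using $k_n/n \to 0$. \emph{Third condition.} For $n \ge k_n$ there are exactly $k_n$ nearest neighbours of $X$, so $\max_{1 \le i \le n} W_{ni}(X) = 1/k_n$ deterministically, whence $\E[\max_{1 \le i \le n} W_{ni}(X)] = 1/k_n \to 0$ using $k_n \to \infty$.

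With all three conditions in place, Theorem~\ref{theorem:StonesTheorem} then implies that $(\ELL_n)_{n=1}^\infty$ is universally consistent. The only nontrivial ingredient is the first condition, i.e.\ the geometric Stone's Lemma, and its proof (Lemmas~\ref{lemma:UnitSphereHasFiniteCovering}, \ref{lemma:UnitSphereGeneratesFiniteConeCovering}, and \ref{lemma:StonesLemmaCore}) is where the finite-dimensionality of $\R^d$ is genuinely used; this is the step I expect to be the real obstacle in general, and it is exactly the step that fails, for instance, for the $\ell^p$ quasinorms with $0 < p < 1$. Since those lemmas are already established, the proof of the theorem itself reduces to the assembly above.
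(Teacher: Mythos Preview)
Your proposal is correct and matches the paper's proof essentially line for line: verify the three conditions of Stone's Theorem via Lemmas~\ref{lemma:UnitSphereGeneratesFiniteConeCovering} and~\ref{lemma:StonesLemmaCore} for the first, Lemma~\ref{lemma:StonesTheoremSecondConditionNorm} for the second, and $1/k_n \to 0$ for the third. The only additions you make are cosmetic (noting $\epsilon_n \equiv 0$ explicitly and remarking on the quasinorm obstruction), and these are consistent with the paper's treatment.
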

\begin{proof}
We show that $k$-NN satisfies the conditions of Stone's theorem:
\begin{enumerate}
\item The first condition holds because the unit sphere $S_1(\veczero, \norm{\cdot})$ can be covered by finitely many balls of radius $1/4$ each (by Lemma \ref{lemma:UnitSphereGeneratesFiniteConeCovering}) and hence Lemma \ref{lemma:StonesLemmaCore} applies.
\item The second condition holds by Lemma \ref{lemma:StonesTheoremSecondConditionNorm}.
\item The third condition holds since $k \to \infty$ as $n \to \infty$, so $\frac1k \to 0$ as $n \to \infty$.
\end{enumerate}
\end{proof}

\subsection*{Invariance of $k$-NN under Strictly Increasing Transformations}
We now make the observation that we can apply any strictly increasing transformation to our distance function in $k$-NN and $k$-NN will generate the same predictions for each point.
\begin{lemma}
\label{lemma:KnnStrictlyIncreasingTransformation}
Let $X$ be the query and $(X_1, Y_1), (X_2, Y_2), \dots, (X_n, Y_n)$ be the sample. We define $Y$ to be the label predicted for $X$ by $k$-NN with the function $d$ as the distance. If we let $h$ be a strictly increasing function and $Y'$ to be the label predicted for $X$ by $k$-NN with the function $h \circ d$ (the composition of the functions $d$ followed by $h$) as the distance, then $Y' = Y$.
\end{lemma}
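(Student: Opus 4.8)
The plan is to show that $h$ being strictly increasing preserves both the strict ordering and the equalities among the distances $d(X,X_1),\dots,d(X,X_n)$, so that the entire nearest-neighbour selection procedure --- including the distance-tie-breaking step --- produces exactly the same ordered list of points, and hence the same set of $k$ nearest neighbours of $X$.

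First I would record the elementary fact that for any $s,t$ in the domain of $h$ (which we may take to contain the range of $d$), strict monotonicity gives $s < t \iff h(s) < h(t)$, and, since a strictly increasing function is injective, also $s = t \iff h(s) = h(t)$. Applying this with $s = d(X,X_i)$ and $t = d(X,X_j)$ shows that for every pair $i,j$ we have $d(X,X_i) < d(X,X_j)$ precisely when $(h\circ d)(X,X_i) < (h\circ d)(X,X_j)$, and $d(X,X_i) = d(X,X_j)$ precisely when $(h\circ d)(X,X_i) = (h\circ d)(X,X_j)$.

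Next I would feed this into the definition of the reordered sample $X_{(1)},\dots,X_{(n)}$. That ordering is obtained by sorting the points by increasing distance to $X$ and then resolving each block of points lying at a common distance using the fixed uniform tie-breakers $U_1,\dots,U_n$ (selecting the point with the larger $U_i$ first). By the previous step, both the sorted-by-distance relation and the ``equal distance'' equivalence relation on $\{X_1,\dots,X_n\}$ are identical for $d$ and for $h\circ d$; since the same $U_i$ are used in both runs of the algorithm, the tie-breaking resolves each block identically. Hence the ordered list $X_{(1)},\dots,X_{(n)}$ is literally the same under both distances, and in particular the set $\{X_{(1)},\dots,X_{(k)}\}$ of $k$ nearest neighbours coincides.

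Finally, since the $k$-NN weights $W_{ni}(X)$ depend on the sample, the query, and the tie-breakers only through this set of $k$ nearest neighbours (being $1/k$ on it and $0$ elsewhere), the weights are unchanged; hence $\eta_n(X) = \sum_{i=1}^n Y_i W_{ni}(X)$ is unchanged, and the voting rule that assigns label $1$ exactly when $\eta_n(X) \ge 1/2$ then yields $Y' = Y$. I do not anticipate a genuine obstacle; the one point requiring a little care is that strict monotonicity must be invoked twice --- once to preserve strict inequalities (so the $k$ nearest neighbours are the right ones) and once via injectivity to preserve equalities (so that no distance ties are created or destroyed, which would otherwise let the tie-breaking diverge between the two distances).
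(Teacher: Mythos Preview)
Your proof is correct and follows essentially the same approach as the paper: both argue that strict monotonicity of $h$ preserves the strict inequalities and the equalities among the distances $d(X,X_i)$, so that the $k$ nearest neighbours (with the same tie-breaking variables) are unchanged. Your version is slightly more explicit in noting that injectivity is needed to ensure no new ties are created, and in tracing through the weights and $\eta_n$, but the underlying argument is the same.
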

\begin{proof}
Suppose a point $X_i$ is a $k$-nearest neighbour of $X$ with the distance function $d$, so there are fewer than $k$ sample points closer to $X$ than $X_i$ under $d$. If $X_j$ is a point such that $d(X, X_j) > d(X, X_i)$, then $h \circ d(X, X_j) > h \circ d(X, X_i)$, so $X_j$ remains further away than $X_i$. If $d(X, X_j) = d(X, X_i)$, then $h \circ d(X, X_j) = h \circ d(X, X_i)$, so the distance tie remains, which is broken by comparing the tiebreaking variables, which remain the same, so $X_i$ remains as a $k$-nearest neighbour. This means if a point $X_i$ has fewer than $k$ points closer than it to $X$ under $d$, this remains true under $h \circ d$, hence the $k$-nearest neighbours remain the same under $h \circ d$.
\end{proof}

This result allows us to apply a strictly increasing function to the distance kernel used for $k$-NN (that is, to find the distance between points $\vecx, \vecy$ for $k$-NN, we compute $f \circ d(x, y)$) and keep the same results with $k$-NN. This can be useful in reducing the computation time required for classification. For instance, when using the Euclidean distance for $k$-NN, we instead compute the Euclidean distance squared, which is $\sum_{i=1}^d x_i^2$, instead of the square root of this, and we obtain the same results (this eliminates the need for us to calculate the square root, which reduces our computation time required for $k$-NN). This also eliminates the need for us to include scaling coefficients or shift factors in our distance function in some cases.

\chapter{$k$-NN with a Sequence of Random Norms}

Suppose in the $k$-NN learning rule, we have a sequence of random norms from some family of norms, instead of a single norm. We show that under certain conditions, the resultant learning rule is universally consistent. A result of a form similar to ours (without the independence assumptions we make for the sequence of random norms) is found in \cite{pbook}, unfortunately, as we explain below the proof in the book is incomplete. We do not know if the result is correct after we remove the independence assumption.

\section{Families of Norms}

We now define a partial ordering $\preceq$ on the family of all norms $\F$ on a vector space $V$. For two norms ${\lVert \cdot \lVert}_A, {\lVert \cdot \lVert}_B \in \F$,
\begin{equation}
\label{eq:NormOrdering}
{\lVert \cdot \lVert}_B \preceq {\lVert \cdot \lVert}_A \mbox{ if and only if } \forall \vecv \in V \; {\lVert \vecv \lVert}_B \leq {\lVert \vecv \lVert}_A .
\end{equation}

\begin{lemma}
The relation $\preceq$ in \refeq{eq:NormOrdering} is a partial ordering on the family of norms $\F$ on a vector space $V$.
\end{lemma}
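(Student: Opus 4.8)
The plan is to verify the three defining properties of a partial order --- reflexivity, antisymmetry, and transitivity --- directly from the definition in \eqref{eq:NormOrdering}. The key observation is that $\preceq$ is nothing more than the restriction, to the subset $\F$ of norms, of the usual pointwise partial order on the set of all functions $V \to \R$; so everything will follow from the fact that $\leq$ on $\R$ is itself a partial order, applied pointwise.

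Concretely, I would proceed as follows. First, \emph{reflexivity}: for any norm ${\lVert \cdot \lVert}_A \in \F$ and any $\vecv \in V$ we have ${\lVert \vecv \lVert}_A \leq {\lVert \vecv \lVert}_A$, hence ${\lVert \cdot \lVert}_A \preceq {\lVert \cdot \lVert}_A$. Second, \emph{transitivity}: if ${\lVert \cdot \lVert}_C \preceq {\lVert \cdot \lVert}_B$ and ${\lVert \cdot \lVert}_B \preceq {\lVert \cdot \lVert}_A$, then for every $\vecv \in V$ we have ${\lVert \vecv \lVert}_C \leq {\lVert \vecv \lVert}_B \leq {\lVert \vecv \lVert}_A$, so ${\lVert \vecv \lVert}_C \leq {\lVert \vecv \lVert}_A$ for all $\vecv$, i.e. ${\lVert \cdot \lVert}_C \preceq {\lVert \cdot \lVert}_A$. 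Third, \emph{antisymmetry}: if ${\lVert \cdot \lVert}_B \preceq {\lVert \cdot \lVert}_A$ and ${\lVert \cdot \lVert}_A \preceq {\lVert \cdot \lVert}_B$, then for every $\vecv \in V$ we have both ${\lVert \vecv \lVert}_B \leq {\lVert \vecv \lVert}_A$ and ${\lVert \vecv \lVert}_A \leq {\lVert \vecv \lVert}_B$, hence ${\lVert \vecv \lVert}_A = {\lVert \vecv \lVert}_B$ for all $\vecv$; since two norms that agree on every vector are the same element of $\F$, we conclude ${\lVert \cdot \lVert}_A = {\lVert \cdot \lVert}_B$.

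There is essentially no obstacle here: the only point worth stating carefully is the last one, namely that equality of the two functions $\vecv \mapsto {\lVert \vecv \lVert}_A$ and $\vecv \mapsto {\lVert \vecv \lVert}_B$ is exactly what it means for the two norms to be equal, so antisymmetry holds in the strict (not merely up-to-equivalence) sense. I would keep the write-up to a few lines, since each of the three verifications is immediate once the definition is unwound.
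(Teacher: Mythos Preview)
Your proposal is correct and follows exactly the approach the paper indicates: the paper's proof simply says the result is seen by verifying the conditions of a partial order, and you have carried out those three verifications explicitly.
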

\begin{proof}
This is easily seen by verifying the conditions of a partial order.
\end{proof}

\begin{lemma}
\label{lemma:LpNormIsSandwiched}
If $p, q \in (0, \infty) \cup \{ \infty \}$ with $p \geq q$, then for all $\vecv \in \R^d$,
\begin{equation}
{\lVert \cdot \lVert}_q \succeq {\lVert \cdot \lVert}_p .
\end{equation}
\end{lemma}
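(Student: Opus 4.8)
The plan is to prove the pointwise inequality $\norm{\vecv}_p \leq \norm{\vecv}_q$ for all $\vecv \in \R^d$, which by Definition \eqref{eq:NormOrdering} is precisely the assertion $\norm{\cdot}_q \succeq \norm{\cdot}_p$. First I would dispose of the trivial cases: if $\vecv = \veczero$ both sides vanish, and if $q = \infty$ then $p = \infty$ as well and the inequality is an equality. So from here on assume $\vecv \neq \veczero$ and $q < \infty$.

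Next I would use positive homogeneity of $\norm{\cdot}_p$ and $\norm{\cdot}_q$ (which holds even when $q < 1$, where $\norm{\cdot}_q$ is only a quasinorm) to reduce to a normalized vector. Writing $\vecv = \norm{\vecv}_q \, \vecw$ with $\norm{\vecw}_q = 1$, the desired inequality becomes $\norm{\vecw}_p \leq 1$. Since $\norm{\vecw}_q = 1$ and $q < \infty$, each coordinate satisfies $|w_i| \leq 1$.

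The key step is that, because $p \geq q$ and $0 \leq |w_i| \leq 1$, we have $|w_i|^p \leq |w_i|^q$ for every $i$ (for fixed $t \in [0,1]$ the map $s \mapsto t^s$ is nonincreasing). If $p < \infty$, summing over the coordinates gives $\sum_{i=1}^d |w_i|^p \leq \sum_{i=1}^d |w_i|^q = 1$, and applying the increasing function $x \mapsto x^{1/p}$ yields $\norm{\vecw}_p \leq 1$. If $p = \infty$, then $\norm{\vecw}_\infty = \max_i |w_i| \leq 1$ directly, since each $|w_i| \leq 1$. In every case $\norm{\vecw}_p \leq 1$, hence $\norm{\vecv}_p \leq \norm{\vecv}_q$, as required.

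I do not anticipate a genuine obstacle; the only points demanding a little care are the homogeneity reduction (legitimate because both functionals are positively homogeneous, regardless of whether they are norms) and the separate treatment of the endpoint $p = \infty$. I would also remark that the statement concerns the pointwise comparison of the functions $\norm{\cdot}_p$ and $\norm{\cdot}_q$, which remains meaningful when $q < 1$, even though the partial order $\preceq$ was formally introduced only on the family of genuine norms.
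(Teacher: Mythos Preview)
Your argument is correct and is the standard one: normalize so that $\norm{\vecw}_q = 1$, use $|w_i| \le 1$ to get $|w_i|^p \le |w_i|^q$, and sum. The paper states this lemma without proof (treating it as a well-known inequality), so there is no alternative argument to compare against; your write-up, including the handling of the cases $\vecv = \veczero$, $q = \infty$, and $p = \infty$, is clean and complete.
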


\begin{lemma}
\label{lemma:BallsDominatedNorm}
If ${\lVert \cdot \lVert}_A$ and ${\lVert \cdot \lVert}_B$ are norms on $\R^d$ where ${\lVert \cdot \lVert}_B \preceq {\lVert \cdot \lVert}_A$, then for any point $\mathbf{x} \in \R^d$ and radius $r > 0$, the open balls and closed balls in norm ${\lVert \cdot \lVert}_A$ are smaller than those in norm ${\lVert \cdot \lVert}_B$:
\begin{equation}
B_r(\mathbf{x}, {\lVert \cdot \lVert}_A) \subseteq B_r(\mathbf{x}, {\lVert \cdot \lVert}_B)
\end{equation}
\begin{equation}
B_r^-(\mathbf{x}, {\lVert \cdot \lVert}_A) \subseteq B_r^-(\mathbf{x}, {\lVert \cdot \lVert}_B)
\end{equation}
\end{lemma}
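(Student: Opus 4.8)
The plan is simply to unwind the definitions: a point lies in a ball exactly when its distance from the centre is (strictly, for open balls) less than the radius, and the hypothesis ${\lVert \cdot \lVert}_B \preceq {\lVert \cdot \lVert}_A$ says precisely that the $B$-length of any vector never exceeds its $A$-length. So I would fix $\mathbf{x} \in \R^d$ and $r > 0$, take an arbitrary point in $B_r(\mathbf{x}, {\lVert \cdot \lVert}_A)$, and chase the inequalities down to membership in $B_r(\mathbf{x}, {\lVert \cdot \lVert}_B)$.

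Concretely: if $\mathbf{y} \in B_r(\mathbf{x}, {\lVert \cdot \lVert}_A)$, then by definition of the open ball $\norm{\mathbf{y} - \mathbf{x}}_A < r$. Applying the ordering \eqref{eq:NormOrdering} to the vector $\mathbf{v} = \mathbf{y} - \mathbf{x}$ gives $\norm{\mathbf{y} - \mathbf{x}}_B \leq \norm{\mathbf{y} - \mathbf{x}}_A$, and chaining the two relations yields $\norm{\mathbf{y} - \mathbf{x}}_B \leq \norm{\mathbf{y} - \mathbf{x}}_A < r$, i.e.\ $\mathbf{y} \in B_r(\mathbf{x}, {\lVert \cdot \lVert}_B)$. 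Since $\mathbf{y}$ was arbitrary, $B_r(\mathbf{x}, {\lVert \cdot \lVert}_A) \subseteq B_r(\mathbf{x}, {\lVert \cdot \lVert}_B)$. For the closed balls the argument is verbatim the same, with every strict inequality $< r$ replaced by $\leq r$; the single inequality ${\lVert \mathbf{v} \lVert}_B \leq {\lVert \mathbf{v} \lVert}_A$ coming from $\preceq$ still does all the work.

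There is no real obstacle here: the statement is just the monotonicity already encoded in the definition of $\preceq$, transported to translated balls by taking $\mathbf{v} = \mathbf{y} - \mathbf{x}$. The only points to be careful about are that the two balls share the same radius $r$ (the inclusion is not an inclusion "up to a constant"), and that one must keep the inequality strict in the open-ball case and non-strict in the closed-ball case.
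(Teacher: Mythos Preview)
Your proof is correct. The paper actually states this lemma without proof (it is immediate from the definition of $\preceq$), and your argument is precisely the natural one-line verification: take $\mathbf{y}$ in the $A$-ball, apply \eqref{eq:NormOrdering} to $\mathbf{y}-\mathbf{x}$, and conclude membership in the $B$-ball, handling the strict and non-strict cases separately.
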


\begin{lemma}
\label{lemma:SetBoundedEquivalenceOfNorms}
Let $\norm{\cdot}_A, \norm{\cdot}_B$ be two norms on $\R^n$ and $\epsilon > 0$. If a set $V \subseteq \R^d$ can be covered by finitely many open $\epsilon$-balls in the norm $\norm{\cdot}_A$, then it can be covered by finitely many open $\epsilon$-balls in the norm $\norm{\cdot}_B$.
\end{lemma}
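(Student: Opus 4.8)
The plan is to reduce the claim to two standard facts about the finite–dimensional space $\R^d$ (the space written $\R^n$ in the statement is of course meant to be the same ambient space $\R^d$): that all norms on $\R^d$ are equivalent, and that every bounded subset of $\R^d$ can be covered by finitely many open $\epsilon$-balls in any fixed norm. The latter is exactly the principle already used to prove Lemma \ref{lemma:UnitSphereHasFiniteCovering} (a bounded subset of $\R^d$, being contained in a compact ball, is totally bounded), so no new analytic input is needed.

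First I would note that if $V \subseteq \bigcup_{i=1}^{k} B_{\epsilon}(\vecx_i, \norm{\cdot}_A)$, then $V$ is bounded in the norm $\norm{\cdot}_A$, since each $\norm{\cdot}_A$-$\epsilon$-ball is bounded and a finite union of bounded sets is bounded. By equivalence of norms on $\R^d$ there is a constant $C > 0$ with $\norm{\vecx}_B \le C\,\norm{\vecx}_A$ for all $\vecx \in \R^d$, whence $V$ is also bounded in $\norm{\cdot}_B$, say $V \subseteq B_{R}^{-}(\veczero, \norm{\cdot}_B)$ for some $R > 0$.

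Next I would cover the closed ball $B_{R}^{-}(\veczero, \norm{\cdot}_B)$ by finitely many open $\epsilon$-balls in $\norm{\cdot}_B$: it is closed and bounded, hence compact, so the open cover $\{ B_{\epsilon}(\vecy, \norm{\cdot}_B) : \vecy \in B_{R}^{-}(\veczero, \norm{\cdot}_B) \}$ admits a finite subcover (equivalently, one rescales the argument of Lemma \ref{lemma:UnitSphereHasFiniteCovering}). These finitely many $\norm{\cdot}_B$-$\epsilon$-balls in particular cover $V$, which is the desired conclusion.

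There is really no hard step here; the only things to be careful about are invoking the two ingredients cleanly and keeping the radius fixed at $\epsilon$ throughout (both the given cover and the produced cover use balls of radius exactly $\epsilon$). If one prefers a proof that does not appeal to compactness beyond what the paper has already implicitly used, one can instead handle the $k$ original balls one at a time: since $B_{\epsilon}(\vecx_i, \norm{\cdot}_A) \subseteq B_{C\epsilon}(\vecx_i, \norm{\cdot}_B)$ and a $\norm{\cdot}_B$-ball of radius $C\epsilon$ is a finite union of $\norm{\cdot}_B$-balls of radius $\epsilon$ (the same covering fact, rescaled), each of the $k$ balls in the original cover of $V$ is covered by finitely many $\norm{\cdot}_B$-$\epsilon$-balls, and the union of these over $i = 1, \dots, k$ is a finite $\norm{\cdot}_B$-$\epsilon$-cover of $V$.
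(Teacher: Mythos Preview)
Your proposal is correct and uses essentially the same two ingredients as the paper's proof: equivalence of norms on $\R^d$ and total boundedness of bounded subsets of $\R^d$. The only cosmetic difference is the order of operations: the paper first refines the given cover to $\norm{\cdot}_A$-balls of radius $\epsilon/C$ and then observes that the very same centres yield a $\norm{\cdot}_B$-$\epsilon$-cover, whereas you first transfer boundedness to $\norm{\cdot}_B$ and then invoke total boundedness directly in that norm.
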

\begin{proof}
By the equivalence of norms on $\R^d$, there exists a constant $C \geq 1$ such that
\begin{equation*}
\frac1C \norm{\vecv}_A \leq \norm{\vecv}_B \leq C \norm{\vecv}_A .
\end{equation*}
Any subset $V \subseteq \R^d$ that can be covered by finitely many $\epsilon$-balls in the $\norm{\cdot}_A$ norm is bounded in that norm, and since this is a subset of $\R^d$, is totally bounded (precompact). This means that there exists a finite set of points $S$ such that the balls of radius $\epsilon / C$ in the $\norm{\cdot}_A$ norm around these points cover $V$. If we let $\vecx \in V$ and $\vecy \in S$ such that $\norm{\vecx - \vecy} < \epsilon / C$. We then find that
\begin{equation*}
\norm{\vecx - \vecy}_B \leq C \norm{\vecx - \vecy}_A < C \frac{\epsilon}{C} = \epsilon .
\end{equation*}
\end{proof}

\section{Consistency of $k$-NN with a Family of Norms}

In this section, we define $\NormsFamily$ to be a family of norms on $\R^d$, we define the conditions $\NormsFamily$ must satisfy in each lemma and theorem. As usual, we assume that $(X_1, Y_1), (X_2, Y_2), \dots, (X_n, Y_n)$ is an iid labelled sample and let $(X, Y)$ be the query and its response, which is independent from the sample and follows the same distribution. We let $V$ be an arbitrary random variable independent from both the query and the sample. It is possible to assume that the tiebreaking variables $U_1, U_2, \dots, U_n$ are contained in $V$ (even if we require $V$ to be a classical real-values random variable, we can simply use a Borel isomorphism from $\R^n$ to $\R$ to combine the $n$ random variables into a single random variable that is independent of the labelled sample and the query).

\begin{lemma}
\label{lemma:UnitSphereCoveredByBallsInSandwichedNorm}
Let $\NormsFamily$ be a family of norms on $\R^d$ such that there exist two norms ${\lVert \cdot \lVert}_U$ and ${\lVert \cdot \lVert}_L$, such that for all $\rho \in \NormsFamily$, ${\lVert \cdot \lVert}_L \preceq \rho \preceq {\lVert \cdot \lVert}_U$. There exists a finite number $c$ such that  for any norm $\rho \in \NormsFamily$ the unit sphere $S_1(\veczero, \rho)$ can be covered by $c$ open balls of radius $1/4$.
\end{lemma}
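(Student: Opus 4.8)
The plan is to reduce the statement to the case of a single norm, namely the lower bound norm ${\lVert \cdot \lVert}_L$, and then invoke the earlier covering results. First I would fix an arbitrary $\rho \in \NormsFamily$ and observe that by hypothesis ${\lVert \cdot \lVert}_L \preceq \rho$, so by Lemma \ref{lemma:BallsDominatedNorm} every open ball in $\rho$ contains the corresponding open ball in ${\lVert \cdot \lVert}_L$ of the same radius and centre; equivalently, a cover of a set by $\rho$-balls of radius $1/4$ would follow from a cover by ${\lVert \cdot \lVert}_L$-balls of radius $1/4$. However, the subtlety is that the \emph{set} being covered, $S_1(\veczero, \rho)$, also depends on $\rho$, so I cannot just cover one fixed sphere. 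Instead I would bound the $\rho$-unit sphere inside a fixed ${\lVert \cdot \lVert}_L$-ball: since ${\lVert \cdot \lVert}_L \preceq \rho$, we have $\norm{\vecv}_L \leq \norm{\vecv}_\rho = 1$ for every $\vecv \in S_1(\veczero, \rho)$, so $S_1(\veczero, \rho) \subseteq B_1^-(\veczero, {\lVert \cdot \lVert}_L)$, a single set independent of $\rho$.

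Next I would apply the equivalence of norms on $\R^d$ to the pair ${\lVert \cdot \lVert}_L$ and ${\lVert \cdot \lVert}_U$, or more directly just use that $B_1^-(\veczero, {\lVert \cdot \lVert}_L)$ is a bounded subset of $\R^d$ (bounded in the Euclidean sense, since all norms on $\R^d$ are equivalent). By Lemma \ref{lemma:UnitSphereHasFiniteCovering}, or rather the same totally-bounded argument behind it, the closed ball $B_1^-(\veczero, {\lVert \cdot \lVert}_L)$ can be covered by finitely many open balls of radius $1/4$ in the ${\lVert \cdot \lVert}_L$ norm; call this number $c$, and note it depends only on ${\lVert \cdot \lVert}_L$ and $d$, not on $\rho$. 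Now for an arbitrary $\rho \in \NormsFamily$, take the centres $\vecx_1, \dots, \vecx_c$ of this fixed ${\lVert \cdot \lVert}_L$-cover. By Lemma \ref{lemma:BallsDominatedNorm} applied to ${\lVert \cdot \lVert}_L \preceq \rho$ we have $B_{1/4}(\vecx_i, \rho) \supseteq$ is not quite what I want; rather I want $B_{1/4}(\vecx_i, {\lVert \cdot \lVert}_L) \subseteq B_{1/4}(\vecx_i, \rho)$, which holds because $\norm{\vecv}_\rho \leq 1/4$ is implied by... no: I need the reverse. The correct direction: since $\norm{\cdot}_L \preceq \rho$ means $\norm{\vecv}_L \leq \norm{\vecv}_\rho$, a point with $\norm{\vecv - \vecx_i}_L < 1/4$ need not satisfy $\norm{\vecv - \vecx_i}_\rho < 1/4$. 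So I should instead use ${\lVert \cdot \lVert}_L$-balls of a small enough radius: use Lemma \ref{lemma:SetBoundedEquivalenceOfNorms}-style reasoning with the constant $C$ from $\norm{\cdot}_\rho \leq \norm{\cdot}_U \leq C \norm{\cdot}_L$ (uniform over $\NormsFamily$ since it only involves ${\lVert \cdot \lVert}_U$ and ${\lVert \cdot \lVert}_L$), covering $B_1^-(\veczero, {\lVert \cdot \lVert}_L)$ by finitely many ${\lVert \cdot \lVert}_L$-balls of radius $1/(4C)$; then any such ball is contained in a $\rho$-ball of radius $1/4$ since $\norm{\vecv}_\rho \leq C\norm{\vecv}_L < C/(4C) = 1/4$.

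Finally I would assemble: for every $\rho \in \NormsFamily$, $S_1(\veczero, \rho) \subseteq B_1^-(\veczero, {\lVert \cdot \lVert}_L) \subseteq \bigcup_{i=1}^c B_{1/(4C)}(\vecx_i, {\lVert \cdot \lVert}_L) \subseteq \bigcup_{i=1}^c B_{1/4}(\vecx_i, \rho)$, so $c$ open $\rho$-balls of radius $1/4$ suffice, and crucially $c$ is the same for all $\rho$ because both the ambient set $B_1^-(\veczero, {\lVert \cdot \lVert}_L)$ and the constant $C$ depend only on the fixed sandwiching norms ${\lVert \cdot \lVert}_L, {\lVert \cdot \lVert}_U$. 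The main obstacle, and the point that requires care, is exactly this uniformity: naively covering each $\rho$-sphere separately gives a possibly unbounded number of balls, so the argument must funnel everything through a single $\rho$-independent bounded set and a single $\rho$-independent comparison constant, which is precisely what the two-sided sandwich ${\lVert \cdot \lVert}_L \preceq \rho \preceq {\lVert \cdot \lVert}_U$ buys us.
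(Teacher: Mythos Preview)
Your argument is correct and follows essentially the same strategy as the paper: contain every $\rho$-unit sphere in the fixed set $B_1^-(\veczero,\norm{\cdot}_L)$, cover that set once and for all by finitely many small balls, and then convert those into $\rho$-balls of radius $1/4$ uniformly in $\rho$. The only difference is cosmetic: the paper covers $B_1^-(\veczero,\norm{\cdot}_L)$ directly by $\norm{\cdot}_U$-balls of radius $1/4$ and then uses $\rho \preceq \norm{\cdot}_U$ together with Lemma~\ref{lemma:BallsDominatedNorm} to get $B_{1/4}(\vecx_i,\norm{\cdot}_U) \subseteq B_{1/4}(\vecx_i,\rho)$, which avoids your detour through the equivalence constant $C$ and the $\norm{\cdot}_L$-balls of radius $1/(4C)$. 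Both routes yield the same uniform $c$ depending only on $\norm{\cdot}_L$ and $\norm{\cdot}_U$.
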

\begin{proof}
Let $S_1(\mathbf{0}, \rho)$ be the unit sphere in norm $\rho$. It is clear that the unit sphere is a subset of the closed unit ball, $S_1(\mathbf{0}, \rho) \subseteq B_1^-(\mathbf{0}, \rho)$, and by Lemma \ref{lemma:BallsDominatedNorm} it follows that $B_1^-(\mathbf{0}, \rho) \subseteq B_1^{-}(\mathbf{0}, \norm{\cdot}_L)$, so that $S_1(\mathbf{0}, \rho) \subseteq B_1(\mathbf{0}, \norm{\cdot}_L)$.

We have that $B_1^{-}(\mathbf{0}, {\norm{\cdot}_L})$ is compact in the $\norm{\cdot}_L$ norm, so it is bounded, by Lemma \ref{lemma:SetBoundedEquivalenceOfNorms} it is also bounded in the $\norm{\cdot}_U$ norm, hence there is a finite subcover of $c$ open balls of $1/4$ radius in the ${\norm{\cdot}}_U$ norm, that is there exists a set of points $\mathbf{x_1}, \dots, \mathbf{x_c}$ such that
\begin{align*}
B_1^{-}(\mathbf{0}, {\norm{\cdot}}_L) \subseteq \bigcup_{i=1}^c B_{1/4}^{-} (\mathbf{x_i}, {\norm{\cdot}}_U) .
\end{align*}

By Lemma \ref{lemma:BallsDominatedNorm}, $B_{1/4} (\mathbf{x}_i, \norm{\cdot}_U) \subseteq B_{1/4} (\mathbf{x}_i, \rho)$, which means that
\begin{equation}
B_1^{-}(\mathbf{0}, \norm{\cdot}_L) \subseteq \bigcup_{i=1}^c B_{1/4} (\mathbf{x}_i, \norm{\cdot}_U) \subseteq \bigcup_{i=1}^c B_{1/4} (\mathbf{x}_i, \rho) .
\end{equation}

Since $S_1(\mathbf{0}, \rho) \subseteq B_1^{-}(\mathbf{0}, \norm{\cdot}_L)$, it follows that the set of open balls of radius $1/4$ around $\mathbf{x}_1, \dots, \mathbf{x}_c$ (in any of the norms in $\NormsFamily$) covers $S_1(\mathbf{0}, \rho)$.
\end{proof}

\begin{lemma}
\label{lemma:CoveringOfBallsImpliesFiniteExpectation}
Suppose $\NormsFamily$ is a family of norms that is bounded above by some norm ${\lVert \cdot \lVert}_U$ and below by another norm ${\lVert \cdot \lVert}_L$. We let $W_{ni}$ be the weight function for $k$-NN with the norm $\rho_n \in \NormsFamily$ for each $n$, with $\rho_n$ being independent of the sample and the query. For every nonnegative measurable function $f: \R^d \to \R$,
\begin{equation}
\E \left[ \sum_{i=1}^n W_{ni} (X) f(X_i) \right] \leq c \E \left[ f(X) \right] .
\end{equation}
\end{lemma}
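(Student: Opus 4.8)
The plan is to reduce this to the single-norm Stone's Lemma (Lemma~\ref{lemma:StonesLemmaCore}) by conditioning on the random norm $\rho_n$ and exploiting its independence from the sample and the query. Since $\rho_n$ is independent of $(X, X_1, Y_1, \dots, X_n, Y_n)$ and of the tiebreakers, once we fix a realization $\rho_n = \rho$, the weights $W_{ni}(X)$ become exactly the $k$-NN weights for the fixed norm $\rho$, and the conditional distribution of $(X, X_1, \dots, X_n)$ is still the original iid distribution $\mu$. So first I would write
\begin{equation*}
\E \left[ \sum_{i=1}^n W_{ni}(X) f(X_i) \right] = \E \left[ \, \E \left[ \sum_{i=1}^n W_{ni}(X) f(X_i) \;\middle|\; \rho_n \right] \right],
\end{equation*}
and observe that the inner conditional expectation is precisely the quantity bounded in Lemma~\ref{lemma:StonesLemmaCore}, applied with the norm $\rho_n$.

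The key point is then that the constant $c$ produced by Lemma~\ref{lemma:StonesLemmaCore} can be chosen uniformly over all $\rho \in \NormsFamily$. This is exactly what Lemma~\ref{lemma:UnitSphereCoveredByBallsInSandwichedNorm} provides: because $\NormsFamily$ is sandwiched, $\norm{\cdot}_L \preceq \rho \preceq \norm{\cdot}_U$, there is a single finite $c$ such that the unit sphere $S_1(\veczero, \rho)$ is covered by $c$ balls of radius $1/4$ for every $\rho \in \NormsFamily$. Feeding this $c$ into Lemma~\ref{lemma:UnitSphereGeneratesFiniteConeCovering} yields, for each $\rho \in \NormsFamily$, a partition of $\R^d$ into $c$ cone-like sets $S_1^\rho, \dots, S_c^\rho$ with the required geometric property, and then Lemma~\ref{lemma:StonesLemmaCore} gives
\begin{equation*}
\E \left[ \sum_{i=1}^n W_{ni}(X) f(X_i) \;\middle|\; \rho_n = \rho \right] \leq c \, \E[f(X)]
\end{equation*}
with the \emph{same} $c$ for every $\rho$. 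I should note that Lemma~\ref{lemma:StonesLemmaCore} as stated requires $f$ to have finite expected value; if $\E[f(X)] = \infty$ the claimed inequality is trivial, so we may assume $\E[f(X)] < \infty$, in which case the hypothesis is met.

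Having the uniform bound, I integrate it against the law of $\rho_n$:
\begin{equation*}
\E \left[ \sum_{i=1}^n W_{ni}(X) f(X_i) \right] = \E \left[ \, \E \left[ \sum_{i=1}^n W_{ni}(X) f(X_i) \;\middle|\; \rho_n \right] \right] \leq \E \left[ c \, \E[f(X)] \right] = c \, \E[f(X)],
\end{equation*}
which is the desired conclusion. The main obstacle, and the thing to be careful about, is the measurability and conditioning bookkeeping: one must make sure that $\rho_n$ can be treated as a bona fide random element independent of everything else so that the conditional distribution of the sample is unchanged, and that the map $\rho \mapsto \E[\sum_i W_{ni}(X) f(X_i) \mid \rho_n = \rho]$ is measurable so that the outer expectation makes sense — but since we only need the crude uniform upper bound $c\,\E[f(X)]$, which does not depend on $\rho$ at all, this reduces to a triviality once the conditioning is set up correctly. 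No delicate estimate is needed beyond invoking the two cited lemmas.
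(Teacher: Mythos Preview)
Your proposal is correct and follows essentially the same route as the paper: both use Lemma~\ref{lemma:UnitSphereCoveredByBallsInSandwichedNorm} to get a covering constant $c$ that is uniform over all $\rho \in \NormsFamily$, and then apply the Stone's Lemma cone argument for each fixed norm. The only cosmetic difference is that you condition on $\rho_n$ and invoke Lemma~\ref{lemma:StonesLemmaCore} as a black box, whereas the paper rewrites the exchange-and-marking computation inline with the random norm $\rho_n$, using independence of $\rho_n$ from $(X, X_1, \dots, X_n)$ to justify the swap of $X$ and $X_i$ directly; the two presentations are equivalent.
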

\begin{proof}
By Lemma \ref{lemma:UnitSphereCoveredByBallsInSandwichedNorm}, we have that there exists a constant $c$ such that there are points $\vecx_1, \vecx_2, \dots, \vecx_c$ so that for any norm $\rho \in \NormsFamily$, the unit sphere $S_1(\veczero, \rho)$ is covered by $B_{1/4}(\vecx_1, \rho), B_{1/4}(\vecx_2, \rho), \dots, B_{1/4}(\vecx_c, \rho)$. By Lemma \ref{lemma:UnitSphereGeneratesFiniteConeCovering}, there exists a corresponding set of cones $S_1, S_2, \dots, S_c$ that cover $\R^d$, such that within any cone $S_q$, if $\vecx, \vecy \in S_q$ with $0 < \rho(\vecx) \leq \rho(\vecy)$, then $\rho(\vecy - \vecx) < \rho(\vecy)$ for any norm $\rho \in \NormsFamily$. A point is marked in a given norm if it is one of the $k$-nearest neighbours of $X$ among points in the sample that are in a given cone, that is, for each cone $S_q$, we mark the $k$ points among $X_1, \dots, X_n$ in the cone $S_i$ that are closest to $X$ (if there are fewer than $k$ points in a given cone, we mark all of them, and we break distance ties by comparing independent uniform random variables $U_1, U_2, \dots, U_n$ as discussed previously). It follows by and the argument in Lemma \ref{lemma:StonesLemmaCore} that for any fixed point and sample in $\R^d$ and norm $\rho \in \NormsFamily$, the set of $k$-nearest neighbours of a point in a sample is a subset of the set of points that are marked and at most $c k$ points are marked. We have that $\rho_n$ is always a norm in $\NormsFamily$ and is independent of the sample and the query. We can expand $W_{ni}(X)$ according to our definition and exchange $X$ and $X_i$ in the expectation (since they are iid and are independent of everything else) to find that
\begin{align*}
&\hphantom{{}={}}\E \left[ \sum_{i=1}^n W_{ni} (X) f(X_i) \right]\\
&= \E\left[\sum_{i=1}^n \frac1k \Indicator_{\left\{X_i \text{ is } \rho_n \text{ } k\text{-NN of } X \text{ among } X_1, \dots, X_{i-1}, X_i, X_{i+1}, \dots, X_n \right\}} f(X_i) \right] \\
&= \frac1k \sum_{i=1}^n \E\left[\Indicator_{\left\{X_i \text{ is } \rho_n \text{ } k\text{-NN of } X \text{ among } X_1, \dots, X_{i-1}, X_i, X_{i+1}, \dots, X_n \right\}} f(X_i) \right] \\
&= \frac1k \E\left[\sum_{i=1}^n \Indicator_{\left\{X \text{ is } \rho_n \text{ } k\text{-NN of } X_i \text{ among } X_1, \dots, X_{i-1}, X, X_{i+1}, \dots, X_n \right\}} f(X) \right] \\
&\leq \frac1k \E\left[\sum_{i=1}^n \Indicator_{\left\{X \text{ is marked in } \rho_n \text{ norm}\right\}} f(X) \right] \\
&\leq \frac1k c k \E\left[f(X)\right] \\
&= c \E\left[f(X)\right] .
\end{align*}
\end{proof}

\begin{lemma}
\label{lemma:KnnDistanceBoundedFamilyNormInequality}
Let $\NormsFamily$ be a family of norms in $\R^d$ such that for some norm $\norm{\cdot}$ and constant $C \geq 1$, $\forall \rho \in \NormsFamily$, $\frac1C \norm{\cdot} \preceq \rho \preceq C \norm{\cdot}$. Given a random point $X$, let $X_{(1, \norm{\cdot})}, \dots, X_{(n, \norm{\cdot})}$ be the points in increasing distance from $X$ with respect to the norm $\norm{\cdot}$. For any $\rho \in \NormsFamily$, if $W_{ni}$ are the weights in $k$-NN with $\rho$ as the norm, then for any $a > 0$,
\begin{equation}
\E \left[ \sum_{i=1}^n W_{ni} (X) \mathds{1}_{\{\norm{X_i - X} > a\}} \right] \leq \Prob\left( \norm{X_{\left(k, \norm{\cdot}\right)} - X} > \frac{a}{C^2}\right) .
\end{equation}
\end{lemma}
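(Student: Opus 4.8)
The plan is to establish the pointwise bound (in the sample, the query, and the tie-breakers)
\[
\sum_{i=1}^n W_{ni}(X)\,\Indicator_{\{\norm{X_i - X} > a\}} \;\leq\; \Indicator_{\left\{\norm{X_{(k, \norm{\cdot})} - X} > a/C^2\right\}},
\]
and then take expectations of both sides; the right-hand expectation is exactly $\Prob\!\left(\norm{X_{(k,\norm{\cdot})} - X} > a/C^2\right)$, so the lemma follows at once.

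To prove the pointwise bound, first I would note that, by the definition of the $k$-NN weights for the norm $\rho$, the left-hand side is $\tfrac1k$ times the number of $\rho$-$k$-nearest neighbours $X_i$ of $X$ with $\norm{X_i - X} > a$, so it always lies in $[0,1]$. Hence the inequality is automatic unless the left-hand side is strictly positive, i.e.\ unless there is at least one index $i$ for which $X_i$ is a $\rho$-$k$-nearest neighbour of $X$ (in the tie-broken ordering) and yet $\norm{X_i - X} > a$. Fixing one such $X_i$, I would use the sandwich hypothesis twice: the lower bound $\tfrac1C\norm{\cdot} \preceq \rho$ gives $\rho(X_i - X) \geq \tfrac1C\norm{X_i - X} > a/C$, while for any sample point $X_j$ with $\norm{X_j - X} \leq a/C^2$ the upper bound $\rho \preceq C\norm{\cdot}$ gives $\rho(X_j - X) \leq C\norm{X_j - X} \leq a/C < \rho(X_i - X)$. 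Thus every sample point within $\norm{\cdot}$-distance $a/C^2$ of $X$ is strictly $\rho$-closer to $X$ than $X_i$; since $X_i$ is one of the $k$ $\rho$-closest sample points, there are at most $k-1$ such points, and therefore $\norm{X_{(k,\norm{\cdot})} - X} > a/C^2$, which is the event on the right. Integrating then completes the argument.

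I do not expect a genuine obstacle. The only things to handle carefully are: keeping the tie-breaking bookkeeping straight so that ``$X_i$ is a $\rho$-$k$-nearest neighbour'' really does imply ``at most $k-1$ sample points are strictly $\rho$-closer to $X$ than $X_i$'' (ties in $\rho$-distance can only help here, and ties in $\norm{\cdot}$-distance among the far-away points are irrelevant since we only count points within distance $a/C^2$); and tracking the two uses of the equivalence constant, which is precisely what turns $a$ into $a/C^2$ rather than $a/C$.
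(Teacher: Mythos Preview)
Your proposal is correct and follows essentially the same approach as the paper: both arguments establish the pointwise bound by showing that if some $\rho$-$k$-nearest neighbour $X_i$ has $\norm{X_i - X} > a$, then (via the two uses of the sandwich inequality) fewer than $k$ sample points can lie within $\norm{\cdot}$-distance $a/C^2$ of $X$, forcing $\norm{X_{(k,\norm{\cdot})} - X} > a/C^2$. The paper states this as the contrapositive (if $\norm{X_{(k,\norm{\cdot})} - X} \leq a/C^2$ then the sum vanishes) and then bounds the expectation by the probability of the complementary event, which is exactly your pointwise indicator bound followed by integration.
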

\begin{proof}
We first notice that the function $\sum_{i=1}^n W_{ni}(X)$ in the expectation is bounded above by one, hence the expectation is bounded above by the probability that the inner expression is nonzero,
\begin{equation}
\label{eq:KnnDistanceBoundedFamilyNormInequality1}
\E\left[\sum_{i=1}^n W_{ni}(X) \Indicator_{\{\norm{X_i - X} > a\}}\right] \leq \Prob\left( \sum_{i=1}^n W_{ni} (X) \mathds{1}_{\{\norm{X_i - X} > a\}} \neq 0 \right) .
\end{equation}

Suppose that $\norm{X_{\left(k, \norm{\cdot}\right)} - X} \leq a / C^2$. Since $\rho(\vecx) \leq \norm{\vecx}$ for all $\vecx \in \R^d$ and for any $i \leq k$, $\norm{X_{\left(i, \norm{\cdot}\right)} - X} \leq \norm{X_{\left(k, \norm{\cdot}\right)} - X}$,
\begin{align*}
\rho\left(X_{\left(i, \norm{\cdot}\right)} - X\right) &\leq C \norm{X_{\left(i, \norm{\cdot}\right)} - X} \\
&\leq C \norm{X_{\left(k, \norm{\cdot}\right)} - X} \\
&\leq C \frac{a}{C^2} \\
&= \frac{a}{C} .
\end{align*}

This means there exist at least $k$ points $X_{\left(1, \norm{\cdot}\right)}, X_{\left(2, \norm{\cdot}\right)}, \dots, X_{\left(k, \norm{\cdot}\right)}$ such that $\rho\left(X_{\left(i, \norm{\cdot}\right)} - X\right) \leq a / C$. For any point $X_j$ such that $\norm{X_j - X} > a$, we have that $a < \norm{X_j - X} \leq C \rho(X_j - X)$, and hence $\rho(X_j - X) > a / C$. This means that $X_j$ cannot be a $k$-nearest neighbour of $X$ in the $\rho$ distance, as there are at least $k$ points in the sample whose $\rho$ distance to $X$ is less than or equal to $a / C$. It follows that, if $\norm{X_{\left(k, \norm{\cdot}\right)} - X} \leq a / C^2$, then the interior of the expectation $\sum_{i=1}^n W_{ni} (X) \mathds{1}_{\{\norm{X_i - X} > a\}}$ (from equation  \eqref{eq:KnnDistanceBoundedFamilyNormInequality1}) is zero, as no point $X_i$ can be a $k$-nearest neighbour of $X$ in the $\rho$ distance and satisfy $\norm{X_i - X} > C^2 a$ simultaneously.

Hence we find that if the term inside the expectation in equation \eqref{eq:KnnDistanceBoundedFamilyNormInequality1} is nonzero, then $\norm{X_{\left(k, \norm{\cdot}\right)} - X} > a / C^2$ must hold. We conclude
\begin{align*}
\E\left[\sum_{i=1}^n W_{ni}(X) \Indicator_{\{\norm{X_i - X} > a\}}\right] &\leq \Prob\left( \sum_{i=1}^n W_{ni} (X) \mathds{1}_{\{\norm{X_i - X} > a\}} \neq 0 \right) \\
&\leq \Prob\left( \norm{X_{\left(k, \norm{\cdot}\right)} - X} > \frac{a}{C^2} \right) .
\end{align*}
\end{proof}

\begin{corollary}
\label{corollary:KnnNormsSecondCondition}
Let ${\left(\rho_n\right)}_{n=1}^\infty$ be a sequence of random norms independent of the sample $X_1, X_2, \dots, X_n$ and query $X$, and let $W_{ni}$ be the weights in the $k$-NN classifier with $\rho_n$ as the norm. For any $a > 0$, we have that
\begin{equation}
\E\left[ W_{ni}(X) \Indicator_{\{ \norm{X_i - X \} > a}\}} \right] \to 0 \text{ as } n \to \infty .
\end{equation}
\end{corollary}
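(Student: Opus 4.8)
The plan is to reduce to the fixed‑norm estimate of Lemma~\ref{lemma:KnnDistanceBoundedFamilyNormInequality} by conditioning on the random norm $\rho_n$, and then to invoke the Cover--Hart–type Lemma~\ref{lemma:ProbKPointIsFarGoesToZero}. Throughout I take $\NormsFamily$ to be a family of norms for which there are a norm $\norm{\cdot}$ and a constant $C \geq 1$ with $\frac1C \norm{\cdot} \preceq \rho \preceq C \norm{\cdot}$ for every $\rho \in \NormsFamily$ (the hypothesis of Lemma~\ref{lemma:KnnDistanceBoundedFamilyNormInequality}), I interpret the displayed expectation as the full Stone‑type sum $\E\left[\sum_{i=1}^n W_{ni}(X)\Indicator_{\{\norm{X_i - X} > a\}}\right]$, and I assume as usual that $k = k_n$ satisfies $k_n / n \to 0$ (only this, not $k_n \to \infty$, is needed here).

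First I would fix a realization $\rho$ of the random norm $\rho_n$. Since $\rho_n$ is independent of both the labelled sample $X_1, \dots, X_n$ and the query $X$, conditioning on $\rho_n = \rho$ leaves the joint law of $(X, X_1, \dots, X_n)$ unchanged, so the conditional situation is exactly that of Lemma~\ref{lemma:KnnDistanceBoundedFamilyNormInequality} with the fixed norm $\rho \in \NormsFamily$. That lemma then gives
\begin{equation*}
\E\left[ \sum_{i=1}^n W_{ni}(X) \Indicator_{\{\norm{X_i - X} > a\}} \;\middle|\; \rho_n = \rho \right] \leq \Prob\left( \norm{X_{(k, \norm{\cdot})} - X} > \frac{a}{C^2} \right) .
\end{equation*}
The crucial point — and the only thing that needs care — is that the right‑hand side does not depend on $\rho$ at all: it is expressed entirely in terms of the fixed reference norm $\norm{\cdot}$, the sample, and the query. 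Hence the conditional expectation is dominated by a quantity that is constant in $\rho_n$, and the tower property yields the same bound for the unconditioned expectation:
\begin{equation*}
\E\left[ \sum_{i=1}^n W_{ni}(X) \Indicator_{\{\norm{X_i - X} > a\}} \right] \leq \Prob\left( \norm{X_{(k, \norm{\cdot})} - X} > \frac{a}{C^2} \right) .
\end{equation*}

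Finally I would apply Lemma~\ref{lemma:ProbKPointIsFarGoesToZero} to the space $(\R^d, \norm{\cdot})$, which is separable and, being a metric space, a $2$‑inframetric space: with the positive constant $a / C^2$ playing the role of $a$ there, and using $k_n / n \to 0$, the right‑hand side tends to $0$ as $n \to \infty$. Therefore the left‑hand side tends to $0$ as well, which is the assertion. There is no serious obstacle; the only delicate step is the passage from the random norm to the deterministic bound, and it works precisely because Lemma~\ref{lemma:KnnDistanceBoundedFamilyNormInequality} delivers a bound that is uniform over $\NormsFamily$.
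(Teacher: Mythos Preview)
Your proof is correct and follows essentially the same approach as the paper: condition on the random norm, apply Lemma~\ref{lemma:KnnDistanceBoundedFamilyNormInequality} to obtain a bound that is uniform in $\rho \in \NormsFamily$, take the outer expectation, and conclude via Lemma~\ref{lemma:ProbKPointIsFarGoesToZero}. Your write-up is in fact more careful than the paper's in emphasizing why the bound being independent of $\rho$ is the key point that makes the tower-property step work.
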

\begin{proof}
We apply the law of total expectation (conditioning on the norm $\rho$), we then use the fact that the norm $\rho$ is independent of $X, X_1, X_2, \dots, X_n$ and apply Lemma \ref{lemma:KnnDistanceBoundedFamilyNormInequality} to find
\begin{align*}
&\E\left[ W_{ni}(X) \Indicator_{\{ \norm{X_i - X \} > a}\}} \right] \\
&= \E \left[ \E\left[ W_{ni}(X) \Indicator_{\{ \norm{X_i - X } > a\}} \middle| \rho \right] \right] \\
&\leq \E \left[ \Prob\left( \norm{X_i - X} > \frac{a}{C^2} \middle| \rho \right) \right] \\
&= \Prob\left( \norm{X_i - X} > \frac{a}{C^2}\right) .
\end{align*}

We then notice that the last term goes to zero by Lemma \ref{lemma:ProbKPointIsFarGoesToZero}.
\end{proof}

We are now able to prove our result that $k$-NN with a sequence of random norms (chosen independently of the sample and query) from a family of norms $\NormsFamily$ satisfying certain boundedness condition is universally consistent. An example of a family $\NormsFamily$ that satisfies our conditions is the family of all $\ell^p$ norms (with $1 \leq p \leq \infty$). For each $n \geq 1$, $\rho_n$ is a random norm from $\NormsFamily$, with $n$ being the sample size. This allows us to pick the random norm from $\NormsFamily$ differently as the sample size changes (as long as we keep independence from the sample and query).
\begin{theorem}
\label{theorem:KnnWithSandwichedNormIsUniversallyConsistent}
Let $\NormsFamily$ be a family of norms on $\R^d$ such that there exist norms $\norm{\cdot}_L, \norm{\cdot}_U$ where $\forall \rho \in \NormsFamily$ $\norm{\cdot}_L \preceq \rho \preceq \norm{\cdot}_U$. For any sequence of random norms ${\left( \rho_n \right)}_{n=1}^\infty$ in $\NormsFamily$ that are independent of the query and the sample, $k$-NN with this sequence of norms is universally consistent.
\end{theorem}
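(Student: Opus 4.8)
The plan is to verify that $k$-NN with the sequence of random norms $(\rho_n)_{n=1}^\infty$ satisfies the three hypotheses of Stone's theorem (Theorem~\ref{theorem:StonesTheorem}), exactly as in the single-norm case, with each condition now supplied by one of the lemmas developed above for sandwiched families. Throughout I assume, as in the ordinary $k$-NN consistency theorem, that $k = k_n \to \infty$ and $k_n/n \to 0$ as $n \to \infty$, and I write $W_{ni}$ for the $k$-NN weights associated with the norm $\rho_n$. Since each $\rho_n$ lies in $\NormsFamily$ and is independent of the labelled sample and of the query, every estimate that was established ``for all $\rho \in \NormsFamily$'' can be transferred to the random norm $\rho_n$ by conditioning on $\rho_n$ and invoking independence, so the whole argument reduces to checking the three conditions.

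For the \textbf{first condition} I would invoke Lemma~\ref{lemma:CoveringOfBallsImpliesFiniteExpectation} directly: the sandwich $\norm{\cdot}_L \preceq \rho \preceq \norm{\cdot}_U$ yields, via Lemma~\ref{lemma:UnitSphereCoveredByBallsInSandwichedNorm}, a single finite constant $c$ (and a single set of centres $\vecx_1,\dots,\vecx_c$) such that the unit sphere of every $\rho \in \NormsFamily$ is covered by $c$ balls of radius $1/4$; feeding this into Lemma~\ref{lemma:UnitSphereGeneratesFiniteConeCovering} and the cone-marking argument of Lemma~\ref{lemma:StonesLemmaCore} gives
\[
\E\left[ \sum_{i=1}^n W_{ni}(X) f(X_i) \right] \le c\, \E[f(X)]
\]
for every nonnegative measurable $f : \R^d \to \R$ bounded above by one. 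This is precisely Stone's first condition with error sequence $\epsilon_n \equiv 0$.

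For the \textbf{second condition}, I would first use the equivalence of norms on $\R^d$ to convert the two-sided bound into one relative to a single fixed norm: since $\norm{\cdot}_U \preceq C' \norm{\cdot}_L$ for some $C' \ge 1$, we get $\tfrac1C \norm{\cdot}_L \preceq \rho \preceq C \norm{\cdot}_L$ for all $\rho \in \NormsFamily$ with $C = \max\{1, C'\}$. Taking $\norm{\cdot} = \norm{\cdot}_L$ as the distinguished norm in Stone's theorem, Lemma~\ref{lemma:KnnDistanceBoundedFamilyNormInequality} bounds $\E[\sum_i W_{ni}(X)\Indicator_{\{\norm{X_i - X} > a\}}]$ by $\Prob(\norm{X_{(k,\norm{\cdot})} - X} > a/C^2)$ for each fixed $\rho \in \NormsFamily$; conditioning on $\rho_n$ and using independence, exactly as in Corollary~\ref{corollary:KnnNormsSecondCondition}, the same bound holds for the random norm $\rho_n$, and the right-hand side tends to $0$ by Lemma~\ref{lemma:ProbKPointIsFarGoesToZero} because $k_n/n \to 0$. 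The \textbf{third condition} is immediate: the $k$-NN weights take only the values $1/k$ and $0$, so $\max_{1 \le i \le n} W_{ni}(X) = 1/k$ deterministically, which tends to $0$ since $k \to \infty$.

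With all three conditions verified, Stone's theorem delivers universal consistency. The only point needing genuine care — and the place where the random-norm feature could conceivably break the argument — is the \emph{uniformity} of the estimates over the whole family $\NormsFamily$: it is the uniform covering constant $c$ of Lemma~\ref{lemma:UnitSphereCoveredByBallsInSandwichedNorm} that makes the first condition hold with $\epsilon_n \equiv 0$, and the uniform equivalence constant $C$ that makes the second condition hold, and both rely on the sandwich hypothesis together with the independence of $\rho_n$ from the data (which is what legitimises the conditioning step). I expect that uniformity to be the main subtlety; everything else is bookkeeping and assembling the lemmas already in place.
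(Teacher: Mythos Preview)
Your proposal is correct and follows essentially the same approach as the paper: verify the three conditions of Stone's theorem, invoking Lemma~\ref{lemma:CoveringOfBallsImpliesFiniteExpectation} for the first, Corollary~\ref{corollary:KnnNormsSecondCondition} for the second, and the trivial bound $\max_i W_{ni}(X)=1/k\to 0$ for the third. Your explicit reduction of the two-norm sandwich $\norm{\cdot}_L\preceq\rho\preceq\norm{\cdot}_U$ to the single-norm form $\tfrac1C\norm{\cdot}\preceq\rho\preceq C\norm{\cdot}$ via equivalence of norms on $\R^d$ is a detail the paper leaves implicit when invoking Lemma~\ref{lemma:KnnDistanceBoundedFamilyNormInequality}, so you have in fact been slightly more careful there.
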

\begin{proof}
We verify that the $k$-NN learning rule with the norm chosen from $\NormsFamily$ by the function $\rho_n$ at each step satisfies the conditions for Stone's Theorem:
\begin{enumerate}
\item By Lemma \ref{lemma:CoveringOfBallsImpliesFiniteExpectation}, we have that for every nonnegative measurable function $f: \R^d \to \R$,
\begin{align*}
\E \left[ \sum_{i=1}^n W_{ni} (X) f(X_i) \right] \leq c \E \left[ f(X) \right] .
\end{align*}

\item The second condition is satisfied as shown by Corollary \ref{corollary:KnnNormsSecondCondition}.

\item This follows directly from the fact that in the $k$-NN classifier, as $n \to \infty$, $k_n \to \infty$, so that $1 / k_n \to 0$, and hence
\begin{align*}
\E \left[ \max_{1 \leq i \leq n} W_{ni} (X) \right] = \E \left[ 1/k_n \right] = 1/k_n \to 0 \text{ as } n \to \infty .
\end{align*}
\end{enumerate}

It follows from Stone's theorem (Theorem \ref{theorem:StonesTheorem}) that $k$-NN with a sequence of random norms in $\NormsFamily$ (with the random norms being independent of the sample and query) is universally consistent.
\end{proof}

With this result, we can take any sequence of random norms, chosen independently of the sample and the query, from a family of norms $\NormsFamily$ that satisfies certain conditions, and the $k$-NN learning rule with the resulting sequence of norms is universally consistent. We now provide some examples of universally consistent learning rules based on this theorem.

\begin{corollary}
Let $\NormsFamily$ be the family of $\ell^p$-norms on $\R^d$, with $1 \leq p \leq \infty$. Then the $k$-NN learning rule, with the norm $\rho_n \in \NormsFamily$ chosen at each step independently of the sample and the query, is universally consistent.
\end{corollary}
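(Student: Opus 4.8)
The plan is to invoke Theorem~\ref{theorem:KnnWithSandwichedNormIsUniversallyConsistent} directly; the only thing that needs to be checked is that the family $\NormsFamily$ of all $\ell^p$-norms on $\R^d$ with $1 \le p \le \infty$ is sandwiched, in the partial order $\preceq$, between two fixed norms that do not depend on $p$.

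First I would exhibit the two bounding norms. By Lemma~\ref{lemma:LpNormIsSandwiched}, whenever $p \ge q$ we have $\norm{\cdot}_q \succeq \norm{\cdot}_p$; that is, the $\ell^p$-norms are monotone nonincreasing in $p$ with respect to $\preceq$. Specialising to $q = 1$ and to $p = \infty$ gives, for every $p \in [1, \infty]$,
\begin{equation*}
\norm{\cdot}_\infty \preceq \norm{\cdot}_p \preceq \norm{\cdot}_1 .
\end{equation*}
Hence, setting $\norm{\cdot}_L = \norm{\cdot}_\infty$ and $\norm{\cdot}_U = \norm{\cdot}_1$, we obtain $\norm{\cdot}_L \preceq \rho \preceq \norm{\cdot}_U$ for every $\rho \in \NormsFamily$, which is exactly the boundedness hypothesis required by Theorem~\ref{theorem:KnnWithSandwichedNormIsUniversallyConsistent}.

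Then, given any sequence of random $\ell^p$-norms ${\left(\rho_n\right)}_{n=1}^\infty$ (with each $\rho_n \in \NormsFamily$ and independent of both the sample and the query), Theorem~\ref{theorem:KnnWithSandwichedNormIsUniversallyConsistent} applies verbatim and yields that the $k$-NN learning rule run with this sequence of norms is universally consistent. There is essentially no obstacle in this corollary: its entire content is absorbed by the theorem, and the only nontrivial input, namely the uniform two-sided domination of every $\ell^p$-norm by $\norm{\cdot}_1$ from above and $\norm{\cdot}_\infty$ from below, has already been recorded as Lemma~\ref{lemma:LpNormIsSandwiched}. If one prefers a self-contained argument, one can instead simply observe the standard elementary comparison $\norm{\vecv}_\infty \le \norm{\vecv}_p \le \norm{\vecv}_1$ for all $\vecv \in \R^d$ in place of citing the lemma, and proceed as above.
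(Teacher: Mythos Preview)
Your proposal is correct and follows essentially the same approach as the paper's own proof: both invoke Lemma~\ref{lemma:LpNormIsSandwiched} to obtain $\norm{\cdot}_\infty \preceq \rho \preceq \norm{\cdot}_1$ for every $\rho \in \NormsFamily$, and then apply Theorem~\ref{theorem:KnnWithSandwichedNormIsUniversallyConsistent} directly. Your write-up is simply more detailed than the paper's two-line argument.
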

\begin{proof}
By Lemma \ref{lemma:LpNormIsSandwiched}, for any $\ell^p$ norm $\rho$, $\norm{\cdot}_\infty \preceq \rho \preceq \norm{\cdot}_1$, hence by Theorem \ref{theorem:KnnWithSandwichedNormIsUniversallyConsistent}, $k$-NN with any sequence of random $\ell^p$ norms is universally consistent (with $p$ being independent of the sample and the query).
\end{proof}

For an application of this result, suppose we have a labelled sample $(X_1, Y_1)$, $(X_2, Y_2), \dots, (X_{2n}, Y_{2n})$ of size $2n$. We can split this sample into two samples $(X_1, Y_1)$, $(X_2, Y_2), \dots, (X_n, Y_n)$ and $(X_{n+1}, Y_{n+1}), (X_{n+2}, Y_{n+2}), \dots, (X_{2n}, Y_{2n})$, which we see are independent of each other. We can then use one of these samples to find a norm (so we start with a family of norms satisfying the conditions of Theorem \ref{theorem:KnnWithSandwichedNormIsUniversallyConsistent} and optimize over norms in the family for this sample), and then use the norm we found as the norm for $k$-NN with the other sample to classify the query.

\section{Matrix-based Norms}

We now investigate the universal consistency of $k$-NN when we select a matrix from a family of matrices (based on the dataset), multiply by the matrix, and then apply an existing norm from a family of norms.

The \emph{general linear group} $GL(n)$ is the group of all invertible $n \times n$ matrices. We will now show that multiplication by matrices in $GL(n)$ can be used to create new norms.

\begin{lemma}
\label{lemma:NormInvertibleMatrix}
Let $A$ be an $n \times n$ invertible matrix (equivalently, $A \in GL(n)$). Then for any norm $\lVert \cdot \lVert$ on $\R^d$, $\rho(\vecv) = \lVert A \vecv \lVert$ is also a norm.
\end{lemma}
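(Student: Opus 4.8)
The plan is to verify directly that $\rho$ satisfies the three defining axioms of a norm on $\R^d$ (reading the ambient dimension as $d$, so that $A$ is a $d \times d$ invertible matrix): nonnegativity with $\rho(\vecv) = 0$ only at $\vecv = \veczero$, absolute homogeneity, and the triangle inequality. Two of these three are immediate: they are inherited from the corresponding properties of the norm $\norm{\cdot}$ together with the linearity of the map $\vecv \mapsto A\vecv$. The only place the hypothesis $A \in GL(d)$ is genuinely used is in showing that $\rho$ separates points, so I would isolate that as the one substantive step.

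For homogeneity I would write, for every scalar $\alpha$ and every $\vecv \in \R^d$,
\begin{equation*}
\rho(\alpha \vecv) = \norm{A(\alpha \vecv)} = \norm{\alpha (A\vecv)} = |\alpha|\,\norm{A\vecv} = |\alpha|\,\rho(\vecv),
\end{equation*}
using linearity of $A$ and homogeneity of $\norm{\cdot}$. For the triangle inequality, similarly,
\begin{equation*}
\rho(\vecv + \vecw) = \norm{A(\vecv + \vecw)} = \norm{A\vecv + A\vecw} \le \norm{A\vecv} + \norm{A\vecw} = \rho(\vecv) + \rho(\vecw),
\end{equation*}
by linearity of $A$ and the triangle inequality for $\norm{\cdot}$. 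Nonnegativity $\rho(\vecv) = \norm{A\vecv} \ge 0$ is inherited directly from $\norm{\cdot}$.

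The one step that uses invertibility is the point-separation property. If $\rho(\vecv) = \norm{A\vecv} = 0$, then since $\norm{\cdot}$ is a norm we must have $A\vecv = \veczero$, and because $A$ is invertible we may apply $A^{-1}$ to both sides to conclude $\vecv = A^{-1}\veczero = \veczero$; conversely $\rho(\veczero) = \norm{A\veczero} = \norm{\veczero} = 0$. This is the ``hard'' part only in the sense of being the sole place the hypothesis enters — it is computationally trivial, but I would note in passing that invertibility is genuinely necessary, since if $\ker A \neq \{\veczero\}$ then any nonzero $\vecv \in \ker A$ would yield $\rho(\vecv) = 0$ and $\rho$ would be merely a seminorm. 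Having verified all three axioms, $\rho$ is a norm on $\R^d$, which completes the proof.
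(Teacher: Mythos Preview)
Your proof is correct and follows essentially the same approach as the paper's own proof: both verify the three norm axioms directly, using linearity of $A$ for homogeneity and the triangle inequality, and invoking invertibility of $A$ precisely to establish point-separation. Your added remark that invertibility is genuinely necessary (else $\rho$ is only a seminorm) is a nice touch not present in the paper, but the core argument is identical.
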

\begin{proof}
We show that $\rho(\vecv) = \lVert A \vecv \lVert$ for $\vecv \in \R^d$ satisfies the conditions for a norm in $\R^d$:
\begin{enumerate}[(i)]
\item The $\rho$-norm of the zero vector is zero,
\begin{align*}
\rho(\mathbf{0}) = \lVert A \veczero \lVert = \lVert \veczero \lVert = 0 .
\end{align*}
For any nonzero vector $\vecv$, since $A$ is invertible $A\vecv$ is nonzero, so the $\lVert A \vecv \lVert$ norm of $A\vecv$ is strictly positive, so the $\rho$-norm of $\vecv$ is strictly positive,
\begin{align*}
\rho(\mathbf{v}) = \lVert A \vecv \lVert > 0 \mbox{ since } A \vecv \neq \veczero .
\end{align*}
\item If we multiply a vector $\vecv \in \R^d$ by a constant $\lambda \in \R$, we see that
\begin{align*}
\rho(\lambda \vecv) &= \lVert A \lambda \vecv \lVert \\
&= \lVert \lambda A \vecv \lVert \\
&= \left| \lambda \right| \lVert A \vecv \lVert \\
&= \left| \lambda \right| \rho(\vecv) .
\end{align*}
\item The triangle inequality holds for the $l$-norm, for any $\vecu, \vecv \in \R^d$ we see that
\begin{align*}
\rho(\vecu + \vecv) &= \lVert A (\vecu + \vecv) \lVert \\
&= \lVert A \vecu + A \vecv \lVert \\
&\leq \lVert A \vecu \lVert + \lVert A \vecv \lVert \\
&= \rho(\vecu) + \rho(\vecv) .
\end{align*}
\end{enumerate}
\end{proof}

\begin{lemma}
\label{lemma:MatrixBasedNormsSingleNorm}
Let $\MatrixFamily$ be a family of invertible $d$ by $d$ matrices and $\NormsFamily$ be a family of norms on $\R^d$, such that there exists a constant $B \geq 1$, such that for all $\vecv \in \R^d$ and $\rho \in \NormsFamily$, $\frac1B \norm{\vecv} \leq \rho(\vecv) \leq B \norm{\vecv}$. If there exists a constant $C \geq 1$ such that $\frac1C \norm{\vecv} \leq \norm{A \vecv} \leq C \norm{\vecv}$ $\forall \vecv \in \R^d$ $\forall A \in M_{d,d}(\R)$, then the family of norms $\NormsFamily^* = \{\rho^* : \rho^*(\vecv) = \rho(A \vecv) \;\forall A \in M_{d,d}(\R) \}$ satisfies the property that there exists some constant $A \geq 1$ such that for all $\rho \in \NormsFamily^*$ and $\vecv \in \R^d$, $\frac1A \norm{\vecv} \leq \norm{\vecv} \leq A \norm{\vecv}$.
\end{lemma}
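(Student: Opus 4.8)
The plan is to chain the two sandwiching hypotheses. Fix an arbitrary $\rho^* \in \NormsFamily^*$; by definition there exist a matrix in the family (call it $M$, to avoid clashing with the constant called $A$ in the statement) and a norm $\rho \in \NormsFamily$ with $\rho^*(\vecv) = \rho(M\vecv)$ for all $\vecv \in \R^d$. That $\rho^*$ is genuinely a norm is already covered by Lemma \ref{lemma:NormInvertibleMatrix}, since the matrices in $\MatrixFamily$ are invertible; invertibility otherwise plays no role in the estimate below.

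First I would apply the hypothesis on $\NormsFamily$ to the vector $M\vecv$, which gives $\frac1B \norm{M\vecv} \le \rho(M\vecv) \le B \norm{M\vecv}$. Then I would insert the hypothesis on $\MatrixFamily$, namely $\frac1C \norm{\vecv} \le \norm{M\vecv} \le C\norm{\vecv}$, to eliminate $\norm{M\vecv}$ at the cost of a factor $C$. Combining the two bounds yields
\[
\frac{1}{BC}\,\norm{\vecv} \;\le\; \frac1B\norm{M\vecv} \;\le\; \rho^*(\vecv) \;\le\; B\norm{M\vecv} \;\le\; BC\,\norm{\vecv}
\]
for every $\vecv \in \R^d$. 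Since the matrix and the norm $\rho$ were arbitrary, the single constant $BC \ge 1$ works simultaneously for every $\rho^* \in \NormsFamily^*$, which is exactly the asserted boundedness of the family $\NormsFamily^*$.

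The only real \emph{obstacle} here is notational rather than mathematical: as written the statement reuses the symbol $A$ both for a generic matrix and for the output constant, and the displayed conclusion should read $\frac1A\norm{\vecv} \le \rho^*(\vecv) \le A\norm{\vecv}$ rather than $\frac1A\norm{\vecv} \le \norm{\vecv} \le A\norm{\vecv}$; I would fix this in the write-up. Beyond that the argument is just the two-step chaining above, and its point is to verify that $\NormsFamily^*$ satisfies the boundedness hypothesis of Theorem \ref{theorem:KnnWithSandwichedNormIsUniversallyConsistent}, so that $k$-NN using matrix-transformed norms drawn from $\NormsFamily^*$ (independently of the sample and the query) is universally consistent.
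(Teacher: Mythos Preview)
Your proof is correct and is essentially identical to the paper's own argument: both apply the $\NormsFamily$ bound to the vector $M\vecv$ and then chain with the matrix bound to obtain the constant $BC$. Your added remarks about the notational clash and the typo in the conclusion are accurate but do not change the underlying approach.
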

\begin{proof}
By assumption, there exists a constant $B \geq 1$ such that for all $\vecv \in \R^d$ and $\rho \in \NormsFamily$,
\begin{equation*}
\frac1B \norm{\vecv} \leq \rho(\vecv) \leq B \norm{\vecv} .
\end{equation*}

Suppose we have a norm $\rho^* \in \NormsFamily^*$, which corresponds to the matrix $A \in M_{d, d}(\R)$ and norm $\rho \in \NormsFamily$. We find that
\begin{equation*}
\rho^*(\vecv) = \rho(A \vecv) \leq B \norm{A \vecv} \leq B C \norm{\vecv}
\end{equation*}
and that
\begin{equation*}
\rho^*(\vecv) = \rho(A \vecv) \geq \frac1B \norm{A \vecv} \geq \frac{1}{B C} \norm{\vecv} .
\end{equation*}
Hence we find that
\begin{equation*}
\frac{1}{B C} \norm{\vecv} \leq \rho^*(\vecv) \leq B C \norm{\vecv} .
\end{equation*}
We therefore find that for $A = BC$, for all $\rho \in \NormsFamily^*$ and $\vecv \in \R^d$, $\frac1A \norm{\vecv} \leq \norm{\vecv} \leq A \norm{\vecv}$.
\end{proof}

\begin{corollary}
\label{corollary:MatrixBasedNormsBoundedConsistent}
Suppose we apply the $k$-NN classifier with a sequence of random norms (chosen independently of the sample and the query) from the family of norms $\NormsFamily^*$ of the form in Lemma \ref{lemma:MatrixBasedNormsSingleNorm}. Then the resulting classifier is universally consistent.
\end{corollary}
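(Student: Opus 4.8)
\noindent\emph{Proof proposal.} The plan is to obtain this as a direct corollary of Theorem~\ref{theorem:KnnWithSandwichedNormIsUniversallyConsistent} combined with Lemma~\ref{lemma:MatrixBasedNormsSingleNorm}. The only thing that needs checking is that the family $\NormsFamily^*$ produced in Lemma~\ref{lemma:MatrixBasedNormsSingleNorm} genuinely fits the hypothesis of Theorem~\ref{theorem:KnnWithSandwichedNormIsUniversallyConsistent}: namely, that there are two honest norms $\norm{\cdot}_L, \norm{\cdot}_U$ on $\R^d$ with $\norm{\cdot}_L \preceq \rho^* \preceq \norm{\cdot}_U$ for every $\rho^* \in \NormsFamily^*$. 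Once that is in place, the three conditions of Stone's theorem hold exactly as in the proof of Theorem~\ref{theorem:KnnWithSandwichedNormIsUniversallyConsistent}, via the geometric covering argument (Lemmas~\ref{lemma:UnitSphereCoveredByBallsInSandwichedNorm} and \ref{lemma:CoveringOfBallsImpliesFiniteExpectation}) and the tail estimate (Corollary~\ref{corollary:KnnNormsSecondCondition}), so no new work is required at that stage.

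Concretely, first I would note that every $\rho^* \in \NormsFamily^*$ has the form $\rho^*(\vecv) = \rho(A\vecv)$ for some $\rho \in \NormsFamily$ and some invertible $A$, hence is a norm by Lemma~\ref{lemma:NormInvertibleMatrix}. Next, Lemma~\ref{lemma:MatrixBasedNormsSingleNorm} supplies a fixed norm $\norm{\cdot}$ and a constant $A_0 \ge 1$ (the constant $BC$ of that lemma) such that $\tfrac{1}{A_0}\norm{\vecv} \le \rho^*(\vecv) \le A_0 \norm{\vecv}$ for all $\vecv \in \R^d$ and all $\rho^* \in \NormsFamily^*$. Taking $\norm{\vecv}_L = \tfrac{1}{A_0}\norm{\vecv}$ and $\norm{\vecv}_U = A_0 \norm{\vecv}$, these are norms (a positive scalar multiple of a norm is a norm), and the two displayed inequalities are precisely the statements $\norm{\cdot}_L \preceq \rho^*$ and $\rho^* \preceq \norm{\cdot}_U$ in the partial order of \eqref{eq:NormOrdering}. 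Since the sequence of random norms $(\rho_n)_{n=1}^\infty$ is taken from $\NormsFamily^*$ and is, by hypothesis, independent of the labelled sample and the query, every hypothesis of Theorem~\ref{theorem:KnnWithSandwichedNormIsUniversallyConsistent} is met, and I would invoke it to conclude universal consistency.

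The step I expect to require the most care is not an obstacle in the proof so much as in the statement: $\rho^*(\vecv) = \rho(A\vecv)$ is only a norm when $A$ is invertible, so one should read $\NormsFamily^*$ in Lemma~\ref{lemma:MatrixBasedNormsSingleNorm} as built from matrices in $GL(d)$ (equivalently, restrict $\MatrixFamily$ to invertible matrices), and the uniform two-sided Lipschitz bound on $\{\norm{A\cdot} : A \in \MatrixFamily\}$ is exactly what guarantees the sandwiching above. Beyond this bookkeeping there is nothing to do, as the substantive analysis has already been carried out in Theorem~\ref{theorem:KnnWithSandwichedNormIsUniversallyConsistent}.
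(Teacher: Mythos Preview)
Your proposal is correct and follows essentially the same route as the paper: invoke Lemma~\ref{lemma:MatrixBasedNormsSingleNorm} to get the uniform two-sided bound on $\NormsFamily^*$, then apply Theorem~\ref{theorem:KnnWithSandwichedNormIsUniversallyConsistent}. The paper's proof is a one-liner to this effect; your extra care in exhibiting $\norm{\cdot}_L = \tfrac{1}{A_0}\norm{\cdot}$ and $\norm{\cdot}_U = A_0\norm{\cdot}$ and flagging the invertibility requirement is welcome but not a different approach.
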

\begin{proof}
We have that $\NormsFamily^*$ is bounded both below and above by Lemma \ref{lemma:MatrixBasedNormsSingleNorm}, so it satisfies the conditions in Theorem \ref{theorem:KnnWithSandwichedNormIsUniversallyConsistent}, and so $k$-NN with $\NormsFamily^*$ is universally consistent.
\end{proof}

One important family of matrices that we will use is the group of \emph{orthogonal} $d$ by $d$ matrices $O(d)$ (and the subgroup of \emph{special orthogonal} matrices $SO(d)$).
\begin{definition}
\label{def:OrthogonalMatrix}
An $d$ by $d$ matrix $Q$ is an \emph{orthogonal} matrix if multiplication of the matrix by its transpose (in either order) results in the identity matrix:
\begin{equation}
\label{eq:OrthogonalMatrix}
Q^\intercal Q = Q Q^\intercal = I_m
\end{equation}
We say that $Q$ is a \emph{special orthogonal} matrix if it satisfies the additional criterion that its determinant is one:
\begin{equation}
\label{eq:SpecialOrthogonalMatrixDeterminantCondition}
\det(Q) = 1
\end{equation}
\end{definition}

An important result is that multiplication by orthogonal matrices preserves the Euclidean norm of a vector.
\begin{theorem}
\label{theorem:MatrixBasedNormsOrthogonalEuclideanEquality}
For any matrix $Q \in O(d)$ and vector $\vecx \in \R^d$, the Euclidean norm of $\vecx$ is equal to the Euclidean norm of $Q \vecx$, $\norm{\vecx}_2 = \norm{Q \vecx}_2$.
\end{theorem}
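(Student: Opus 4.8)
The plan is to work with the square of the Euclidean norm, since $\norm{\cdot}_2$ is nonnegative and squaring is a strictly increasing bijection on $[0, \infty)$, so it suffices to show $\norm{Q\vecx}_2^2 = \norm{\vecx}_2^2$ and then take square roots at the end. First I would rewrite the squared Euclidean norm as an inner product (equivalently, as a matrix product with the transpose): $\norm{Q\vecx}_2^2 = \innerproduct{Q\vecx}{Q\vecx} = (Q\vecx)^\intercal (Q\vecx)$.

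Next I would use the standard identity $(Q\vecx)^\intercal = \vecx^\intercal Q^\intercal$ for the transpose of a product, so that $(Q\vecx)^\intercal (Q\vecx) = \vecx^\intercal Q^\intercal Q \vecx$. Then I would invoke the defining property of an orthogonal matrix from Definition \ref{def:OrthogonalMatrix}, namely $Q^\intercal Q = I_d$, to collapse the middle factor: $\vecx^\intercal Q^\intercal Q \vecx = \vecx^\intercal I_d \vecx = \vecx^\intercal \vecx = \norm{\vecx}_2^2$.

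Combining these equalities gives $\norm{Q\vecx}_2^2 = \norm{\vecx}_2^2$; since both norms are nonnegative, taking the nonnegative square root of each side yields $\norm{Q\vecx}_2 = \norm{\vecx}_2$, as desired. I expect there to be no genuine obstacle here — the only thing to be careful about is making sure one uses the $Q^\intercal Q = I_d$ half of the orthogonality condition (rather than $Q Q^\intercal = I_d$), so that the factors cancel in the correct order, and recording that the passage from the squared identity back to the norms themselves is legitimate because both quantities are nonnegative.
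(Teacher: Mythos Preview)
Your proof is correct and is the standard argument for this classical fact. The paper itself does not actually prove the result: its ``proof'' consists solely of a citation to Theorem~A.1.3 in Pressley's \emph{Elementary Differential Geometry}, so there is nothing substantive to compare against --- your self-contained computation via $\norm{Q\vecx}_2^2 = \vecx^\intercal Q^\intercal Q \vecx = \vecx^\intercal \vecx$ is exactly what one would expect such a reference to contain.
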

\begin{proof}
This is a standard result, it is proven in Theorem A.1.3 (in Appendix 1) in \cite{Pressley}.
\end{proof}

With this result, we are now able show that $k$-NN with a sequence of random orthogonal matrices $O(d)$ (independent of the sample and the query) is universally consistent.
\begin{corollary}
\label{corollary:MatrixBasedNormsOrthogonalKnnConsistent}
Suppose we have the family of norms consisting of multiplying the input by a random orthogonal matrix followed by applying an $\ell^p$-norm. Then $k$-NN with a sequence of random norms (independent of the sample and the query) from this family is universally consistent.
\end{corollary}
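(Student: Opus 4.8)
The plan is to reduce the statement to Corollary~\ref{corollary:MatrixBasedNormsBoundedConsistent} by recognising the family of norms in question as one of the form treated in Lemma~\ref{lemma:MatrixBasedNormsSingleNorm}. Concretely, I would take the matrix family to be $O(d)$, the base norm family to be $\NormsFamily = \{\norm{\cdot}_p : 1 \le p \le \infty\}$, and the reference norm $\norm{\cdot}$ appearing in Lemma~\ref{lemma:MatrixBasedNormsSingleNorm} to be the Euclidean norm $\norm{\cdot}_2$. Every orthogonal matrix is invertible, so by Lemma~\ref{lemma:NormInvertibleMatrix} each $\rho^*(\vecv) = \norm{Q \vecv}_p$ (for $Q \in O(d)$ and $p \in [1,\infty]$) is indeed a norm, and the associated family $\NormsFamily^*$ is well defined; note in passing that $\NormsFamily^*$ contains the ordinary $\ell^p$ norms as the special case $Q = I$.

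Next I would verify the two boundedness hypotheses of Lemma~\ref{lemma:MatrixBasedNormsSingleNorm}. For the base norms, Lemma~\ref{lemma:LpNormIsSandwiched} gives $\norm{\vecv}_\infty \le \norm{\vecv}_p \le \norm{\vecv}_1$ for every $p$, and combining this with the elementary inequalities $\norm{\vecv}_1 \le \sqrt{d}\,\norm{\vecv}_2$ (Cauchy--Schwarz) and $\norm{\vecv}_2 \le \sqrt{d}\,\norm{\vecv}_\infty$ yields $\tfrac{1}{\sqrt{d}}\norm{\vecv}_2 \le \norm{\vecv}_p \le \sqrt{d}\,\norm{\vecv}_2$ for all $\vecv \in \R^d$ and all $p \in [1,\infty]$, so the constant $B = \sqrt{d}$ works. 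For the matrices, Theorem~\ref{theorem:MatrixBasedNormsOrthogonalEuclideanEquality} says $\norm{Q \vecv}_2 = \norm{\vecv}_2$ for every $Q \in O(d)$, so the required bound $\tfrac{1}{C}\norm{\vecv}_2 \le \norm{Q \vecv}_2 \le C\norm{\vecv}_2$ holds with $C = 1$.

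With both hypotheses in hand, Lemma~\ref{lemma:MatrixBasedNormsSingleNorm} supplies a constant (one may take $BC = \sqrt{d}$) such that every norm in $\NormsFamily^*$ is bounded below by $\tfrac{1}{\sqrt{d}}\norm{\cdot}_2$ and above by $\sqrt{d}\,\norm{\cdot}_2$. Corollary~\ref{corollary:MatrixBasedNormsBoundedConsistent} then applies directly and gives that $k$-NN (with $k \to \infty$ and $k/n \to 0$ as $n \to \infty$) using any sequence of random norms drawn from $\NormsFamily^*$ independently of the sample and the query is universally consistent, which is exactly the assertion of the corollary.

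I do not expect a genuine obstacle here: the argument is a routine verification of the boundedness conditions followed by an invocation of the preceding corollary. The one point worth getting right is the choice of reference norm --- using $\norm{\cdot}_2$ is what makes the matrix condition trivial, since the orthogonal matrices are precisely the Euclidean isometries ($C = 1$); with any other reference norm the argument would still go through, but one would have to absorb the norm-equivalence constants for the action of $O(d)$ on $\R^d$ into $C$.
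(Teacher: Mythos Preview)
Your proposal is correct and follows essentially the same route as the paper: verify the hypotheses of Lemma~\ref{lemma:MatrixBasedNormsSingleNorm} using Lemma~\ref{lemma:LpNormIsSandwiched} for the $\ell^p$ norms and Theorem~\ref{theorem:MatrixBasedNormsOrthogonalEuclideanEquality} for the orthogonal matrices, then invoke Corollary~\ref{corollary:MatrixBasedNormsBoundedConsistent}. Your version is simply more explicit about the choice of reference norm ($\norm{\cdot}_2$) and the resulting constants ($B = \sqrt{d}$, $C = 1$), which the paper leaves implicit.
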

\begin{proof}
We see that by Lemma \ref{lemma:LpNormIsSandwiched} and Theorem \ref{theorem:MatrixBasedNormsOrthogonalEuclideanEquality} that the conditions of Lemma \ref{lemma:MatrixBasedNormsSingleNorm} are satisfied for this family of norms. Hence by Corollary \ref{corollary:MatrixBasedNormsBoundedConsistent}, $k$-NN with this family of norms is universally consistent.
\end{proof}

\begin{lemma}
\label{lemma:MatrixBoundedFamilyDiagonal}
Given $0 < \beta \leq \alpha < \infty$, let $\MatrixFamily_{\beta, \alpha}$ be the family of all diagonal matrices such that for each entry $a_{i,i}$ on the diagonal (with $1 \leq i \leq d$), $\beta \leq \left| a_{i, i} \right| \leq \alpha$. Then for any vector $\vecx \in \R^d$ and matrix $D \in \MatrixFamily_{\beta, \alpha}$, $\beta {\norm{\vecx}}_p \leq {\norm{D \vecx}}_p \leq \alpha {\norm{\vecx}}_p$, for any $\ell^p$ norm.
\end{lemma}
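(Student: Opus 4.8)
The plan is to compute $D\vecx$ coordinatewise and reduce everything to a pointwise estimate. Since $D \in \MatrixFamily_{\beta,\alpha}$ is diagonal with diagonal entries $a_{1,1}, \dots, a_{d,d}$, the $i$-th coordinate of $D\vecx$ is $a_{i,i} x_i$, so $\left| (D\vecx)_i \right| = \left| a_{i,i} \right| \left| x_i \right|$. The hypothesis $\beta \leq \left| a_{i,i} \right| \leq \alpha$ then gives, for every $i$ with $1 \leq i \leq d$, the pointwise bound $\beta \left| x_i \right| \leq \left| (D\vecx)_i \right| \leq \alpha \left| x_i \right|$.

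First I would treat the case $1 \leq p < \infty$. Raising the pointwise bound to the $p$-th power (a monotone operation on nonnegative reals) and summing over $i$ yields $\beta^p \sum_{i=1}^d \left| x_i \right|^p \leq \sum_{i=1}^d \left| (D\vecx)_i \right|^p \leq \alpha^p \sum_{i=1}^d \left| x_i \right|^p$, that is, $\beta^p \norm{\vecx}_p^p \leq \norm{D\vecx}_p^p \leq \alpha^p \norm{\vecx}_p^p$. Taking $p$-th roots then gives $\beta \norm{\vecx}_p \leq \norm{D\vecx}_p \leq \alpha \norm{\vecx}_p$, as desired.

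Then I would handle $p = \infty$ separately: taking the maximum over $i$ in the pointwise bound gives $\beta \max_{1 \leq i \leq d} \left| x_i \right| \leq \max_{1 \leq i \leq d} \left| (D\vecx)_i \right| \leq \alpha \max_{1 \leq i \leq d} \left| x_i \right|$, which is exactly $\beta \norm{\vecx}_\infty \leq \norm{D\vecx}_\infty \leq \alpha \norm{\vecx}_\infty$.

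I do not anticipate a genuine obstacle here; the only point needing minor care is that the statement asserts the bound for \emph{any} $\ell^p$ norm, so I would make sure the $p = \infty$ case is spelled out rather than swept into the $\ell^p$ argument (the case $\vecx = \veczero$ being trivial in all cases). The content worth emphasising is that the constants $\beta$ and $\alpha$ are uniform in both $D \in \MatrixFamily_{\beta,\alpha}$ and $\vecx \in \R^d$, which is precisely the hypothesis required to feed $\MatrixFamily_{\beta,\alpha}$ into Lemma \ref{lemma:MatrixBasedNormsSingleNorm} and hence Corollary \ref{corollary:MatrixBasedNormsBoundedConsistent}.
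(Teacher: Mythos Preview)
Your proposal is correct and follows essentially the same approach as the paper: both split into the cases $1 \leq p < \infty$ and $p = \infty$, use the pointwise bound $\beta |x_i| \leq |a_{i,i} x_i| \leq \alpha |x_i|$, and then pass to the $\ell^p$ norm by summing (or taking the max) and extracting the constant. Your write-up is slightly cleaner in making the pointwise bound explicit before specialising to each case, but the content is the same.
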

\begin{proof}
If $p$ is finite, we see that, for any diagonal matrix $D \in \MatrixFamily_{\beta, \alpha}$ with diagonal entries in $[\beta, \alpha]$ and vector $\vecx = (x_1, x_2, \dots, x_d) \in \R^d$,
\begin{align*}
\norm{A \vecx}_p &= \sqrt[p]{\sum_{i=1}^d {(a_i x_i)}^p} \\
&\leq \sqrt[p]{\sum_{i=1}^d {(\alpha x_i)}^p} \\
&= \alpha \sqrt[p]{\sum_{i=1}^d {x_i}^p} \\
&= \alpha \norm{\vecx}_p
\end{align*}
and that
\begin{align*}
\norm{A \vecx}_p &= \sqrt[p]{\sum_{i=1}^d {(a_i x_i)}^p} \\
&\geq \sqrt[p]{\sum_{i=1}^d {(\beta x_i)}^p} \\
&= \beta \sqrt[p]{\sum_{i=1}^d {x_i}^p} \\
&= \beta \norm{\vecx}_p .
\end{align*}

If $p$ is infinite, we have (with $D$, $\vecx$ as above)
\begin{align*}
\norm{A \vecx}_\infty &= \max_{1 \leq i \leq d} \{ \left| a_i x_i \right| \} \\
&\leq \max_{1 \leq i \leq d} \{ \left| \alpha x_i \right| \} \\
&= \alpha \max_{1 \leq i \leq d} \{ \left| x_i \right| \} \\
&= \alpha \norm{\vecx}_\infty
\end{align*}
and that
\begin{align*}
\norm{A \vecx}_\infty &= \max_{1 \leq i \leq d} \{ \left| a_i x_i \right| \} \\
&\geq \max_{1 \leq i \leq d} \{ \left| \beta x_i \right| \} \\
&= \beta \max_{1 \leq i \leq d} \{ \left| x_i \right| \} \\
&= \beta \norm{\vecx}_\infty .
\end{align*}
\end{proof}

\begin{lemma}
\label{lemma:MatrixProductBoundedFamilies}
Let $\MatrixFamily_1$ and $\MatrixFamily_2$ be two families of invertible $d$ by $d$ matrices that both satisfy the following boundedness condition: there exists $b > 0$ such that for all $\vecv \in \R^d$ and $B \in \MatrixFamily_1$, $\frac1b \norm{\vecv} \leq \norm{B \vecv} \leq b \norm{\vecv}$ (and a corresponding constant $c > 0$ exists for $\MatrixFamily_2$). We define $\MatrixFamily$ to be the product of all pairs of matrices in $\MatrixFamily_1$ and $\MatrixFamily_2$, that is $A \in \MatrixFamily$ if and only if $A = B C$ with $B \in \MatrixFamily_1$ and $C \in \MatrixFamily_2$. Then $\MatrixFamily$ is a bounded family of invertible matrices.
\end{lemma}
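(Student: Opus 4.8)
The plan is to verify the two defining properties of a bounded family of invertible matrices directly: first that every $A \in \MatrixFamily$ is invertible, and second that there is a single constant controlling $\norm{A\vecv}$ from above and below uniformly over $A \in \MatrixFamily$ and $\vecv \in \R^d$.

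For invertibility, I would simply note that if $A = BC$ with $B \in \MatrixFamily_1$ and $C \in \MatrixFamily_2$, then $B$ and $C$ are invertible by hypothesis, so their product $A$ is invertible with $A^{-1} = C^{-1} B^{-1}$.

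For the boundedness estimate, fix $A = BC \in \MatrixFamily$ and a vector $\vecv \in \R^d$. The idea is to apply the bound for $\MatrixFamily_2$ to the vector $\vecv$ and the bound for $\MatrixFamily_1$ to the vector $C\vecv$, chaining them together. Concretely, $\norm{A\vecv} = \norm{B(C\vecv)} \leq b\norm{C\vecv} \leq bc\norm{\vecv}$, and similarly $\norm{A\vecv} = \norm{B(C\vecv)} \geq \tfrac1b\norm{C\vecv} \geq \tfrac{1}{bc}\norm{\vecv}$. Thus with the single constant $a = bc \geq 1$ we get $\tfrac1a\norm{\vecv} \leq \norm{A\vecv} \leq a\norm{\vecv}$ for all $A \in \MatrixFamily$ and all $\vecv$, which is exactly the claimed boundedness condition.

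I do not anticipate any real obstacle here: the argument is a two-step application of the two hypotheses composed in the right order, and the only thing to be careful about is taking the product $bc$ (rather than, say, $\max\{b,c\}$) as the uniform constant, and checking $bc \geq 1$, which follows since $b,c \geq 1$.
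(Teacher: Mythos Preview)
Your proposal is correct and matches the paper's proof essentially line for line: both note that the product of invertible matrices is invertible, and both chain the two-sided bounds by applying the $\MatrixFamily_1$ estimate to the vector $C\vecv$ and the $\MatrixFamily_2$ estimate to $\vecv$, arriving at the constant $bc$. Your extra remark that $b,c \geq 1$ (hence $bc \geq 1$) is implicit in the paper's proof as well.
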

\begin{proof}
We first notice that the product of invertible matrices is invertible. For any matrix $A \in \MatrixFamily$, we let $B \in \MatrixFamily_1$ and $C \in \MatrixFamily_2$ be such that $A = B C$, and $b, c \geq 1$ be constants such that $\frac1b \norm{\vecv} \leq \norm{B \vecv} \leq b \norm{\vecv}$ for all $\vecv \in \R^d$ and for all $B \in \MatrixFamily_1$, and similarly for $C$. We notice that for every vector $\vecv \in \R^d$,
\begin{align*}
\norm{A \vecv} &= \norm{B C \vecv} \\
&\geq \frac1b \norm{C \vecv} \\
&\geq \frac{1}{b c} \norm{\vecv}
\end{align*}
and that
\begin{align*}
\norm{A \vecv} &= \norm{B C \vecv} \\
&\leq b \norm{C \vecv} \\
&\leq b c \norm{\vecv} .
\end{align*}
\end{proof}

From this result we see that we can, for instance, first multiply by a diagonal matrix from a bounded family and then multiply by an orthogonal matrix (or in the other order), and retain universal consistency.

We would now like to find a general criterion for checking if a family of matrices is bounded both below and above. We can do this from the \emph{singular value decomposition} of a matrix.
\begin{theorem}
\label{theorem:SingularValueDecomposition}
Let $A$ be a real valued $d$ by $d$ matrix. There exist orthogonal $d$ by $d$ matrices $U$ and $V$ and an $d$ by $d$ diagonal matrix $\Sigma$ such that the diagonal entries of $\Sigma$ are the square roots of the eigenvalues of $A^\intercal A$ and of $A A^\intercal$ (they are called the \emph{singular values} of $A$) and
\begin{equation}
A = U \Sigma V^\intercal .
\end{equation}
We call this the \emph{singular value decomposition} of the matrix $A$.\footnote{A version of this result also holds for complex matrices and non-square matrices, for our purposes the result for square real-valued matrices suffices.}
\end{theorem}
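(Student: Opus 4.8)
The plan is to derive the singular value decomposition from the spectral theorem applied to the symmetric positive semidefinite matrix $A^\intercal A$. First I would observe that $A^\intercal A$ is symmetric, and that for every $\vecv \in \R^d$ we have $\vecv^\intercal A^\intercal A \vecv = \norm{A\vecv}_2^2 \geq 0$, so $A^\intercal A$ is positive semidefinite. By the spectral theorem for real symmetric matrices there is an orthonormal basis $\vecv_1, \dots, \vecv_d$ of $\R^d$ consisting of eigenvectors of $A^\intercal A$, with corresponding eigenvalues $\lambda_1 \geq \lambda_2 \geq \dots \geq \lambda_d \geq 0$. I then set $\sigma_i = \sqrt{\lambda_i}$, which is a well-defined nonnegative real number precisely because $\lambda_i \geq 0$, and let $r$ be the number of strictly positive $\sigma_i$.

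Next I would construct the left singular vectors. For $1 \leq i \leq r$ define $\vecu_i = \sigma_i^{-1} A \vecv_i$. A short computation shows these are orthonormal: $\innerproduct{\vecu_i}{\vecu_j} = (\sigma_i \sigma_j)^{-1}\, \vecv_i^\intercal A^\intercal A \vecv_j = \lambda_j (\sigma_i \sigma_j)^{-1} \innerproduct{\vecv_i}{\vecv_j}$, which equals $1$ when $i = j$ and $0$ otherwise. I would then extend $\vecu_1, \dots, \vecu_r$ to an orthonormal basis $\vecu_1, \dots, \vecu_d$ of $\R^d$ (complete to a basis and apply Gram--Schmidt). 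Let $V$ be the matrix whose $i$-th column is $\vecv_i$, let $U$ be the matrix whose $i$-th column is $\vecu_i$, and let $\Sigma = \operatorname{diag}(\sigma_1, \dots, \sigma_d)$; then $U$ and $V$ are orthogonal since their columns form orthonormal bases of $\R^d$.

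Finally I would verify the factorization column by column. For $i \leq r$ we have $A\vecv_i = \sigma_i \vecu_i$ by construction, and for $i > r$ we have $\sigma_i = 0$ together with $\norm{A\vecv_i}_2^2 = \vecv_i^\intercal A^\intercal A \vecv_i = \lambda_i = 0$, so $A\vecv_i = \veczero = \sigma_i \vecu_i$. Hence $AV = U\Sigma$, and since $V$ is orthogonal this gives $A = U\Sigma V^\intercal$. To identify the $\sigma_i$ with the square roots of the eigenvalues of both $A^\intercal A$ and $AA^\intercal$, I substitute this factorization: $A^\intercal A = V\Sigma^\intercal U^\intercal U \Sigma V^\intercal = V \Sigma^2 V^\intercal$ and $AA^\intercal = U \Sigma^2 U^\intercal$, and each of these is a similarity that exhibits $\sigma_1^2, \dots, \sigma_d^2$ as the full list of eigenvalues.

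The hard part is only the bookkeeping around the zero singular values: the formula $\vecu_i = \sigma_i^{-1} A \vecv_i$ is meaningful only when $\sigma_i \neq 0$, so one must argue separately that $A$ annihilates the remaining $\vecv_i$ and then invoke the elementary finite-dimensional fact that any orthonormal set extends to an orthonormal basis. Beyond that, everything follows directly from the spectral theorem and the positive semidefiniteness of $A^\intercal A$.
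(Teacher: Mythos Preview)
Your proof is correct and follows the standard spectral-theorem route (diagonalize $A^\intercal A$, build left singular vectors from the right ones, handle the kernel separately). The paper itself does not prove this theorem at all---it simply cites \cite{AppliedLinearAlgebra}---so your argument supplies exactly the standard proof the paper defers to.
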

\begin{proof}
This is a standard result, a proof can be found in \cite{AppliedLinearAlgebra} (Chapter 8, Theorem 8.19).
\end{proof}

\begin{lemma}
\label{lemma:DiagonalMatrixNormInequality}
For any vector $\vecv \in \R^d$ and $d$ by $d$ diagonal matrix $D$, if we let $a_1, a_2, \dots, a_d$ be the entries on the diagonal of $D$ and $\vecv = (v_1, v_2, \dots, v_d)$, then
\begin{equation}
\min_{1 \leq i \leq d} \left| a_i \right| \norm{\vecv}_2 \leq \norm{D \vecv}_2 \leq \max_{1 \leq i \leq d} \left| a_i \right| \norm{\vecv}_2 .
\end{equation}
\end{lemma}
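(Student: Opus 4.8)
The plan is to compute $\norm{D\vecv}_2^2$ directly from the definitions. Since $D$ is diagonal with diagonal entries $a_1, a_2, \dots, a_d$, multiplying by $\vecv = (v_1, v_2, \dots, v_d)$ gives the vector $D\vecv = (a_1 v_1, a_2 v_2, \dots, a_d v_d)$. Hence, by the definition of the Euclidean norm,
\begin{equation*}
\norm{D\vecv}_2^2 = \sum_{i=1}^d (a_i v_i)^2 = \sum_{i=1}^d a_i^2 v_i^2 .
\end{equation*}

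For the upper bound, I would bound each summand termwise by $a_i^2 v_i^2 \leq \left(\max_{1 \leq j \leq d} |a_j|\right)^2 v_i^2$, so that summing over $i$ yields
\begin{equation*}
\norm{D\vecv}_2^2 \leq \left(\max_{1 \leq j \leq d} |a_j|\right)^2 \sum_{i=1}^d v_i^2 = \left(\max_{1 \leq j \leq d} |a_j|\right)^2 \norm{\vecv}_2^2 .
\end{equation*}
Taking square roots of both sides (valid since both sides are nonnegative and $t \mapsto \sqrt{t}$ is monotone on $[0, \infty)$) gives $\norm{D\vecv}_2 \leq \max_{1 \leq j \leq d} |a_j| \cdot \norm{\vecv}_2$. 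The lower bound follows by the symmetric argument, replacing the inequality $a_i^2 v_i^2 \leq \left(\max_{1 \leq j \leq d} |a_j|\right)^2 v_i^2$ by $a_i^2 v_i^2 \geq \left(\min_{1 \leq j \leq d} |a_j|\right)^2 v_i^2$ and again taking square roots.

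There is no substantive obstacle here; this is a routine computation. The only point requiring a moment of care is the passage from the squared inequality to the unsquared one, which is justified by the monotonicity of the square-root function on the nonnegative reals together with the nonnegativity of $\norm{\vecv}_2$ and of the coefficients $\max_{j}|a_j|$ and $\min_{j}|a_j|$.
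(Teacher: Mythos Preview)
Your proof is correct and follows essentially the same approach as the paper: both compute $\norm{D\vecv}_2^2 = \sum_i a_i^2 v_i^2$, bound each $a_i^2$ termwise by $(\max_j |a_j|)^2$ (respectively $(\min_j |a_j|)^2$), and pull the constant out. The only cosmetic difference is that the paper carries the square root through the chain of inequalities while you work with squared norms and take the square root at the end.
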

\begin{proof}
We notice that for the second inequality,
\begin{align*}
\norm{D \vecv}_2 &= \sqrt{{(a_1 v_1)}^2 + {(a_2 v_2)}^2 + \dots + {(a_d v_d)}^2} \\
&= \sqrt{ a_1^2 v_1^2 + a_2^2 v_2^2 + \dots + a_d^2 v_d^2} \\
&\leq \sqrt{\left( \max_{1 \leq i \leq d} a_i^2 \right) \left(v_1^2 + v_2^2 + \dots + v_d^2\right)} \\
&= \left( \max_{1 \leq i \leq d} \left| a_i \right| \right) \sqrt{ \left(v_1^2 + v_2^2 + \dots + v_d^2\right)} \\
&= \left( \max_{1 \leq i \leq d} \left| a_i \right| \right) \norm{\vecv}_2 .
\end{align*}

Similarly, for the first inequality, we have
\begin{align*}
\norm{D \vecv}_2 &= \sqrt{{(a_1 v_1)}^2 + {(a_2 v_2)}^2 + \dots + {(a_d v_d)}^2} \\
&= \sqrt{ a_1^2 v_1^2 + a_2^2 v_2^2 + \dots + a_d^2 v_d^2} \\
&\geq \sqrt{\left( \min_{1 \leq i \leq d} a_i^2 \right) \left(v_1^2 + v_2^2 + \dots + v_d^2\right)} \\
&= \left( \min_{1 \leq i \leq d} \left| a_i \right| \right) \sqrt{ \left(v_1^2 + v_2^2 + \dots + v_d^2\right)} \\
&= \left( \min_{1 \leq i \leq d} \left| a_i \right| \right) \norm{\vecv}_2 .
\end{align*}
\end{proof}

\begin{theorem}
\label{theorem:MatrixBoundSingularValues}
Let $A$ be a real-valued $d$ by $d$ matrix and $A = U \Sigma V^\intercal$ be its singular value decomposition, with $\sigma_1, \sigma_2, \dots, \sigma_m$ being the singular values of $A$. We have that
\begin{equation}
\left(\min_{1 \leq i \leq d} \left| \sigma_i \right|\right) \norm{\vecv}_2 \leq \norm{A \vecv}_2 \leq \left(\max_{1 \leq i \leq d} \left| \sigma_i \right|\right) \norm{\vecv}_2 .
\end{equation}
\end{theorem}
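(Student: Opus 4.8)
The plan is to reduce the general matrix case to the diagonal case by peeling off the two orthogonal factors in the singular value decomposition, each of which preserves the Euclidean norm. Concretely, given the decomposition $A = U \Sigma V^\intercal$ from Theorem \ref{theorem:SingularValueDecomposition}, I would write $A\vecv = U(\Sigma(V^\intercal \vecv))$ and work from the inside out.

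First I would observe that $V^\intercal$ is itself an orthogonal matrix: since $V \in O(d)$ satisfies $V^\intercal V = V V^\intercal = I_d$, the same identity (read the other way) shows $(V^\intercal)^\intercal V^\intercal = V^\intercal (V^\intercal)^\intercal = I_d$, so $V^\intercal \in O(d)$. Hence by Theorem \ref{theorem:MatrixBasedNormsOrthogonalEuclideanEquality}, setting $\vecw = V^\intercal \vecv$ we get $\norm{\vecw}_2 = \norm{V^\intercal \vecv}_2 = \norm{\vecv}_2$. Next I would apply Lemma \ref{lemma:DiagonalMatrixNormInequality} to the diagonal matrix $\Sigma$, whose diagonal entries are exactly the singular values $\sigma_1, \dots, \sigma_d$, to obtain
\begin{equation*}
\left(\min_{1 \leq i \leq d} \left| \sigma_i \right|\right) \norm{\vecw}_2 \leq \norm{\Sigma \vecw}_2 \leq \left(\max_{1 \leq i \leq d} \left| \sigma_i \right|\right) \norm{\vecw}_2 .
\end{equation*}
Finally, applying Theorem \ref{theorem:MatrixBasedNormsOrthogonalEuclideanEquality} once more (this time to $U \in O(d)$) gives $\norm{A\vecv}_2 = \norm{U \Sigma \vecw}_2 = \norm{\Sigma \vecw}_2$, and substituting $\norm{\vecw}_2 = \norm{\vecv}_2$ into the displayed chain of inequalities yields exactly the claimed bound.

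There is no real obstacle here; the only point requiring a moment's care is noting that $V^\intercal$ (not just $V$) is orthogonal so that the norm-preservation theorem applies to it, and making sure the diagonal entries of $\Sigma$ are indeed the $\sigma_i$ (which is part of the statement of Theorem \ref{theorem:SingularValueDecomposition}). Everything else is a direct composition of already-established facts, so the proof is essentially a two-line chain of equalities sandwiching one invocation of Lemma \ref{lemma:DiagonalMatrixNormInequality}.
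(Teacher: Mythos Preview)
Your proposal is correct and follows essentially the same approach as the paper: peel off the orthogonal factors $U$ and $V^\intercal$ using Theorem \ref{theorem:MatrixBasedNormsOrthogonalEuclideanEquality}, then apply Lemma \ref{lemma:DiagonalMatrixNormInequality} to the diagonal part $\Sigma$. The only cosmetic difference is that you strip off $V^\intercal$ first and $U$ last, whereas the paper does it in the opposite order; your extra line verifying that $V^\intercal$ is orthogonal is a welcome bit of care that the paper simply asserts.
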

\begin{proof}
For the upper bound inequality, we see that
\begin{align*}
\norm{A \vecv} &= \norm{U \Sigma V^\intercal \vecv}_2 \\
&= \norm{\Sigma V^\intercal \vecv}_2 \text{ since } U \text{ is orthogonal}\\
&\leq \left(\max_{1 \leq i \leq d} \left| \sigma_i \right|\right) \norm{V^\intercal \vecv}_2 \text{ by Lemma \ref{lemma:DiagonalMatrixNormInequality}} \\
&= \left(\max_{1 \leq i \leq d} \left| \sigma_i \right|\right) \norm{\vecv}_2 \text{ since } V^\intercal \text{ is orthogonal.}
\end{align*}

Similarly for the lower bound inequality, we see that
\begin{align*}
\norm{A \vecv} &= \norm{U \Sigma V^\intercal \vecv}_2 \\
&= \norm{\Sigma V^\intercal \vecv}_2 \text{ since } U \text{ is orthogonal}\\
&\geq \left(\min_{1 \leq i \leq d} \left| \sigma_i \right|\right) \norm{V^\intercal \vecv}_2 \text{ by Lemma \ref{lemma:DiagonalMatrixNormInequality}} \\
&= \left(\min_{1 \leq i \leq d} \left| \sigma_i \right|\right) \norm{\vecv}_2 \text{ since } V^\intercal \text{ is orthogonal.}
\end{align*}
\end{proof}

From this, we see that we can multiply the data by a matrix from a family of matrices whose singular values are bounded (both above and from below away from zero) and maintain universal consistency.

\begin{corollary}
\label{corollary:KnnConsistentWithMatricesBoundedSingularValue}
If $\NormsFamily$ is a bounded family of norms and $\MatrixFamily$ is a family of matrices whose singular values are bounded below away from zero and are bounded above by some finite value, then $k$-NN with a sequence of random norms (independent of the sample and the query) from the family of norms consisting of first multiplying by a matrix in $\MatrixFamily$ and then applying a norm in $\NormsFamily$ is universally consistent.
\end{corollary}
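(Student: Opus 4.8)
The plan is to reduce the statement to Corollary \ref{corollary:MatrixBasedNormsBoundedConsistent} (equivalently to Theorem \ref{theorem:KnnWithSandwichedNormIsUniversallyConsistent}) by checking that the composite family of norms $\NormsFamily^* = \{\rho^* : \rho^*(\vecv) = \rho(A\vecv),\ \rho \in \NormsFamily,\ A \in \MatrixFamily\}$ is sandwiched between two fixed multiples of a single norm on $\R^d$.

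First I would fix the Euclidean norm $\norm{\cdot}_2$ as a reference. Since $\NormsFamily$ is a bounded family there are norms with $\norm{\cdot}_L \preceq \rho \preceq \norm{\cdot}_U$ for every $\rho \in \NormsFamily$, and by the equivalence of norms on $\R^d$ these bounds can be absorbed into $\norm{\cdot}_2$, giving a constant $B \geq 1$ with $\tfrac{1}{B}\norm{\vecv}_2 \leq \rho(\vecv) \leq B\norm{\vecv}_2$ for all $\rho \in \NormsFamily$ and $\vecv \in \R^d$. Next I would control $\MatrixFamily$: by hypothesis there are constants $0 < \beta \leq \alpha < \infty$ such that every $A \in \MatrixFamily$ has all singular values $\sigma_i$ with $\beta \leq |\sigma_i| \leq \alpha$. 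In particular $\det A \neq 0$, so each $A \in \MatrixFamily$ is invertible and, by Lemma \ref{lemma:NormInvertibleMatrix}, each $\rho^* \in \NormsFamily^*$ really is a norm. Applying Theorem \ref{theorem:MatrixBoundSingularValues} to each $A \in \MatrixFamily$ yields $\beta\norm{\vecv}_2 \leq \norm{A\vecv}_2 \leq \alpha\norm{\vecv}_2$, so with $C = \max\{\alpha, 1/\beta\} \geq 1$ we get $\tfrac{1}{C}\norm{\vecv}_2 \leq \norm{A\vecv}_2 \leq C\norm{\vecv}_2$ uniformly over $\MatrixFamily$.

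This is exactly the hypothesis of Lemma \ref{lemma:MatrixBasedNormsSingleNorm} with $\norm{\cdot} = \norm{\cdot}_2$, so that lemma produces a constant $M \geq 1$ with $\tfrac{1}{M}\norm{\vecv}_2 \leq \rho^*(\vecv) \leq M\norm{\vecv}_2$ for every $\rho^* \in \NormsFamily^*$; hence $\NormsFamily^*$ is bounded below by $\tfrac{1}{M}\norm{\cdot}_2$ and above by $M\norm{\cdot}_2$. A sequence of random norms $(\rho_n)_{n=1}^\infty$ drawn from $\NormsFamily^*$ independently of the sample and the query (each $\rho_n$ being of the form $\rho(A\,\cdot)$ with $\rho$ and $A$ chosen independently of the data) then falls under Theorem \ref{theorem:KnnWithSandwichedNormIsUniversallyConsistent}, so $k$-NN with this sequence is universally consistent; this is precisely Corollary \ref{corollary:MatrixBasedNormsBoundedConsistent} applied to $\NormsFamily^*$.

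There is no substantial obstacle here — the argument is essentially bookkeeping on top of the earlier results — but the one point that deserves care is that "bounded below away from zero" and "bounded above" for the singular values must be interpreted as holding with a single pair of constants $\beta,\alpha$ valid for \emph{every} matrix in $\MatrixFamily$, not matrix-by-matrix. It is exactly this uniformity (together with the uniform bound on $\NormsFamily$) that makes the Stone-type covering number $c$ in the proof of Theorem \ref{theorem:KnnWithSandwichedNormIsUniversallyConsistent} a fixed constant, independent of $n$, so that all three conditions of Stone's theorem go through as before.
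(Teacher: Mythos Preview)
Your proposal is correct and follows the same approach as the paper, which simply cites Theorem \ref{theorem:MatrixBoundSingularValues} (to bound $\norm{A\vecv}_2$ via the singular values) and Corollary \ref{corollary:MatrixBasedNormsBoundedConsistent}. Your version spells out the bookkeeping more carefully---in particular the invertibility of each $A$, the passage to the Euclidean reference norm, and the uniformity of $\beta,\alpha$ across $\MatrixFamily$---but the underlying route is identical.
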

\begin{proof}
This follows from Theorem \ref{theorem:MatrixBoundSingularValues} and Corollary \ref{corollary:MatrixBasedNormsBoundedConsistent}.
\end{proof}

\section{Sequences of Norms that Depend on the Sample}

In Theorem \ref{theorem:KnnWithSandwichedNormIsUniversallyConsistent}, we have assumed that the sequence of norms is independent of the sample and the query. This is a strong assumption we would like to eliminate. Unfortunately, removing this assumption appears to be quite difficult.

In \cite{pbook}, Theorem 26.3, it is claimed that a result of a form similar to Theorem \ref{theorem:KnnWithSandwichedNormIsUniversallyConsistent} holds. The book claims that if we multiply the data by a matrix $A_n$ that is a function of the sample points $X_1, X_2, \dots, X_n$ and then apply the Euclidean norm, then $k$-NN with the resulting distance is universally consistent. The proof is performed by checking the three conditions of Stone's theorem. Unfortunately, the proof that such a classifier satisfies the first condition of Stone's theorem is incorrect. The book makes the following argument: for the first condition we need that the number of data points that can be among the $k$ nearest neighbours of $X$ is at most $k c_d$, where $c_d$ is a constant that depends on the dimension only; and that this is a deterministic property that can be proven in exactly the same manner as for the usual $k$-NN learning rule.

The problem with this argument is that if the norm is a function of the sample, when we do the exchange of random variables in the proof of Stone's lemma (Lemma \ref{lemma:StonesLemmaCore}), we obtain a different norm for each point, as shown below:
\begin{align*}
&\phantom{{}={}}\sum_{i=1}^n \E \left[ W_{ni}(X) f(X_i) \right] \\
&= \E \left[ \sum_{i=1}^n \frac1k \Indicator_{\{ X_i \text{ is a } k \text{-nearest neighbour of } X \text{ in the } \rho_n(X_1, X_2, \dots, X_n) \text{ norm among } X_1, X_2, \dots, X_n \}} f(X_i) \right] \\
&= \frac1k \E \left[ \sum_{i=1}^n \Indicator_{\left\{ \substack{X \text{ is a } k \text{-nearest neighbour of } X_i \text{ in the } \rho_n(X_1, \dots, X_{i-1}, X, X_{i+1}, \dots, X_n) \\ \text{norm } \text{among } X_1, \dots, X_{i-1}, X, X_{i+1}, \dots, X_n } \right\}} f(X) \right]
\end{align*}

We see that we have a different norm for each point, namely $\rho_n(X, X_2, X_3, \dots, X_n)$ for the point $X_1$, $\rho_n(X_1, X, X_3, \dots, X_n)$ for the point $X_2$, and so on. This means we must consider the set of points such that $X$ can be the $k$-nearest neighbour of them in any of the norms from the family, and not just each norm by itself. As we show in the following example, the geometrical argument used in Stone's lemma does not work if we require a bound on the number of points that can be considered a nearest neighbour for \emph{any} norm in a family (even if the family of norms is bounded, as described in Theorem \ref{theorem:KnnWithSandwichedNormIsUniversallyConsistent}) as opposed to a single norm.

\begin{figure}
\centering
\includegraphics[scale=3]{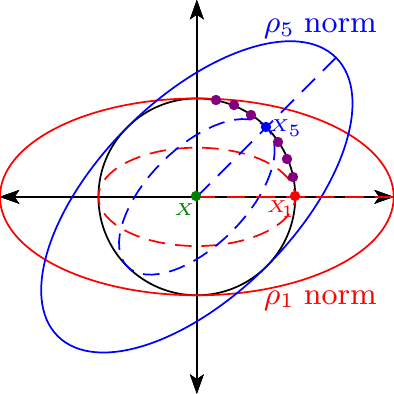}
\caption{We see that for each of the points $X_1, X_2, \dots, X_n$ (with $n = 8$ in this illustration), we have that $X_i$ is the nearest neighbour to the origin $X$ in the $\rho_i$ norm. The points $X_1, X_2, \dots, X_n$ are distributed along the circle as described by equation \eqref{eq:NormsCircleCounterexample} (with corresponding norms given by \eqref{eq:NormsCounterexampleFormula}).}
\label{fig:DependentNormsCounterexample}
\end{figure}

\begin{example}
\label{example:DependentNormsCounterexample}
Suppose the query $X$ is at the origin, and the data points $X_1, X_2, \dots, X_n$ are arranged on the upper-right part of the unit circle around $X$, as shown in Figure \ref{fig:DependentNormsCounterexample}. We can accomplish this by taking the coordinate of the point $X_i$ to be as follows:
\begin{equation}
\label{eq:NormsCircleCounterexample}
X_i = \left(\cos\left(\frac{(i - 1) \pi}{2 n}\right), \sin\left(\frac{(i - 1) \pi}{2 n}\right)\right),\; 1 \leq i \leq n
\end{equation}

We then define a norm $\rho_i$ for each $1 \leq i \leq n$ as follows: we first apply a rotation of angle $-(i - 1) \pi / (2 n)$ (effectively rotating the unit circle such that the point $X_i$ is now at $(1, 0)$), we then multiply by the matrix $\begin{bmatrix}
1 & 0 \\ 
0 & 2
\end{bmatrix} $ and apply the Euclidean norm. This norm is given by the formula
\begin{align}
\label{eq:NormsCounterexampleFormula}
\rho_i((x, y)) &= \norm{\begin{bmatrix}
1 & 0 \\ 
0 & 2
\end{bmatrix} \begin{bmatrix}
\cos\left(\frac{(i - 1) \pi}{2 n}\right) & \sin\left(\frac{(i - 1) \pi}{2 n}\right) \\
-\sin\left(\frac{(i - 1) \pi}{2 n}\right) & \cos\left(\frac{(i - 1) \pi}{2 n}\right)
\end{bmatrix} \begin{bmatrix}
x \\
y
\end{bmatrix}}_2 \\
&= \norm{\begin{bmatrix}
\cos\left(\frac{(i - 1) \pi}{2 n}\right) & \sin\left(\frac{(i - 1) \pi}{2 n}\right) \\
-2\sin\left(\frac{(i - 1) \pi}{2 n}\right) & 2\cos\left(\frac{(i - 1) \pi}{2 n}\right)
\end{bmatrix} \begin{bmatrix}
x \\
y
\end{bmatrix}}_2 \nonumber \\
&= \sqrt{\left( 3 \sin^2\left(\frac{(i - 1) \pi}{2 n}\right) + 1 \right) x + \left( 3 \cos^2\left(\frac{(i - 1) \pi}{2 n}\right) + 1 \right) y } . \nonumber
\end{align}

This norm $\rho_i$ is intuitively a norm that gives twice as much importance to the $y$-axis as to the $x$-axis, with the axes rotated by angle $(i - 1) \pi / (2 n)$ prior to applying the norm (without rotating the points). We easily see that $X_i$ is the nearest point to the origin $X$ among $X_1, X_2, \dots, X_n$ in the $\rho_i$ norm. This holds for each of the norms in $\rho_1, \rho_2, \dots, \rho_n$ and the corresponding data point, hence we see that there are $n$ points in the sample that can be the nearest neighbour of the query $X$ for some norm $\rho_1, \rho_2, \dots, \rho_n$ (with a different norm for each point). In Stone's lemma (Lemma \ref{lemma:StonesLemmaCore}), we need that the number of such points is at most $ck$, with $c$ being a fixed constant. Since there are $n$ such points, if we substitute this into the inequality (instead of $c k$) we obtain an upper bound of $\frac{n}{k} E\left[f(X)\right]$, which is not useful for us as $n / k \to \infty$ as $n \to \infty$ (since $k / n \to 0$ as $n \to \infty$). Hence we see that even though the argument in Stone's lemma works for each fixed norm among $\rho_1, \rho_2, \dots, \rho_n$, it does not work for the combined family of all such norms.

The family of norms of the form in equation \eqref{eq:NormsCounterexampleFormula} (containing all such norms for any $n \geq 1$ and $1 \leq i \leq n$) is bounded (in the sense of Theorem \ref{theorem:KnnWithSandwichedNormIsUniversallyConsistent}), since the rotation matrix is an orthogonal matrix and all entries of the fixed diagonal matrix are nonzero, so (by Theorem \ref{theorem:MatrixBasedNormsOrthogonalEuclideanEquality} and Lemmas \ref{lemma:MatrixBoundedFamilyDiagonal} and \ref{lemma:MatrixProductBoundedFamilies}) the family of norms consisting of first applying the rotation, then multiplying by the diagonal matrix and then applying the Euclidean norms is bounded both above and below (that is, for any norm $\rho$ in this family, there exists $C \geq 1$ such that $\frac1C \norm{\vecv}_2 \leq \rho(\vecv) < C \norm{\vecv}$). Therefore, this family of norms cannot be excluded by the boundedness conditions on the family of norms, since it satisfies these conditions of the theorem.
\end{example}

This example above shows that our proof for universal consistency will not work if we replace the independently chosen norm by a norm chosen as some function of the sample data points. To prove universal consistency with the norm as a function of the sample (and possibly the query), we will need to use additional or different techniques. A possible approach would be to show that the probability of a configuration of points such as the one described above occurs with a probability that decreases sufficiently fast as $n$ approaches infinity, for every probability measure on $\R^d$ (the deterministic geometric result would then be replaced by a probabilistic argument).

\section{Necessity of the Boundedness Conditions}

Our theorem requires that there exist norms $\norm{\cdot}_L$ and $\norm{\cdot}_U$ such that for any norm $\rho$ in our family of norms, $\norm{\cdot}_L \preceq \rho \preceq \norm{\cdot}_U$. We now see that both the conditions that the family of norms is bounded from above and from below are necessary.

Suppose we have the probability measure $\mu$ on $\R^2 \times \{0, 1\}$, such that for any $A \subseteq \R^2 \times \{0, 1\}$ (where $\lambda$ is the Lebesgue measure on $\R$):
\begin{align}
\label{eq:BadProbMeasureNormsSequence}
\mu(A) = \frac{1}{2} \lambda(\{ x : (x, 0) \times \{0\} \in A, x \in [0, 1] \}) + \frac{1}{2} \lambda(\{ x : (x, 1) \times \{1\} \in A, x \in [0, 1] \})
\end{align}

That is, there is a line segment $\{(x, 0) \;: x \in [0, 1]\}$ with uniform probability density $1/2$ at $y=0$ with label 0, and a line segment $\{(x, 1) \;: x \in [0, 1]\}$ with uniform probability density $1/2$ at $y=1$ with label 1. Let $X$ be a query and $(X_1, Y_1), \dots, (X_n, Y_n)$ be $n$ iid sample points. We define $X_{i, x}$ to be the $x$ coordinate of the point and $X_{i, y}$ to be the $y$ coordinate of the point $X_i$ (for the point $X$, we define $X_x$ to be the $x$ coordinate and $X_y$ to be the $y$ coordinate). We also define $X_{(i)}$ to be the $i^{\mbox{th}}$ point from $X_1, X_2, \dots, X_n$ in distance from $X$, in a given norm.

Suppose we have a sequence of norms ${(\rho_n)}_{n=1}^\infty$ of the form (with ${(a_n)}_{n=1}^\infty$, ${(b_n)}_{n=1}^\infty$ being two numeric sequences):
\begin{equation}
\label{eq:BadNormSequenceGeneralForm}
\rho_n(\vecv) = \norm{\begin{pmatrix}
a_n & 0 \\
0 & b_n
\end{pmatrix} \vecv}_\infty
\end{equation}

The distance between the query $X$ and a point $X_i$ is
\begin{equation}
\label{eq:BadNormSequenceDistance}
\rho_n(X, X_i) = \max\left\{a_n \left| X_{i, x} - X_x \right|, b_n \left| X_{i, y} - X_y \right| \right\} .
\end{equation}

Suppose that the distance between $X$ and $X_i$ is strictly greater than $b_n$. Since $b_n \left| X_{i, y} - X_y \right| \leq b_n$, it follows that $a_n \left| X_{i, x} - X_x \right|$ is the larger term, and so
\begin{equation}
\label{eq:BadNormSequenceDistanceFar}
\rho_n(X, X_i) = a_n \left| X_{i, x} - X_x \right| \text{ if } \rho_n(X, X_i) > b_n .
\end{equation}

\begin{figure}
\centering
\includegraphics[scale=1]{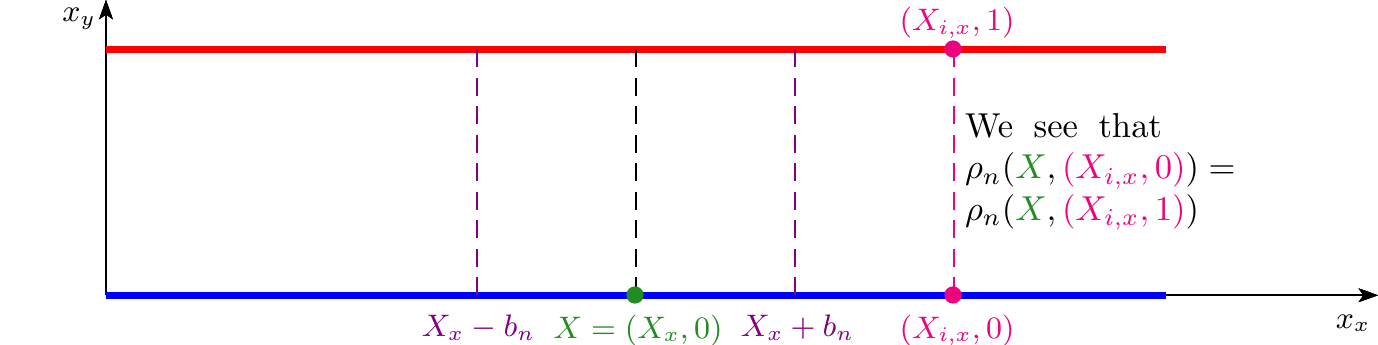}
\caption{If we have a point $X_i$ whose $x$-coordinate differs by more than $b_n$ from $X$, then the $\rho_n$-distance of $X_i$ from $X$ is $\rho_n(X, X_i) = a_n \left| X_{i, x} - X_x \right| $, and does not depend whether the point is on the upper or the lower line segment.}
\label{fig:BadNormsDistances}
\end{figure}

We notice that the strict inequality $\rho_n(X, X_i) > b_n$ holds if and only if $X_{i, x} \in [0, 1] \setminus \left[ X_x - b_n / a_n, X_x + b_n / a_n \right]$ (no matter what $Y_i = X_{i, y}$ is). This means we have equal length intervals of equal probability density on which the condition holds, and so we find that conditioning on this results in a probability of $1 / 2$ of the point being on either segment (and hence a $1 / 2$ probability of each label),
\begin{align}
\label{eq:BadNormsEqualProbSegments}
\Prob\left(Y_i = 0 \middle| \rho_n(X, X_i) > b_n \right) &= \Prob\left(Y_i = 1 \middle| \rho_n(X, X_i) > b_n \right) \\
 &= \frac{1}{2} . \nonumber
\end{align}

If all the points $X_1, X_2, \dots, X_n$ satisfy the property that $\rho_n(X, X_i) > b_n$ (or equivalently, the condition $\rho_n(X, X_{(1)}) > b_n$ holds), then by the formula \ref{eq:BadNormSequenceDistanceFar} we see that the distance of any of the points $X_1, X_2, \dots, X_n$ from $X$ does not depend on $Y_1, Y_2, \dots, Y_n$, and by equation \ref{eq:BadNormsEqualProbSegments} we have that the points are equally likely to be on either segment. We easily see that the order statistics $(X_{(1)}, Y_{(1)}), (X_{(1)}, Y_{(1)}), \dots, (X_{(n)}, Y_{(n)})$ therefore do not depend on which segments the points are on, as long as $\rho_n(X, X_i) > b_n$ holds for each point. From this we see that the order statistics of the points are conditionally independent of the $Y_i$, if $\rho_n(X, X_{(1)}) > b_n$ holds. Indeed, all the order statistics are equally likely to be on either line segment, as long as the $\rho_n(X, X_i) > b_n$ condition holds, that is for all $i \in \{1, 2, \dots, n\}$,
\begin{align}
\label{eq:BadNormsOrderStatisticsConditional}
&\phantom{{}={}}\Prob\left(Y_{(i)} = 0 \middle| \forall j \in \{1, 2, \dots, n\} \; \rho_n(X, X_j) > b_n \right) \\
&= \Prob\left(Y_{(i)} = 1 \middle| \forall j \in \{1, 2, \dots, n\} \; \rho_n(X, X_j) > b_n \right) \nonumber \\
&= \frac{1}{2} . \nonumber
\end{align}

In addition, since the $Y_i$ are iid and are conditionally independent of the distance of the $i^{\text{th}}$ order statistic from the query $X$ (as long as $\rho_n(X, X_{(1)}) > b_n$), we see that the $Y_{(i)}$ are conditionally independent of each other. Hence, if $\rho_n(X, X_{(1)}) > b_n$, then the $Y_{(i)}$ are iid Bernoulli random variables with probability $1/2$ of being 1 and $1/2$ of being 0.

We now need to find a lower bound for the probability that for all $i \in \{1, 2, \dots, n\}$, $\rho_n(X, X_i) > b_n$. We find that
\begin{align*}
&\phantom{{}={}}\Prob(\rho_n(X, X_1) > b_n \text{ and } \ldots \text{ and } \rho_n(X, X_n) > b_n) \\
&= \prod_{i=1}^n \Prob(\rho_n(X, X_i) > b_n) \\
&= {\left(\Prob(\rho_n(X, X_1) > b_n)\right)}^n \\
&= {\left( \Prob\left( X_{1, x} \in [0, 1] \setminus \left[ X_x - \frac{b_n}{a_n}, X_x + \frac{b_n}{a_n} \right] \right) \right)}^n \\
&= {\left( \E\left[ \Prob\left( X_{1, x} \in [0, 1] \setminus \left[ X_x - \frac{b_n}{a_n}, X_x + \frac{b_n}{a_n} \right] \middle| X_x \right) \right] \right)}^n \\
&= {\left( \E\left[ \mu\left( \left([0, 1] \setminus \left[ X_x - \frac{b_n}{a_n}, X_x + \frac{b_n}{a_n} \right]\right) \times \{0, 1\} \right) \right] \right)}^n \\
&\geq {\left( 1 - \frac{2 b_n}{a_n} \right)}^n .
\end{align*}

The resulting bound is
\begin{align}
\label{eq:BadNormsConditionLowerBound}
\Prob(\rho_n(X, X_1) > b_n \text{ and } \ldots \text{ and } \rho_n(X, X_n) > b_n) \geq {\left( 1 - \frac{2 b_n}{a_n} \right)}^n .
\end{align}

Equivalently, we see that probability that the closest order statistic is closer than $b_n$ to the query is bounded from below by ${\left( 1 - \frac{2 b_n}{a_n} \right)}^n$.

Let $k$ be any odd integer between 1 and $n$, $1 \leq k \leq n$. Suppose the query $X$ is on the lower axes, with $Y = 0$ (this occurs with probability $1 / 2$). We then apply $k$-NN with $\rho_n$ as the choice of norm. If $\rho_n(X, X_i) > b_n$ holds for all $1 \leq i \leq n$, then as we have seen above, the $Y_{(i)}$ are iid Bernoulli random variables with $1/2$ probability of being one. Hence the distribution of the fraction of the $k$ nearest points to $X$ that take value one is $C = \frac1k \text{Binomial}(1/2, k)$. We find that for all odd $k$, $\Prob(C \geq 1/2) = 1/2$ (we suppose that $k$ is odd to avoid the case of ties, which slightly complicates our discussion). By symmetry, the argument holds the same way if the query $X$ is on the upper axes, hence the conditional probability of a point being misclassified if $\rho_n(X, X_i) > b_n$ is $1/2$.

Now, suppose we have the sequence of norms that is unbounded above (of the form in equation \eqref{eq:BadNormSequenceGeneralForm} with $a_n = n^2$, $b_n = 1$):
\begin{align}
\label{eq:BadNormSequenceUnboundedAbove}
\rho_n(\vecv) = \norm{\begin{pmatrix}
n^2 & 0 \\
0 & 1
\end{pmatrix} \vecv}_\infty
\end{align}

We then have, by equation \eqref{eq:BadNormsConditionLowerBound}, that
\begin{align*}
\Prob(\rho_n(X, X_1) > 1 \text{ and } \ldots \text{ and } \rho_n(X, X_n) > 1) &\geq {\left( 1 - \frac{2}{n^2} \right)}^n \\
&= \frac{{(n^2 - 2)}^n}{n^{2n}} \\
&= \frac{n^{2n} + \mathcal{O}(n^{2n-1})}{n^{2n}} \to 1 \text{ as } n \to \infty .
\end{align*}

This means that the probability of all of the sample points being at least distance 1 from the query approaches 1, and hence we see that the probability of a query being misclassified approaches $1/2$ as $n \to \infty$. Since the Bayes error is zero here, this means that $k$-NN with this sequence of norms is not consistent with this distribution.

Now, suppose we have the sequence of norms that is not bounded from below by any norm (of the form in equation \eqref{eq:BadNormSequenceGeneralForm} with $a_n = 1$, $b_n = n^2$):
\begin{equation}
\label{eq:BadNormSequenceUnboundedBelow}
\rho_n(\vecv) = \norm{\begin{pmatrix}
1 & 0 \\
0 & 1/n^2
\end{pmatrix} \vecv}_\infty
\end{equation}

Similarly to the previous case, we have by equation \eqref{eq:BadNormsConditionLowerBound} that
\begin{align*}
\Prob(\rho_n(X, X_1) > 1/n^2 \text{ and } \ldots \text{ and } \rho_n(X, X_n) > 1) &\geq {\left( 1 - \frac{2}{n^2} \right)}^n \\
&= \frac{{(n^2 - 2)}^n}{n^{2n}} \\
&= \frac{n^{2n} + \mathcal{O}(n^{2n-1})}{n^{2n}} \to 1 \text{ as } n \to \infty .
\end{align*}

Similarly to the previous case, we see that the misclassification error approaches $1/2$ as $n \to \infty$, and hence $k$-NN with this sequence of norms is not consistent with this distribution $\mu$.

We now see that $k$-NN with the sequences of norms \eqref{eq:BadNormSequenceUnboundedAbove} and \eqref{eq:BadNormSequenceUnboundedBelow} is not universally consistent. Both sequences of norms do not satisfy the boundedness conditions required in Theorem \ref{theorem:KnnWithSandwichedNormIsUniversallyConsistent}. The first sequence \eqref{eq:BadNormSequenceUnboundedAbove} is unbounded above, since if we take the vector $\vecv = (1, 0)$, then $\rho_n(\vecv) = n^2 \to \infty$ as $n \to \infty$. The second sequence \eqref{eq:BadNormSequenceUnboundedBelow} is unbounded below by any norm, since if we take the nonzero vector $\vecw = (0, 1)$, $\rho_n(\vecw) = 1 / n^2 \to 0$ as $n \to \infty$, while the norm of any nonzero vector is nonzero. Hence we see that we need to bound the sequence of norms from above and from below, otherwise universal consistency may not hold.

\section{Failure of the Geometric Stone's Lemma for Quasinorms}

We now give an example that shows that the geometric Stone's Lemma does not hold for the $\ell^p$ quasinorms, with $0 < p < 1$. In particular, we show that it fails for the $\ell^{1/2}$ quasinorm on $\R^2$ (which we denote $\rho$), that no cone can contain both the vector $(1, 0)$ and vectors nearby with nonzero $y$-component and satisfy the property that if $\vecx, \vecy$ are vectors in the cone with $0 < \rho(\vecx) < \rho(\vecy)$, then $\rho(\vecy - \vecx) < \rho(\vecy)$. We further show that this problem cannot be avoided by considering axis vectors separately from vectors that are not on an axis.
\begin{example}
\label{example:FailureOfStonesLemmaForQuasinorms}
Suppose we have the point $\vecy = (1, 0)$ in $\R^2$ and a value $r \in (0, 1)$ such that $(1, r)$ is still in the cone. Any positive multiple of a vector inside the cone is in the cone, so it follows that $\vecx = r (1, r) = (r, r^2)$  is in the cone as well. If we take $\rho$ to be the $\ell^{1/2}$ quasinorm and $0 < r < 1/4$, we find that
\begin{align*}
\rho(\vecx) &= \rho\left(\left( r, r^2 \right)\right) \\
&= {\left( \sqrt{\left|r\right|} + \sqrt{\left|r^2\right|} \right)}^2 \\
&= r^2 + 2 r \sqrt{r} + r \\
&\leq {\left(\frac14\right)}^2 + 2 \left(\frac14\right) \sqrt{\frac14} + \frac14 \\
&= 0.5625 \\
&< 1 = \rho(\vecy).
\end{align*}

\begin{figure}
\centering
\includegraphics[scale=1.5]{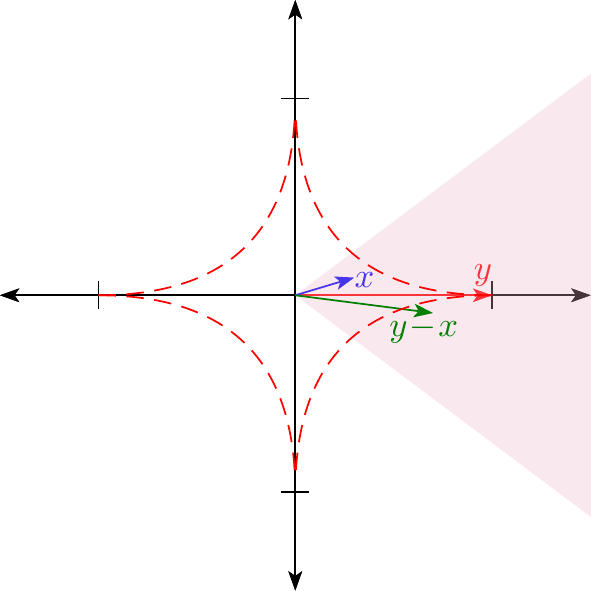}
\caption{In this illustration we have that $\rho(\vecy - \vecx) > \rho(\vecx)$ while $0 < \rho(\vecx) < \rho(\vecy)$ (with $\rho$ being the $\ell^{1/2}$ quasinorm). In this example $\rho(\vecy) = 1$. The dashed curve is the unit sphere (in the $\ell^{1/2}$ quasinorm). We see that the $\vecy - \vecx$ vector lies outside the unit ball. Such pairs of vectors $\vecx, \vecy$ are possible for arbitrarily thin cones around an axis in the $\ell^{1/2}$ quasinorm.}
\label{fig:QuasinormsCounterexample}
\end{figure}

We additionally find that
\begin{align*}
\rho(\vecy - \vecx) &= \rho\left(\left( 1, 0 \right) - \left( r, r^2 \right)\right) \\
&= \rho\left(\left( 1 - r, -r^2 \right) \right) \\
&= {\left( \sqrt{\left|1 - r\right|} + \sqrt{\left| - r^2 \right|} \right)}^2 \\
&= 1 - r + 2r \sqrt{1 - r} + r^2 .
\end{align*}

We now define the function $f(r)$ by subtracting $\rho(\vecy)$ from $\rho(\vecy - \vecx)$,
\begin{equation}
f(r) = r^2 + 2r \sqrt{1 - r} - r .
\end{equation}

We need to show that $f(r) > 0$ for all $r \in (0, 1/4)$. We find that $f(0) = 0$ and the derivative is
\begin{equation}
f'(r) = 2r + \frac{r}{\sqrt{1 - r}} + 2 \sqrt{1 - r} - 1 .
\end{equation}

We see that for all $0 < r < 1/4$, $2r + \frac{r}{\sqrt{1 - r}} \geq 0$, and that
\begin{equation*}
2 \sqrt{1 - r} - 1 \geq 2 \sqrt{1 - \frac14} - 1 \approx 0.732051 .
\end{equation*}

It follows that $f(r) > 0$ for all $r \in (0, 1/4)$, since $f$ is strictly increasing on this interval and $f(0) = 0$. This means that $\rho(\vecy - \vecx) - \rho(\vecy) > 0$, or equivalently $\rho(\vecy - \vecx) > \rho(\vecy)$. We have earlier found that $\rho(\vecx) < \rho(\vecy)$. Hence we see that no cone that satisfies the condition of the geometric Stone's lemma (that if $\vecx, \vecy$ are vectors in the cone with $0 < \rho(\vecx) < \rho(\vecy)$, then $\rho(\vecy - \vecx) < \rho(\vecy)$) can contain both an axis vector and any vector that does not lie on that axis.

We see that $\rho$ is continuous, which means that if $\rho(\vecy - \vecx) > \rho(\vecy)$ and $\rho(\vecx) < \rho(\vecy)$, there exists a neighbourhood of radius $\delta > 0$ around $\vecy$ such that for all $\vecy' \in B_r(\vecy, \norm{\cdot})$, $\rho(\vecy' - \vecx) > \rho(\vecy')$ and $\rho(\vecx) < \rho(\vecy')$. It follows that we can replace $\vecy$ with $\vecy' = (1, \delta / 2)$, which has a nonzero $y$-coordinate. From this we see that simply putting axis vectors in their own category does not fix the above problem, any cone that contains vectors arbitrarily close to an axis has this problem.
\end{example}

By this example, we see that we cannot prove the universal consistency of $k$-NN for quasinorms using the classical approach with Stone's lemma with cones. This does not necessarily mean that $k$-NN is inconsistent with $\ell^p$-quasinorms (indeed, this would be very surprising), it simply means that the geometric Stone's lemma with cones cannot be used to prove this result.

\chapter{$k$-NN with a Sequence of Random Uniformly Locally Lipschitz Functions}

We would like to generalize norms. Suppose we have a function $f: \R^d \to \R$, it can serve as a function to measure the ``distance" between two points as follows: given points $\vecx, \vecy \in \R^d$, we compute $f(\vecx - \vecy)$ and take this to be our distance between $\vecx$ and $\vecy$. We require $f$ to be nonnegative and that $f(\vecx) = 0$ only at $\vecx = 0$. We do not require that $f$ satisfies the triangle inequality or absolute homogeneity (which norms have to satisfy). In particular, we do not require the distance we have just defined (in terms of $f$) to be a metric.

We can now consider $k$-NN with the distance function $f$ (that is, given points $\vecx, \vecy \in \R^d$, we take $f(\vecx - \vecy)$ as the distance between $\vecx$ and $\vecy$ for $k$-NN). Intuitively, any such function that is increasing away from zero is a possible candidate for $k$-NN. We illustrate some examples of such functions in Figure \ref{fig:VariousFunctions}. In this section, we show that $k$-NN is universally consistent with such a function (or more generally, a sequence of such functions independent of the sample and the query) under certain conditions (in particular, that the family of functions we consider is uniformly locally Lipschitz near zero (with respect to the Euclidean norm, or equivalently any other norm on $\R^d$), in addition to a few other conditions). The question of universal consistency of the k-NN under quasinorms remains open, since quasinorms are not necessarily locally Lipschitz near the origin (in particular, this condition does not hold for $\ell^p$ quasinorms on $\R^d$ with $0 < p < 1$, even though the $\ell^p$ quasinorms are uniformly continuous on $\R^d$ with respect to the Euclidean norm).

\begin{figure}[h]
\centering
\includegraphics[scale=.5]{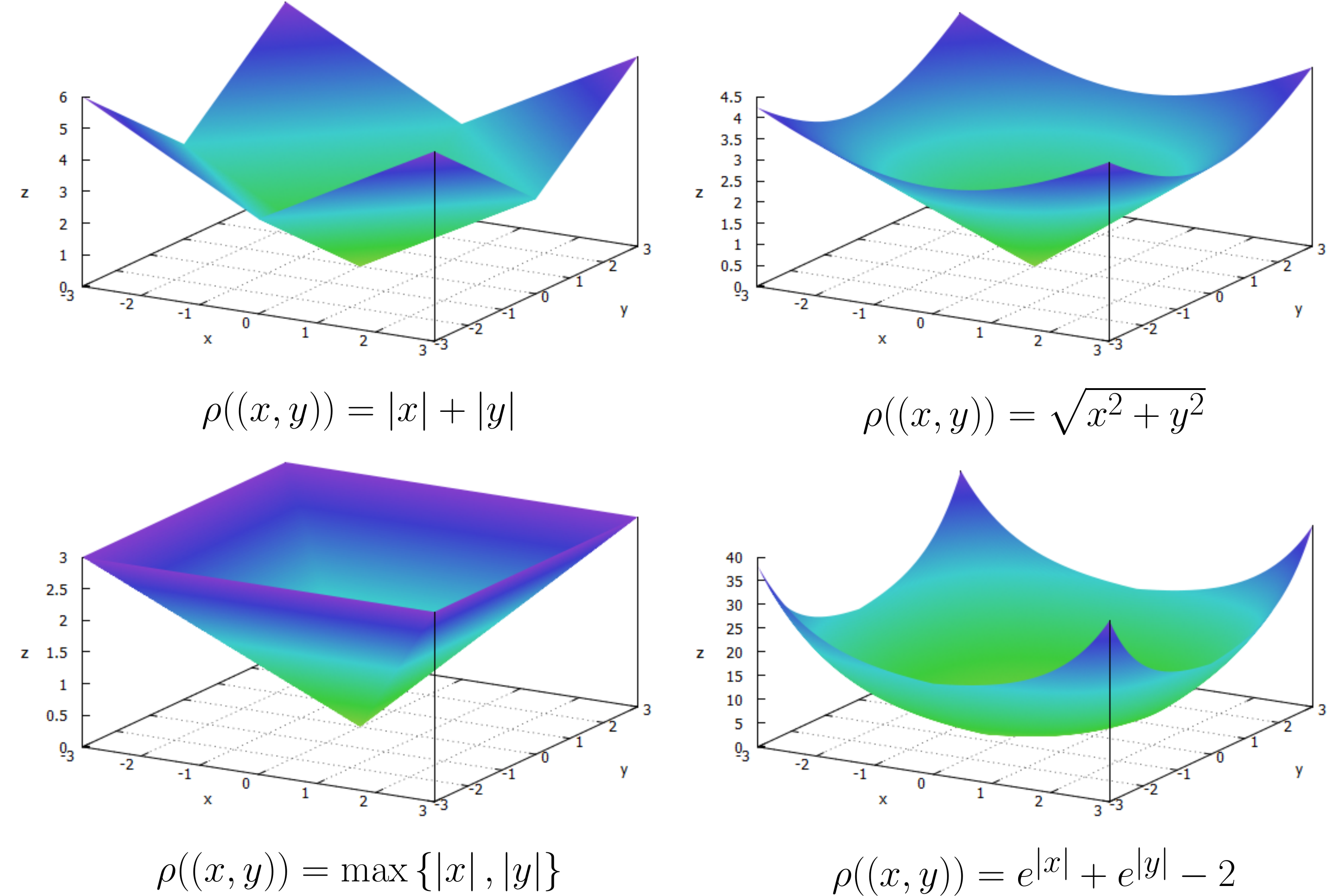} 
\caption{Illustration showing the graphs of the $\ell^1$ norm (top left), $\ell^2$ norm (top right), $\ell^\infty$ norm (bottom left), and $\rho((x, y)) = e^{|x|} + e^{|y|} - 2$ (bottom right) as functions on $\R^2$. We see that all of these functions are increasing as we move away from the origin, and we can use them as distances with $k$-NN (using the method we discussed). We would like to establish that $k$-NN with functions like the one on the bottom right is universally consistent.}
\label{fig:VariousFunctions}
\end{figure}

\section{General Theory}
Let $\F$ be a family of measurable functions on $(\R^d, \norm{\cdot})$ such that there exist constants $\alpha, \beta, \gamma > 0$ and a radius $r > 0$ where:
\begin{enumerate}
\item All $\rho \in F$ are Lipschitz with constant $\alpha$ on the domain $B_r(x)$, that is, for all $\vecx, \vecy \in B_r(x)$,
\begin{equation}
\label{eq:LipschitzFamilyFunctionDef}
\left| \rho(\vecx) - \rho(\vecy) \right| \leq \alpha \norm{\vecx - \vecy} .
\end{equation}
\item For any nonzero vector $\vecv \in B_r(\veczero, \norm{\cdot})$ (with $\vecv \neq \veczero$), if we define $f(\lambda) = \rho(\lambda \vecv)$, then $f'(\lambda) \geq \beta \norm{\vecv}$ for all $0 < \lambda < 1$.\footnote{It should be possible to replace this condition by a similar lower bound with the limit in the derivative replaced by liminf.}
\item Outside of the ball of radius $r$, $\rho$ is bounded from below by $\gamma$, that is, for all $\vecv \in \R^d$ with $\norm{\vecv} \geq r$, $\rho(\vecv) \geq \gamma$.
\item The function $\rho$ is symmetric, $\rho(\vecx) = \rho(-\vecx)$ for all $\vecx \in \R^d$.\footnote{It is almost certainly possible to remove or at least weaken this condition.}
\item The function $\rho$ takes value zero at the point zero, $\rho(\veczero) = 0$.
\end{enumerate}
We may assume without loss of generality that $\alpha \geq 1$ and that $\beta \leq 1$, this simplifies some of our proofs.

We show that $k$-NN is universally consistent with any sequence of functions from the family $\F$ that is independent of the sample and the query (like Theorem \ref{theorem:KnnWithSandwichedNormIsUniversallyConsistent}, but with family of norms replaced by a family of uniformly locally Lipschitz functions). We do this by showing that the conditions of Stone's Theorem (Theorem \ref{theorem:StonesTheorem}) are satisfied. For a query $X$ at step $n$, we define $X_{(1, \rho)}, X_{(2, \rho)}, \dots, X_{(n, \rho)}$ to be the order statistics in increasing distance from $X$, with $\rho$ as the distance. We define the weight function to be
\begin{equation}
\label{eq:LipschitzFamilyWeightFunction}
W_{ni}(X) = \begin{dcases}
\frac1k & \text{if } X_i \in \{X_{(1, \rho)}, X_{(2, \rho)}, \dots, X_{(k, \rho)}\} \\
0 & \text{otherwise} .
\end{dcases}
\end{equation}

\begin{lemma}
\label{lemma:LipschitzOneSidedTriangleInequality}
Let $f: \R^d \to \R$ be a Lipschitz continuous function on $B_r(\veczero, \norm{\cdot})$ with Lipschitz constant $\alpha$. Then for all $\vecx, \vecy$ such that $\norm{\vecx} \leq r$, $\norm{\vecy} \leq r$, and $\norm{\vecx + \vecy} \leq r$, we have the inequality
\begin{equation}
\left| f(\vecx + \vecy) \right| \leq \left| f(\vecx) \right| + \alpha \norm{\vecy} .
\end{equation}
\end{lemma}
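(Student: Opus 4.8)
The statement is essentially a one-sided triangle inequality for Lipschitz kernels, and the plan is to deduce it from the Lipschitz bound applied to the single pair of points $\vecx$ and $\vecx + \vecy$, combined with the reverse triangle inequality for the absolute value. First I would observe that, since $(\vecx + \vecy) - \vecx = \vecy$, both $\vecx$ and $\vecx + \vecy$ lie in the ball of radius $r$ about $\veczero$ (by the three hypotheses $\norm{\vecx} \leq r$, $\norm{\vecx + \vecy} \leq r$), so the Lipschitz estimate \eqref{eq:LipschitzFamilyFunctionDef} gives
\begin{equation*}
\left| f(\vecx + \vecy) - f(\vecx) \right| \leq \alpha \norm{(\vecx + \vecy) - \vecx} = \alpha \norm{\vecy} .
\end{equation*}

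Next I would apply the elementary inequality $|a| \leq |b| + |a - b|$ with $a = f(\vecx + \vecy)$ and $b = f(\vecx)$, which yields
\begin{equation*}
\left| f(\vecx + \vecy) \right| \leq \left| f(\vecx) \right| + \left| f(\vecx + \vecy) - f(\vecx) \right| \leq \left| f(\vecx) \right| + \alpha \norm{\vecy} ,
\end{equation*}
which is exactly the claimed bound.

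The only genuinely delicate point is that the Lipschitz hypothesis in \eqref{eq:LipschitzFamilyFunctionDef} is stated on the open ball $B_r(\veczero, \norm{\cdot})$, whereas the hypotheses here allow $\norm{\vecx} = r$ or $\norm{\vecx + \vecy} = r$. I would handle this by noting that a function Lipschitz with constant $\alpha$ on an open ball is automatically Lipschitz with the same constant on its closure: for boundary points one takes sequences from the open ball converging to them and passes to the limit in \eqref{eq:LipschitzFamilyFunctionDef}, using continuity of $\rho$ (equivalently, one simply works with the unique Lipschitz extension). Thus the estimate on the pair $\vecx$, $\vecx + \vecy$ is valid even when one or both of these lie on the sphere of radius $r$, and the rest of the argument goes through verbatim. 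This boundary bookkeeping is the main (and only) obstacle; everything else is the standard triangle-inequality manipulation above.
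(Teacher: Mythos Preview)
Your proof is correct and follows essentially the same approach as the paper: apply the Lipschitz bound to the pair $\vecx$ and $\vecx+\vecy$, then use the reverse triangle inequality $|a|\le|b|+|a-b|$ to conclude. Your treatment of the open-versus-closed ball issue is in fact more careful than the paper's, which simply asserts $\vecx,\vecx+\vecy\in B_r(\veczero,\norm{\cdot})$ without comment.
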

\begin{proof}
We notice that since $f$ has Lipschitz constant $\alpha$ on $B_r(\veczero, \norm{\cdot})$, for all $\vecx', \vecy' \in B_r(\veczero, \norm{\cdot})$,
\begin{equation}
\left| f(\vecx') - f(\vecy') \right| \leq \alpha \norm{\vecx' - \vecy'} .
\end{equation}

By the reverse triangle inequality, we have that $\left|f(\vecx')\right| - \left|f(\vecy')\right| \leq \left|f(\vecx') - f(\vecy')\right|$, so it follows that
\begin{equation}
\left|f(\vecx')\right| - \left|f(\vecy')\right| \leq \alpha \norm{\vecx' - \vecy'} .
\end{equation}

Rearranging, we find
\begin{equation}
\left|f(\vecx')\right| \leq \left|f(\vecy')\right| + \alpha\norm{\vecx' - \vecy'} .
\end{equation}

We conclude the proof by taking $\vecx' = \vecx + \vecy$ and $\vecy' = \vecx$. By assumption, both $\vecx' = \vecx + \vecy \in B_r(\veczero, \norm{\cdot})$ and $\vecy' = \vecx \in B_r(\veczero, \norm{\cdot})$. Hence we find
\begin{equation}
\left|f(\vecx + \vecy)\right| \leq \left|f(\vecx)\right| + \alpha \norm{\vecy} .
\end{equation}
\end{proof}

We now define an inner radius $\delta$ by
\begin{equation}
\label{eq:LipschitzFamilyInnerRadius}
\delta = \frac{1}{4 \alpha} \min\{r, \gamma\} .
\end{equation}

We see that $0 < \delta \leq r / 4$ and $0 < \delta \leq \gamma / 4$.

\begin{lemma}
\label{lemma:LipschitzFamilyFunctionIsContinuous}
For any $\rho \in \F$ and $\vecx \in B_\delta(\veczero, \norm{\cdot})$, the function $f: [0, 1] \to \R$, $f(\lambda) = \rho(\lambda \vecx)$ is continuous on $[0, 1]$, with $f(0) = 0$.
\end{lemma}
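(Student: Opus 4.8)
The plan is to show that $f$ is in fact Lipschitz on $[0,1]$ (hence continuous) by transporting the Lipschitz bound on $\rho$ along the ray $\lambda \mapsto \lambda\vecx$. The key point is that this whole ray stays inside the ball of radius $r$ where $\rho$ is Lipschitz with constant $\alpha$.

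First I would observe that since $\vecx \in B_\delta(\veczero, \norm{\cdot})$ and $\delta = \frac{1}{4\alpha}\min\{r,\gamma\} \le r/4 < r$, for every $\lambda \in [0,1]$ we have $\norm{\lambda\vecx} = \lambda\norm{\vecx} \le \norm{\vecx} < \delta < r$, so $\lambda\vecx \in B_r(\veczero, \norm{\cdot})$ for all $\lambda \in [0,1]$. Hence for any $\lambda, \lambda' \in [0,1]$, both $\lambda\vecx$ and $\lambda'\vecx$ lie in the ball on which $\rho$ satisfies the Lipschitz bound \eqref{eq:LipschitzFamilyFunctionDef}, so
\begin{equation*}
\left| f(\lambda) - f(\lambda') \right| = \left| \rho(\lambda\vecx) - \rho(\lambda'\vecx) \right| \le \alpha \norm{\lambda\vecx - \lambda'\vecx} = \alpha \norm{\vecx}\, \left| \lambda - \lambda' \right| \le \alpha\delta \left| \lambda - \lambda' \right| .
\end{equation*}
Thus $f$ is Lipschitz continuous on $[0,1]$, and in particular continuous. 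Finally, $f(0) = \rho(\veczero) = 0$ by the fifth defining property of the family $\F$, which finishes the proof.

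The only subtlety worth flagging is the ambiguity in condition~1 of the definition of $\F$ between the ball "$B_r(x)$" as written and the centered ball $B_r(\veczero, \norm{\cdot})$; I would read it as the centered ball (this is how it is used in Lemma~\ref{lemma:LipschitzOneSidedTriangleInequality}), and with that reading there is no real obstacle here — the argument is a direct one-line consequence of the Lipschitz hypothesis together with the fact that $\delta < r$. No deeper idea is needed.
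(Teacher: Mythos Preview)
Your argument is correct and follows essentially the same idea as the paper's proof: both use that $\rho$ is Lipschitz (hence continuous) on the ball $B_r(\veczero,\norm{\cdot})$, together with $\rho(\veczero)=0$. The paper compresses this into ``$f$ is a composition of continuous functions,'' whereas you spell out explicitly that the ray $\lambda\mapsto\lambda\vecx$ stays inside $B_r(\veczero,\norm{\cdot})$ (since $\delta<r$) and derive an explicit Lipschitz constant for $f$; this extra care is welcome but the approaches are the same.
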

\begin{proof}
We have that $f$ is the composition of continuous functions (multiplication by a constant, followed by the Lipschitz function $\rho$), hence $f$ is continuous. In addition, since $\rho(\veczero) = 0$, $f(0) = 0$.
\end{proof}

\begin{lemma}
\label{lemma:LipschitzFamilyInnerRadius}
We let $\rho \in \F$ be an arbitrary function in $\F$. For all $\vecx \in B_\delta(\veczero, \norm{\cdot})$, let $\beta \norm{\vecx} \leq \rho(\vecx) \leq \alpha \norm{\vecx}$ and $\rho(\vecx) < \gamma / 4$. Then for all $\vecy \in \R^d$ such that $\norm{\vecy} \geq \delta$, we have $\rho(\vecy) \geq \beta \delta$.
\end{lemma}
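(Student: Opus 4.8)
The plan is to establish the inner‑ball estimates on $B_\delta(\veczero,\norm{\cdot})$ from the defining properties of $\F$ and the choice of $\delta$ in \eqref{eq:LipschitzFamilyInnerRadius}, and then prove the exterior lower bound $\rho(\vecy)\ge\beta\delta$ by splitting on whether $\norm{\vecy}$ lies below or above the outer radius $r$: for $\norm{\vecy}<r$ I would integrate the ray‑derivative bound of condition~2 along the segment $[\veczero,\vecy]$, and for $\norm{\vecy}\ge r$ I would invoke the coercivity bound $\gamma$ of condition~3.

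First I would record the estimates on the inner ball. Fix $\vecx\in B_\delta(\veczero,\norm{\cdot})$; since $\delta\le r/4$ this forces $\vecx\in B_r(\veczero,\norm{\cdot})$. If $\vecx\ne\veczero$, condition~2 applied with $\vecv=\vecx$ makes $f(\lambda)=\rho(\lambda\vecx)$ continuous on $[0,1]$ (Lemma~\ref{lemma:LipschitzFamilyFunctionIsContinuous}), equal to $0$ at $\lambda=0$, and differentiable on $(0,1)$ with $f'(\lambda)\ge\beta\norm{\vecx}$; the mean value theorem then gives $\rho(\vecx)=f(1)-f(0)=f'(\xi)\ge\beta\norm{\vecx}$, and this also holds trivially at $\vecx=\veczero$. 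Combined with condition~1 and $\rho(\veczero)=0$ (condition~5), which give $\rho(\vecx)=\left|\rho(\vecx)-\rho(\veczero)\right|\le\alpha\norm{\vecx}$, we get $\beta\norm{\vecx}\le\rho(\vecx)\le\alpha\norm{\vecx}$; and since $\norm{\vecx}<\delta\le\gamma/(4\alpha)$ by \eqref{eq:LipschitzFamilyInnerRadius}, also $\rho(\vecx)\le\alpha\norm{\vecx}<\gamma/4$, which are the inequalities in the statement.

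For the main conclusion, take $\vecy\in\R^d$ with $\norm{\vecy}\ge\delta$. If $\delta\le\norm{\vecy}<r$, then $\vecy$ is a nonzero vector of $B_r(\veczero,\norm{\cdot})$ and the whole segment $[\veczero,\vecy]$ lies in $B_r(\veczero,\norm{\cdot})$, so $f(\lambda)=\rho(\lambda\vecy)$ is continuous on $[0,1]$ (a composition of $\lambda\mapsto\lambda\vecy$ with $\rho$, continuous on $B_r(\veczero,\norm{\cdot})$ by condition~1), with $f(0)=0$ and $f'(\lambda)\ge\beta\norm{\vecy}$ on $(0,1)$ by condition~2; the mean value theorem yields $\rho(\vecy)\ge\beta\norm{\vecy}\ge\beta\delta$. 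If instead $\norm{\vecy}\ge r$, then condition~3 gives $\rho(\vecy)\ge\gamma$, and since we may assume $\alpha\ge1$ and $\beta\le1$ we have $\beta\delta=\frac{\beta}{4\alpha}\min\{r,\gamma\}\le\frac14\gamma\le\gamma\le\rho(\vecy)$. In both cases $\rho(\vecy)\ge\beta\delta$, which completes the proof.

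The only genuine subtlety — the ``hard part,'' modest as it is — is recognizing that condition~2 controls $\rho$ only on $B_r(\veczero,\norm{\cdot})$, so one cannot integrate the ray‑derivative estimate out to an arbitrary $\vecy$; the case split at radius $r$, delegating the exterior to condition~3, is exactly what bridges this, and one should double‑check the normalization $\beta\delta\le\gamma$ that is built into the choice $\delta=\frac{1}{4\alpha}\min\{r,\gamma\}$ together with the conventions $\alpha\ge1$ and $\beta\le1$.
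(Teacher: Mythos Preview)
Your proof is correct and follows essentially the same approach as the paper's: both establish the inner-ball bounds via the mean value theorem applied to $f(\lambda)=\rho(\lambda\vecx)$ together with the Lipschitz condition, and both handle the exterior bound by the same case split at radius $r$, using the ray-derivative estimate for $\delta\le\norm{\vecy}<r$ and condition~3 for $\norm{\vecy}\ge r$. The only cosmetic difference is in the chain of inequalities for the latter case (the paper writes $\rho(\vecy)\ge\gamma\ge\beta\gamma\ge\beta\delta$, while you write $\beta\delta\le\tfrac14\gamma\le\gamma\le\rho(\vecy)$), but these are equivalent.
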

\begin{proof}
We let $\vecx \in B_\delta(\veczero, \norm{\cdot})$. If $\vecx = \veczero$, then both $\norm{\vecx} = 0$ and $\rho(\vecx) = 0$ and we are done. We now suppose that $\vecx \neq \veczero$. We define $f(\lambda) = \rho(\lambda \vecx)$, by assumption we have that $f'(\lambda) \geq \beta \norm{\vecx}$ for all $\lambda \in (0, 1)$. By Lemma \ref{lemma:LipschitzFamilyFunctionIsContinuous}, $f$ is continuous on $[0, 1]$, with $f(0) = \rho(\veczero) = 0$ and $f(1) = \rho(\vecx)$. By the mean value theorem, there exists a $c \in (0, 1)$ such that $f(1) - f(0) = f'(c) (1 - 0)$, which implies that $f'(c) = f(1)$. We then find that
\begin{align*}
\rho(\vecx) &= f(1) \\
&= f'(c) \\
&\geq \beta \norm{\vecx} .
\end{align*}

Since $\rho$ is Lipschitz continuous on $B_\delta(\veczero, \norm{\cdot})$ with Lipschitz constant $\alpha$, we also have that $\rho(\vecx) - \rho(\veczero) \leq \alpha (\norm{\vecx} - \norm{\veczero})$, which implies (with $\rho(\veczero) = 0$) that $\rho(\vecx) \leq \alpha \norm{\vecx}$. Hence we have that $\beta \norm{\vecx} \leq \rho(\vecx) \leq \alpha \norm{\vecx}$. We additionally notice that $\norm{\vecx} < \delta$ and so 
\begin{align*}
\rho(\vecx) &\leq \alpha \norm{\vecx} \\
&< \alpha \delta \\
&\leq \alpha \frac{\gamma}{4 \alpha} \\
&= \frac{\gamma}{4} .
\end{align*}

We let $\vecy \in \R^d$ be a point not in the open ball $B_\delta(\veczero, \norm{\cdot})$, so that $\norm{\vecy} \geq \delta$. If $\norm{\vecy} \geq r$, then $\rho(\vecy) \geq \gamma \geq \beta \gamma \geq \beta \delta$, and we are done. If $\delta \leq \norm{\vecy} < r$, then we let $f(\lambda) = \rho(\lambda \vecy)$, with $f$ being differentiable on $(0, 1)$ with $f'(\lambda) \geq \beta \norm{\vecy}$ for all $\lambda \in (0, 1)$. We also have (by Lemma \ref{lemma:LipschitzFamilyFunctionIsContinuous}) that $f$ is continuous on $[0, 1]$, $f(0) = 0$, and $f(1) = \rho(\vecy)$. By the mean value theorem there exists a $c \in (0, 1)$ such that $f(1) = f'(c)$. We then find that
\begin{align*}
\rho(\vecy) &= f(1) \\
&= f'(c) \\
&\geq \beta \norm{\vecy} \\
&\geq \beta \delta .
\end{align*}

We conclude that if $\norm{\vecy} \geq \delta$, then $\rho(\vecy) \geq \beta \delta$.
\end{proof}

\begin{lemma}
\label{lemma:LipschitzFamilyWeakTriangleInequality}
For all points within $B_\delta(\veczero, \norm{\cdot})$, a triangle inequality with a $\alpha / \beta$ multiplicative constant holds, that is if $\norm{\vecx} < \delta$ and $\norm{\vecy} < \delta$, then
\begin{equation}
\rho(\vecx + \vecy) \leq \frac{\alpha}{\beta} (\rho(\vecx) + \rho(\vecy)) .
\end{equation}
\end{lemma}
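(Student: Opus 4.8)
The plan is to combine the two structural facts established just above---the one-sided triangle inequality of Lemma \ref{lemma:LipschitzOneSidedTriangleInequality} and the two-sided comparison $\beta\norm{\cdot}\leq\rho(\cdot)\leq\alpha\norm{\cdot}$ extracted inside the proof of Lemma \ref{lemma:LipschitzFamilyInnerRadius}---after first checking that the relevant points all lie inside the Lipschitz ball $B_r(\veczero,\norm{\cdot})$.

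First I would observe that if $\norm{\vecx}<\delta$ and $\norm{\vecy}<\delta$, then by the triangle inequality for $\norm{\cdot}$ we have $\norm{\vecx+\vecy}<2\delta\leq r/2<r$, using $\delta\leq r/4$ from \eqref{eq:LipschitzFamilyInnerRadius}. Hence $\vecx$, $\vecy$, and $\vecx+\vecy$ all lie in $B_r(\veczero,\norm{\cdot})$, where $\rho$ is Lipschitz with constant $\alpha$; moreover $\rho$ is nonnegative on this ball (along any ray through $\veczero$, $\lambda\mapsto\rho(\lambda\vecv)$ starts at $\rho(\veczero)=0$ and has nonnegative derivative by the second defining property of $\F$). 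Applying Lemma \ref{lemma:LipschitzOneSidedTriangleInequality} with the point $\vecx+\vecy$ playing the role of $\vecx'$ and $\vecx$ playing the role of $\vecy'$, and dropping absolute values by nonnegativity, gives
\begin{equation*}
\rho(\vecx+\vecy)\leq\rho(\vecx)+\alpha\norm{\vecy}.
\end{equation*}

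Next I would convert the term $\alpha\norm{\vecy}$ into one involving $\rho(\vecy)$. Since $\norm{\vecy}<\delta$, the argument in the proof of Lemma \ref{lemma:LipschitzFamilyInnerRadius} (mean value theorem applied to $f(\lambda)=\rho(\lambda\vecy)$, whose derivative is at least $\beta\norm{\vecy}$) yields $\rho(\vecy)\geq\beta\norm{\vecy}$, so $\norm{\vecy}\leq\rho(\vecy)/\beta$. Substituting,
\begin{equation*}
\rho(\vecx+\vecy)\leq\rho(\vecx)+\frac{\alpha}{\beta}\rho(\vecy).
\end{equation*}
Finally, using the normalization $\alpha\geq1\geq\beta$ so that $\alpha/\beta\geq1$, we bound $\rho(\vecx)\leq(\alpha/\beta)\rho(\vecx)$ and conclude $\rho(\vecx+\vecy)\leq\tfrac{\alpha}{\beta}\big(\rho(\vecx)+\rho(\vecy)\big)$.

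There is no serious obstacle here: the argument is a short assembly of the preceding two lemmas. The only point requiring mild care is the bookkeeping on radii---verifying that $\vecx+\vecy$ stays within $B_r(\veczero,\norm{\cdot})$ so that Lemma \ref{lemma:LipschitzOneSidedTriangleInequality} is applicable---together with the observation that $\rho$ is nonnegative near the origin, which is what lets us discard the absolute values in that lemma and in the comparison $\rho(\vecy)\geq\beta\norm{\vecy}$.
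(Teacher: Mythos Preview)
Your argument is correct. The paper's proof takes a slightly more direct route: rather than invoking Lemma \ref{lemma:LipschitzOneSidedTriangleInequality}, it uses the upper comparison $\rho(\vecz)\leq\alpha\norm{\vecz}$ (valid for any $\vecz\in B_r(\veczero,\norm{\cdot})$ since $\rho(\veczero)=0$ and $\rho$ is $\alpha$-Lipschitz there) directly at $\vecz=\vecx+\vecy$, then the ordinary triangle inequality for $\norm{\cdot}$, then the lower comparison $\norm{\vecx}\leq\rho(\vecx)/\beta$ and $\norm{\vecy}\leq\rho(\vecy)/\beta$ from Lemma \ref{lemma:LipschitzFamilyInnerRadius}. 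This treats $\vecx$ and $\vecy$ symmetrically and avoids the final appeal to $\alpha/\beta\geq1$. Your detour through Lemma \ref{lemma:LipschitzOneSidedTriangleInequality} is harmless but unnecessary here; on the other hand, your explicit check that $\vecx+\vecy\in B_r(\veczero,\norm{\cdot})$ is a point the paper's proof glosses over (the upper bound $\rho\leq\alpha\norm{\cdot}$ is only recorded in Lemma \ref{lemma:LipschitzFamilyInnerRadius} for points of $B_\delta$, whereas $\vecx+\vecy$ lies in $B_{2\delta}\subseteq B_r$, so one really needs the Lipschitz property on the larger ball).
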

\begin{proof}
We see that for all $\vecx, \vecy \in B_\delta(\veczero, \norm{\cdot})$, by Lemma \ref{lemma:LipschitzFamilyInnerRadius},
\begin{align*}
\rho(\vecx + \vecy) &\leq \alpha \norm{\vecx + \vecy} \\
&\leq \alpha (\norm{\vecx} + \norm{\vecy}) \\
&\leq \alpha \left(\frac{\rho(\vecx)}{\beta} + \frac{\rho(\vecy)}{\beta}\right) \\
&= \frac{\alpha}{\beta} \left(\rho(\vecx) + \rho(\vecy)\right) .
\end{align*}
\end{proof}

\begin{lemma}
\label{lemma:LipschitzFamilyKnnDistanceInequality}
Suppose we have an iid query and sample points $X, X_1, X_2, \dots, X_n$. We let $X_{(1, \norm{\cdot})}, X_{(2, \norm{\cdot})}, \dots, X_{(n, \norm{\cdot})}$ be the points in increasing distance from $X$ with respect to the norm $\norm{\cdot}$ and $X_{(1, \rho)}, X_{(2, \rho)}, \dots, X_{(n, \rho)}$ be the points in increasing distance from $X$ with respect to the $\rho$ distance, for any function $\rho \in \F$. We define the weight function $W_{ni}(X)$ to be $1 / k$ if $X_i$ is one of the $k$-nearest neighbours of $X$ in the $\rho$ distance (that is, if $X \in \{X_{(1, \rho)}, X_{(2, \rho)}, \dots, X_{(k, \rho)}\}$, and to be zero otherwise. For any $a > 0$ and $\rho \in \F$,
\begin{equation}
\label{eq:LipschitzFamilyKnnDistanceInequality}
\E \left[ \sum_{i=1}^n W_{ni} (X) \mathds{1}_{\{\norm{X_i - X} > a\}} \right] \leq \Prob\left( \norm{X_{\left(k, \norm{\cdot}\right)} - X} > \frac{\beta}{2 \alpha} \min\{a, \delta\}\right) .
\end{equation}
\end{lemma}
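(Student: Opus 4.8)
The plan is to follow the same strategy as the proof of Lemma~\ref{lemma:KnnDistanceBoundedFamilyNormInequality}, but to replace the two-sided comparison between two norms by the local comparison between $\norm{\cdot}$ and $\rho$ that is provided by Lemma~\ref{lemma:LipschitzFamilyInnerRadius} (namely $\beta\norm{\vecx}\le\rho(\vecx)\le\alpha\norm{\vecx}$ for $\norm{\vecx}<\delta$, and $\rho(\vecy)\ge\beta\delta$ for $\norm{\vecy}\ge\delta$). Set $t=\frac{\beta}{2\alpha}\min\{a,\delta\}$; since $\alpha\geq 1$ and $\beta\leq 1$ we have $t\leq\frac{\delta}{2}<\delta$. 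The term inside the expectation, $\sum_{i=1}^n W_{ni}(X)\Indicator_{\{\norm{X_i-X}>a\}}$, is nonnegative and bounded above by $\sum_{i=1}^n W_{ni}(X)=1$, so its expectation is at most the probability that it is nonzero. It therefore suffices to show that on the event $\{\norm{X_{(k,\norm{\cdot})}-X}\leq t\}$ this term vanishes.

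First I would note that on that event $\norm{X_{(i,\norm{\cdot})}-X}\leq t<\delta$ for every $i\leq k$, so $X_{(i,\norm{\cdot})}-X\in B_\delta(\veczero,\norm{\cdot})$ and Lemma~\ref{lemma:LipschitzFamilyInnerRadius} gives $\rho(X_{(i,\norm{\cdot})}-X)\leq\alpha\norm{X_{(i,\norm{\cdot})}-X}\leq\alpha t=\frac{\beta}{2}\min\{a,\delta\}$. Thus there are at least $k$ sample points whose $\rho$-distance to $X$ is at most $\frac{\beta}{2}\min\{a,\delta\}$. Next I would show that any $X_j$ with $\norm{X_j-X}>a$ satisfies $\rho(X_j-X)>\frac{\beta}{2}\min\{a,\delta\}$, by case analysis: if $\norm{X_j-X}\geq\delta$, then $\rho(X_j-X)\geq\beta\delta>\frac{\beta}{2}\delta\geq\frac{\beta}{2}\min\{a,\delta\}$; and if $\norm{X_j-X}<\delta$, then $a<\delta$, so $\min\{a,\delta\}=a$, and since $X_j-X\in B_\delta(\veczero,\norm{\cdot})$ we get $\rho(X_j-X)\geq\beta\norm{X_j-X}>\beta a>\frac{\beta}{2}\min\{a,\delta\}$. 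In either case $X_j$ has strictly larger $\rho$-distance to $X$ than each of $X_{(1,\norm{\cdot})},\dots,X_{(k,\norm{\cdot})}$, so $X_j$ cannot be one of the $k$ nearest neighbours of $X$ in the $\rho$-distance; the inequalities being strict, no distance-tie-breaking is needed. Hence $W_{nj}(X)=0$ for every $j$ with $\norm{X_j-X}>a$.

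Combining these observations, $\{\norm{X_{(k,\norm{\cdot})}-X}\leq t\}$ implies $\sum_{i=1}^n W_{ni}(X)\Indicator_{\{\norm{X_i-X}>a\}}=0$, so
\begin{align*}
\E\left[\sum_{i=1}^n W_{ni}(X)\Indicator_{\{\norm{X_i-X}>a\}}\right]
&\leq \Prob\left(\sum_{i=1}^n W_{ni}(X)\Indicator_{\{\norm{X_i-X}>a\}}\neq 0\right) \\
&\leq \Prob\left(\norm{X_{(k,\norm{\cdot})}-X}>\frac{\beta}{2\alpha}\min\{a,\delta\}\right),
\end{align*}
which is exactly \eqref{eq:LipschitzFamilyKnnDistanceInequality}.

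I do not expect a serious obstacle here: the argument is a direct adaptation of Lemma~\ref{lemma:KnnDistanceBoundedFamilyNormInequality}, with the extra wrinkle that outside $B_\delta$ we only have the crude lower bound $\rho(\vecy)\ge\beta\delta$ rather than $\rho(\vecy)\ge\beta\norm{\vecy}$, which is why the threshold involves $\min\{a,\delta\}$ instead of $a$. The only point requiring care is to keep every comparison strict through the case split so that the $k$-NN selection (and its tie-breaking) is genuinely unaffected; the normalizations $\alpha\geq 1$ and $\beta\leq 1$ are precisely what make the constant $\frac{\beta}{2\alpha}$ come out, so the bookkeeping of $\alpha,\beta,\delta$ is otherwise routine.
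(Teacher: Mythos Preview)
Your proposal is correct and follows essentially the same approach as the paper's proof: bound the expectation by the probability the sum is nonzero, then show that when $\norm{X_{(k,\norm{\cdot})}-X}\le\frac{\beta}{2\alpha}\min\{a,\delta\}$ the $k$ nearest $\norm{\cdot}$-neighbours all have $\rho$-distance at most $\frac{\beta}{2}\min\{a,\delta\}$, while any point with $\norm{X_j-X}>a$ has strictly larger $\rho$-distance via the same two-case analysis on whether $\norm{X_j-X}\ge\delta$. Your write-up is in fact slightly cleaner than the paper's, which contains a harmless typo (writing $X_{(i,\rho)}$ where $X_{(i,\norm{\cdot})}$ is meant) in the first displayed chain.
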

\begin{proof}
We show that if the condition on the right hand side of \eqref{eq:LipschitzFamilyKnnDistanceInequality} does not hold, that is, if $\norm{X_{\left(k, \norm{\cdot}\right)} - X} \leq \frac{\beta}{2 \alpha} \min\{a, \delta\}$, then $\sum_{i=1}^n W_{ni} (X) \mathds{1}_{\{\norm{X_i - X} > a\}} = 0$. We assume that $\norm{X_{\left(k, \norm{\cdot}\right)} - X} \leq \frac{\beta}{2 \alpha} \min\{a, \delta\}$, since this is less than $\delta$ by assumption we can use our above results. We first define $a' = \min\{a, \delta\}$. We let $i \in \{1, 2, \dots, k\}$ and observe that
\begin{align*}
\rho(X - X_{(i, \rho)}) &\leq \alpha \norm{X - X_{(i, \rho)}} \\
&\leq \alpha \norm{X - X_{(k, \rho)}} \\
&\leq \alpha \frac{\beta}{2\alpha} a' \\
&= \frac{\beta}{2} a' .
\end{align*}

This means there are at least $k$ points such that the $\rho$-distance from $X$ to the point is at most $\frac{\beta}{2} a'$. Suppose we have a point $X_j$ such that $\norm{X - X_j} > a'$. There are two possible cases: either $\norm{X - X_j} \geq \delta$ or $a' < \norm{X - X_j} < \delta$ (only the first case is possible if $a' = \delta$). If $\norm{X - X_j} \geq \delta$, by Lemma \ref{lemma:LipschitzFamilyInnerRadius} we have that $\rho(X - X_j) \geq \beta \delta \geq \beta a'$. If $a' < \norm{X - X_j} < \delta$, then $\rho(X - X_j) \geq \beta \norm{X - X_j} > \beta a'$. In both cases $\rho(X - X_j) \geq \beta a' > \frac{\beta}{2} a' \geq \rho(X - X_{(i, \rho)})$, for each $1 \leq i \leq k$. From this we see that there are at least $k$ points closer to $X$ than $X_j$ in the $\rho$ distance.

From this we see that if $\norm{X_{\left(k, \norm{\cdot}\right)} - X} \leq \frac{\beta}{2 \alpha} \min\{a, \delta\}$ holds, then for all points $X_i$ such that $\norm{X_i - X} > a$, $X_i$ is not a $k$-nearest neighbour of $X$ in the $\rho$ distance, and since $W_{ni}(X)$ is only nonzero for the $k$-nearest neighbours of $X$ in the $\rho$ distance, it follows that
\begin{equation*}
\sum_{i=1}^n W_{ni} (X) \mathds{1}_{\{\norm{X_i - X} > a\}} = 0 .
\end{equation*}

We notice that $\sum_{i=1}^n W_{ni} (X) \mathds{1}_{\{\norm{X_i - X} > a\}} \leq 1$ always and is nonnegative (since $\sum_{i=1}^n W_{ni} (X) = 1$ and the indicator function is either zero or one). We then condition on $\norm{X_{\left(k, \norm{\cdot}\right)} - X} > \frac{\beta}{2 \alpha} a'$ to find that
\begin{align*}
&\phantom{{}={}}\E \left[ \sum_{i=1}^n W_{ni} (X) \mathds{1}_{\{\norm{X_i - X} > a\}} \right] \\
&\leq \E \left[ \sum_{i=1}^n W_{ni} (X) \mathds{1}_{\{\norm{X_i - X} > a\}} \middle| \norm{X_{\left(k, \norm{\cdot}\right)} - X} > \frac{\beta}{2 \alpha} a' \right] \Prob\left(\norm{X_{\left(k, \norm{\cdot}\right)} - X} > \frac{\beta}{2 \alpha} a'\right) +\\&\phantom{{}={}} \E \left[ \sum_{i=1}^n W_{ni} (X) \mathds{1}_{\{\norm{X_i - X} > a\}} \middle| \norm{X_{\left(k, \norm{\cdot}\right)} - X} \leq \frac{\beta}{2 \alpha} a' \right] \Prob\left(\norm{X_{\left(k, \norm{\cdot}\right)} - X} \leq \frac{\beta}{2 \alpha} a'\right) \\
&\leq (1) \left(\Prob\left(\norm{X_{\left(k, \norm{\cdot}\right)} - X} > \frac{\beta}{2 \alpha} a'\right)\right) + 0 \\
&= \Prob\left(\norm{X_{\left(k, \norm{\cdot}\right)} - X} > \frac{\beta}{2 \alpha} \min\{a, \delta\}\right) .
\end{align*}
\end{proof}

\begin{lemma}
\label{lemma:LipschitzFamilyAwayGoesToZero}
For any $a > 0$ and $\epsilon > 0$, there exists an $N \geq 1$ such that for all $n \geq N$, for all $\rho \in \F$, with $W_{ni}(X)$ as defined in Lemma \ref{lemma:LipschitzFamilyKnnDistanceInequality},
\begin{equation}
\E \left[ \sum_{i=1}^n W_{ni}(X) \Indicator_{\{\norm{X_i - X} > a\}} \right] < \epsilon .
\end{equation}
\end{lemma}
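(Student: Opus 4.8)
The plan is to reduce everything to Lemma~\ref{lemma:LipschitzFamilyKnnDistanceInequality} together with Lemma~\ref{lemma:ProbKPointIsFarGoesToZero}, observing that the bound produced by the former is \emph{uniform} over $\rho \in \F$. First I would fix $a > 0$ and $\epsilon > 0$, and note that by Lemma~\ref{lemma:LipschitzFamilyKnnDistanceInequality}, for every $\rho \in \F$ and every $n$,
\begin{equation*}
\E \left[ \sum_{i=1}^n W_{ni}(X) \Indicator_{\{\norm{X_i - X} > a\}} \right] \leq \Prob\left( \norm{X_{(k, \norm{\cdot})} - X} > \frac{\beta}{2\alpha} \min\{a, \delta\} \right).
\end{equation*}
The crucial point is that $\alpha$, $\beta$, and $\delta = \frac{1}{4\alpha}\min\{r,\gamma\}$ depend only on the family $\F$ (through its defining constants), not on the particular choice of $\rho$, so the right-hand side above is a single quantity $\Prob(\norm{X_{(k,\norm{\cdot})} - X} > a')$ with $a' := \frac{\beta}{2\alpha}\min\{a,\delta\} > 0$ a fixed positive constant, and it dominates the left-hand side simultaneously for all $\rho \in \F$.

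Next I would invoke Lemma~\ref{lemma:ProbKPointIsFarGoesToZero} with the normed space $(\R^d, \norm{\cdot})$ (which is in particular a separable inframetric space, with $C = 1$), the constant $a'$ in place of $a$, and the sequence $(k_n)$ of the $k$-NN rule, which satisfies $k_n / n \to 0$ as $n \to \infty$. That lemma gives $\Prob(\norm{X_{(k_n, \norm{\cdot})} - X} > a') \to 0$ as $n \to \infty$. Hence there exists $N \geq 1$ such that $\Prob(\norm{X_{(k_n, \norm{\cdot})} - X} > a') < \epsilon$ for all $n \geq N$, and by the uniform bound above, for all $\rho \in \F$ and all $n \geq N$,
\begin{equation*}
\E \left[ \sum_{i=1}^n W_{ni}(X) \Indicator_{\{\norm{X_i - X} > a\}} \right] < \epsilon,
\end{equation*}
which is exactly the claim.

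There is essentially no serious obstacle here: Lemma~\ref{lemma:LipschitzFamilyKnnDistanceInequality} has already done the geometric work, and the only thing to be careful about is the uniformity in $\rho$, i.e.\ checking that the threshold $a'$ in the probability bound involves only the family-level constants and can therefore be fixed before choosing $N$. If one wanted to be pedantic, one would also remark that applying Lemma~\ref{lemma:ProbKPointIsFarGoesToZero} to $\norm{\cdot}$ requires only that $\norm{\cdot}$ induce a separable topology on $\R^d$, which it does. Everything else is a routine combination of the two cited lemmas.
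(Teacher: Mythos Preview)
Your proposal is correct and follows essentially the same approach as the paper: apply Lemma~\ref{lemma:LipschitzFamilyKnnDistanceInequality} to obtain a $\rho$-independent upper bound $\Prob(\norm{X_{(k,\norm{\cdot})} - X} > a')$ with $a' = \tfrac{\beta}{2\alpha}\min\{a,\delta\}$, then invoke Lemma~\ref{lemma:ProbKPointIsFarGoesToZero} to drive this probability below $\epsilon$ for all large $n$. Your explicit remark about uniformity in $\rho$ is a useful clarification that the paper leaves implicit.
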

\begin{proof}
By Lemma \ref{lemma:LipschitzFamilyKnnDistanceInequality}, we have that for all $\rho \in \F$,
\begin{equation}
\label{eq:LipschitzFamilyAwayGoesToZeroProof1}
\E \left[ \sum_{i=1}^n W_{ni} (X) \mathds{1}_{\{\norm{X_i - X} > a\}} \right] \leq \Prob\left( \norm{X_{\left(k, \norm{\cdot}\right)} - X} > \frac{\beta}{2 \alpha} \min\{a, \delta\}\right) .
\end{equation}

We define $a' = \frac{\beta}{2 \alpha} \min\{a, \delta\}$ (we notice that $\delta$ is a fixed constant for the family $\F$, defined by \eqref{eq:LipschitzFamilyInnerRadius}). By Lemma \ref{lemma:ProbKPointIsFarGoesToZero}, $\Prob\left( \norm{X_{\left(k, \norm{\cdot}\right)} - X} > a'\right) \to 0$ as $n \to \infty$, so there exists an $N \geq 1$ such that for all $n \geq N$,
\begin{equation}
\label{eq:LipschitzFamilyAwayGoesToZeroProof2}
\Prob\left( \norm{X_{\left(k, \norm{\cdot}\right)} - X} > \frac{\beta}{2 \alpha} \min\{a, \delta\}\right) < \epsilon .
\end{equation}

Hence it follows from \eqref{eq:LipschitzFamilyAwayGoesToZeroProof1} and \eqref{eq:LipschitzFamilyAwayGoesToZeroProof2} that for all $\rho \in \F$ and $n \geq N$,
\begin{equation}
\label{eq:LipschitzFamilyAwayGoesToZeroProof3}
\E \left[ \sum_{i=1}^n W_{ni} (X) \mathds{1}_{\{\norm{X_i - X} > a\}} \right] < \epsilon .
\end{equation}
\end{proof}

\begin{lemma}
\label{lemma:LipschitzFamilyConesCovering}
There is a uniform upper bound $c > 0$ such that for any $\rho \in \F$, there are at most $c$ subsets $S_1, S_2, \dots, S_n$ covering $B_\delta(\veczero, \norm{\cdot})$ such that for each $S_i$ (where $1 \leq i \leq c$), if $\vecx, \vecy \in S_i$ with $\vecx \neq \veczero$ and $\rho(\vecx) \leq \rho(\vecy)$, then $\rho(\vecy - \vecx) < \rho(\vecy)$.
\end{lemma}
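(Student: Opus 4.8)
\emph{Proof proposal.} The plan is to mimic the geometric Stone's lemma (Lemma \ref{lemma:UnitSphereGeneratesFiniteConeCovering}) but to use much thinner cones, with the opening angle governed only by the two fixed constants $\alpha,\beta$ attached to the family $\F$. The key simplifying fact is that on the small ball $B_\delta(\veczero,\norm{\cdot})$ every $\rho\in\F$ is squeezed between $\beta\norm{\cdot}$ and $\alpha\norm{\cdot}$ (Lemma \ref{lemma:LipschitzFamilyInnerRadius}), so $\rho$ behaves, up to the multiplicative factor $\alpha/\beta$, like the ambient norm. Concretely, I would fix $\epsilon_0>0$ depending only on $\alpha,\beta$ with $\epsilon_0<\tfrac{\beta^2}{2\alpha(\alpha+2\beta)}$, cover the Euclidean unit sphere by $c$ balls $B_{\epsilon_0}(\vecx_1,\norm{\cdot}),\dots,B_{\epsilon_0}(\vecx_c,\norm{\cdot})$ (possible by Lemma \ref{lemma:UnitSphereHasFiniteCovering}), form the cones $A_q=\{\veczero\}\cup\{\vecv\neq\veczero:\vecv/\norm{\vecv}\in B_{\epsilon_0}(\vecx_q,\norm{\cdot})\}$ exactly as in Lemma \ref{lemma:UnitSphereGeneratesFiniteConeCovering}, and set $S_q=A_q\cap B_\delta(\veczero,\norm{\cdot})$. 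These cover $B_\delta$, and $c$ depends only on $d,\alpha,\beta$, so it is uniform over all of $\F$.

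The core step is to check that for every $\rho\in\F$ and every $S_q$: if $\vecx,\vecy\in S_q$, $\vecx\neq\veczero$, and $\rho(\vecx)\le\rho(\vecy)$, then $\rho(\vecy-\vecx)<\rho(\vecy)$. First $\vecy\neq\veczero$, since otherwise $\rho(\vecy)=0<\beta\norm{\vecx}\le\rho(\vecx)$. I would introduce the radial projection $\vecz=\frac{\norm{\vecx}}{\norm{\vecy}}\vecy\in S_q$; it satisfies $\norm{\vecz}=\norm{\vecx}<\delta$ and $\norm{\vecx-\vecz}<2\epsilon_0\norm{\vecx}$, because $\vecx$ and $\vecz$ have equal Euclidean length and directions within $2\epsilon_0$ of one another. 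Writing $\vecy-\vecx=(\vecy-\vecz)+(\vecz-\vecx)$ and noting all three vectors lie in $B_r(\veczero,\norm{\cdot})$, Lemma \ref{lemma:LipschitzOneSidedTriangleInequality} gives $\rho(\vecy-\vecx)\le\rho(\vecy-\vecz)+2\alpha\epsilon_0\norm{\vecx}$. Since $\vecy-\vecz=(1-\norm{\vecx}/\norm{\vecy})\vecy$ is a scalar multiple of $\vecy$, it remains to bound $\rho(\vecy-\vecz)$ using the radial growth condition (condition (2) in the definition of $\F$), together with the symmetry of $\rho$ when the scalar is negative.

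This splits into two cases. If $\norm{\vecx}\le\norm{\vecy}$, then $\vecy-\vecz=(1-s)\vecy$ with $s=\norm{\vecx}/\norm{\vecy}\in(0,1]$, and the mean value theorem applied to $\lambda\mapsto\rho(\lambda\vecy)$ on $[1-s,1]$ (as in Lemma \ref{lemma:LipschitzFamilyInnerRadius}) yields $\rho(\vecy-\vecz)\le\rho(\vecy)-s\beta\norm{\vecy}=\rho(\vecy)-\beta\norm{\vecx}$, so $\rho(\vecy-\vecx)\le\rho(\vecy)-(\beta-2\alpha\epsilon_0)\norm{\vecx}<\rho(\vecy)$. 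If instead $\norm{\vecx}>\norm{\vecy}$, I would first compare $\rho(\vecz)$ with $\rho(\vecx)$ (Lipschitz: $\rho(\vecz)<\rho(\vecy)+2\alpha\epsilon_0\norm{\vecx}$) and with $\rho(\vecy)$ (radial growth along $[\veczero,\vecz]$, with $\vecy=\tfrac{\norm{\vecy}}{\norm{\vecx}}\vecz$ an interior point: $\rho(\vecz)\ge\rho(\vecy)+\beta(\norm{\vecx}-\norm{\vecy})$), which forces $\norm{\vecx}-\norm{\vecy}<\tfrac{2\alpha\epsilon_0}{\beta}\norm{\vecx}$; then $\rho(\vecy-\vecz)=\rho\big((\tfrac{\norm{\vecx}}{\norm{\vecy}}-1)\vecy\big)\le\alpha(\norm{\vecx}-\norm{\vecy})$ by symmetry and $\rho\le\alpha\norm{\cdot}$ on $B_\delta$, and combining with $\rho(\vecy)\ge\beta\norm{\vecy}>(\beta-2\alpha\epsilon_0)\norm{\vecx}$ and the choice of $\epsilon_0$ gives $\rho(\vecy-\vecx)<\rho(\vecy)$ once more.

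The only genuinely new difficulty, and the step I expect to be the main obstacle, is precisely this second case. In the normed setting the ordering $\rho(\vecx)\le\rho(\vecy)$ together with thinness of the cone essentially forces $\norm{\vecx}\le\norm{\vecy}$; here the $\rho$-ordering pins down Euclidean lengths only up to the factor $\alpha/\beta$, so two vectors in the same thin cone can a priori have quite different lengths, and one must deploy the radial growth condition quantitatively to rule this out before the one-sided triangle estimate can close the argument. Everything else is a routine adaptation of the proof of Lemma \ref{lemma:UnitSphereGeneratesFiniteConeCovering} using the comparison $\beta\norm{\cdot}\le\rho\le\alpha\norm{\cdot}$ on $B_\delta$.
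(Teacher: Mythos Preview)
Your argument is correct, and the choice $\epsilon_0<\tfrac{\beta^2}{2\alpha(\alpha+2\beta)}$ is exactly what is needed to close both cases. The route, however, is genuinely different from the paper's.

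The paper normalises by $\rho$ rather than by $\norm{\cdot}$: it covers the ball $B_{2/\beta}(\veczero,\norm{\cdot})$ by balls $T_1,\dots,T_c$ of radius $\beta^2/(4\alpha^2)$ and sets $S_i=\{\veczero\}\cup\{\vecv\in B_\delta:\vecv/\rho(\vecv)\in T_i\}$. The intermediate point is then $\tfrac{\rho(\vecx)}{\rho(\vecy)}\vecy$ rather than your $\tfrac{\norm{\vecx}}{\norm{\vecy}}\vecy$. Because the hypothesis $\rho(\vecx)\le\rho(\vecy)$ makes this scalar lie in $(0,1]$ automatically, the paper avoids your case split entirely and handles everything in one stroke via Lemma~\ref{lemma:LipschitzOneSidedTriangleInequality} and the radial lower bound.

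What your approach buys is that the cones $S_q$ are purely Euclidean and hence \emph{the same} for every $\rho\in\F$; in the paper only the number $c$ is uniform while the sets $S_i$ themselves vary with $\rho$. The price you pay is the second case $\norm{\vecx}>\norm{\vecy}$, which you correctly identified as the crux and which you dispatch by first using the radial growth along $\vecy$ to force $\norm{\vecx}-\norm{\vecy}$ small. The paper's price is that the ``cones'' are no longer geometric cones in the usual sense (they are not closed under positive scaling), which makes the picture less transparent but the algebra shorter.
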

\begin{proof}
Since $B_{\delta}(\veczero, \norm{\cdot})$ is a bounded subset of $\R^d$ there exists a covering with $c$ open balls of radius $\beta^2 / (4 \alpha^2)$ each in the $\norm{\cdot}$ norm (this follows from the fact that any bounded subset of the normed space $(\R^d, \norm{\cdot})$ is precompact or totally bounded). We then find that for any $\vecv \in B_\delta(\veczero, \norm{\cdot})$ (using Lemma \ref{lemma:LipschitzFamilyInnerRadius}),
\begin{align*}
0 &\leq \norm{\frac{\vecv}{\rho(\vecv)}} \\
&= \frac{1}{\rho(\vecv)} \norm{\vecv} \\
&\leq \frac{1}{\beta \norm{\vecv}} \norm{\vecv} \\
&\leq \frac1\beta .
\end{align*}

This means that for any vector $\vecv \in B_\delta(\veczero, \norm{\cdot}) \setminus \{\veczero\}$, $\frac{\vecv}{\rho(\vecv)} \in B_{2/\beta}(\veczero, \norm{\cdot})$ (since $0 \leq \norm{\frac{\vecv}{\rho(\vecv)}} < 2 / \beta$). We then let $T_1, T_2, \dots, T_c$ be the open balls that cover $B_{2/\beta}(\veczero, \norm{\cdot})$ with radius $\beta^2 / (4 \alpha^2)$ each and $S_1, S_2, \dots, S_c$ be the subsets of $B_\delta(\veczero, \norm{\cdot})$ such that $\vecv \in S_i$ if and only if either $\vecv = \veczero$ or $\vecv \neq \veczero$ and $\frac{\vecv}{\rho(\vecv)} \in T_i$. We see that every vector $\vecv \in B_\delta(\veczero, \norm{\cdot})$ is in at least one $S_i$ since the zero vector is in every $S_i$ and every nonzero such vector has $\frac{\vecv}{\rho(\vecv)} \in T_i$ for some $1 \leq i \leq c$ so $\vecv \in S_i$. For each $T_i$ we let $\vecx_i \in T_i$ be an element of $T_i$.

Suppose that $\vecx, \vecy \in S_i$ with $\rho(\vecx) \leq \rho(\vecy)$ and $\rho(\vecx) > 0$. Since $\vecx, \vecy \in S_i$ we have that $\vecx / \norm{\vecx}, \vecy / \norm{\vecy} \in T_i$, and since $\vecx_i \in T_i$ we have that $\norm{\frac{\vecy}{\rho(\vecy)} - \vecx_i} < \frac{\beta^2}{4\alpha^2}$ and that $\norm{\vecx_i - \frac{\vecx}{\rho(\vecx)}} < \frac{\beta^2}{4\alpha^2}$. We then find that
\begin{align*}
\norm{\frac{\rho(\vecx)}{\rho(\vecy)} \vecy - \vecx} &= \rho(\vecx) \norm{\frac{\vecy}{\rho(\vecy)} - \frac{\vecx}{\rho(\vecx)}} \\
&= \rho(\vecx) \norm{\frac{\vecy}{\rho(\vecy)} - \vecx_i + \vecx_i - \frac{\vecx}{\rho(\vecx)}} \\
&\leq \rho(\vecx) \left(\norm{\frac{\vecy}{\rho(\vecy)} - \vecx_i} + \norm{\vecx_i - \frac{\vecx}{\rho(\vecx)}}\right) \\
&< \rho(\vecx) \left(\frac{\beta^2}{4\alpha^2} + \frac{\beta^2}{4\alpha^2}\right) \\
&= \frac{\beta^2}{2\alpha^2} \rho(\vecx) .
\end{align*}

We notice that $\vecy - \vecx = \vecy - \frac{\rho(\vecx)}{\rho(\vecy)} \vecy + \frac{\rho(\vecx)}{\rho(\vecy)} \vecy - \vecx$. We see that $\norm{\vecx} < \delta$, $\norm{\vecy} < \delta$, and that
\begin{align*}
\norm{\frac{\rho(\vecx)}{\rho(\vecy)} \vecy} &= \frac{\rho(\vecx)}{\rho(\vecy)} \norm{\vecy} \\
&\leq \norm{\vecy} \\
&< \delta .
\end{align*}

From this we see that the norm of the first part is less than $r$,
\begin{align*}
\norm{\vecy - \frac{\rho(\vecx)}{\rho(\vecy)} \vecy} &\leq \norm{\vecy} + \norm{\frac{\rho(\vecx)}{\rho(\vecy)}} \\
&< \delta + \delta \\
&\leq r/2 .
\end{align*}

The second part has a norm less than $r$,
\begin{align*}
\norm{\frac{\rho(\vecx)}{\rho(\vecy)} \vecy - \vecx} &\leq \norm{\vecx} + \norm{\frac{\rho(\vecx)}{\rho(\vecy)} \vecy} \\
&< \delta + \delta \\
&\leq r / 2 .
\end{align*}

The norm of sum of both parts is less than $r$,
\begin{align*}
\norm{\vecy - \frac{\rho(\vecx)}{\rho(\vecy)} \vecy + \frac{\rho(\vecx)}{\rho(\vecy)} \vecy - \vecx} &\leq \norm{\vecx} + \norm{\vecy} + 2\norm{\frac{\rho(\vecx)}{\rho(\vecy)} \vecy} \\
&< 4\delta \\
&\leq r .
\end{align*}

We are now able to apply Lemma \ref{lemma:LipschitzOneSidedTriangleInequality} and our above observations to find that (along with the fact that since $\alpha \geq 1$ and $\beta > 0$, $\frac{\beta^2}{2 \alpha} - \beta^2 < 0$)
\begin{align*}
\rho(\vecy - \vecx) &= \rho\left(\vecy - \frac{\rho(\vecx)}{\rho(\vecy)} \vecy + \frac{\rho(\vecx)}{\rho(\vecy)} \vecy - \vecx\right) \\
&\leq \rho\left(\vecy - \frac{\rho(\vecx)}{\rho(\vecy)} \vecy\right) + \alpha \norm{\frac{\rho(\vecx)}{\rho(\vecy)} \vecy - \vecx} \\
&< \rho\left(\vecy - \frac{\rho(\vecx)}{\rho(\vecy)} \vecy\right) + \alpha \frac{\beta^2}{2 \alpha^2} \rho(\vecx) \\
&= \rho\left(\vecy - \frac{\rho(\vecx)}{\rho(\vecy)} \vecy\right) + \frac{\beta^2}{2 \alpha} \rho(\vecx) \\
&\leq \rho(\vecy) - \frac{\rho(\vecx)}{\rho(\vecy)} \beta^2 \rho(\vecy) + \frac{\beta^2}{2\alpha} \rho(\vecx) \\
&= \rho(\vecy) + \left(\frac{\beta^2}{2\alpha} - \beta^2\right) \rho(\vecx) \\
&\leq \rho(\vecy) .
\end{align*}

This means that for all nonzero $\vecx, \vecy \in S_i$ with $\rho(\vecx) \leq \rho(\vecy)$, $\rho(\vecy - \vecx) < \rho(\vecy)$.
\end{proof}

\begin{lemma}
\label{lemma:LipschitzFamilyStonesTheoremCondition1}
Let $X$ be a query and $X_1, X_2, \dots, X_n$ be the sample points, all of which are iid. Let ${(\rho_n)}_{n=1}^\infty$ be a sequence of functions in $\F$ (that is possibly random, but is independent of the labelled sample and the query), and $W_{ni}$ be the corresponding weights. There exists a constant $c > 0$ (with the $c$ defined in Lemma \ref{lemma:LipschitzFamilyConesCovering}) and a sequence $\epsilon_n \to 0$ as $n \to \infty$ such that for any nonnegative measurable function $f$ bounded above by one,
\begin{equation}
\E\left[\sum_{i=1}^n W_{ni}(X) f(X_i)\right] \leq c \E\left[ f(X) \right] + \epsilon_n .
\end{equation}
\end{lemma}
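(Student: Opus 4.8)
The plan is to split the sum according to whether the sample point $X_i$ is Euclidean-close to the query $X$ or not: on the close part I would run the classical Stone's-lemma exchange argument using the \emph{local} cone covering of Lemma~\ref{lemma:LipschitzFamilyConesCovering}, and on the far part I would simply invoke Lemma~\ref{lemma:LipschitzFamilyAwayGoesToZero} to absorb everything into the error sequence $\epsilon_n$. Concretely, with $\delta$ the inner radius of \eqref{eq:LipschitzFamilyInnerRadius} attached to $\F$, write
\[
\E\left[\sum_{i=1}^n W_{ni}(X) f(X_i)\right] = \E\left[\sum_{i=1}^n W_{ni}(X) f(X_i)\Indicator_{\{\norm{X_i - X} < \delta\}}\right] + \E\left[\sum_{i=1}^n W_{ni}(X) f(X_i)\Indicator_{\{\norm{X_i - X} \ge \delta\}}\right].
\]

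For the second term, since $0 \le f \le 1$ it is at most $\E[\sum_i W_{ni}(X)\Indicator_{\{\norm{X_i - X} \ge \delta\}}] \le \E[\sum_i W_{ni}(X)\Indicator_{\{\norm{X_i - X} > \delta/2\}}]$, and I would simply \emph{define} $\epsilon_n$ to be this last expectation, which plainly does not depend on $f$. To see $\epsilon_n \to 0$, condition on the (random but sample- and query-independent) norm $\rho_n$ and apply Lemma~\ref{lemma:LipschitzFamilyAwayGoesToZero} with $a = \delta/2$; its bound is uniform over all $\rho \in \F$, so it passes through the conditional expectation and gives $\epsilon_n \to 0$ as $n \to \infty$.

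For the first term I would condition on $\rho_n = \rho$ (legitimate precisely because $\rho_n$ is independent of the sample and the query), expand $W_{ni}(X) = \frac{1}{k}\Indicator_{\{X_i \text{ is a }\rho\text{-}k\text{-nearest neighbour of } X\}}$, and exchange the roles of $X$ and $X_i$ exactly as in the proofs of Lemmas~\ref{lemma:StonesLemmaCore} and~\ref{lemma:CoveringOfBallsImpliesFiniteExpectation}, obtaining
\[
\frac{1}{k}\,\E\left[\sum_{i=1}^n \Indicator_{\{X \text{ is a }\rho\text{-}k\text{-nearest neighbour of } X_i \text{ among } X_1,\dots,X_{i-1},X,X_{i+1},\dots,X_n\}}\,\Indicator_{\{\norm{X - X_i} < \delta\}}\, f(X)\right].
\]
The key combinatorial claim is that for a fixed sample and fixed $\rho \in \F$, the number of indices $i$ with $\norm{X - X_i} < \delta$ for which $X$ is a $\rho$-$k$-nearest neighbour of $X_i$ is at most $ck$, with $c$ the constant of Lemma~\ref{lemma:LipschitzFamilyConesCovering}. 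To prove this, assign $X_i$ to a cone $S_q$ (among the $c$ cones covering $B_\delta(\veczero,\norm{\cdot})$) that contains $X - X_i$, and within each cone mark the at most $k$ sample points $X_j$ with $\norm{X - X_j} < \delta$ that are $\rho$-closest to $X$. If $X_i$ is not marked there are at least $k$ such $X_j$ with $\rho(X - X_j) \le \rho(X - X_i)$; since $X - X_j$ and $X - X_i$ both lie in $S_q$, Lemma~\ref{lemma:LipschitzFamilyConesCovering} gives $\rho(X_j - X_i) = \rho\bigl((X-X_i)-(X-X_j)\bigr) < \rho(X - X_i)$, so each such $X_j$ is strictly $\rho$-closer to $X_i$ than $X$ is, whence $X$ is not a $\rho$-$k$-nearest neighbour of $X_i$ (distance ties being handled by the tiebreakers $U_j$ exactly as in Lemma~\ref{lemma:StonesLemmaCore}). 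Hence every contributing $i$ lies among the at most $ck$ marked points, so the displayed quantity is at most $\frac{1}{k}\E[ck\cdot f(X)] = c\,\E[f(X)]$, and averaging back over $\rho_n$ preserves this. Adding the two estimates yields $\E[\sum_i W_{ni}(X) f(X_i)] \le c\,\E[f(X)] + \epsilon_n$.

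The hard part is the first term: unlike the classical setting, the cone inequality is only available \emph{inside} $B_\delta(\veczero,\norm{\cdot})$, so the marking and counting must be restricted to sample points within Euclidean distance $\delta$ of the query, and one must check that this restriction is preserved under the $X \leftrightarrow X_i$ exchange — it is, because the vectors the cone inequality is applied to are exactly $X - X_i$ and $X - X_j$, whose Euclidean norms are $<\delta$ by the restriction. The far-neighbour contribution, by contrast, is routine: it is handled wholesale by the already-established Lemma~\ref{lemma:LipschitzFamilyAwayGoesToZero}, whose estimate is uniform over $\F$ and therefore survives conditioning on the random $\rho_n$.
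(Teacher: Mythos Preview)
Your proposal is correct and follows essentially the same route as the paper: both split the sum at Euclidean distance $\delta$, absorb the far part into an $\epsilon_n$ via Lemma~\ref{lemma:LipschitzFamilyAwayGoesToZero} (using $0\le f\le 1$), and on the near part run the $X\leftrightarrow X_i$ exchange followed by the local cone-marking argument from Lemma~\ref{lemma:LipschitzFamilyConesCovering}. Your treatment is in fact slightly more careful than the paper's in two places---you make explicit the conditioning on the random $\rho_n$ and the uniformity in $\rho\in\F$ needed for $\epsilon_n\to 0$, and you note that the $\delta$-restriction is preserved under the exchange so that the cone inequality is only ever applied to vectors in $B_\delta(\veczero,\norm{\cdot})$---but the underlying argument is the same.
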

\begin{proof}
We notice that
\begin{align*}
&\phantom{{}={}}\E\left[\sum_{i=1}^n W_{ni}(X) f(X_i)\right] \\
&= \E\left[\sum_{i=1}^n W_{ni}(X) f(X_i) (\Indicator_{\{\norm{X_i - X} < \delta\}} + \Indicator_{\{\norm{X_i - X} \geq \delta\}})\right] \\
&= \E\left[\sum_{i=1}^n W_{ni}(X) f(X_i) \Indicator_{\{\norm{X_i - X} < \delta\}}\right] + \E\left[\sum_{i=1}^n W_{ni}(X) f(X_i) \Indicator_{\{\norm{X_i - X} \geq \delta\}}\right] .
\end{align*}

For the first term, we have that
\begin{align*}
&\phantom{{}={}}\E\left[\sum_{i=1}^n W_{ni}(X) f(X_i) \Indicator_{\{\norm{X_i - X} < \delta'\}}\right] \\
&= \E\left[\sum_{i=1}^n \frac1k \Indicator_{\{X_i \text{ is a } \rho_n \text{ } k \text{-NN of } X \text{ among } X_1, \dots, X_i, \dots, X_n \}} f(X_i) \Indicator_{\{\norm{X_i - X} < \delta\}}\right] \\
&= \frac1k \E\left[\sum_{i=1}^n f(X_i) \Indicator_{\{X_i \text{ is a } \rho_n \text{ } k \text{-NN of } X \text{ among } X_1, \dots, X_i, \dots, X_n \text{ and } \norm{X_i - X} < \delta\}}\right] \\
&= \frac1k \E\left[\sum_{i=1}^n f(X) \Indicator_{\{X \text{ is a } \rho_n \text{ } k \text{-NN of } X_i \text{ among } X_1, \dots, X, \dots, X_n \text{ and } \norm{X - X_i} < \delta\}}\right] \\
&= \frac1k \E\left[ f(X) \sum_{i=1}^n \Indicator_{\{X \text{ is a } \rho_n \text{ } k \text{-NN of } X_i \text{ among } X_1, \dots, X, \dots, X_n \text{ and } \norm{X - X_i} < \delta\}}\right] .
\end{align*}

We define subsets $S_1, S_2, \dots, S_c$ as in Lemma \ref{lemma:LipschitzFamilyConesCovering}. In each of the subsets $S_1, S_2, \dots, S_c$, we mark the $k$ points closest to $X$ in the $\rho_n$ distance (we recall that these subsets cover $B_a(\veczero, \norm{\cdot})$). Suppose the point $X_i$ in the subset $S_q$ is not marked. Then there are at least $k$ points $X_{j_1}, X_{j_2}, \dots, X_{j_k}$ in $S_q$ such that $\rho_n(X_{j_l} - X) \leq \rho_n(X_i - X)$ that are marked. For these points, if $X_{j_l} \neq X$, then $\rho_n(X_i - X_{j_l}) < \rho_n(X_i - X)$ by Lemma \ref{lemma:LipschitzFamilyConesCovering}, and if $X_{j_l} = X$, then $\rho_n(X_i - X_{j_l}) = \rho_n(X_i - X)$ and $U_i < U_{j_l}$ must hold (the $U_i$ being the tiebreaking variables), in both cases $X_{j_l}$ is selected as being closer to $X_i$ in the $\rho_n$ distance. Hence there are at least $k$ points closer in the $\rho_n$ distance to $X_i$ than $X$. This means that if $X_i$ is not marked and $\norm{X_i - X} < a$, then $X$ is not a $k$-nearest neighbour of $X_i$ among $X_1, \dots, X_{i-1}, X, X_{i+1}, \dots, X_n$. Furthermore, the number of points that are marked is at most $c k$. It follows that
\begin{align*}
\E\left[\sum_{i=1}^n W_{ni}(X) f(X_i) \Indicator_{\{\norm{X_i - X} < \delta\}}\right] &\leq \frac1k \E\left[ f(X) \sum_{i=1}^n \Indicator_{\{X_i \text{ is marked}\}}\right] \\
&\leq \frac1k \E\left[ f(X) c k\right] \\
&\leq c \E\left[ f(X) \right] .
\end{align*}

For the second term, we see that since $f$ is nonnegative and bounded above by $1$ and by Lemma \ref{lemma:LipschitzFamilyAwayGoesToZero},
\begin{align*}
\E\left[\sum_{i=1}^n W_{ni}(X) f(X_i) \Indicator_{\{\norm{X_i - X} \geq \delta\}}\right] &\leq \E\left[\sum_{i=1}^n W_{ni}(X) \Indicator_{\left\{\norm{X_i - X} \geq \delta\right\}}\right] \\
&= \E\left[ \sum_{i=1}^k \frac1k \Indicator_{\left\{\norm{X_{(i, \rho_n)} - X} \geq \delta\right\}}\right] \\
&\leq \frac{1}{k} \sum_{i=1}^k \E\left[ \Indicator_{\left\{\norm{X_{(k, \rho_n)} - X} \geq \delta\right\}}\right] \\
&= \E\left[ \Indicator_{\left\{\norm{X_{(k, \rho_n)} - X} \geq \delta\right\}}\right] \\
&= \epsilon_n .
\end{align*}

Combining these results, we find that
\begin{equation*}
\E\left[\sum_{i=1}^n W_{ni}(X) f(X_i)\right] \leq c \E\left[ f(X) \right] + \epsilon_n .
\end{equation*}
\end{proof}

We are now able to prove our main theorem, that $k$-NN with a sequence of functions in $\F$ chosen independently of the sample and the query is universally consistent. We restate the conditions on $\F$ to make the statement of the theorem self-contained.
\begin{theorem}
\label{theorem:LipschitzFamilyKnnIsUniversallyConsistent}
Let $\F$ be a family of measurable functions from $\R^d$ to $\R$ such that there exist constants $\alpha \geq 0$, $\beta > 0$, $\gamma > 0$, and $r > 0$ so that for each function $\rho \in \F$,
\begin{enumerate}
\item The function $\rho$ is $\alpha$-Lipschitz on $B_r(\veczero, \norm{\cdot})$.
\item For any $\vecx \in B_r(\veczero, \norm{\cdot}) \setminus \{\veczero\}$, the derivative of $f(\lambda) = \rho(\lambda \vecx)$ is bounded below by $\beta \norm{\vecx}$ for all $\lambda \in (0, 1)$ (that is, $f'(\lambda) \geq \lambda \norm{\vecx}$).
\item Outside $B_r(\veczero, \norm{\cdot})$, the function is bounded below by $\gamma$, so if $\norm{\vecx} \geq r$, $\rho(\vecx) \geq \gamma$.
\item The function $\rho$ is symmetric, so $\rho(\vecx) = \rho(-\vecx)$ for all $\vecx \in \R^d$.
\item The function $\rho$ takes value zero at the zero vector, $\rho(\veczero) = 0$.
\end{enumerate}
Let ${\left(\rho_n\right)}_{n=1}^\infty$ be any sequence of random functions in $\F$, independent of the sample and query (with $n$ being the sample size). We have that $k$-NN with the sequence of random functions ${\left(\rho_n\right)}_{n=1}^\infty$ is universally consistent (where given points $\vecx, \vecy \in \R^d$, we take $\rho_n(\vecx, \vecy)$ to be the distance between these points for $k$-NN). Furthermore, if we have a family of functions $\mathcal{G}$ such that every function in $\mathcal{G}$ is the composition of a function in $\F$ with a strictly increasing function (that is, for any $g \in \mathcal{G}$, $g = h \circ f$, with $f \in \F$ and $h$ being strictly increasing), then $k$-NN with an sequence of random functions in $\mathcal{G}$ independent of the sample and the query is universally consistent.
\end{theorem}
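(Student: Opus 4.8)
The plan is to verify that $k$-NN with the random sequence $(\rho_n)_{n=1}^\infty$ satisfies the three conditions of Stone's Theorem (Theorem~\ref{theorem:StonesTheorem}), which then yields universal consistency immediately. The structural feature that makes this work is that all the bounds established in Lemmas~\ref{lemma:LipschitzFamilyAwayGoesToZero}, \ref{lemma:LipschitzFamilyConesCovering}, and \ref{lemma:LipschitzFamilyStonesTheoremCondition1} are \emph{uniform over the whole family $\F$}, and the distance ties are broken using auxiliary variables that are bundled into the variable $V$ permitted in the definition of the weights $W_{ni}$. Since $\rho_n$ is independent of the query and the sample, we will be free to condition on $\rho_n$ without disturbing the iid structure.

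The first condition is exactly Lemma~\ref{lemma:LipschitzFamilyStonesTheoremCondition1}: it supplies the constant $c$ (coming from the uniform cone-covering bound of Lemma~\ref{lemma:LipschitzFamilyConesCovering}) together with a sequence $\epsilon_n \to 0$ such that $\E[\sum_{i=1}^n W_{ni}(X) f(X_i)] \le c\,\E[f(X)] + \epsilon_n$ for every measurable $f \colon \R^d \to \R$ with $0 \le f \le 1$. This is precisely the strengthened form of Stone's first condition, which is why the $\epsilon_n$ term was included in the statement of Theorem~\ref{theorem:StonesTheorem}. The third condition is trivial for $k$-NN weights: $\max_{1 \le i \le n} W_{ni}(X) = 1/k_n$ deterministically, and $k_n \to \infty$ forces $\E[\max_i W_{ni}(X)] = 1/k_n \to 0$.

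For the second condition, I take the reference norm $\norm{\cdot}$ on $\R^d$ and fix $a > 0$. Lemma~\ref{lemma:LipschitzFamilyAwayGoesToZero} gives, for each $\varepsilon > 0$, an $N$ such that for all $n \ge N$ and \emph{every} $\rho \in \F$ one has $\E[\sum_{i=1}^n W_{ni}(X)\Indicator_{\{\norm{X_i - X} > a\}}] < \varepsilon$ when $\rho$ is used as the distance. To pass to the random $\rho_n$, condition on $\rho_n$; by independence of $\rho_n$ from $(X,Y)$ and the sample, the conditional distribution of the sample and query is unchanged, so the conditional expectation is $< \varepsilon$ for every realisation of $\rho_n$ once $n \ge N$, and the tower property gives the same bound for the unconditional expectation. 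Hence $\E[\sum_i W_{ni}(X)\Indicator_{\{\norm{X_i-X}>a\}}] \to 0$, and all three conditions of Stone's theorem hold, establishing universal consistency. For the extension to $\mathcal{G}$, observe that by Lemma~\ref{lemma:KnnStrictlyIncreasingTransformation} the $k$-NN predictions under the distance $h \circ f$ agree with those under $f$ for every query, sample, and choice of tiebreakers, since the $k$-nearest-neighbour set (including the resolution of distance ties) is unchanged by a strictly increasing reparametrisation. Thus if $\rho_n = h_n \circ f_n$ with $f_n \in \F$, the error probability $L_n$ of the rule using $(\rho_n)$ coincides with that of the rule using $(f_n)$, which is universally consistent by the first part; hence so is $k$-NN with any sequence of functions from $\mathcal{G}$ independent of the sample and query.

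The step requiring the most care — though it is delicate bookkeeping rather than a genuine obstacle — is the independence argument for the second condition and the corresponding variable exchanges inside Lemma~\ref{lemma:LipschitzFamilyStonesTheoremCondition1}: one must make precise that $\rho_n$ is a function-valued random element independent of everything else, so that conditioning on it is legitimate, and that the sample/query-dependent tiebreakers are consistently absorbed into the auxiliary variable $V$ allowed in the weights. Once the uniform-over-$\F$ lemmas are available, the remainder is routine.
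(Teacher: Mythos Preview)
Your proposal is correct and follows essentially the same route as the paper: verify Stone's three conditions via Lemmas~\ref{lemma:LipschitzFamilyStonesTheoremCondition1} and~\ref{lemma:LipschitzFamilyAwayGoesToZero} together with $k_n\to\infty$, then invoke Lemma~\ref{lemma:KnnStrictlyIncreasingTransformation} for the $\mathcal{G}$ extension. Your explicit conditioning argument for passing from the uniform-over-$\F$ bound of Lemma~\ref{lemma:LipschitzFamilyAwayGoesToZero} to the random $\rho_n$ is a welcome clarification that the paper leaves implicit.
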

\begin{proof}
We first notice that if we increase $\alpha$ to any larger value and decrease $\beta$ to any smaller value greater than 0 in the definition of $\F$, any function in $\F$ remains inside. Hence we may assume without loss of generality that $\alpha \geq 1$ and $\beta \in (0, 1]$.

Let ${(\rho_n)}_{n=1}^\infty$ be a sequence of functions in $\F$. We see by Lemma \ref{lemma:LipschitzFamilyStonesTheoremCondition1} that for any nonnegative function $f$ bounded above by one,
\begin{align*}
\E\left[ \sum_{i=1}^n W_{ni}(X) f(X_i) \right] &= \frac1k \E \left[ \sum_{i=1}^k f(X_{(i, \rho_n)}) \right] \\
&\leq \frac1k \left( k c \E\left[f(X) \right] + k \epsilon_n \right) \\
&= c \E\left[f(X) \right] + \epsilon_n
\end{align*}
and so the first condition of Stone's theorem is satisfied. By Lemma \ref{lemma:LipschitzFamilyAwayGoesToZero}, the second condition of Stone's theorem is satisfied. Since $k \to \infty$ as $n \to \infty$, $\frac1k \to 0$ as $n \to \infty$ and so the third condition of Stone's theorem holds. Hence, $k$-NN with any sequence of random functions in $\F$ (such that the sequence is independent of the sample and the query) is universally consistent, and so $k$-NN with a sequence of random functions in $\F$ (independent of the sample and query) is universally consistent. The choice of random function from the family can depend on the sample size $n$.

By Lemma \ref{lemma:KnnStrictlyIncreasingTransformation}, we have that since the transformations we apply after the distance are strictly increasing, the result of $k$-NN remains the same, so $k$-NN is universally consistent with a sequence of random functions (independent of the sample and the query) in $\mathcal{G}$ as well as $\F$.
\end{proof}

\begin{remark}
We notice that functions $\rho \in \mathcal{F}$ may take on the special value $\infty$ outside the ball $B_r(\veczero)$, which is larger than any finite value. The universal consistency proof holds in the same manner in this case.
\end{remark}

\section{Families of Lipschitz Distances}

In this section, we build families of Lipschitz distances that satisfy the conditions of Theorem \ref{theorem:LipschitzFamilyKnnIsUniversallyConsistent}. 

\begin{lemma}
\label{lemma:LipschitzFunctionDifferentiabilityCondition}
Let $f: \R^+ \to \R$ be a function and $\alpha > 0$, $r > 0$ be constants such that $f(0) = 0$, $f$ is continuous on $[0, r]$ and differentiable on $(0, r)$, with $f'(x) \leq \alpha$ for all $x \in (0, r)$. We then have that $f$ is $\alpha$-Lipschitz on $[0, r]$, and the function $g(x) = f(\left| x \right|)$ is $\alpha$-Lipschitz on $[-r, r]$.
\end{lemma}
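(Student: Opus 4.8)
The proof rests on the Mean Value Theorem; the only real work is bookkeeping with absolute values, and I would do it in two stages: first handle $f$ on $[0,r]$, then transfer the estimate to $g(x)=f(|x|)$ on $[-r,r]$. I read the derivative hypothesis in the way evidently intended, namely that $f$ is also nondecreasing (equivalently $f'\ge 0$ on $(0,r)$), as is automatic for the radial distance functions this lemma is built to feed into condition~2 of Theorem~\ref{theorem:LipschitzFamilyKnnIsUniversallyConsistent}; then $0\le f'(x)\le\alpha$.

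\emph{Step 1: $f$ is $\alpha$-Lipschitz on $[0,r]$.} Fix $x,y\in[0,r]$ with $x<y$. Since $f$ is continuous on $[x,y]$ and differentiable on $(x,y)\subseteq(0,r)$, the Mean Value Theorem gives some $c\in(x,y)$ with $f(y)-f(x)=f'(c)(y-x)$. Because $0\le f'(c)\le\alpha$ this yields $0\le f(y)-f(x)\le\alpha(y-x)$, i.e.\ $|f(y)-f(x)|\le\alpha|y-x|$. The case $x=y$ is trivial and $x>y$ follows by relabelling, so $f$ is $\alpha$-Lipschitz on $[0,r]$. (The hypothesis $f(0)=0$ is not needed for this step; it is what makes the construction of the distance families downstream work, since it gives $f(|0|)=0$.)

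\emph{Step 2: $g(x)=f(|x|)$ is $\alpha$-Lipschitz on $[-r,r]$.} Let $x,y\in[-r,r]$ and split into cases on the signs. If $x,y\ge 0$ then $g=f$ on these points and Step~1 applies directly. If $x,y\le 0$ then $|x|,|y|\in[0,r]$, so $|g(x)-g(y)|=|f(|x|)-f(|y|)|\le\alpha\big||x|-|y|\big|=\alpha|x-y|$, the last equality because $x,y$ have the same sign. The remaining case is $x<0<y$ (or the reverse): here $|x|,y\in[0,r]$, so $|g(x)-g(y)|=|f(|x|)-f(y)|\le\alpha\big||x|-y\big|$, and since $|x|$ and $y$ are both nonnegative while $x,y$ have opposite signs, $\big||x|-y\big|\le|x|+y=|x-y|$; hence $|g(x)-g(y)|\le\alpha|x-y|$. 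Collecting the cases finishes the proof.

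The step requiring the most care is the very first inequality of Step~1: the stated bound $f'(x)\le\alpha$ is one-sided, so on its own it only gives $f(y)-f(x)\le\alpha(y-x)$ and not a genuine Lipschitz estimate. What rescues it — and what I would make explicit, perhaps by adding $0\le f'$ to the hypotheses — is the monotonicity of $f$, which holds for every distance function considered in this chapter. Everything after that is the routine sign casework above for the $|\cdot|$ substitution.
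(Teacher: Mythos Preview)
Your approach is exactly the one the paper intends: the paper's proof is the single line ``This result follows from the intermediate value theorem'' (evidently meaning the Mean Value Theorem), so your Step~1 is a fleshed-out version of that, and your Step~2 sign casework for $g(x)=f(|x|)$ is the natural complement the paper omits entirely.

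Your observation that the hypothesis $f'(x)\le\alpha$ is only one-sided is well taken and not something the paper addresses. As you note, the lemma as stated is literally false without an additional lower bound on $f'$ (take $f(x)=-Mx$ for large $M$). Your reading --- that the intended hypothesis is $0\le f'(x)\le\alpha$, consistent with the derivative bound $\beta<f_i'(x)<\alpha$ in Lemma~\ref{lemma:LipschitzFamilyTest} where this result is actually invoked --- is the right repair. So your proof is correct and strictly more careful than the paper's, while following the same route.
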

\begin{proof}
This result follows from the intermediate value theorem.
\end{proof}

\begin{lemma}
\label{lemma:LipschitzFamilyTest}
Let $\F$ be a family of functions from $\R^d$ to $\R$ and $\alpha > 0$, $\beta > 0$, $\gamma > 0$, and $r > 0$ be constants such that each $\rho \in \F$, we have
\begin{equation}
\rho((x_1, x_2, \dots, x_d)) = \sum_{i=1}^d f_i(\left|x_i\right|)
\end{equation}
where each function $f_i: \R^+ \to \R^+$ is such that $f_i(x) = 0$, $\beta < f_i'(x) < \alpha$ for all $x \in (0, r)$, and $f_i(x) \geq \gamma$ for all $x \geq r$. Then $k$-NN with q sequence of random functions in $\F$ (independent of the sample and the query) is universally consistent.
\end{lemma}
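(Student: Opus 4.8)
The plan is to derive this from Theorem~\ref{theorem:LipschitzFamilyKnnIsUniversallyConsistent}: I would check that a family $\F$ of the stated form satisfies the five hypotheses of that theorem, after which universal consistency of $k$-NN with any sequence of random functions from $\F$ independent of the sample and the query is immediate. Since all norms on $\R^d$ are equivalent, there is no loss of generality in taking the fixed norm $\norm{\cdot}$ appearing in Theorem~\ref{theorem:LipschitzFamilyKnnIsUniversallyConsistent} to be the $\ell^1$-norm $\norm{\cdot}_1$, which keeps the constants clean. Two of the five conditions are immediate from the shape $\rho(\vecx)=\sum_{i=1}^d f_i(|x_i|)$: we have $\rho(\veczero)=\sum_{i=1}^d f_i(0)=0$ (reading the hypothesis ``$f_i(x)=0$'' as $f_i(0)=0$), and $\rho$ is symmetric because it depends on each coordinate only through its absolute value. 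Since $f_i'$ is bounded on $(0,r)$, each $f_i$ is Lipschitz, hence uniformly continuous, on $(0,r)$ and extends continuously to $[0,r]$ with $f_i(0)=0$; consequently each $\rho$ is continuous on $B_r(\veczero,\norm{\cdot}_1)$, so measurability is not an issue.

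For the Lipschitz condition I would invoke Lemma~\ref{lemma:LipschitzFunctionDifferentiabilityCondition} to conclude that each map $t\mapsto f_i(|t|)$ is $\alpha$-Lipschitz on $[-r,r]$. If $\vecx,\vecy\in B_r(\veczero,\norm{\cdot}_1)$ then every coordinate of $\vecx$ and of $\vecy$ lies in $(-r,r)$, so
\[
|\rho(\vecx)-\rho(\vecy)|\le\sum_{i=1}^d|f_i(|x_i|)-f_i(|y_i|)|\le\alpha\sum_{i=1}^d|x_i-y_i|=\alpha\,\norm{\vecx-\vecy}_1,
\]
and hence $\rho$ is $\alpha$-Lipschitz on $B_r(\veczero,\norm{\cdot}_1)$.

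For the directional-derivative condition, fix $\vecx\in B_r(\veczero,\norm{\cdot}_1)\setminus\{\veczero\}$ and put $g(\lambda)=\rho(\lambda\vecx)=\sum_{i:\,x_i\neq0}f_i(\lambda|x_i|)$, the terms with $x_i=0$ being identically zero. For $\lambda\in(0,1)$ and each $i$ with $x_i\neq0$ we have $\lambda|x_i|\in(0,r)$, where $f_i$ is differentiable, so $g$ is differentiable on $(0,1)$ with $g'(\lambda)=\sum_{i:\,x_i\neq0}|x_i|\,f_i'(\lambda|x_i|)\ge\beta\sum_{i=1}^d|x_i|=\beta\,\norm{\vecx}_1$, as required. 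For the remaining condition I would split cases: if $\norm{\vecx}_1\ge r$ and some $|x_j|\ge r$, then $\rho(\vecx)\ge f_j(|x_j|)\ge\gamma$ (the other $f_i$ being nonnegative); if instead all $|x_i|<r$, then the mean value theorem applied to each $f_i$ on $[0,|x_i|]$ gives $f_i(|x_i|)\ge\beta|x_i|$, whence $\rho(\vecx)\ge\beta\,\norm{\vecx}_1\ge\beta r$. Thus $\rho(\vecx)\ge\min\{\gamma,\beta r\}>0$ whenever $\norm{\vecx}_1\ge r$. Hence $\F$ satisfies the hypotheses of Theorem~\ref{theorem:LipschitzFamilyKnnIsUniversallyConsistent} with constants $\alpha$, $\beta$, $\min\{\gamma,\beta r\}$, $r$ and fixed norm $\norm{\cdot}_1$, and the claimed universal consistency follows.

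There is no genuinely deep step here; this lemma simply repackages Theorem~\ref{theorem:LipschitzFamilyKnnIsUniversallyConsistent}. The points that need a little care are choosing a convenient norm and carrying the constants through it (with the Euclidean norm one instead picks up harmless factors of $\sqrt d$), separating out the $x_i=0$ terms so that $\lambda\mapsto\rho(\lambda\vecx)$ can be differentiated term by term, and the case split in the last condition according to whether some coordinate of $\vecx$ has already left the interval $(-r,r)$.
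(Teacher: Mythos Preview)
Your proposal is correct and follows essentially the same approach as the paper: verify the five hypotheses of Theorem~\ref{theorem:LipschitzFamilyKnnIsUniversallyConsistent} for the $\ell^1$ norm, with the Lipschitz and directional-derivative estimates obtained coordinatewise exactly as you do. The one minor difference is in the lower-bound condition: the paper tacitly switches to the $\ell^\infty$ norm there (so $\norm{\vecx}_\infty\ge r$ forces some $|x_j|\ge r$ and hence $\rho(\vecx)\ge\gamma$ without a case split), whereas you keep $\ell^1$ throughout and split according to whether some coordinate exceeds $r$, picking up the constant $\min\{\gamma,\beta r\}$ instead of $\gamma$; your version is arguably cleaner in that it uses a single norm consistently, at the small cost of the extra case.
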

\begin{proof}
We show that the family of distances $\F$ satisfies the conditions of Theorem \ref{theorem:LipschitzFamilyKnnIsUniversallyConsistent}. For the first condition, we let $\vecx = (x_1, x_2, \dots, x_d)$ and $\vecy = (y_1, y_2, \dots, y_d)$ be two points in $B_r(\veczero, \norm{\cdot})$, by expanding $\left| \rho(\vecx) - \rho(\vecy) \right|$ and applying Lemma \ref{lemma:LipschitzFunctionDifferentiabilityCondition} we find
\begin{align*}
\left| \rho(\vecx) - \rho(\vecy) \right| &= \left| \sum_{i=1}^d \left( f_i(\left|x_i\right|) - f_i(\left|y_i\right|)\right) \right| \\
&\leq \sum_{i=1}^d \left| \left( f_i(\left|x_i\right|) - f_i(\left|y_i\right|)\right) \right| \\
&\leq \sum_{i=1}^d \alpha \left| x_i - y_i \right| \\
&= \alpha \norm{\vecx - \vecy}_1 .
\end{align*}
Hence we have that every function in $\F$ is $\alpha$-Lipschitz in the $\norm{\vecx - \vecy}_1$ on $B_r(\veczero, \norm{\cdot})$.
For the second condition, we see that for all $\lambda \in (0, 1)$ and $\vecx \neq \veczero$ with $\norm{\vecx} \leq r$,
\begin{align*}
\frac{\partial}{\partial \lambda} \rho(\lambda \vecx) &= \frac{\partial}{\partial \lambda} \sum_{i=1}^d f_i(\left|\lambda x_i\right|) \\
&= \sum_{i=1}^d \left|x_i\right| {f_i'}(\left|\lambda x_i\right|) \\
&\geq \beta \sum_{i=1}^d \left|x_i\right| \\
&= \beta \norm{\vecx}_1 .
\end{align*}
For the third condition, we notice that for all $\vecx \in \R^d$ with $\norm{\vecx}_\infty \geq r$, the minimum $\min \{ |x_1|, |x_2|, \dots, |x_d| \} \geq r$, which implies $f_i(\left|\lambda x_i\right|) \geq \gamma$ for some $i \in \{1, 2, \dots, d\}$, and hence $\rho(\vecx) \geq \gamma$. The fourth condition follows directly from the fact we take the absolute value of each of the $x_i$, and the fifth condition holds since $\rho(\veczero) = \sum_{i=1}^d f_i(0) = 0$.
\end{proof}

\begin{corollary}
\label{corollary:LipschitzFamilyList1}
For each of the following functions, $k$-NN with any of the distances $\rho(\vecx) = \sum_{i=1}^{d} f(\left|x_i\right|)$ is universally consistent:
\begin{enumerate}
\item The exponential function $f_1(x) = e^x$.
\item The function $f_2(x) = \begin{dcases}
\sin(x) & \text{if } x \leq 1 \\
x & \text{if } x \geq 1
\end{dcases}$.
\item The function $f_3(x) = \begin{dcases}
\tan(x) & \text{if } x < \pi/2 \\
\infty & \text{if } x \geq \pi/2
\end{dcases}$.
\item The arctangent function $f_4(x) = \arctan(x)$.
\item The hyperbolic sine function $f_5(x) = \sinh(x)$.
\item The hyperbolic tangent function $f_6(x) = \tanh(x)$.
\end{enumerate}
\end{corollary}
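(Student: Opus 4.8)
The plan is to realize each of the six distances as an instance of Lemma~\ref{lemma:LipschitzFamilyTest} (taking the family $\F$ there to consist of the single function in question, so the ``sequence of random functions'' is the constant one, which is trivially independent of the sample and the query). Concretely, Lemma~\ref{lemma:LipschitzFamilyTest} applied with $f_i = f$ in every coordinate reduces the whole corollary to producing, for each listed $f$, a radius $r > 0$ and constants $0 < \beta < \alpha$, $\gamma > 0$ with $f(0) = 0$, $\beta < f'(x) < \alpha$ on $(0,r)$, and $f(x) \ge \gamma$ for all $x \ge r$; once these are checked, Lemma~\ref{lemma:LipschitzFamilyTest} gives universal consistency of $k$-NN with $\rho(\vecx) = \sum_{i=1}^d f(|x_i|)$ immediately.

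For $f_4 = \arctan$, $f_5 = \sinh$ and $f_6 = \tanh$ this is a one-line verification, valid for any fixed $r$, say $r = 1$: each vanishes at $0$; each has a positive continuous derivative, hence one bounded between two positive constants on $(0,r)$ --- e.g.\ $\arctan'(x) = (1+x^2)^{-1} \in \big(\tfrac{1}{1+r^2},\,1\big)$, $\cosh x \in (1,\cosh r)$, $\operatorname{sech}^2 x \in (\operatorname{sech}^2 r,\,1)$ --- and each is strictly increasing, so $f(x) \ge f(r) > 0$ for $x \ge r$; set $\gamma = f(r)$, then enlarge $\alpha$ or shrink $\beta$ if a nonstrict endpoint must be made strict. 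For $f_1 = e^x$ the only complication is that $f_1(0) = 1 \neq 0$, so $\rho(\veczero) = d$ and Lemma~\ref{lemma:LipschitzFamilyTest} does not apply verbatim. I would instead write $\sum_{i=1}^d e^{|x_i|} = \big(\sum_{i=1}^d (e^{|x_i|}-1)\big) + d$, i.e.\ the $e^x$-distance is the composition of the distance built from $\tilde f_1(x) = e^x - 1$ with the strictly increasing map $t \mapsto t+d$; by Lemma~\ref{lemma:KnnStrictlyIncreasingTransformation} the two distances give identical $k$-NN predictions, and $\tilde f_1$ satisfies $\tilde f_1(0)=0$, $\tilde f_1'(x) = e^x \in (1,e^r)$ on $(0,r)$, and $\tilde f_1(x) \ge e^r - 1 > 0$ for $x \ge r$, so Lemma~\ref{lemma:LipschitzFamilyTest} applies to $\tilde f_1$.

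For $f_2$ and $f_3$, which are only smooth on a bounded interval near $0$ (and $f_3$ even attains $+\infty$), the point is to pick $r$ small. For $f_3 = \tan$ take $r \in (0,\pi/2)$, say $r = 1$: on $(0,1)$, $f_3'(x) = \sec^2 x \in (1,\sec^2 1)$, and for $x \ge 1$ we have $f_3(x) \ge \tan 1 > 0$, with $f_3 \equiv +\infty$ on $[\pi/2,\infty)$ causing no trouble by the Remark following Theorem~\ref{theorem:LipschitzFamilyKnnIsUniversallyConsistent}; take $\gamma = \tan 1$. For $f_2$ take any $r < 1$, say $r = \tfrac12$: on $(0,\tfrac12)$, $f_2 = \sin$ with $f_2'(x) = \cos x \in (\cos \tfrac12,\,1)$, and for $x \ge \tfrac12$ either $f_2(x) = \sin x \ge \sin \tfrac12$ (when $x \le 1$) or $f_2(x) = x \ge 1$, so in all cases $f_2(x) \ge \sin \tfrac12 > 0$; take $\gamma = \sin \tfrac12$.

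I do not anticipate a genuine obstacle: the corollary is essentially a checklist against Lemma~\ref{lemma:LipschitzFamilyTest}. The only two things requiring a moment's care are (i) that $f_1$ must first be normalized to vanish at $0$, which costs nothing by Lemma~\ref{lemma:KnnStrictlyIncreasingTransformation}, and (ii) that for $f_2$ and $f_3$ the radius $r$ must be kept below $1$, respectively below $\pi/2$, so that $f'$ stays bounded above on $(0,r)$ and the value $+\infty$ of $f_3$ is attained only outside $B_r(\veczero,\norm{\cdot})$, where the Remark permits it.
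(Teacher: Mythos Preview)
Your proposal is correct and follows essentially the same approach as the paper: apply Lemma~\ref{lemma:LipschitzFamilyTest} directly to $f_2,\dots,f_6$, and for $f_1 = e^x$ pass to $\tilde f_1(x) = e^x - 1$ via the strictly-increasing-transformation invariance (Lemma~\ref{lemma:KnnStrictlyIncreasingTransformation}). Your write-up is in fact more careful than the paper's two-line proof --- you spell out the choice of $r$ for $f_2$ and $f_3$, and you correctly identify the transformation as $t \mapsto t + d$ at the level of the summed distance rather than at the level of the single-variable $f$.
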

\begin{proof}
We see that Lemma \ref{lemma:LipschitzFamilyTest} hold for the functions $f_2, f_3, f_4, f_5, f_6$ directly, and $e^x - 1$ satisfies the conditions of the lemma and is a strictly increasing transformation of $f_1$.
\end{proof}

We now show that Theorem \ref{theorem:KnnWithSandwichedNormIsUniversallyConsistent} follows from Theorem \ref{theorem:LipschitzFamilyKnnIsUniversallyConsistent}.
\begin{theorem}
\label{theorem:LipschitzFamilyNormsKnn}
Let $\F$ be a family of norms on $\R^d$ such that there exists a norm $\norm{\cdot}$ on $\R^d$ and a constant $C \geq 1$ such that for all $\rho \in \F$ and $\vecx \in \R^d$, $\frac1C \norm{\vecx} \leq \rho(\vecx) \leq C \norm{\vecx}$. We then have that $k$-NN with any sequence of random norms (independent of the sample and the query) in $\F$ is universally consistent.
\end{theorem}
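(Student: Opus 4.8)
The plan is to verify that the family $\F$ of norms satisfies all five hypotheses of Theorem~\ref{theorem:LipschitzFamilyKnnIsUniversallyConsistent}, with the underlying reference norm taken to be $\norm{\cdot}$ itself, and then invoke that theorem directly. Since every $\rho \in \F$ is a genuine norm, all five conditions reduce to elementary consequences of absolute homogeneity, the triangle inequality, and the two-sided bound $\frac1C \norm{\vecx} \le \rho(\vecx) \le C\norm{\vecx}$.

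Concretely, first I would fix any radius $r > 0$ (the choice is immaterial) and set $\alpha = C$, $\beta = 1/C$, $\gamma = r/C$. For condition~1, the reverse triangle inequality gives $\lvert \rho(\vecx) - \rho(\vecy)\rvert \le \rho(\vecx - \vecy) \le C\norm{\vecx - \vecy}$ for all $\vecx, \vecy \in \R^d$, so each $\rho$ is in fact globally $C$-Lipschitz, hence $C$-Lipschitz on $B_r(\veczero, \norm{\cdot})$. For condition~2, absolute homogeneity gives $f(\lambda) = \rho(\lambda \vecx) = \lambda\,\rho(\vecx)$ for $\lambda > 0$, so $f'(\lambda) = \rho(\vecx) \ge \frac1C\norm{\vecx} = \beta\norm{\vecx}$ for every $\lambda \in (0,1)$ and every nonzero $\vecx$. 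For condition~3, if $\norm{\vecx} \ge r$ then $\rho(\vecx) \ge \frac1C\norm{\vecx} \ge r/C = \gamma$. Condition~4 is immediate since $\rho(-\vecx) = \lvert -1\rvert\,\rho(\vecx) = \rho(\vecx)$, and condition~5 holds because $\rho(\veczero) = 0$ for any norm.

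Having checked all five conditions, Theorem~\ref{theorem:LipschitzFamilyKnnIsUniversallyConsistent} applies verbatim and yields that $k$-NN with any sequence of random norms in $\F$ (independent of the sample and the query) is universally consistent. I do not expect a genuine obstacle here; the only points requiring a moment of care are that the Lipschitz hypothesis in Theorem~\ref{theorem:LipschitzFamilyKnnIsUniversallyConsistent} is merely local on $B_r(\veczero, \norm{\cdot})$, which is why an arbitrary $r$ suffices, and that the derivative bound in condition~2 is exactly the lower homogeneity constant $1/C$ rather than anything depending delicately on $\vecx$. One may also observe that this recovers Theorem~\ref{theorem:KnnWithSandwichedNormIsUniversallyConsistent}, since a family with $\norm{\cdot}_L \preceq \rho \preceq \norm{\cdot}_U$ satisfies $\frac1C\norm{\cdot} \le \rho \le C\norm{\cdot}$ for a suitable $C \ge 1$ and any fixed norm $\norm{\cdot}$ by the equivalence of norms on $\R^d$.
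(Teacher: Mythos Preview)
Your proposal is correct and follows essentially the same approach as the paper: both proofs verify the five hypotheses of Theorem~\ref{theorem:LipschitzFamilyKnnIsUniversallyConsistent} using the reverse triangle inequality for condition~1, absolute homogeneity for condition~2, the lower bound $\rho(\vecx)\ge \tfrac1C\norm{\vecx}$ for condition~3, and the norm axioms for conditions~4 and~5. The only cosmetic difference is that the paper fixes $r=1$ (hence $\gamma=1/C$), whereas you observe, correctly, that any $r>0$ works with $\gamma=r/C$; your derivative computation via $f(\lambda)=\lambda\,\rho(\vecx)$ is also slightly cleaner than the paper's.
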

\begin{proof}
We let $\vecx, \vecy \in \R^d$ be two points in $\R^d$. We see that
\begin{align*}
\left| \rho(\vecx) - \rho(\vecy) \right| &\leq \rho(\vecx - \vecy) \\
&\leq C \norm{\vecx - \vecy} .
\end{align*}

Hence we have that $\rho$ is Lipschitz with constant $C$ on $(\R^d, \norm{\cdot})$. Furthermore, we see that if we let $f(\lambda) = \rho(\lambda \vecx)$ for some $\vecx \neq 0$, for all $\lambda \in (0, 1)$,
\begin{align*}
f'(\lambda) &= \frac{\partial}{\partial \lambda} \rho(\lambda \vecx) \\
&\leq \frac{\partial}{\partial \lambda} \frac1C \norm{\lambda \vecx} \\
&= \frac{\partial}{\partial \lambda} \frac{\left| \lambda \right|}{C} \norm{\vecx} \\
&= \frac1C \norm{\vecx} .
\end{align*}

Additionally, we have that for all $\vecx$ such that $\norm{\vecx} \geq 1$, $\rho(\vecx) \geq \frac1C \norm{\vecx} \geq \frac1C$. We also see that $\rho(\vecx) = \rho(-\vecx)$ and $\rho(\veczero) = 0$ since $\rho$ is a norm. Hence we find that $\F$ satisfies the conditions of Theorem \ref{theorem:LipschitzFamilyKnnIsUniversallyConsistent} with $\alpha = C$, $\beta = \frac1C$, $\gamma = \frac1C$, and $r = 1$.
\end{proof}

Unfortunately, this result does not extend to quasinorms on $\R^d$, as they are not necessarily Lipschitz (or even locally Lipschitz near zero). In particular, the $\ell^p$ quasinorms with $p \in (0, 1)$ are not Lipschitz for any open ball around zero, with their partial derivatives in the $i^{\text{th}}$ coordinate being unbounded near the $i^{\text{th}}$ axis. To prove universal consistency for $k$-NN with quasinorms (which remains an open question), one must employ a different approach.

We can also consider families of polynomials.
\begin{theorem}
\label{theorem:LipschitzFamilyLinearPolynomialsKnn}
Let $\alpha, \beta > 0$ be fixed constants and $p \geq 1$ be an integer. Let $\F$ be the family of polynomials of the form
\begin{equation}
\F = \left\{ a_1 x + a_2 x^2 + \dots + a_p x^p \;\middle|\; a_1 \geq \beta;\; \forall m \geq 1, 0 \leq a_m \leq \alpha \right\} .
\end{equation}
That is, for any polynomial in $\F$ the first coefficient must be at least $\beta$ and all coefficients must be nonnegative and at most $\alpha$. Suppose we let $\mathcal{G}$ be the family of functions defined by applying $f$ to the modulus of the difference of each coordinate. Then $k$-NN with any sequence of random functions in $\mathcal{G}$ (independent of the sample and the query) is universally consistent.
\end{theorem}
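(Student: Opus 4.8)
The plan is to reduce directly to Lemma \ref{lemma:LipschitzFamilyTest}, which already packages exactly the sufficient condition we need for coordinatewise–separable distances. Every $g \in \mathcal{G}$ has the form $g(\vecx) = \sum_{i=1}^d f_i(|x_i|)$, where each $f_i(x) = a^{(i)}_1 x + a^{(i)}_2 x^2 + \dots + a^{(i)}_p x^p$ is a polynomial from $\F$ (the same polynomial in each coordinate, or more generally a possibly different one — Lemma \ref{lemma:LipschitzFamilyTest} allows this). So it suffices to exhibit constants $\alpha', \beta', \gamma' > 0$ and a radius $r > 0$, uniform over all of $\F$, witnessing the three hypotheses of that lemma.

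First I would fix $r = 1$. For any $f(x) = a_1 x + \dots + a_p x^p$ with $a_1 \geq \beta$ and $0 \leq a_m \leq \alpha$ for all $m$, we have $f(0) = 0$ immediately. Differentiating, $f'(x) = a_1 + 2 a_2 x + \dots + p a_p x^{p-1}$; since every summand is nonnegative on $(0,\infty)$ and $a_1 \geq \beta$, this gives $f'(x) \geq \beta > \beta/2$ for all $x \in (0,1)$. For the upper bound, on $(0,1)$ each monomial $x^{m-1}$ is at most $1$, so $f'(x) \leq \alpha(1 + 2 + \dots + p) = \alpha p(p+1)/2 < \alpha p(p+1)/2 + 1$. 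Hence the second hypothesis of Lemma \ref{lemma:LipschitzFamilyTest} holds with the uniform choices $\beta' = \beta/2$ and $\alpha' = \alpha p(p+1)/2 + 1$, which depend only on $\alpha, \beta, p$ and not on the particular polynomial.

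Next I would treat the outer bound. For $x \geq 1$, again using nonnegativity of the higher-order terms together with $x \geq 1$, we get $f(x) \geq a_1 x \geq \beta$, so the third hypothesis holds with $\gamma' = \beta$ and $r = 1$. All hypotheses of Lemma \ref{lemma:LipschitzFamilyTest} are thus satisfied with $(\alpha', \beta', \gamma', r) = \bigl(\alpha p(p+1)/2 + 1,\ \beta/2,\ \beta,\ 1\bigr)$, and the lemma directly yields universal consistency of $k$-NN with any sequence of random functions in $\mathcal{G}$ that is independent of the sample and the query.

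I do not expect a genuine obstacle here: the only points requiring care are the choice $r = 1$, which keeps the monomials $x^{m-1}$ bounded so that $f'$ stays uniformly bounded above over all of $\F$, and keeping the inequalities in Lemma \ref{lemma:LipschitzFamilyTest} strict, which is why I pass from $\beta$ down to $\beta/2$ and pad the Lipschitz constant by $1$. One could instead route through Theorem \ref{theorem:LipschitzFamilyKnnIsUniversallyConsistent} via Lemma \ref{lemma:LipschitzFunctionDifferentiabilityCondition}, but going through Lemma \ref{lemma:LipschitzFamilyTest} is cleanest since the distances in $\mathcal{G}$ are exactly of its separable form.
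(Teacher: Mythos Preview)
Your proposal is correct and follows essentially the same route as the paper: both verify the hypotheses of Lemma \ref{lemma:LipschitzFamilyTest} with $r=1$, bounding $f'(x)$ above by $\alpha\,p(p+1)/2$ on $(0,1)$ and below by $\beta$, and then using positivity of the coefficients to get the lower bound for $x\ge 1$. If anything your write-up is slightly more careful than the paper's, since you explicitly adjust the constants to respect the \emph{strict} inequalities $\beta' < f_i'(x) < \alpha'$ required by Lemma \ref{lemma:LipschitzFamilyTest} and you pin down a uniform $\gamma' = \beta$ rather than the polynomial-dependent value $f(1)$.
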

\begin{proof}
We verify the conditions of Lemma \ref{lemma:LipschitzFamilyTest} for any function $f \in \F$. First, we find an upper bound for the derivative of $f$ on the interval $[0, 1]$,
\begin{align*}
f'(x) &= \sum_{m=1}^p a_m m x^{m-1} \\
&\leq \alpha \sum_{m=1}^p m \\
&= \alpha \frac{m (m+1)}{2} .
\end{align*}

We also find that $f'$ is bounded below by $\beta$ on $(0, \infty)$, since $f'(x) = \sum_{m=1}^p a_m m x^{m-1}$ with the first term being $\beta$ and all the other terms being nonnegative. This implies that $f$ is monotone increasing on $(0, \infty)$, so $f(x) \geq f(1)$ for all $x \geq 1$. Hence we have that $f$ satisfies the conditions of Lemma \ref{lemma:LipschitzFamilyTest} and so $k$-NN with a sequence from  the corresponding family of distances $\mathcal{G}$ is universally consistent.
\end{proof}

As with norms, we can take linear combinations of these functions and add them to our family.
\begin{lemma}
\label{lemma:LipschitzFamilyLinearCombinations}
Let $\F$ be a family of distances satisfying the conditions of Theorem \ref{theorem:LipschitzFamilyKnnIsUniversallyConsistent}. If we let $\mathcal{G}$ be the family of all functions $\rho: \R^d \to \R$ of the form (with $p \geq 1$ being a fixed constant)
\begin{equation}
\rho((x_1, x_2, \dots, x_d)) = \sum_{i=1}^p \frac{a_i}{A} \rho_i(\vecx)
\end{equation}
where $0 \leq a_i \leq 1$, at least one $a_i$ is strictly positive ($a_i > 0$), $A = \sum_{i=1}^p a_i$, and $\rho_i \in \F$. Then $k$-NN any sequence of functions in $\mathcal{G}$ (independent of the sample and the query) is universally consistent (in $\mathcal{G}$, we can also include strictly increasing functions of such linear combinations).
\end{lemma}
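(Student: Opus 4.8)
The plan is to verify that the family $\mathcal{G}$ itself satisfies the five hypotheses of Theorem~\ref{theorem:LipschitzFamilyKnnIsUniversallyConsistent}, using the \emph{same} constants $\alpha,\beta,\gamma,r$ that witness those hypotheses for $\F$; the universal consistency statement for $\mathcal{G}$ then follows immediately by applying Theorem~\ref{theorem:LipschitzFamilyKnnIsUniversallyConsistent} to $\mathcal{G}$, and the final parenthetical claim (allowing strictly increasing post-compositions of the linear combinations) follows from Lemma~\ref{lemma:KnnStrictlyIncreasingTransformation} exactly as in the ``furthermore'' clause of Theorem~\ref{theorem:LipschitzFamilyKnnIsUniversallyConsistent}.

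Fix $\rho = \sum_{i=1}^p \frac{a_i}{A}\rho_i \in \mathcal{G}$ with each $\rho_i \in \F$, $0 \le a_i \le 1$, at least one $a_i > 0$, and $A = \sum_{i=1}^p a_i$; since $A > 0$ the coefficients $c_i := a_i/A$ are well defined, nonnegative, and satisfy $\sum_{i=1}^p c_i = 1$, and $\rho$ is Borel measurable as a finite linear combination of Borel functions. Each of the five conditions then survives this convex combination, in each case by a one-line computation exploiting $\sum_i c_i = 1$: (i) for $\vecx,\vecy \in B_r(\veczero,\norm{\cdot})$, $\left|\rho(\vecx)-\rho(\vecy)\right| \le \sum_i c_i \left|\rho_i(\vecx)-\rho_i(\vecy)\right| \le \alpha\norm{\vecx-\vecy}$; (ii) for $\vecx \ne \veczero$ with $\norm{\vecx}\le r$, the map $\lambda \mapsto \rho(\lambda\vecx) = \sum_i c_i\,\rho_i(\lambda\vecx)$ is a finite convex combination of differentiable functions, hence differentiable on $(0,1)$ with derivative $\sum_i c_i \frac{d}{d\lambda}\rho_i(\lambda\vecx) \ge \sum_i c_i\,\beta\norm{\vecx} = \beta\norm{\vecx}$; (iii) if $\norm{\vecx}\ge r$ then $\rho(\vecx) = \sum_i c_i\,\rho_i(\vecx) \ge \sum_i c_i\,\gamma = \gamma$; (iv) $\rho(-\vecx) = \sum_i c_i\,\rho_i(-\vecx) = \sum_i c_i\,\rho_i(\vecx) = \rho(\vecx)$; and (v) $\rho(\veczero) = \sum_i c_i\,\rho_i(\veczero) = 0$.

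With these five conditions established for $\mathcal{G}$, any random sequence of functions in $\mathcal{G}$ that is independent of the sample and the query is in particular a random sequence drawn from a family satisfying the hypotheses of Theorem~\ref{theorem:LipschitzFamilyKnnIsUniversallyConsistent}, so that theorem gives universal consistency of the corresponding $k$-NN rule, and its ``furthermore'' clause (equivalently Lemma~\ref{lemma:KnnStrictlyIncreasingTransformation}) handles sequences of strictly increasing functions of such combinations. The argument also goes through verbatim if the $\rho_i$ are permitted to take the value $\infty$ outside $B_r(\veczero,\norm{\cdot})$, via the remark following Theorem~\ref{theorem:LipschitzFamilyKnnIsUniversallyConsistent}, since a convex combination with at least one positive coefficient of such functions again equals $\infty$ there. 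There is essentially no obstacle: every one of the five conditions is an inequality or identity that is stable under convex combination precisely because the weights sum to one, and the only point needing the slightest care is that differentiability in condition (ii) is preserved because the combination is \emph{finite}.
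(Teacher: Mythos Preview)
Your proof is correct and follows exactly the same strategy as the paper: verify that $\mathcal{G}$ satisfies the five hypotheses of Theorem~\ref{theorem:LipschitzFamilyKnnIsUniversallyConsistent} and then invoke that theorem (together with Lemma~\ref{lemma:KnnStrictlyIncreasingTransformation} for the parenthetical remark). In fact your bounds are slightly sharper than the paper's, since you exploit $\sum_i c_i = 1$ to retain the same constants $\alpha,\beta,\gamma$, whereas the paper loosens to $\alpha p$ for the Lipschitz constant and (by what appears to be an arithmetic slip) writes $\beta p\norm{\vecx}$ for the derivative lower bound.
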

\begin{proof}
For any function $\rho \in \mathcal{G}$ of the above form, we see that $\rho$ is $\alpha p$-Lipschitz, since
\begin{align*}
\left|\rho(\vecx) - \rho(\vecy)\right| &= \left|\sum_{i=1}^p \frac{a_i}{A} (\rho_i(\vecx) - \rho_i(\vecy))\right| \\
&\leq \sum_{i=1}^p \frac{a_i}{A} \left|\rho_i(\vecx) - \rho_i(\vecy)\right| \\
&\leq \sum_{i=1}^p \left|\rho_i(\vecx) - \rho_i(\vecy)\right| \\
&= \alpha p \norm{\vecx - \vecy} .
\end{align*}

Furthermore, we see that the derivative is bounded below by $\beta p \norm{\vecx}$,
\begin{align*}
\frac{\partial}{\partial \lambda} \rho(\lambda \vecx) &= \frac{\partial}{\partial \lambda} \sum_{i=1}^p \frac{a_i}{A} (\rho_i(\lambda \vecx)) \\
&\geq \sum_{i=1}^p \frac{a_i}{A} \beta \norm{\vecx} \\
&= \beta p \norm{\vecx} .
\end{align*}

Since each $\rho_i$ is bounded below by $\gamma$ outside $B_r(\veczero)$ for some fixed $r > 0$, we see that for all $\vecx \in \R^d$ such that $\norm{\vecx} \geq r$,
\begin{align*}
\rho(\vecx) &= \sum_{i=1}^p \frac{a_i}{A} \rho_i(\vecx) \\
&\geq \sum_{i=1}^p \frac{a_i}{A} \gamma \\
&= \gamma .
\end{align*}

For the fourth and fifth conditions, we see that since each $\rho_i$ is symmetric and takes value zero at the zero vector, the same holds for $\rho$.

By Lemma \ref{lemma:KnnStrictlyIncreasingTransformation}, we can include strictly increasing transformations of such functions, and the same result will be generated, so universal consistency is preserved.
\end{proof}

\chapter{Adaptive $k$-NN}

\begin{figure}[h]
\centering
\includegraphics[scale=0.5]{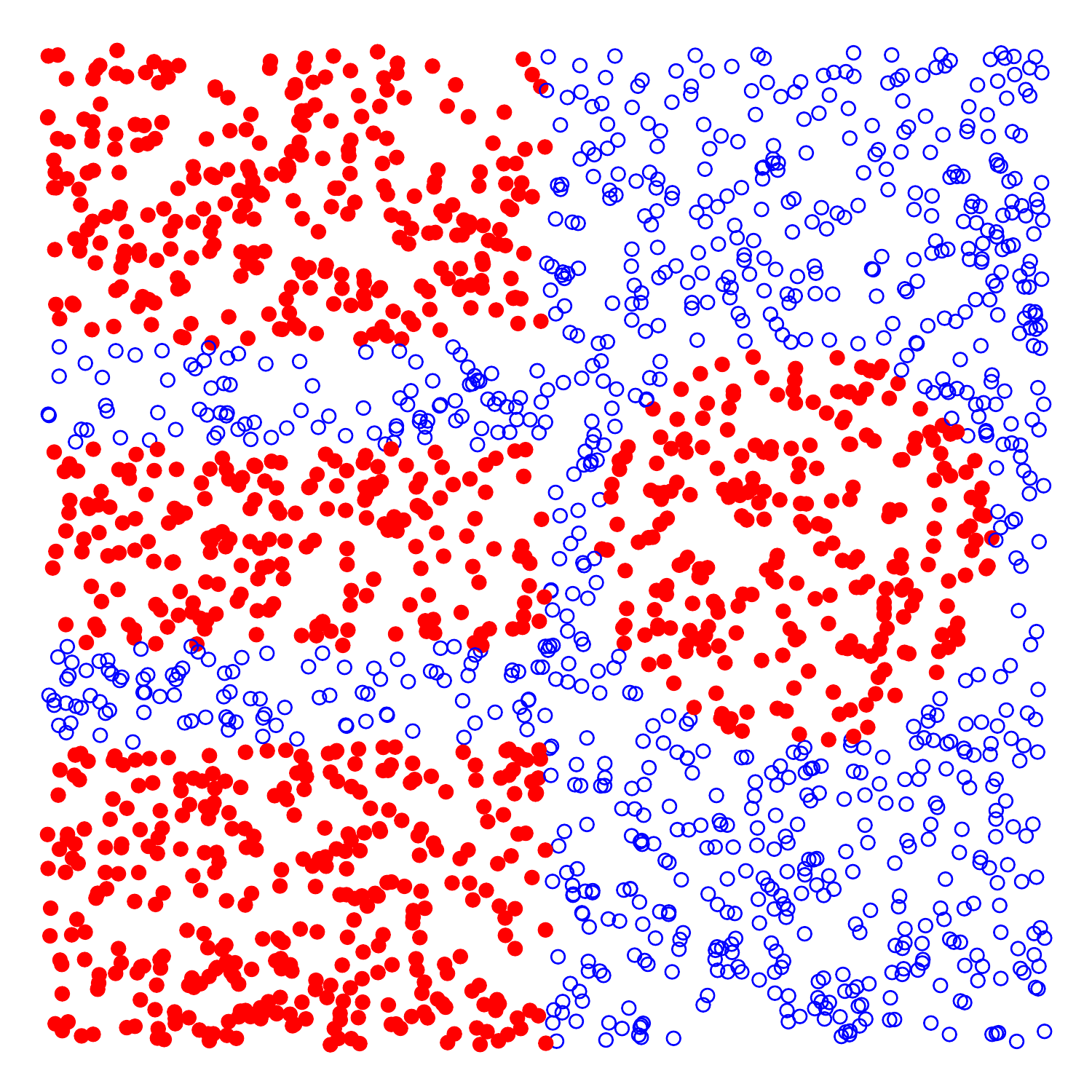}
\caption{We see we would like to use a different norm for $k$-NN on the left side and the right side, and possibly within the right side, for this dataset.}
\label{fig:AdaptiveKnn}
\end{figure}

In this chapter, we investigate under what conditions we can select the distance for $k$-NN based on the query and the sample and retain the universal consistency proof in Stone's theorem. We first look at a limitation of Stone's theorem, which prevents us from using the sample labels for a classifier we would like to prove is universal consistent using Stone's theorem. We then define a modified $k$-NN in which we can select the distance based on the query and on the sample points (but not the labels). We illustrate why such an adaptive procedure is useful in Figure \ref{fig:AdaptiveKnn}.

\section{Limitations of Stone's Theorem in Adaptive $k$-NN}

In Stone's theorem, we assumed that the weights $W_{ni}(X)$ are a functions of the query $X$, the sample points $X_1, X_2, \dots, X_n$, and an independent random variable $V$ only, and not on the sample labels $Y_1, Y_2, \dots, Y_n$. In the following example we show that this assumption is necessary.

\begin{example}
\label{example:StonesTheoremLabelCounterexample}
Suppose we have a joint distribution for $(X, Y) \in \R^d \times \{0, 1\}$ with $X$ having a multivariate normal distribution and $Y$ being an independent Bernoulli random variable with a probability of $1/3$ of being zero and $2/3$ of being one. We have an iid labelled sample $(X_1, Y_1), (X_2, Y_2), \dots, (X_n, Y_n)$ and a query $(X, Y)$ from this distribution. It is clear that the Bayes error for this distribution is $1/3$, and is attained by selecting label one always.

Suppose we let $k \to \infty$ and $k / n \to 0$ as $n \to \infty$. We define weights $W_{ni}(X)$ that are $1/k$ if $X_i$ is both a $5 k$ nearest neighbour of $X$ in the Euclidean norm and $X_i$ is one of the $k$ nearest points to $X$ among those $5 k$ points with label zero (if there are not enough such points, then we take points among the nearest $5k$ with label one until we have sufficiently many points), and zero otherwise. We see that the weights $W_{ni}(X)$ depend on the labels $Y_1, Y_2, \dots, Y_n$ and that they can only be nonzero for points that among the $5 k$ nearest neighbours of $X$ in the Euclidean norm. By the strong law of large numbers the fraction of the nearest $5 k$ points with label zero approaches $1 / 3$ as $n \to \infty$. Hence with probability approaching one there will be at least $k$ points among the nearest $5 k$ with label zero, thus the weights will be nonzero only for points with label zero, and so the point will be classified as zero by the weight based classifier (of the form in equation \eqref{eq:WeightedClassifier}). It follows that the expected misclassification error is $2/3$, which is much higher than the Bayes error (indeed, it is worse than randomly guessing).

However, we see that $W_{ni}(X)$ satisfies the three conditions of Stone's theorem (the only assumption violated is the initial one that $W_{ni}(X)$ does not depend on the sample labels):
\begin{enumerate}
\item For the first condition, from Lemma \ref{lemma:StonesLemmaCore} we find that
\begin{align*}
&\phantom{{}={}}\E \left[ \sum_{i=1}^{n} W_{ni}(X) f(X_i) \right] \\
&\leq \E \left[ \sum_{i=1}^{n} \frac1k \Indicator_{\{ X_i \text{ is a } \norm{\cdot} \text{ } 5 k \text{-NN of } X \text{ among } X_1, \dots, X_{i-1}, X_i, X_{i+1}, \dots, X_n \}} f(X_i) \right] \\
&= \frac1k \E \left[ \sum_{i=1}^{n} \Indicator_{\{ X \text{ is a } \norm{\cdot} \text{ } 5 k \text{-NN of } X_i \text{ among } X_1, \dots, X_{i-1}, X, X_{i+1}, \dots, X_n \}} f(X) \right] \\
&\leq 5 c \E[f(X)] .
\end{align*}

\item For the second condition, we see that $\sum_{i=1}^n W_{ni}(X) \Indicator_{\{\norm{X_i - X} > a \}}$ is bounded above by one, is nonnegative, and is nonzero if and only if the $5 k^{\text{th}}$ nearest point to $X$ has a distance of at most $a$. By Lemma \ref{lemma:ProbKPointIsFarGoesToZero}, the probability of this goes to zero as $n$ goes to infinity (since $5k / n \to 0$ as $n \to \infty$), and hence the expected value $\E\left[ \sum_{i=1}^n W_{ni}(X) \Indicator_{\{\norm{X_i - X} > a \}} \right] \to 0$ as $n \to \infty$.

\item The third condition follows from the fact that $k \to \infty$ as $n \to \infty$, so that $\max_{1 \leq i \leq n} W_{ni}(X) = 1 / k \to 0$ as $n \to \infty$.
\end{enumerate}

This example satisfies the three conditions of Stone's theorem, but is not universally consistent, the only assumption violated is that the weights $W_{ni}(X)$ depends on the sample labels $Y_1, Y_2, \dots, Y_n$. Hence this requirement in Stone's theorem is essential and cannot be removed. If a learning rule depends on the sample labels $Y_1, Y_2, \dots, Y_n$ for the weights $W_{ni}(X)$, we cannot use Stone's theorem (at least without modifications for that specific learning rule) to prove it is universally consistent.
\end{example}

\section{Consistency of $k$-NN with an Adaptively Chosen Sequence of Distances}

In this section we investigate the conditions under which we can select the distance for $k$-NN, depending on the query $X$ and the sample points $X_1, X_2, \dots, X_n$.
\begin{theorem}
\label{theorem:AdaptiveKnn}
Let $\norm{\cdot}$ be a norm on $\R^d$ and let $X_{(1, \norm{\cdot})}, X_{(2, \norm{\cdot})}, \dots, X_{(n, \norm{\cdot})}$ be the points in the sample in order of distance from $X$. If $m \geq 1$ is a constant and the weight function $W_{ni}(X)$ is a function of the query $X$, the sample points $X_1, X_2, \dots, X_n$, and an independent random variable $V$, has support that is a subset of $X_{(1, \norm{\cdot})}, X_{(2, \norm{\cdot})}, \dots, X_{(mk, \norm{\cdot})}$ (that is, it is nonzero only on the nearest $mk$ points to $X$ in the $\norm{\cdot}$ norm) and $\E\left[\max_{1 \leq i \leq n} W_{ni}(X)\right] \to 0$, then the weight based classifier (of the form in equation \eqref{eq:WeightedClassifier}) is universally consistent.
\end{theorem}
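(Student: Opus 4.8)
The plan is to verify the three hypotheses of Stone's Theorem (Theorem~\ref{theorem:StonesTheorem}) for the weights $W_{ni}(X)$. Stone's Theorem is the right instrument here precisely because, by assumption, $W_{ni}(X)$ is a function of $X$, of the sample points $X_1,\dots,X_n$, and of an independent random variable $V$ only, and never of the labels $Y_1,\dots,Y_n$, so the obstruction of Example~\ref{example:StonesTheoremLabelCounterexample} does not arise; once its three conditions are checked, Stone's Theorem gives $\err(\ELL_n)-\ell^*\to 0$ for every distribution on $\R^d\times\{0,1\}$, i.e.\ universal consistency of the classifier~\eqref{eq:WeightedClassifier}. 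Throughout I use only that $W_{ni}(X)\ge 0$, that $\sum_{i=1}^n W_{ni}(X)=1$ with each $W_{ni}(X)\le 1/k$, and that $W_{ni}(X)=0$ unless $X_i$ is one of the $mk$ nearest sample points to $X$ in the \emph{fixed} norm $\norm{\cdot}$.

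Two of the conditions are short. The third condition of Stone's Theorem is exactly the stated hypothesis $\E[\max_{1\le i\le n}W_{ni}(X)]\to 0$. For the second, fix $a>0$: on the event $\{\norm{X_{(mk,\norm{\cdot})}-X}\le a\}$ every point carrying positive weight lies within $\norm{\cdot}$-distance $a$ of $X$, so the inner sum $\sum_{i=1}^n W_{ni}(X)\Indicator_{\{\norm{X_i-X}>a\}}$ vanishes, while on the complement it is at most $\sum_{i=1}^n W_{ni}(X)=1$; hence
\begin{equation*}
\E\left[\sum_{i=1}^n W_{ni}(X)\Indicator_{\{\norm{X_i-X}>a\}}\right]\le \Prob\left(\norm{X_{(mk,\norm{\cdot})}-X}>a\right),
\end{equation*}
and the right-hand side tends to $0$ by Lemma~\ref{lemma:ProbKPointIsFarGoesToZero} applied to the norm $\norm{\cdot}$ with the count $mk$ in place of $k$, which is legitimate since $mk/n=m\cdot(k/n)\to 0$.

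The first condition is the heart of the argument. The idea is to dominate $W_{ni}(X)$ by $m$ copies of the ordinary $mk$-nearest-neighbour weight for the fixed norm $\norm{\cdot}$: since $W_{ni}(X)\le 1/k$ and vanishes off the $mk$ nearest neighbours of $X$,
\begin{equation*}
W_{ni}(X)\le \frac{1}{k}\,\Indicator_{\{X_i\text{ is an }mk\text{-NN of }X\text{ in }\norm{\cdot}\}}=m\cdot\frac{1}{mk}\,\Indicator_{\{X_i\text{ is an }mk\text{-NN of }X\text{ in }\norm{\cdot}\}}.
\end{equation*}
Applying Lemma~\ref{lemma:StonesLemmaCore} (Stone's Lemma) to the \emph{fixed} norm $\norm{\cdot}$ with neighbour count $mk$ — its covering constant $c$, furnished by Lemma~\ref{lemma:UnitSphereGeneratesFiniteConeCovering}, depends only on $\norm{\cdot}$ and the dimension, not on $k$ or $n$ — then gives, for every measurable $f\colon\R^d\to\R$ with $0\le f\le 1$,
\begin{equation*}
\E\left[\sum_{i=1}^n W_{ni}(X)f(X_i)\right]\le m\,\E\left[\sum_{i=1}^n\frac{1}{mk}\Indicator_{\{X_i\text{ is an }mk\text{-NN of }X\text{ in }\norm{\cdot}\}}f(X_i)\right]\le mc\,\E[f(X)],
\end{equation*}
so condition~\eqref{eq:StonesTheoremCondition1} holds with constant $mc$ and $\epsilon_n\equiv 0$. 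Unwinding Lemma~\ref{lemma:StonesLemmaCore}, the geometric input is the exchange of the i.i.d.\ pair $X\leftrightarrow X_i$ together with the cone covering of $(\R^d,\norm{\cdot})$, which bounds by $mkc$ the number of sample points $X_i$ for which $X$ can itself be one of the $mk$ nearest neighbours of $X_i$.

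I expect the first condition to be the main obstacle, and specifically the requirement that the bound be uniform in $n$. The subtlety is that the weights may be distributed arbitrarily over the $mk$ nearest neighbours, so the $O(mk)$ count of potential weight-recipients produced by the cone argument must be absorbed by the $O(1/k)$ size of each individual weight; this is exactly why the support of $W_{ni}(X)$ must be taken inside the $mk$-ball of the \emph{fixed} norm $\norm{\cdot}$ (for which Lemma~\ref{lemma:StonesLemmaCore} is available), rather than inside a neighbourhood of an adaptively chosen distance — configurations of the kind in Example~\ref{example:DependentNormsCounterexample} would otherwise defeat the counting step.
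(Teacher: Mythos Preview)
Your proof is correct and follows essentially the same route as the paper's: verify the three conditions of Stone's Theorem, handling the first by dominating $W_{ni}(X)$ with $\tfrac{1}{k}\Indicator_{\{X_i\text{ is an }mk\text{-NN of }X\}}$ and invoking the cone/Stone's Lemma bound for the fixed norm with neighbour count $mk$, the second via Lemma~\ref{lemma:ProbKPointIsFarGoesToZero} with $mk$ in place of $k$, and the third by hypothesis. Your write-up is slightly cleaner in that you apply Lemma~\ref{lemma:StonesLemmaCore} as a black box rather than re-running the marking argument inline, and you make explicit the bound $W_{ni}(X)\le 1/k$ that the paper's proof also uses but the theorem statement does not list.
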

\begin{proof}
We see that the weights $W_{ni}$ depend only on the query $X$, the sample points $X_1, X_2, \dots, X_n$, and an independent random variable $V$, so we can apply Stone's theorem. We check the conditions of Stone's theorem:
\begin{enumerate}
\item For the first condition, we define $c$ cones and mark the $m k$ nearest neighbours of the query $X$ in each cone, as in Lemma \ref{lemma:StonesLemmaCore}. By the argument in Lemma \ref{lemma:StonesLemmaCore} (replacing the $k$ nearest neighbour by $mk$ nearest neighbour) we find that
\begin{align*}
&\phantom{{}={}}\sum_{i=1}^n \E \left[ W_{ni}(X) f(X_i) \right] \\
&= \E \left[ \sum_{i=1}^n \frac1k \Indicator_{\{ X_i \text{ is a } m k \text{-nearest neighbour of } X \text{ in the } \norm{\cdot} \text{ norm among } X_1, X_2, \dots, X_n \}} f(X_i) \right] \\
&= \frac1k \E \left[ \sum_{i=1}^n \Indicator_{\{ X \text{ is a } m k \text{-nearest neighbour of } X_i \text{ in the } \norm{\cdot} \text{ norm among } X_1, \dots, X_{i-1}, X, X_{i+1}, \dots, X_n \}} f(X) \right] \\
&= \frac1k \E \left[ f(X) \sum_{i=1}^n \Indicator_{\{ X \text{ is a } m k \text{-nearest neighbour of } X_i \text{ in the } \norm{\cdot} \text{ norm among } X_1, \dots, X_{i-1}, X, X_{i+1}, \dots, X_n \}} \right] \\
&\leq \frac{1}{k} \E \left[ f(X) \sum_{i=1}^n \Indicator_{\{ X_i \text{ is marked} \}} \right] \\
&\leq \frac{m}{k} \E \left[ f(X) (m k) c \right] \\
&= c m \E\left[f(X)\right] .
\end{align*}
\item We see that for any fixed $a > 0$, $\Prob(\norm{X - X_{(m k, \norm{\cdot})}} > a) \to 0$ as $n \to \infty$ by Lemma \ref{lemma:ProbKPointIsFarGoesToZero}, and hence the second condition follows (as in Lemma \ref{lemma:StonesTheoremSecondConditionNorm}, with $k$ replaced by $m k$, since $k / n \to 0$ as $n \to \infty$, $m k / n \to 0$ as $n \to \infty$).
\item The third condition holds by assumption.
\end{enumerate}
\end{proof}

Theorem \ref{theorem:AdaptiveKnn} remains true if we replace the fixed norm $\norm{\cdot}$ with a sequence ${\left(\rho_n\right)}_{n=1}^\infty$ of either random norms on $\R^d$ from a bounded family (from Theorem \ref{theorem:KnnWithSandwichedNormIsUniversallyConsistent}) or of random uniformly locally Lipschitz distances (from Theorem \ref{theorem:LipschitzFamilyKnnIsUniversallyConsistent}) such that the sequence is chosen independently of the query and of the sample. We now prove this for the most general case we have seen (a sequence of random uniformly locally Lipschitz distances, independent of the sample and the query).
\begin{theorem}
\label{theorem:AdaptiveKnn2}
Let ${\left(\theta_n\right)}_{n=1}^\infty$ be a sequence of random functions from a family of functions $\R^d$ to $\R$ satisfying the conditions of Theorem \ref{theorem:LipschitzFamilyKnnIsUniversallyConsistent} (independent of the sample and the query) and let $X_{(1, \norm{\cdot})}, X_{(2, \norm{\cdot})}, \dots, X_{(n, \norm{\cdot})}$ be defined as usual. If $m \geq 1$ is a constant and the weight function $W_{ni}(X)$ is a function of the query $X$, the sample points $X_1, X_2, \dots, X_n$, and an independent random variable $V$, has support that is a subset of $X_{(1, \theta_n)}, X_{(2, \theta_n)}, \dots, X_{(mk, \theta_{mk})}$ (that is, it is nonzero only on the nearest $mk$ points to $X$ in the $\theta_n$ distance) and $\E\left[\max_{1 \leq i \leq n} W_{ni}(X)\right] \to 0$, then the weight based classifier (of the form in equation \eqref{eq:WeightedClassifier}) is universally consistent.
\end{theorem}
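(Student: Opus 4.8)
The plan is to verify that the weight-based classifier of this theorem satisfies the three conditions of Stone's Theorem (Theorem~\ref{theorem:StonesTheorem}), exactly as in Theorems~\ref{theorem:AdaptiveKnn} and~\ref{theorem:LipschitzFamilyKnnIsUniversallyConsistent}, but tracking the extra multiplicative factor $m$ and replacing ``$k$-nearest'' by ``$mk$-nearest'' throughout. First I would observe that $W_{ni}(X)$ is a function of $X$, the sample points $X_1,\dots,X_n$, and an independent random variable only; since $(\theta_n)_{n=1}^\infty$ is independent of the sample and the query, it may be folded into that independent random variable $V$ (packaging $\theta_n$ with the tiebreakers $U_1,\dots,U_n$ via a Borel isomorphism, as noted earlier). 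Thus the standing hypotheses of Stone's Theorem hold and it suffices to check its three conditions. As in the proof of Theorem~\ref{theorem:LipschitzFamilyKnnIsUniversallyConsistent}, I may assume $\alpha\ge 1$ and $\beta\in(0,1]$, and I use the inner radius $\delta$ of~\eqref{eq:LipschitzFamilyInnerRadius}.

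For the first (Stone's-lemma) condition I would condition on $\theta_n=\rho$ and split the sum over $i$ according to whether $\norm{X_i-X}<\delta$ or $\norm{X_i-X}\ge\delta$, exactly as in Lemma~\ref{lemma:LipschitzFamilyStonesTheoremCondition1}. On the near part I apply the cone covering of $B_\delta(\veczero,\norm{\cdot})$ from Lemma~\ref{lemma:LipschitzFamilyConesCovering} (whose constant $c$ is uniform over $\rho\in\F$), marking the $mk$ sample points nearest to $X$ in the $\rho$ distance in each of the $c$ cones; the one-sided inequality $\rho(\vecy-\vecx)<\rho(\vecy)$ valid inside each cone shows, as in Lemma~\ref{lemma:StonesLemmaCore}, that if $X_i$ lies within $\delta$ of $X$ and is not marked then $X$ is not an $mk$-nearest neighbour of $X_i$. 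Since the support of $W_{ni}$ lies among the $mk$ points nearest to $X$, exchanging $X$ with $X_i$ (valid since the pairs are i.i.d.\ and $V$ is independent of them) and using that at most $cmk$ points are marked gives the bound $cm\,\E[f(X)]$ for the near part, just as in Theorem~\ref{theorem:AdaptiveKnn}. On the far part I dominate by $\E[\sum_i W_{ni}(X)\Indicator_{\{\norm{X_i-X}\ge\delta\}}]$ and invoke the $mk$-version of Lemma~\ref{lemma:LipschitzFamilyAwayGoesToZero} — whose proof goes through verbatim since $mk\to\infty$ and $mk/n\to0$ still hold — to get a sequence $\epsilon_n\to0$ bounding it. Unconditioning on $\theta_n$, the bounds being uniform in $\rho\in\F$, yields $\E[\sum_i W_{ni}(X)f(X_i)]\le cm\,\E[f(X)]+\epsilon_n$, which is the first condition.

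For the second condition, since $W_{ni}(X)$ is nonzero only when $X_i$ is one of the $mk$ points nearest to $X$ in the $\theta_n$ distance, the quantity $\sum_i W_{ni}(X)\Indicator_{\{\norm{X_i-X}>a\}}$ is bounded above by one and vanishes unless the $mk$-th $\theta_n$-nearest point of $X$ has $\norm{\cdot}$-distance exceeding $a$; the $mk$-analogue of Lemma~\ref{lemma:LipschitzFamilyKnnDistanceInequality} followed by Lemma~\ref{lemma:ProbKPointIsFarGoesToZero} applied with $k_n=mk$ (again using $mk/n\to0$) shows this probability, hence this expectation, tends to $0$. The third condition holds by the standing hypothesis $\E[\max_{1\le i\le n}W_{ni}(X)]\to0$. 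Stone's Theorem then gives universal consistency; an application of Lemma~\ref{lemma:KnnStrictlyIncreasingTransformation} moreover allows strictly increasing transformations of the $\theta_n$, as in Theorem~\ref{theorem:LipschitzFamilyKnnIsUniversallyConsistent}.

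The main obstacle is the phenomenon flagged in Example~\ref{example:DependentNormsCounterexample}: after the exchange of $X$ and $X_i$ the relevant distance is $\theta_n$ evaluated on the sample with $X_i$ replaced by $X$, so a priori one faces a different distance for each $i$. This is handled precisely by conditioning on $\theta_n$ first — legitimate because $\theta_n$ is independent of the sample and the query, unlike the sample-dependent situation — so that inside the conditional expectation $\rho=\theta_n$ is a single fixed element of $\F$ and the deterministic cone geometry of Lemma~\ref{lemma:LipschitzFamilyConesCovering} applies. The only thing that genuinely needs checking is that every auxiliary estimate borrowed from the Lipschitz-distance chapter is uniform over $\rho\in\F$, which it is; the remaining work is the routine bookkeeping of the factor $m$ and the observation that the growth hypotheses on $k$ force $mk\to\infty$ and $mk/n\to0$, so none of the limiting lemmas degrade.
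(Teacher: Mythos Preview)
Your proposal is correct and follows essentially the same route as the paper's proof: verify the three conditions of Stone's Theorem by rerunning the Lipschitz-family arguments (Lemmas~\ref{lemma:LipschitzFamilyConesCovering}, \ref{lemma:LipschitzFamilyKnnDistanceInequality}, \ref{lemma:LipschitzFamilyAwayGoesToZero}, \ref{lemma:LipschitzFamilyStonesTheoremCondition1}) with $k$ replaced by $mk$ throughout, noting that $mk\to\infty$ and $mk/n\to 0$ so nothing degrades. Your write-up is in fact more careful than the paper's on two points: you explicitly condition on $\theta_n$ to justify treating it as a fixed $\rho\in\F$ during the exchange step (the paper leaves this implicit), and you track the constant as $cm$ rather than $c$, which matches what the paper obtains in the norm version (Theorem~\ref{theorem:AdaptiveKnn}) and is what one actually gets from $cmk$ marked points.
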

\begin{proof}
We see that the weights $W_{ni}$ depend only on the query $X$, the sample points $X_1, X_2, \dots, X_n$, and an independent random variable $V$, so we can apply Stone's theorem. We check the conditions of Stone's theorem:
\begin{enumerate}
\item For the first condition, we proceed as in \ref{lemma:LipschitzFamilyStonesTheoremCondition1}, we define $c$ subsets in the same way as in the proof of the lemma and mark the $m k$ nearest neighbours of the query $X$ in each subsets. In the same manner as the proof of the lemma (replacing $k$ with $mk$, which does not change anything else because $m k / n \to 0$ as $n \to \infty$ since $k / n \to 0$ as $n \to \infty$ and $m$ is a fixed constant), we have
\begin{equation*}
\E\left[\sum_{i=1}^n W_{ni}(X) f(X_i)\right] \leq c \E\left[ f(X) \right] + \epsilon_n .
\end{equation*}
\item For the second condition, we have
\begin{equation*}
\E \left[ \sum_{i=1}^n W_{ni} (X) \mathds{1}_{\{\norm{X_i - X} > a\}} \right] \leq \Prob\left( \norm{X_{\left(k, \norm{\cdot}\right)} - X} > \frac{\beta}{2 \alpha} \min\{a, \delta\}\right) .
\end{equation*}
by Lemma \ref{lemma:LipschitzFamilyKnnDistanceInequality}, and since $m k / n \to 0$ as $n \to \infty$, by Lemma \ref{lemma:ProbKPointIsFarGoesToZero} the probability on the right hand side goes to zero as n approaches infinity, so the expectation on the left hand side (which by definition is nonnegative) goes to zero.
\item The third condition holds by assumption.
\end{enumerate}
\end{proof}

Suppose we take either a fixed norm or an independent sequence of Lipschitz distance for $k$-NN, and we take the $mk$ nearest points to the query at each step, with $m \geq 1$ a fixed constant. The above results allow us to adaptively pick a distance for $k$-NN based on the sample points (but not the sample labels), the query, and an independent random variable, with those $mk$ nearest points to the query (that is, we only consider those $mk$ points in $k$-NN and ignore the rest). By Theorem \ref{theorem:AdaptiveKnn2} (or Theorem \ref{theorem:AdaptiveKnn} for a fixed norm) we have that this results in a universally consistent classifier, since $W_{ni}(X)$ is nonzero only on the $mk$ nearest neighbours in the $\theta_n$ distance (or $\norm{\cdot}$ distance) and the maximum of the weights is $1/k$ which goes to zero as $n \to \infty$ (since $k \to \infty$ as $n \to \infty$).

To generate random variables independent of the sample and the query that are useful for us, we can independently randomly split the original sample into two smaller samples (with a fixed proportion of the points going into each sample). One of the subsamples becomes the sample used directly for classification (the weights can be nonzero for these points), and the other subsample becomes a set of points independent from the sample and the query which we can use for selecting the distance for $k$-NN. We then have points with their label which we can use in an optimization procedure for the distance while preserving universal consistency (since they are independent of the sample used for classification and the query, they become part of the independent random variable in the weight function). For instance, if we have an original sample of $(X_1, Y_1), (X_2, Y_2), \dots, (X_{2n}, Y_{2n})$ containing $2n$ points, we can split it into a sample of size $n$ for the $k$-NN classifier and another disjoint set of points of size $n$ which we use to find the distance for $k$-NN. For the global $\theta_n$ distance to find the $mk$ nearest points, we use just those $n$ points, for the local distance for the $k$ nearest of those points, we can also use the query and sample points (but not labels) as well as those independent points.

\chapter{Datasets and Experimental Results}

In this chapter we discuss classifying datasets based on the techniques described above. We consider both using a fixed distance for $k$-NN for the entire dataset and \emph{locally chosen} distances, where we select the distance for $k$-NN based on the query and the sample. We first discuss some optimization methods used for selecting good distances for $k$-NN and a method for assessing the accuracy of our classification (where we run many trials for an accuracy and stable estimate). We then give examples showing classification accuracy improvements (compared to $k$-NN with the Euclidean norm) for a variety of datasets.

\section{Methodology}

We would like to find parameters (such as the $p$ for the $\ell^p$ norm, the entries of a matrix, etc.) to achieve as high of a classification accuracy as possible. One approach is to try to maximize the classification accuracy on the training set. This has the disadvantage that the classification accuracy (for an empirical sample) is a step function, that is constant with sudden jumps. Optimization of such functions (especially when there are many parameters) can be very difficult. Optimizing the classification accuracy for locally chosen distances on the training set produced poor results on the testing set for our datasets.

Another approach is to look at the correlation between distance and whether or not the points have the same label. We recall that the \emph{Pearson's correlation coefficient} (often denoted $\rho_{X, Y}$) between two random variables $X$, $Y$ is defined by
\begin{equation}
\Corr(X, Y) = \rho_{X, Y} = \frac{\Cov(X, Y)}{\sigma_X \sigma_Y} .
\end{equation}
It can be shown that $-1 \leq \Corr(X, Y) \leq 1$ always, with a correlation of 1 denoting a perfect positive linear relationship and a correlation of $-1$ denoting a perfect negative linear relationship between $X$ and $Y$.\cite{CasellaBerger} Suppose we select a point in the training set, which we call a training query. If we let $X$ be the distance between the training query and other points in the dataset and $Y$ be whether or not the label of the training query agrees with the label of the other point, a negative correlation between $X$ and $Y$ indicates that as we move away from the training query, we are more likely to find points with a different label, and if we move towards the training query we are more likely to find points with the same label. We then attempt to find a distance such that this correlation is as close to $-1$ as possible. To optimize over a family of distances to find a good locally chosen distance (for a query in the testing set), we can proceed as follows: we first take the $k_1$ and $k_2$ nearest neighbours of the query (in a fixed norm), with $k_2 \geq k_1$; we then take each of the $k_1$ nearest neighbours as a training query and find the correlation with the $k_2$ nearest neighbours described above, we minimize the mean correlation for each of the points in $k_1$ as a training query, the parameters we found are the ones used for the locally chosen distance for this testing set query. In this approach, we only consider the $k_2$ nearest points to the query in the correlation, and so only points in that neighbourhood affect the locally chosen distance. The correlation has the advantage that for our family of distances it is continuous (and in some cases differentiable), so optimization is far easier.

For optimizing our parameters, we use the R function \texttt{optim} to do the optimization, which implements a variety of optimization methods. We describe some of these below. The problems are traditionally formulated as minimization problems, we can easily convert them into to a maximization problems by multiplying the function by $-1$ (in R, this can be done with the \texttt{fnscale=-1} option). The following brief descriptions are based on the book \cite{NR} and the R documentation \cite{R}.

The default method is the \emph{Nelder-Mead method} of optimization (which is also known as the \emph{Downhill Simplex method} or the \emph{amoeba method}). It is well suited to optimization problems in multiple dimensions and uses only function values, without assuming the existence of derivatives. A \emph{simplex} in $\R^d$ is a $d$-dimensional generalization of a tetrahedron (a triangle in $\R^2$, a tetrahedron in $\R^3$, and so on). In the Nelder-Mead method, we move the simplex using expansion, reflection, reflection and expansion, contraction, or multiple contraction at each step as necessary towards the minimum. The simplex thereby moves downhill towards a minimum. This is a robust method, but is relatively slow.

The \emph{Broyden-Fletcher-Goldfarb-Shanno} (BFGS) method uses function values and gradients to build a picture of the surface to optimize. BFGS is a quasi-Newton method that computes an approximation to the Hessian matrix, which is updated at each step.

The \emph{Conjugate Gradient} (CG) method (originally developed by Fletcher and Reeves (1964), there are options available to use modified versions) also uses gradients. The CG method does not compute a Hessian matrix approximation and so is better suited for large optimization problems, however, it tends to be more fragile than BFGS.

In the \emph{Simulated Annealing} method, we try to find the global minimum of the function. Starting with some point, we generate new points around the current point randomly, based on a temperature parameter (using a Gaussian Markov kernel by default, in the R \texttt{optim} implementation). We sometimes accept points worse than the existing point, to avoid getting stuck in a local minimum (how often we do so depends on the temperature). We then slowly decrease the temperature with time. This method uses only function values and works well for functions with noisy surfaces, but is relatively slow and is sensitive to the control parameters (like temperature) passed to it.

For each dataset described below, we find the accuracy by splitting the dataset into training and testing sets and finding the accuracy on the testing set. We repeat this cross-validation procedure multiple times, the exact details are described separately for each dataset. We have found (using Q-Q plots and the Shapiro-Wilk normality tests) that the empirical classification accuracies appear to have an approximate normal distribution. This means that we can find the $1 - \alpha$ confidence interval for the mean accuracy using the formula (based on the Student's $t$ distribution)
\begin{equation}
\overline{x} \pm t_{\alpha / 2}(n - 1) \frac{s}{\sqrt{n}}
\end{equation}
where $\overline{x}$ is the sample accuracy, $n$ is the sample size, and $s$ is the standard deviation of the observed data (using Bessel's correction of using the denominator $n-1$ instead of $n$ to yield an unbiased estimator). \cite{HoggTanis}

When we apply $k$-NN, we determine the value of $k$ by an empirical optimization procedure, in which we test all values of $k$ up to a certain threshold (for instance, all $k$ between 1 and 20) and use the value of $k$ that produces the best accuracy. We apply this empirical optimization procedure for $k$ on the training set (that is, we split the training set, and find the best value of $k$ on this set), and we use the optimal $k$ we found as the value of $k$ for $k$-NN in the actual classification on the testing set. For the locally chosen distances, we select points near the point we found (near in the original distance) and find the optimal value of $k$ for these points, which we use as the value of $k$ for classifying the query with the locally chosen distance.

\section{Experimental Results}

\subsection{Computer Generated Polynomials Dataset}

We first look at the classification accuracy for a computer generated dataset we created. Each data point is a random vector in $\R^7$ that is uniformly distributed on $[0, 1]$ in each coordinate. For a data point $\vecx = (x_1, x_2, \dots, x_7)$, we define $t = 2 x_1$, and we evaluate the polynomials $p_1(t) = x_2 t + x_3 t^2 + x_4 t^3$ and $p_2(t) = x_5 t + x_6 t^2 + x_7 t^3$. If $p_1(t) > p_2(t)$, we assign label one, otherwise we assign label zero. We generate 50 such datasets, each generated independently and containing 500 points in the training set and another 500 points in the testing set. We then find the classification accuracy with $k$-NN in each case, trying various $\ell^p$ norms/quasinorms and locally chosen distances (in which we first multiply the data by a matrix and then apply either an $\ell^p$ norm or a polynomial as our distance, with these parameters being determined locally based on the labelled sample and the query with the above procedure). A table of the results we obtained is in Table \ref{table:TableComputerPolynomial}, a box-and-whiskers plot is in Figure \ref{fig:BoxWhiskersComputerPolynomial}, and a plot of the 95 \% confidence intervals is in \ref{fig:LinePlotComputerPolynomial}. From this we see that the locally chosen $\ell^p$ distance with matrix produces the best results, followed by the fixed $\ell^p$ norms (with the accuracy being better for $p$ significantly larger than one, and being roughly constant for such $p$). The $\ell^p$ quasinorms (with $0 < p < 1$) perform poorly on this dataset.

\begin{figure}
\centering
\includegraphics[scale=0.75]{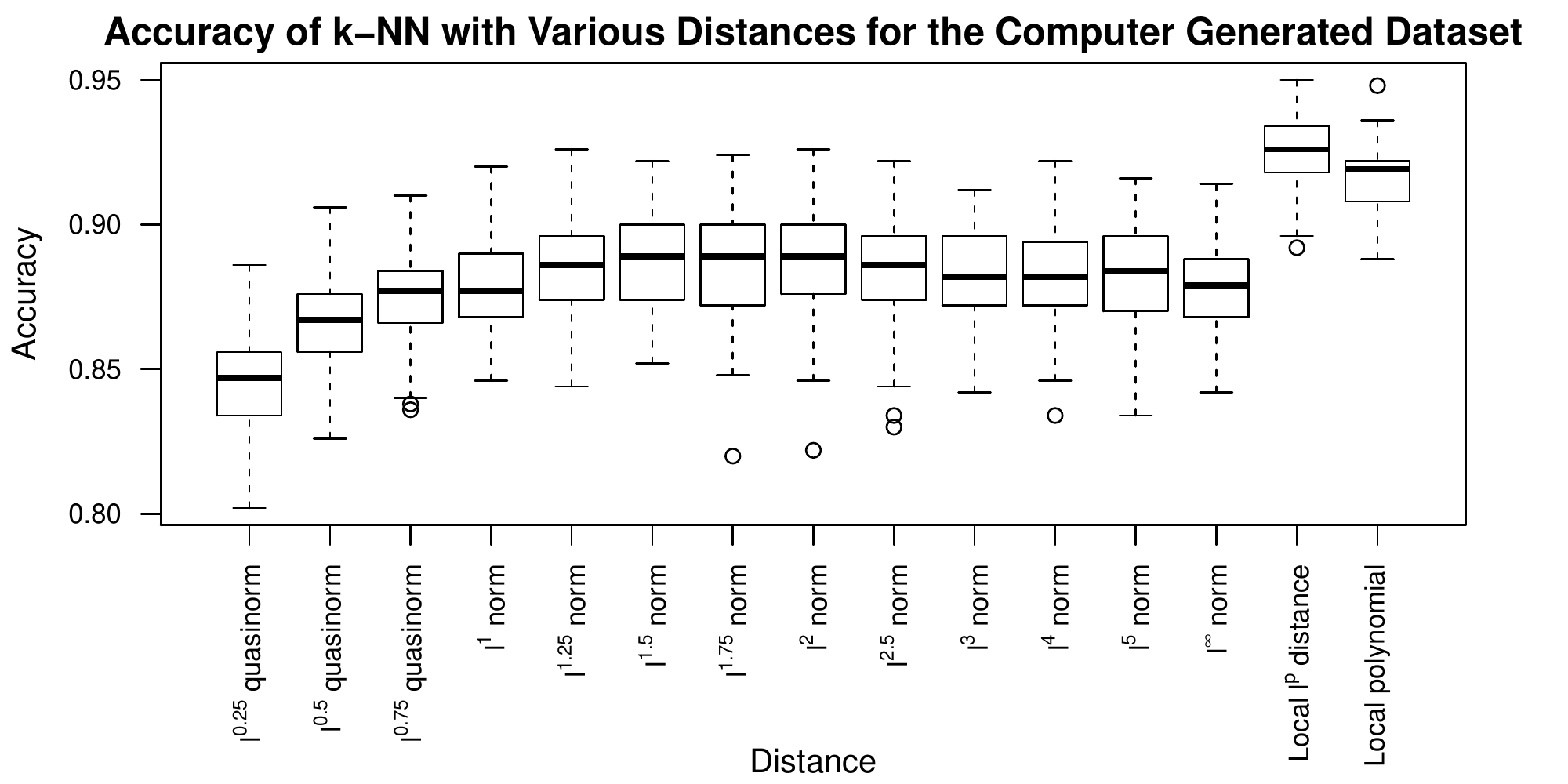}
\caption{Box-and-whiskers plot of the accuracy of $k$-NN with various $\ell^p$ norms and locally chosen distances for the computer generated dataset.}
\label{fig:BoxWhiskersComputerPolynomial}
\end{figure}

\begin{figure}
\centering
\includegraphics[scale=0.5]{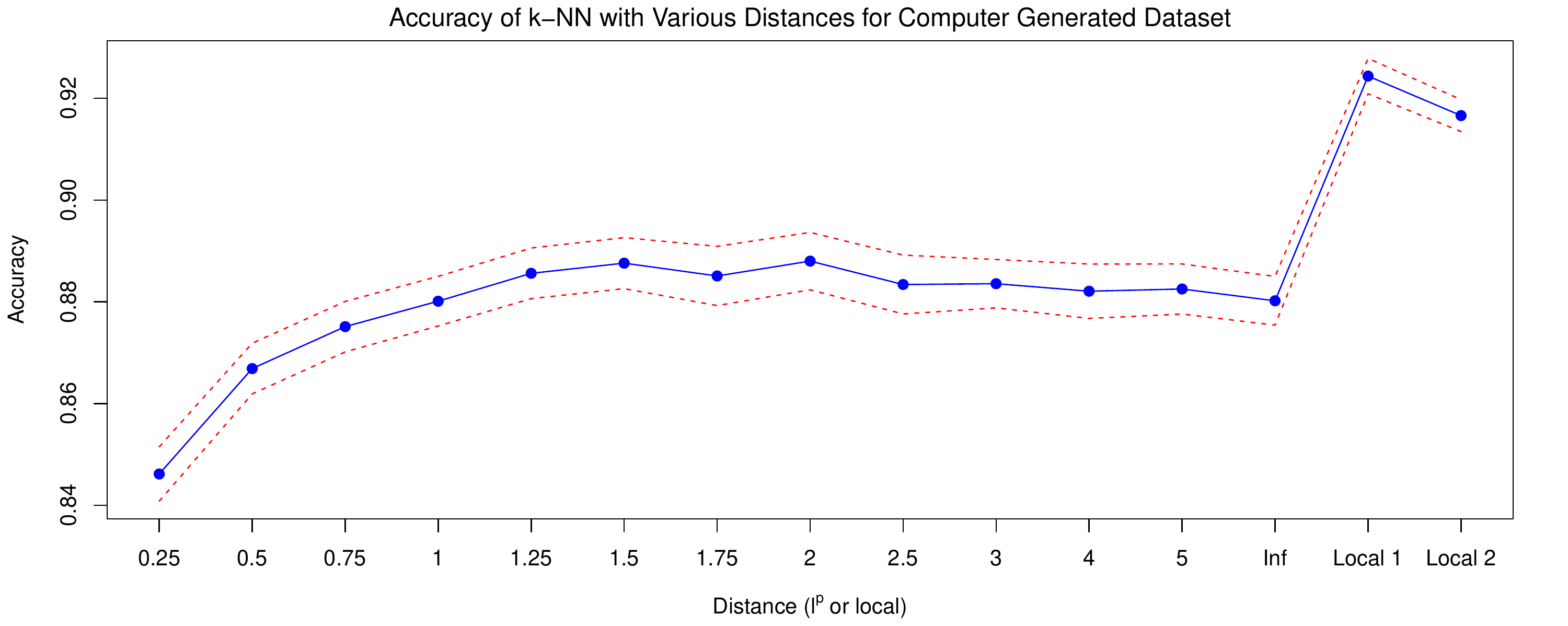}
\caption{Plot of the accuracy of $k$-NN with various $\ell^p$ norms, quasinorms, and locally chosen distances for computer generated dataset (showing 95 \% confidence intervals around the mean result, and data points). Here local 1 is locally chosen $\ell^p$ distance with matrix, local 2 is locally chosen polynomial with matrix.}
\label{fig:LinePlotComputerPolynomial}
\end{figure}

\begin{table}
\centering
\begin{tabular}{|c|c|c|}
\hline 
Distance & Mean Accuracy & 95\% Confidence Interval \\ 
\hline 
$\ell^{0.25}$ quasinorm & $0.84616$ & $[0.8408282, 0.8514918]$ \\ 
\hline 
$\ell^{0.5}$ quasinorm & $0.86688$ & $[0.8619008, 0.8718592]$ \\ 
\hline 
$\ell^{0.75}$ quasinorm & $0.87512$ & $[0.870166, 0.880074]$ \\ 
\hline 
$\ell^1$ norm & $0.88012$ & $[0.8752407, 0.8849993]$ \\ 
\hline 
$\ell^{1.25}$ norm & $0.8856$ & $[0.8806249, 0.8905751]$ \\ 
\hline 
$\ell^{1.5}$ norm & $0.8876$ & $[0.8825867, 0.8926133]$ \\ 
\hline 
$\ell^{1.75}$ norm & $0.88508$ & $[0.8792594, 0.8909006]$ \\ 
\hline 
$\ell^2$ norm & $0.888$ & $[0.882358, 0.893642]$ \\ 
\hline 
$\ell^{2.5}$ norm & $0.8834$ & $[0.877618, 0.889182]$ \\ 
\hline 
$\ell^3$ norm & $0.88356$ & $[0.8788055, 0.8883145]$ \\ 
\hline 
$\ell^4$ norm & $0.88208$ & $[0.8767357, 0.8874243]$ \\ 
\hline 
$\ell^5$ norm & $0.88252$ & $[0.8775985, 0.8874415]$ \\ 
\hline 
$\ell^\infty$ norm & $0.8802$ & $[0.8754124, 0.8849876]$ \\ 
\hline 
{\scriptsize Local Distance with $\ell^p$ norm and matrix} & $0.92436$ & $[0.9208851, 0.9278349]$ \\ 
\hline 
{\scriptsize Local Distance with degree 5 polynomial and matrix} & $0.9166$ & $[0.9134567, 0.9197433]$ \\ 
\hline 
\end{tabular}
\caption{The mean accuracy and confidence intervals for $k$-NN applied to the computer generated dataset with various $\ell^p$ norms and local distances.}
\label{table:TableComputerPolynomial}
\end{table}

\subsection{Face Recognition Dataset}

We compare various norms on the CDMC2013 face recognition task (from \cite{CDMC2013Faces}). We have a dataset consisting of 864 image vectors with 2576 dimensions, with 216 classes (with each class repeated exactly four times in the dataset). We use stratified sampling where for each class we select one image vector to be in our testing set and the other three to be in our training set. We first apply Principal Component Analysis (PCA)\cite{FoundationsOfMachineLearning}, reduce the dimension to 864, and calculate median centroids (that is, we calculate the median of each feature for all image vectors of the same class in the training set). Following this, we apply $k$-NN with $k=1$ to the median centroids. We find that the $\ell^{1.25}$ norm is optimal on this dataset, with norms near the $\ell^{1.25}$ norm having similar performance and norms further away having worse performance.

When we apply $k$-NN with various $\ell^p$ norms to this dataset, we obtain the results in the box-and-whiskers plot \ref{fig:BoxWhiskersLpFace}. We have also tested various other norms which where found to perform far worse, in particular the quasinorms with $p < 1$ have been found to perform very poorly. In Table \ref{table:TableLpFace}, we give some mean accuracies for various $\ell^p$ norms for this dataset (the confidence intervals are from sampling training and testing sets from this particular dataset, with 4 possibilities for each class to be chosen for the training set, this should not be interpreted as 99 \% confidence intervals for the actual misclassification error for the underlying distribution for the data). Figure \ref{fig:BoxWhiskersLpFace} gives a box-and-whiskers plot of the accuracies for many $\ell^p$ norms with $1 \leq p \leq 2$. The results for $p$ outside this range were poor, with performance dropping off for $p < 1$ or $p > 2$. We can see from this that the $\ell^{1.25}$ norm (and $\ell^p$ norms with $p$ near $1.25$) gives the best result here.

\begin{figure}
\centering
\includegraphics[scale=.5]{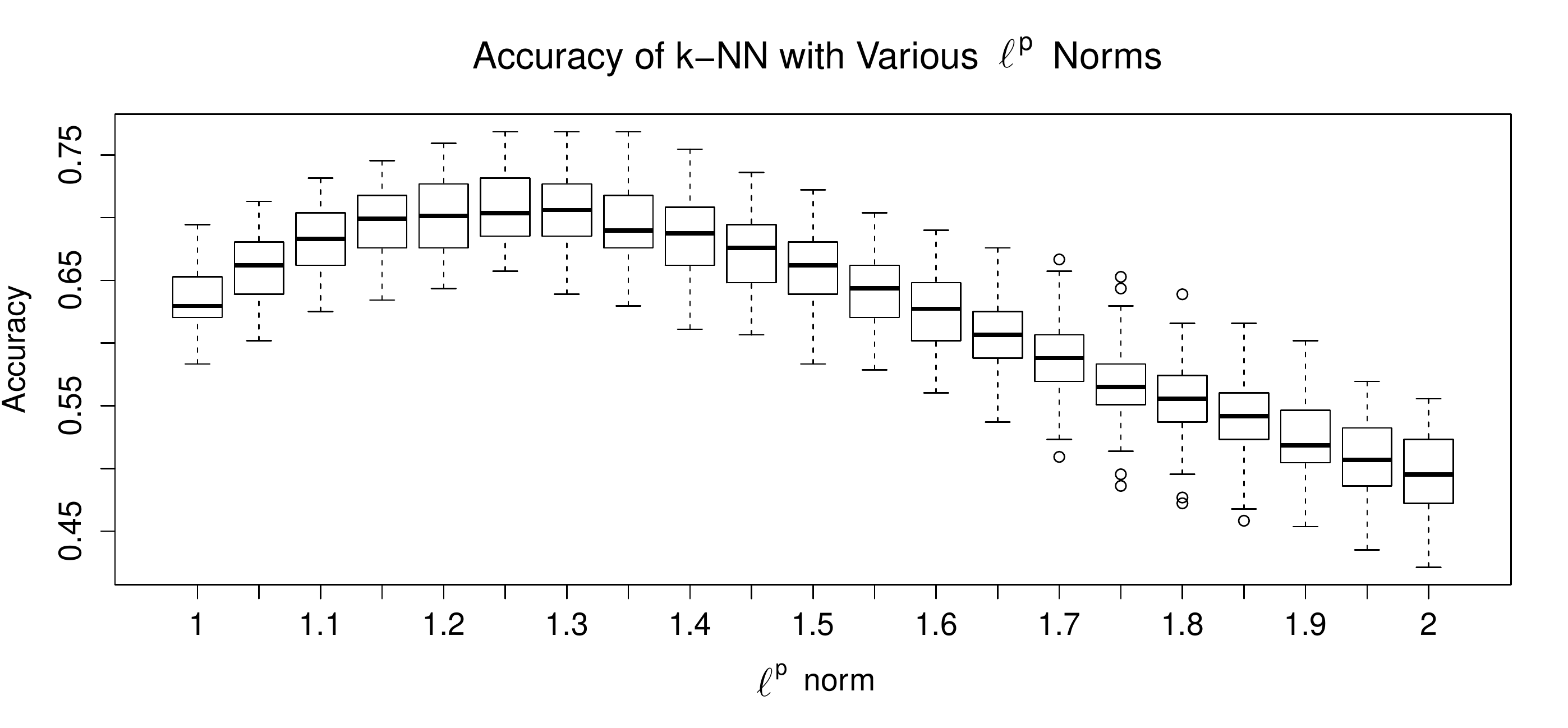}
\caption{Box-and-whiskers plot of the accuracy of $k$-NN with various $\ell^p$ norms for the Face Recognition dataset.}
\label{fig:BoxWhiskersLpFace}
\end{figure}

\begin{table}
\begin{tabular}{|c|c|c|}
\hline 
$\ell^p$ norm & Mean Accuracy & 99\% Confidence Interval for this Sample Dataset \\ 
\hline 
1 & $0.6341667$ & $[0.6246978, 0.6436356]$ \\ 
\hline 
1.25 & $0.7075926$ & $[0.6967336, 0.7184516]$ \\ 
\hline 
1.5 & $0.6585185$ & $[0.6469469, 0.6700901]$ \\ 
\hline 
1.75 & $0.5660185$ & $[0.5534964, 0.5785407]$ \\ 
\hline 
2 & $0.4932407$ & $[0.4808274, 0.5056541]$ \\ 
\hline 
\end{tabular}
\caption{The mean accuracy and confidence intervals for $k$-NN applied to the Face Recognition dataset with various $\ell^p$ norms.}
\label{table:TableLpFace}
\end{table}

\subsection{Forest Cover Dataset}

We now look at a subset of a forest cover dataset from \cite{ForestCover}, which was the subject of a Kaggle competition. We do not include the categorical data from the original dataset, only the numerical part (hence our results below are not comparable to the Kaggle competition, in practice we would use this in conjunction with a classifier for the categorical part as the categorical part is very important to achieve a high classification accuracy, classification accuracies much higher than ours are possible using the categorical part alone). There are 10 numerical columns, we fit all of them to the interval $[0, 1]$.

The original dataset contains 15120 rows. We take 50 random samples, each of them containing 2000 rows for the training set and 1000 rows for the testing set (with the training and testing sets disjoint in each case). We then find the classification accuracy with $k$-NN in each case, trying various $\ell^p$ norms/quasinorms and locally chosen distances (for the locally chosen distances, we first multiply the data by a matrix and then apply either an $\ell^p$ norm or a polynomial as our distance, with these parameters being determined locally based on the labelled sample and the query with the above procedure). In particular, we have tried the $\ell^p$ norms and quasinorms with $p$ being $0.25, 0.5, 0.75, 1, 1.25, 1.5, 1.75, 2, 2.5, 3, 4, 5, \infty$, and a couple of locally chosen distances, one consisting of optimizing over matrices and $\ell^p$ norms, the other consisting of optimizing over matrices and polynomials of degree 5. The mean accuracies obtained (with 95 \% confidence intervals) are given in Table \ref{table:TableForestCover}, a box-and-whiskers plot is shown in Figure \ref{fig:BoxWhiskersForestCover}, and a plot of the accuracies with 95 \% confidence intervals is in Figure \ref{fig:LinePlotForestCover}. We see that both locally chosen distances deliver superior performance to any fixed norm that was tested, with local $\ell^p$ norms with matrices being better than local polynomials with matrices. For fixed $\ell^p$ norms/quasinorms, the accuracy seems to be highest around 0.5 and 0.75, with smaller $p$ performing much worse and the accuracy dropping off as $p$ increases beyond 0.75.

\begin{figure}
\centering
\includegraphics[scale=0.8]{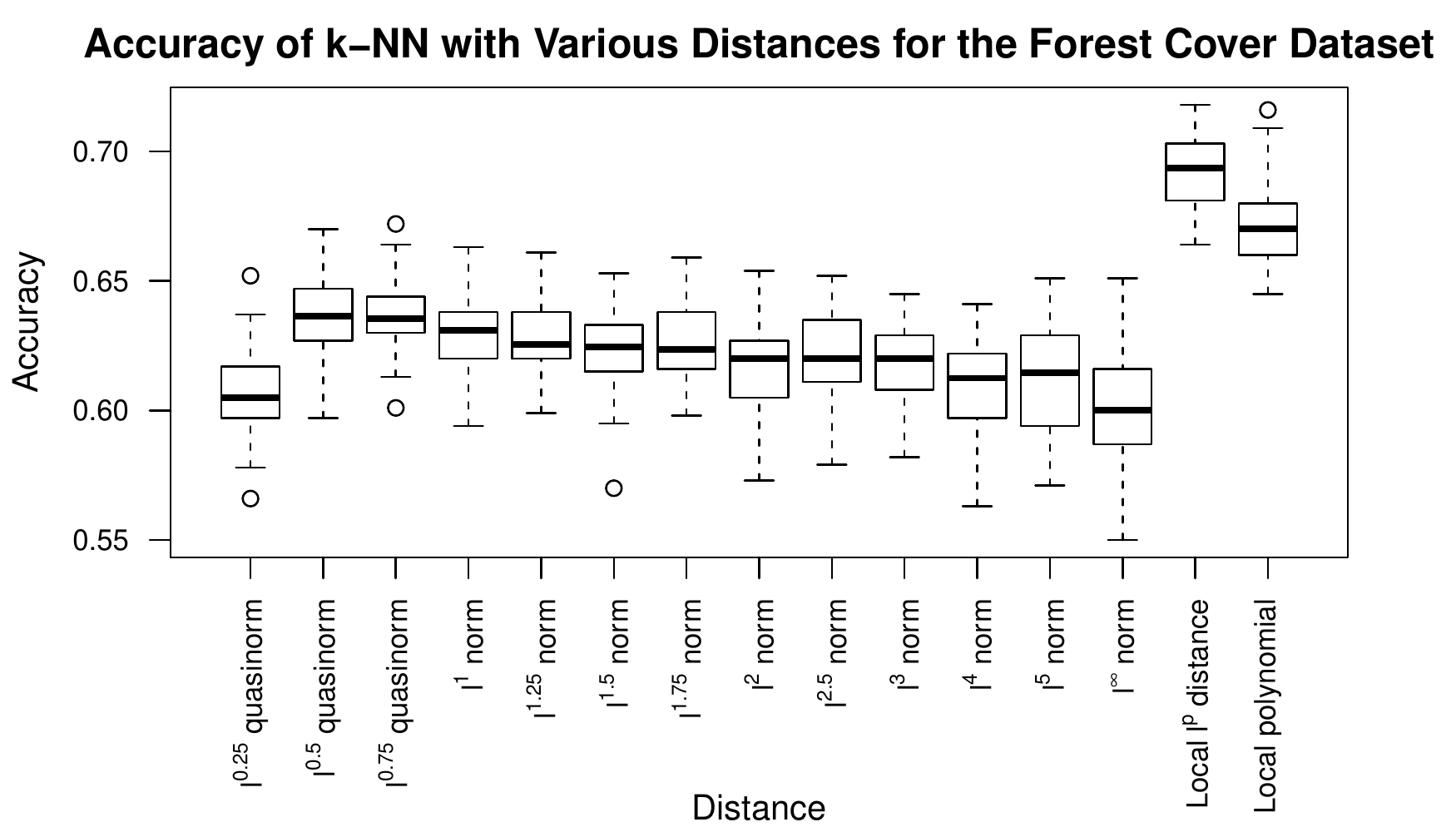}
\caption{Box-and-whiskers plot of the accuracy of $k$-NN with various $\ell^p$ norms and locally chosen distances for the Forest Cover dataset.}
\label{fig:BoxWhiskersForestCover}
\end{figure}

\begin{figure}
\centering
\includegraphics[scale=.5]{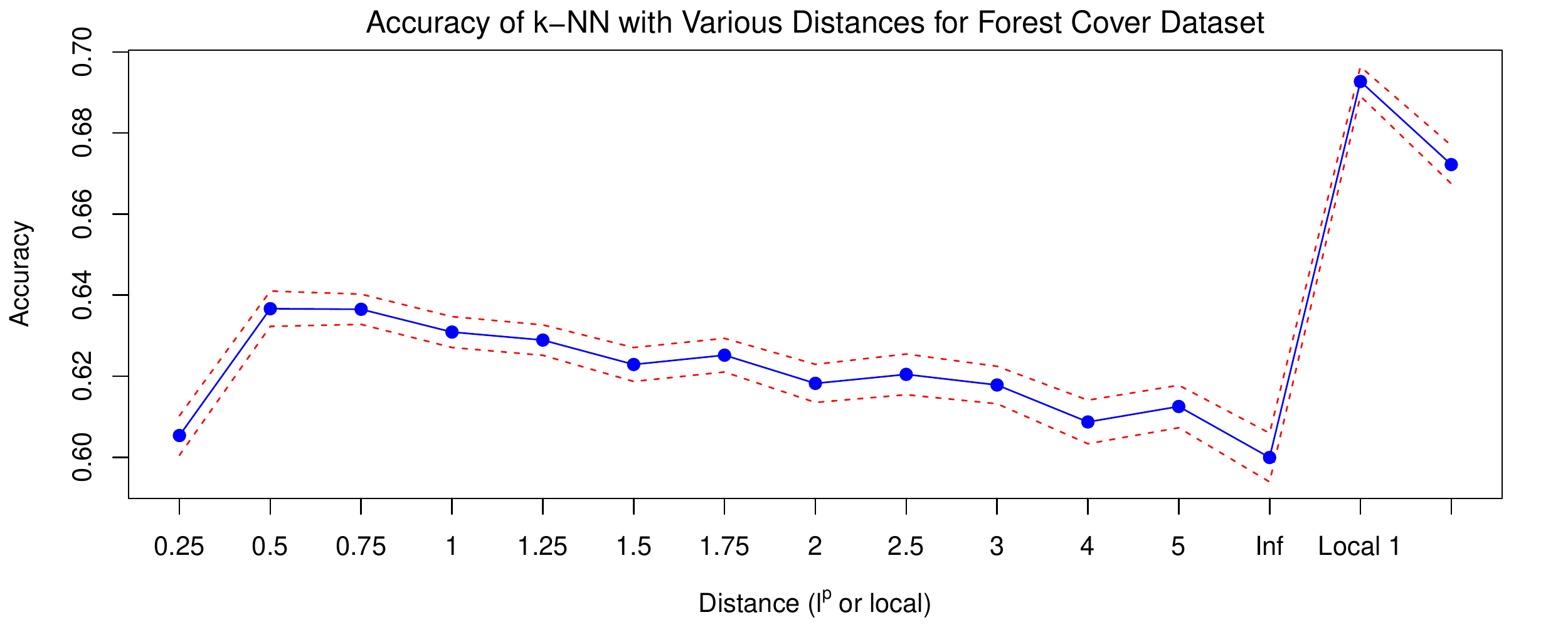}
\caption{Plot of the accuracy of $k$-NN with various $\ell^p$ norms, quasinorms, and locally chosen distances for Forest Cover dataset (showing 95 \% confidence intervals around the mean result, and data points). Here local 1 is locally chosen $\ell^p$ distance with matrix, local 2 is locally chosen polynomial with matrix.}
\label{fig:LinePlotForestCover}
\end{figure}

\begin{table}
\centering
\begin{tabular}{|c|c|c|}
\hline 
Distance & Mean Accuracy & 95\% Confidence Interval \\ 
\hline 
$\ell^{0.25}$ quasinorm & $0.60538$ & $[0.6005078, 0.6102522]$ \\ 
\hline 
$\ell^{0.5}$ quasinorm & $0.63666$ & $[0.6323007, 0.6410193]$ \\ 
\hline 
$\ell^{0.75}$ quasinorm & $0.63652$ & $[0.6327927, 0.6402473]$ \\ 
\hline 
$\ell^1$ norm & $0.6309$ & $[0.6270883, 0.6347117]$ \\ 
\hline 
$\ell^{1.25}$ norm & $0.62892$ & $[0.6251664, 0.6326736]$ \\ 
\hline 
$\ell^{1.5}$ norm & $0.6229$ & $[0.6187195, 0.6270805]$ \\ 
\hline 
$\ell^{1.75}$ norm & $0.6252$ & $[0.6210497, 0.6293503]$ \\ 
\hline 
$\ell^{2}$ norm & $0.61824$ & $[0.6135345, 0.6229455]$ \\ 
\hline 
$\ell^{2.5}$ norm & $0.62046$ & $[0.615441, 0.625479]$ \\ 
\hline 
$\ell^3$ norm & $0.61784$ & $[0.6072968, 0.6177832]$ \\ 
\hline 
$\ell^4$ norm & $0.60872$ & $[0.6033483, 0.6140917]$ \\ 
\hline 
$\ell^5$ norm & $0.61254$ & $[0.6072968, 0.6177832]$ \\ 
\hline 
$\ell^\infty$ norm & $0.59996$ & $[0.5939458, 0.6059742]$ \\ 
\hline 
{\scriptsize Local distance with $\ell^p$ norm and matrix} & $0.69272$ & $[0.6890909, 0.6963491]$ \\ 
\hline 
{\scriptsize Local distance with degree 5 polynomial and matrix} & $0.67222$ & $[0.6675009, 0.6769391]$ \\ 
\hline 
\end{tabular}
\caption{The mean accuracy and confidence intervals for $k$-NN applied to the Forest Cover dataset with various $\ell^p$ norms and locally chosen distances.}
\label{table:TableForestCover}
\end{table}

\subsection{Higgs Boson Dataset}

The ATLAS Higgs Boson dataset (\cite{HiggsBoson}) contains 29 numeric data columns and a response column with two possible states (namely, whether an event is a Higgs Boson event or is background noise). There are other columns, but they are not relevant for us (they include an importance weight for an alternative classification performance measure). We remove data columns with missing values to obtain 18 data columns. The original dataset contains 818238 rows. We then randomly generate 500 independent random subsets of the original dataset, each containing 5000 training rows and 5000 testing rows (with the training and testing sets being disjoint). Before applying $k$-NN, we fit the columns fitted to the interval $[0, 1]$.

We then find the classification accuracy with $k$-NN in each case, trying various $\ell^p$ norms/quasinorms. In particular, we have tried the $\ell^p$ norms and quasinorms with $p$ being $0.25, 0.5, 0.75, 1, 1.25, 1.5, 1.75, 2, 2.25, 2.5, 2.75, 3, 4, 5, \infty$. The mean accuracies obtained (with 95 \% confidence intervals) are given in Table \ref{table:TableHiggsBoson}, a box-and-whiskers plot is shown in Figure \ref{fig:BoxWhiskersHiggsBoson}. We see that the $\ell^p$ quasinorms with $0 < p < 1$ perform better than the $\ell^p$ norms with $p \geq 1$, with $p = 1/4$ (the $\ell^{1/4}$ quasinorm) resulting in the highest classification accuracy. We see that after $p = 1/4$, the classification accuracy decreases as $p$ increases. The improvement in accuracy in using the $\ell^{1/4}$ quasinorm is very significant, the mean accuracy with the $\ell^{1/4}$ quasinorm is approximately $77.2$ \% while for the Euclidean norm it is approximately $72.1$ \% (so the quasinorm performs approximately 5 \% better than the Euclidean norm).

\begin{table}
\centering
\begin{tabular}{|c|c|c|}
\hline 
Distance & Mean Accuracy & 95\% Confidence Interval \\ 
\hline
$\ell^{0.1}$ & $0.7678608$ & $[0.7672895, 0.7684321]$ \\
\hline
$\ell^{0.25}$ & $0.7718852$ & $[0.7713035, 0.7724669]$ \\
\hline
$\ell^{0.5}$ & $0.7646396$ & $[0.7640515, 0.7652277]$ \\
\hline
$\ell^{0.75}$ & $0.7529876$ & $[0.7523944, 0.7535808]$ \\
\hline
$\ell^{1}$ & $0.7423464$ & $[0.7417668, 0.742926]$ \\
\hline
$\ell^{1.25}$ & $0.7345884$ & $[0.7340186, 0.7351582]$ \\
\hline
$\ell^{1.5}$ & $0.7289776$ & $[0.7283997, 0.7295555]$ \\
\hline
$\ell^{1.75}$ & $0.724774$ & $[0.7242062, 0.7253418]$ \\
\hline
$\ell^{2}$ & $0.7214408$ & $[0.720877, 0.7220046]$ \\
\hline
$\ell^{2.25}$ & $0.718962$ & $[0.718397, 0.719527]$ \\
\hline
$\ell^{2.5}$ & $0.716842$ & $[0.7162883, 0.7173957]$ \\
\hline
$\ell^{2.75}$ & $0.7152068$ & $[0.7146595, 0.7157541]$ \\
\hline
$\ell^{3}$ & $0.7138832$ & $[0.713334, 0.7144324]$ \\
\hline
$\ell^{4}$ & $0.7104716$ & $[0.7099282, 0.711015]$ \\
\hline
$\ell^{5}$ & $0.708712$ & $[0.7081656, 0.7092584]$ \\
\hline
$\ell^{\infty}$ & $0.7042$ & $[0.7036437, 0.7047563]$ \\
\hline 
\end{tabular}
\caption{The mean accuracy and confidence intervals for $k$-NN applied to the Higgs Boson dataset with various $\ell^p$ norms.}
\label{table:TableHiggsBoson}
\end{table}

\begin{figure}
\centering
\includegraphics[scale=.5]{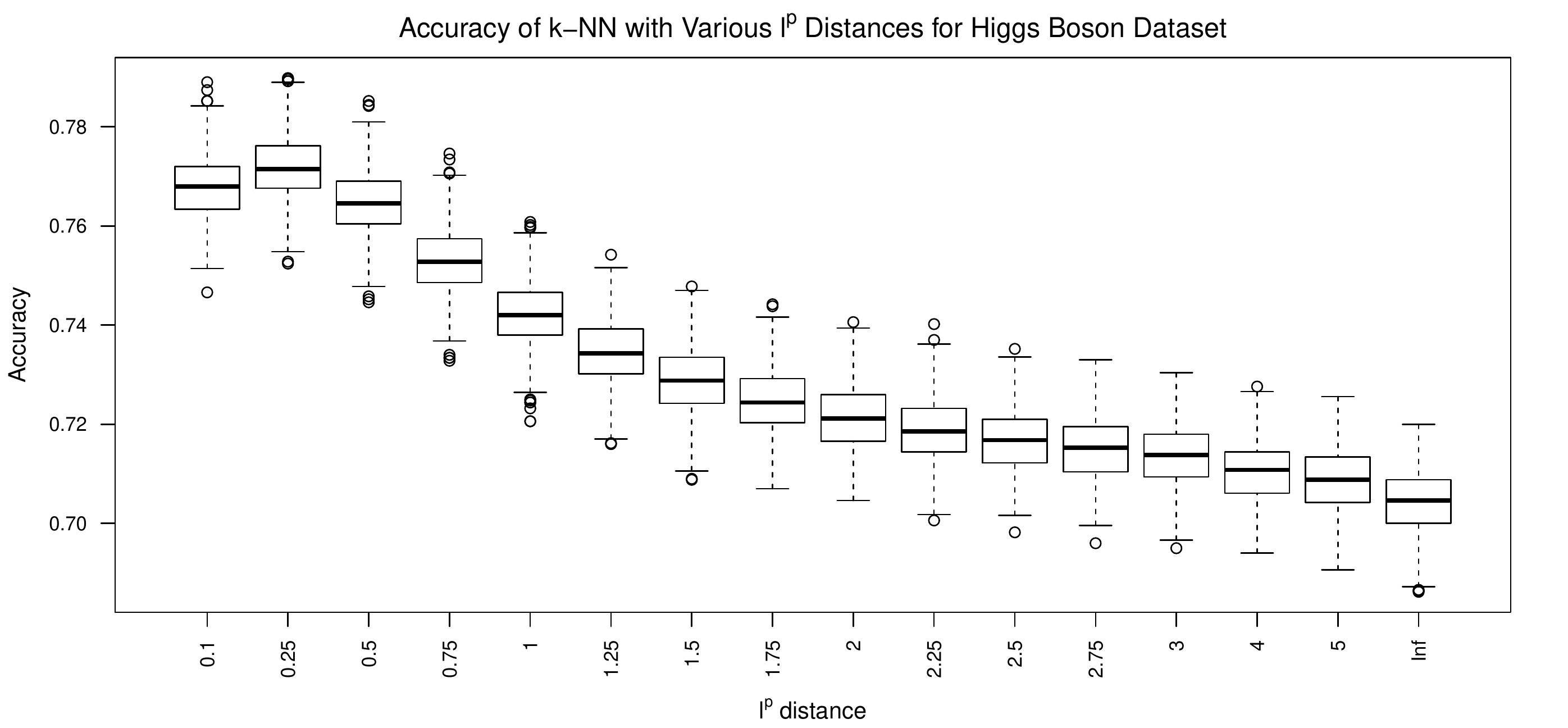}
\caption{Box-and-whiskers plot of the accuracy of $k$-NN with various $\ell^p$ norms and quasinorms for the Higgs Boson dataset.}
\label{fig:BoxWhiskersHiggsBoson}
\end{figure}

\begin{figure}
\centering
\includegraphics[scale=.5]{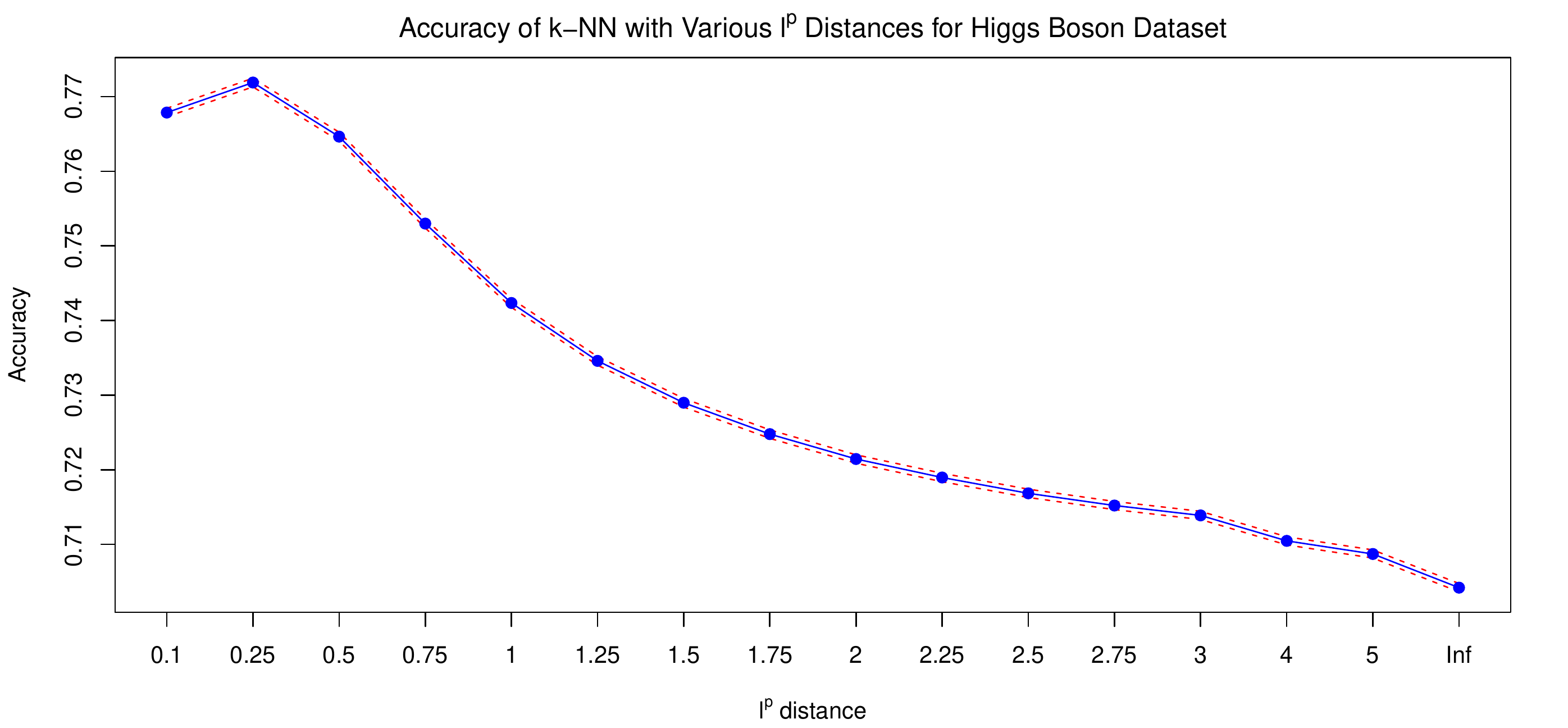}
\caption{Plot of the accuracy of $k$-NN with various $\ell^p$ norms and quasinorms for the Higgs Boson dataset (showing 95 \% confidence intervals around the mean result, and data points).}
\label{fig:LinePlotHiggsBoson}
\end{figure}

\chapter{Future Prospects}

There are many possible future projects based on this work. Here is an outline of some possibilities:
\begin{itemize}
\item We would like to extend our result in Theorem \ref{theorem:KnnWithSandwichedNormIsUniversallyConsistent} to be able to handle norms chosen at each step based on the sample points $X_1, X_2, \dots, X_n$ and possibly the query $X$ (and similarly for Theorem \ref{theorem:LipschitzFamilyKnnIsUniversallyConsistent} with the uniformly locally Lipschitz family). In \cite{pbook}, a similar result is claimed in Theorem 26.3 (where we multiply the data by a matrix that is a function of the sample points $X_1, X_2, \dots, X_n$ and then apply the Euclidean norm), however the proof provided is incomplete (part of it being incorrect), as we have shown above. We would like to recover that result, which should hold for more general families of norms.

\item There is a classification algorithm (described in \cite{LMNN}) called the \emph{large margin nearest neighbour} (\emph{LMNN}), in which we learn a positive semi-definite matrix $M$ used to construct a pseudometric of the form $\rho(\vecx, \vecy) = {(\vecx - \vecy)}^\intercal M (\vecx - \vecy)$ for $k$-NN (a \emph{pseudometric} is similar to a metric, but can take value zero for distinct points). The procedure for learning the matrix $M$ depends on the query and the labelled sample. LMNN has been found to produce good results for classifying various datasets We would like to determine if LMNN (or a similar algorithm) is universally consistent.

\item Here, we have proven the universal consistency of $k$-NN based on Stone's theorem, and have applied this to various modifications of the classical $k$-NN classifier (such as using a sequence of norms or of Lipschitz functions). There is an alternative proof for the universal consistency of $k$-NN based on the Lebesgue-Besicovitch Differentiation Theorem, originally given by Luc Devroye in \cite{Devroye} and discussed further by Fr\'{e}d\'{e}ric C\'{e}rou and Arnaud Guyader in the paper \cite{KnnInf}. Priess has shown (in \cite{Preiss}) that the conclusion of the differentiation theorem is equivalent to the $\sigma$-finite dimensionality of a metric space. We would like to extend this result to sequences of norms (instead of a single fixed norm, similar to our result) and investigate if it can lead to a universal consistency proof with quasinorms and various other distances.

\item We would like to improve our optimization methods, so we can more accurately and rapidly optimize over a class of distances for $k$-NN. We have found some like the correlation method discussed above, we would like to find others.

\item We have assumed a bounded family of norms (or uniformly locally Lipschitz distances). This was a necessary assumption as we saw for the sequences of norms given by equations \eqref{eq:BadNormSequenceUnboundedAbove} and \eqref{eq:BadNormSequenceUnboundedBelow}. We notice that such a sequence is extremely unlikely to be picked by an optimizer optimizing over the family of norms for that distribution (indeed, the probability approaches zero as $n$ approaches infinity for our distribution). We would like to determine if we can remove this assumption provided we follow an optimization procedure instead of picking an arbitrary (possibly bad) sequence. We may find a solution to this problem if we develop a theory of capacity for norms for $k$-NN, similar to VC dimension (Vapnik-Chervonenkis dimension) and more generally metric entropy.
\end{itemize}

\end{document}